\documentclass[dvipsnames]{article} %
\usepackage{colm2024_conference}

\usepackage{booktabs}
\usepackage{graphicx}
\usepackage{enumitem}
\usepackage{wrapfig}
\usepackage{algorithm}
\usepackage{algpseudocode}

\usepackage{microtype}
\usepackage{amsmath}
\usepackage{colortbl}
\usepackage[utf8]{inputenc}
\definecolor{lightgray}{rgb}{0.9,0.9,0.9}
\usepackage{caption}
\usepackage{subcaption}
\usepackage{setspace}
\usepackage{url}
\usepackage{multirow}
\usepackage{colortbl}
\usepackage{tabularx}
\usepackage{blindtext}
\usepackage{pgfplots}
\pgfplotsset{compat=1.18} 
\usepackage{tikz}
\usetikzlibrary{er,positioning,bayesnet}
\usepackage{makecell}
\usepackage{tipa}
\usepackage{siunitx}
\usepackage{nicefrac}
\usepackage{tocloft}
\usepackage{listings}
\usepackage[raster,skins]{tcolorbox} %
\usepackage{xltabular}
\usepackage{adjustbox}
\usepackage{xurl}
\usepackage{rotating}
\usepackage[normalem]{ulem}
\useunder{\uline}{\ul}{}
\usepackage{crossreftools}

\usepackage{amsthm}
\usepackage{amssymb}
\usepackage{mathtools}

\usepackage[capitalize,noabbrev,nameinlink]{cleveref}
\crefname{assumption}{Assumption}{Assumptions}

\theoremstyle{plain}
\newtheorem{theorem}{Theorem}[section]
\newtheorem{proposition}[theorem]{Proposition}
\crefname{proposition}{proposition}{propositions}
\Crefname{proposition}{Proposition}{Propositions}
\newtheorem{lemma}[theorem]{Lemma}

\theoremstyle{definition}
\newtheorem{definition}{Definition}[section]
\newtheorem{example}{Example}[section]
\newtheorem{assumption}{Assumption}[section]
\theoremstyle{remark}
\newtheorem{remark}{Remark}[section]

\pdfstringdefDisableCommands{%
  \let\Cref\crtCref
  \let\cref\crtcref
}

\makeatletter
\def\maketag@@@#1{\hbox{\m@th\normalfont\normalsize#1}}
\makeatother

\makeatletter
\pretocmd{\contentsline}
  {\begingroup\hypersetup{linkcolor=black}}{}{}
\apptocmd{\contentsline}
  {\endgroup}{}{}
\makeatother


\usepackage{amsmath,amsfonts,bm}



\def\1{\bm{1}}




\def\rva{{\mathbf{a}}}
\def\rvb{{\mathbf{b}}}

\def\rve{{\mathbf{e}}}

\def\rvu{{\mathbf{i}}}

\def\rvu{{\mathbf{u}}}

\def\rvx{{\mathbf{x}}}
\def\rvy{{\mathbf{y}}}



\def\rmE{{\mathbf{E}}}

\def\rmW{{\mathbf{W}}}





\DeclareMathAlphabet{\mathsfit}{\encodingdefault}{\sfdefault}{m}{sl}
\SetMathAlphabet{\mathsfit}{bold}{\encodingdefault}{\sfdefault}{bx}{n}


\def\gB{{\mathcal{B}}}

\def\gN{{\mathcal{N}}}
\def\gO{{\mathcal{O}}}
\def\gP{{\mathcal{P}}}

\def\gX{{\mathcal{X}}}



\def\sR{{\mathbb{R}}}








\newcommand{\E}{\mathbb{E}}



\DeclareMathOperator*{\argmin}{arg\,min}

\newcommand{\bfX}{\mathbf{X}}
\newcommand{\bfXtest}{\mathbf{X}^{\text{te}}}
\newcommand{\bfXcontext}{\mathbf{X}^{\text{ct}}}
\newcommand{\Xtest}{X^{\text{te}}}

\newcommand{\bfxtest}{\mathbf{x}^{\text{te}}}

\newcommand{\bfxtesti}{\mathbf{x}^{\text{te},(i)}}
\newcommand{\bfxcontext}{\mathbf{x}^{\text{ct}}}
\newcommand{\bfxcontexti}{\mathbf{x}^{\text{ct}, (i)}}
\newcommand{\bfmu}{\boldsymbol{\mu}}

\newcommand{\SCM}{S}
\newcommand{\uniform}{\mathrm{Unif}}
\newcommand{\htheta}{\hat{\theta}}

\newcommand{\ours}{LimiX}
\newcommand{\ourL}{LimiX-16M}
\newcommand{\ourS}{LimiX-2M}
\newcommand{\ourA}{LimiX models}

\newcommand*\justify{%
  \fontdimen2\font=0.4em
  \fontdimen3\font=0.2em
  \fontdimen4\font=0.1em
  \fontdimen7\font=0.1em
  \hyphenchar\font=`\-
}

\renewcommand{\texttt}[1]{%
  \begingroup
  \ttfamily
  \begingroup\lccode`~=`/\lowercase{\endgroup\def~}{/\discretionary{}{}{}}%
  \begingroup\lccode`~=`[\lowercase{\endgroup\def~}{[\discretionary{}{}{}}%
  \begingroup\lccode`~=`.\lowercase{\endgroup\def~}{.\discretionary{}{}{}}%
  \catcode`/=\active\catcode`[=\active\catcode`.=\active
  \justify\scantokens{#1\noexpand}%
  \endgroup
}

\title{LimiX: Unleashing Structured-Data Modeling Capability for Generalist Intelligence}

\author{
{\normalsize \bf{LimiX Team}} \\
\vspace{10pt}
 \rm Stable AI \& Tsinghua University
}

\begin{document}
\maketitle

\vspace{1em}

\begin{abstract}

We argue that progress toward general intelligence requires complementary foundation models grounded in language, the physical world, and structured data. This report presents LimiX-16M and LimiX-2M, two instantiations of our large structured-data models (LDMs). Both models treat structured data as a joint distribution over variables and missingness, thus capable of addressing a wide range of tabular tasks through query-based conditional prediction via a single model. 
They are pretrained using masked joint-distribution modeling with an episodic, context-conditional objective, supporting rapid, training-free adaptation at inference.
We evaluate LimiX models across 11 large structured-data benchmarks with broad regimes of sample size, feature dimensionality, class number, categorical–to-numerical feature ratio, missingness and sample-to-feature ratios.  
LimiX-16M consistently surpasses strong baselines, as shown in \cref{fig:auc_rank} and \cref{fig:r2_rank}. 
The superiority holds across a wide range of tasks, such as classification, regression, missing value imputation, and data generation, often by substantial margins, while avoiding task-specific architectures or bespoke training per task. Notably, LimiX-2M delivers strong results under tight compute and memory budgets. We also present the first scaling law study for LDMs, revealing how data and model scaling jointly influence downstream performance and offering quantitative guidance for tabular foundation modeling.
All LimiX models are publicly accessible under Apache 2.0.

\end{abstract}

\begin{figure}[th]
    \centering
    \vspace{-10pt}
    \includegraphics[width=1.0 \linewidth]{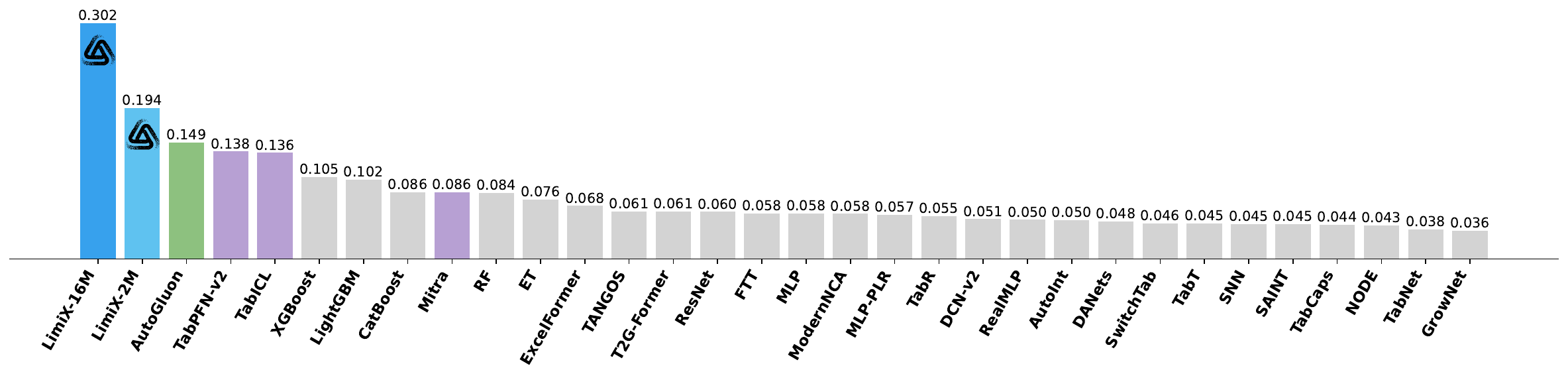}
    \vspace{-14pt}
    \caption{Performance comparison on the averaged reciprocal of the ranks, where the rank is that of the corresponding model on ROC AUC. Higher values indicate stronger average ranking performance across all the classification benchmarks.}
    \label{fig:auc_rank}
\end{figure}

\begin{figure}[th]
    \centering
    \includegraphics[width=1.0 \linewidth]{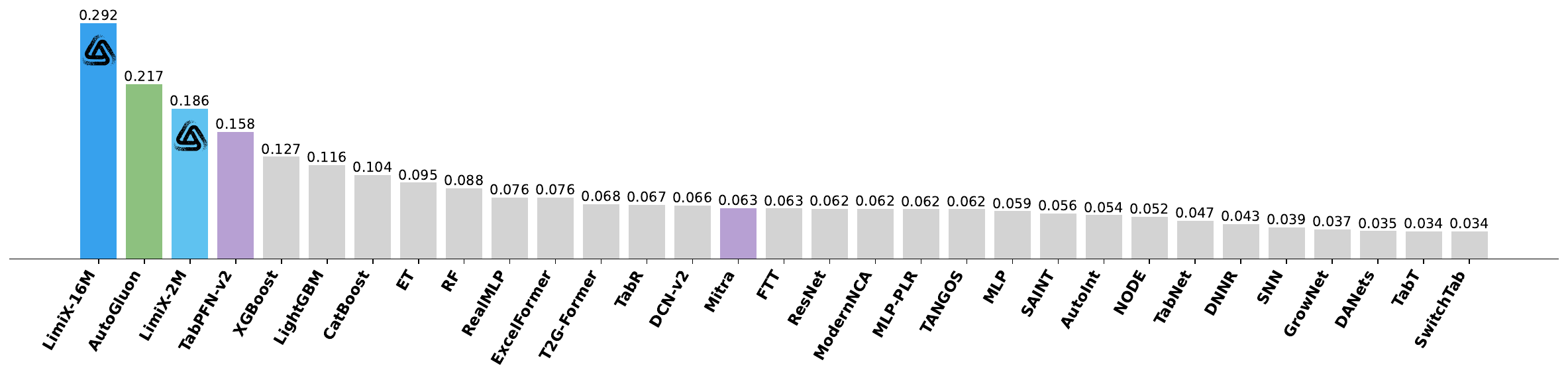}
    \vspace{-14pt}
    \caption{Performance comparison on the averaged reciprocal of the ranks, where the rank is that of the corresponding model on R\textsuperscript{2} across all the regression benchmarks.}
    \label{fig:r2_rank}
\end{figure}

\clearpage
\setcounter{tocdepth}{2} 
\tableofcontents
\clearpage
\section{Introduction}
We posit that progress toward general intelligence is best organized around three complementary spaces: language, physical-world, and structured data, each anchored to a distinct data modality and set of inductive biases. In the language space, large language models (LLMs) provide a universal interface for natural and programming languages and have rapidly advanced instruction following, tool use, and explicit reasoning over token sequences~\citep{openai2023gpt4,touvron2023llama,team2023gemini,bai2023qwen}. In the physical-world space, recent foundation models ground knowledge in space perception and embodied reasoning. They emphasize spatial intelligence through structured scene understanding, controllable scene generation, and neural radiance field reconstruction~\citep{mildenhall2020nerf,ke20243d,xiang2025repurposing,li2024behavior,yi2024gaussiandreamer}. These models also include self-supervised video world models such as V-JEPA, which learn predictive abstractions from large-scale videos and support downstream planning and control~\citep{bardes2024vjepa,assran2025vjepa2}.

On the other hand, structured data serves as the foundational bedrock for evidence-based decision-making across a multitude of critical domains, including finance, healthcare, logistics, and public policy~\citep{fuster2022predictably,johnson2016mimic,yu2021scf,krafft2020defining}. 
The structural consistency and inherent order of structured data provide a robust framework for quantitative analysis and reliable operations~\citep{ramakrishnan2003dbms,silberschatz2011db,stonebraker2018one}, enabling precise prediction, automated reasoning, and rigorous causal inference~\citep{pearl2009causality,hernan2010causal,little2019statistical}. While the emergence of unstructured data has captured considerable attention, the analytical power and operational reliability of structured data remain unparalleled for a vast array of real-world applications~\citep{fang2024llmtabular,vanbreugel2024priority}. Consequently, advancements in structured-data prediction, analysis, and reasoning are not merely an academic pursuit but a critical enabler for efficiency, innovation, and accuracy in modern data-driven systems.
Structured data is not subsumed by language models or embodied intelligence. Converting tables into free text discards metric geometry, physical units, and patterns of missingness that are central to reliable prediction, while models focusing on perception and control in three-dimensional physical environments do not capture discrete interventions, business rules, or causal heterogeneity across environments. Empirical surveys also document current limitations of language models on tabular prediction without bespoke adaptation~\citep{fang2024llmtabular,sui2024table}.

Traditionally, practitioners deploy pipelines of specialized models with gradient-boosting trees and automated ensembles such as XGBoost~\citep{chen2016xgboost}, LightGBM~\citep{ke2017lightgbm}, CatBoost~\citep {dorogush2018catboost}, and AutoGluon~\citep{erickson2020autogluon}—that are trained separately for each dataset and task. These systems excel at supervised prediction but require full retraining on every new dataset, prolonging deployment and preventing reuse of knowledge across domains. Recent deep approaches for tables have improved accuracy on mixed-type data~\citep{huang2020tabtransformer,yoon2020vime,gorishniy2021revisiting,somepalli2021saint,bahri2021scarf}. However, they are still typically trained per dataset and do not provide a single model that transfers across objectives and constraints. 

These limitations motivate the pursuit of a foundation model for structured data, i.e. models trained on large families of datasets to perform in-context learning (ICL) without per-task fine-tuning. Notably, TabPFN~\citep{hollmann2022tabpfn} and its successor TabPFN-v2~\citep{hollmann2025tabpfn_nature} demonstrate state-of-the-art performance and speed on small-to-medium-scale tables via prior-data fitting over diverse generative processes, and community efforts increasingly argue for tabular foundation models as a distinct paradigm~\citep{vanbreugel2024priority}. 
Concurrent works such as TabDPT~\citep{ma2024tabdpt} and TabICL~\citep{qu2025tabicl} explore scaling these ideas to real and larger datasets, narrowing the gap between tabular foundation models and classical methods under the scenarios of larger sample sizes.

Despite the progress, these are still in the early stages of foundation model development and remain limited in generality and performance. Most models are developed and evaluated primarily for supervised prediction (classification or regression), and typically require per-task models, adapters, or external pipelines to address other objectives. 
In practice, one still needs to assemble separate components for classification, regression, missing value imputation, data generation, and sample selection for interpretability, with different training losses and hyperparameters, so the resulting system is not a single reusable learner that delivers all of these capabilities end-to-end while maintaining reliable performance. This gap motivates a large structured-data model (LDM) that treats structured data as a joint distribution over variables and missingness, enabling multiple tasks to be posed as queries to one model.

In this work, we present \ours, a unified family of our LDM series, and release its first two variants, \ourL~and \ourS. \ourA~aim to push generality further: a single model capable of classification, regression, missing value imputation, data generation, and sample selection for interpretability under one training and inference recipe, shifting the paradigm from bespoke pipelines to unified, foundation-style tabular learning. 
\ourA~adopt a lightweight, scalable architecture that represents structured data as a set of sample–feature embeddings and learns dependencies across two dimensions: across features (columns) and across samples (rows). 
To make the attention module explicitly column-aware without inflating parameters, we introduce a low-rank discriminative feature encoding that encodes feature identities. 
Pretraining of \ourA~follows a masked joint-distribution objective and an episodic, context-conditional formulation: For each dataset, an in-context subset establishes dataset-specific priors, and the model is trained to predict masked entries in a disjoint query subset, enabling per-dataset adaptation without fine-tuning at inference. 
The pretraining corpus consists of data synthesized from hierarchical structural causal models (SCMs). Within the synthesis pipeline, we employ graph-aware sampling to obey the causal structure and solvability-aware sampling to accommodate the data quantity of various downstream tasks, improving coverage of local patterns and generalization. 
At inference, attention-guided retrieval provides an efficient, optional ensemble and fine-tuning mechanism. \ourA~retrieve informative samples and features using their own attention scores, aggregate predictions across a handful of lightweight pipelines, and deliver calibrated outputs for various downstream tasks, all through a unified conditional-inference interface and without task-specific architectures or bespoke per-dataset training.

We evaluate LimiX-16M and LimiX-2M across 11 large structured-data benchmarks with broad regimes of sample size, feature dimensionality, class number, categorical–to-numerical feature ratio, missingness and sample-to-feature ratios. With a single model and a unified inference interface, \ourL~surpasses competitive baselines including gradient-boosting trees, deep tabular networks, recent tabular foundation models, and automated ensemble methods. 
Across classification, regression, missing value imputation, data generation, and out-of-distribution prediction, \ourL~delivers consistent gains, often by large margins, while avoiding task-specific model architectures, customized ensembles, or per-dataset training. Notably, LimiX-2M deliver strong results under tight compute and memory budgets.

On most benchmarks, such as OpenML-CC18~\citep{bischl2017openml}, TabArena~\citep{erickson2025tabarena}, TALENT~\citep{liu2024talenttabularanalyticslearning}, \ourL~is the only model that consistently outperforms AutoGluon, which is considered an outstanding baseline across various tabular-data tasks.

This work introduces the first scaling law for LDMs, providing quantitative insights into how data and model scaling shape downstream performance and inform principled design of tabular foundation models.

\section{Architecture} \label{sect:architecture}

We consider a dataset $\mathcal{D}=\{(\rvx^R, \rvy^R)\}$ of $d$ features and an outcome variable, where $\rvx^R=\{\rvx^R_i\}_{i=1}^m$ and $\rvy^R=\{y^R_i\}_{i=1}^m$; 
the superscript $R$ indicates the raw input.
Here, $\rvx^R_i\in \mathbb{R}^d$ and $y^R_i\in\mathbb{R}$ correspond to the $i^{th}$ sample, while $\rvx^R \in \mathbb{R}^{m \times d}$ and $\mathbf{y}^R\in\mathbb{R}^{m}$ correspond to the 2D tabular data. 
For in-context samples and test samples, we use subscripts to distinguish between them. For example, $\rvx^R_{ct}\in\mathbb{R}^{m_{ct}\times d}$ denotes the features of in-context samples, while $\rvx^R_{te}\in\mathbb{R}^{m_{te}\times d}$ denotes those of the test samples.

\subsection{Embedding of Tabular Data}
To ensure compatibility with modern architectures such as Transformers~\citep{vaswani2017attention}, each cell of the 2D tabular input $x^R_{i,j}\in\mathbb{R}$ is first projected into a latent embedding space $\mathbb{R}^{p}$. Specifically, $\rvx^R \in \mathbb{R}^{m \times d}$ is transformed into $\rvx\in \mathbb{R}^{m\times d\times p}$ and $\mathbf{y}^R\in\mathbb{R}^{m}$ is transformed into $\rvy\in \mathbb{R}^{m\times p}$. 
All subsequent attention operations are then conducted within this latent embedding space.
Such a high-dimensional embedding space could strengthen the expressivity of the model.
Concretely, we employ a two-layer MLP with LayerNorm~\citep{ba2016layernorm} as the embedding module, i.e. $\rvx_{i,j}=\sigma(LN(\sigma(LN(\rvx^R_{i,j}\rmW^{(1)} + \rvb^{(1)}))\rmW^{(2)} + \rvb^{(2)}))$, where LN is LayerNorm and $\sigma$ is the GELU activation function \citep{hendrycks2016gaussian}. Separate embedding modules are used for $\rvx^R$ and $\rvy^R$.

\begin{figure}[th]
    \centering
    \includegraphics[width=1 \linewidth]{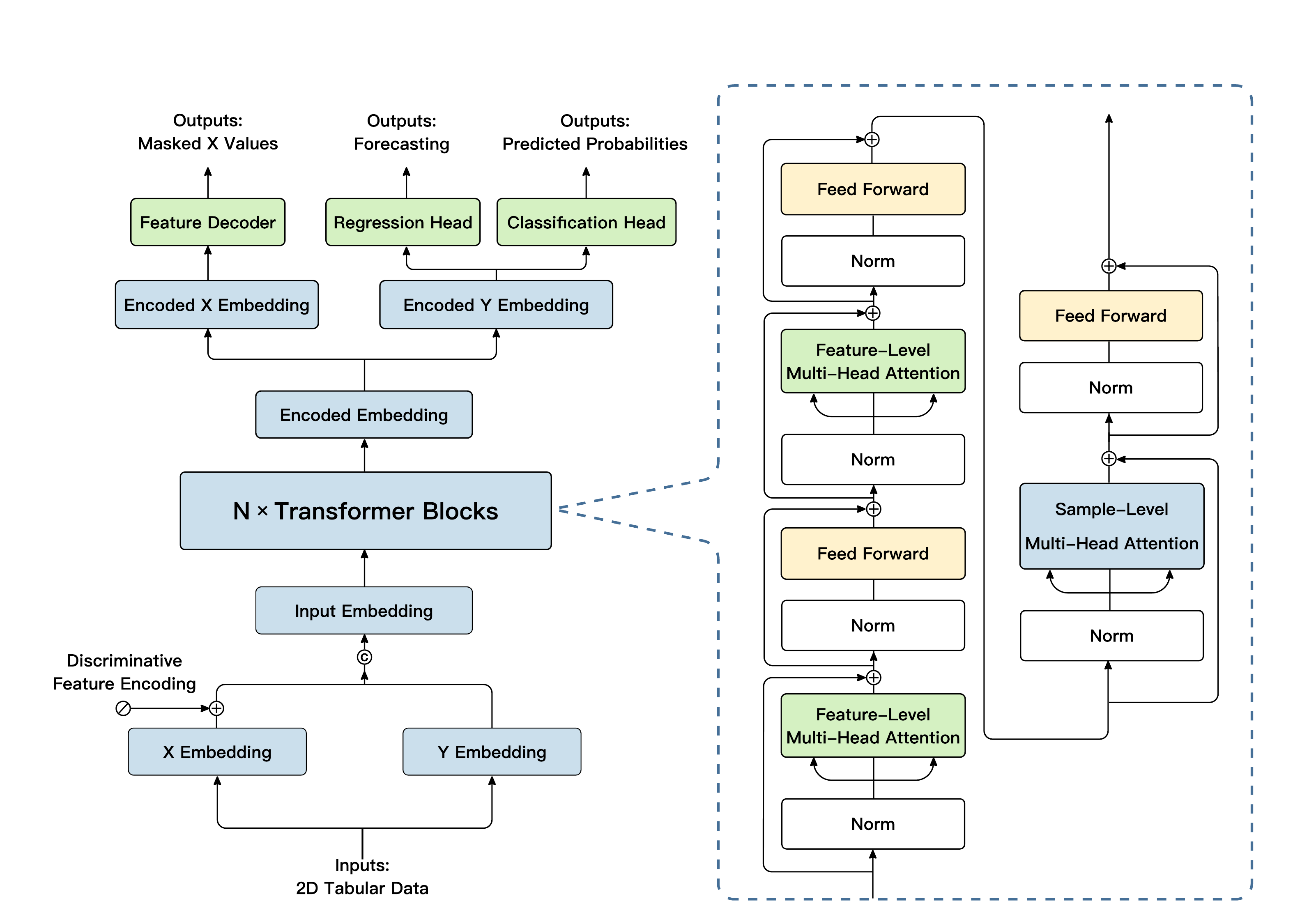}
    \caption{The overall model structure of \ourL.}
    \label{fig:model_structure}
\end{figure}

\subsection{Discriminative Feature Encoding}

The attention score between feature $j$ and $j'$ for sample $i$ is $\frac{1}{\sqrt{p}}
\left\langle \rvx_{i,j}\rmW_{\mathrm{Q}}, \rvx_{i,j'}\rmW_{\mathrm{K}} \right\rangle$.
This score depends solely on the interactions between the embeddings of cell values and imposes no
explicit prior on feature (column) identity. 
As a result, the model cannot reliably infer the column from which a cell value originates.
Inspired by~\citet{hollmann2025tabpfn_nature}, we introduce a learnable low-rank column identifier termed discriminative feature encoding (DFE) whose design philosophy includes two objectives:
(i) Encodings of different features should be well separated to ensure discriminability; 
(ii) Encodings in the embedding space admit a low effective rank so that the model can express column identities compactly and share statistical strength across features.

For implementation, 
let $s$ denote the rank of DFE, and $s\ll p$ compared with the dimension of the embedding space $p$.  
We initialize a matrix $\rvu \in \mathbb{R}^{d\times s}$
whose $j^{th}$ row vector $\rvu_j\in \mathbb{R}^s$ is a low-dimensional code of the column identifier corresponding to feature (column) $j$. Rows are initialized to be approximately
orthogonal and normalized. Then a linear transformation $\rmE \in \mathbb{R}^{s\times p}$ lifts
the code from the low-dimensional code space to the high-dimensional embedding space, i.e.
$\rve_j = \,\rvu_j \rmE\in \mathbb{R}^p$, serving as the DFE for feature $j$. 
Finally, the embedding of cell $(i,j)$ is
augmented additively as $\tilde{\rvx}_{i,j} = \rvx_{i,j} + \rve_j$, which is 
analogous to absolute positional encodings but is applied along the feature axis.
Empirically, we set $s=p/4$ by default. 

\subsection{Model Structure}
As shown in \Cref{fig:model_structure}, \ourA~comprise 12 transformer blocks. Each block performs axis-wise self-attention along the feature and sample axes, incorporating position-wise feed-forward networks (FFNs). In \ourL, we employ an asymmetric configuration with two feature-level passes and one sample-level pass, each followed by an FFN, as ablation studies demonstrate that equal numbers of feature and sample passes underrepresent feature interactions, while increased feature-axis attention enhances modeling capacity for heterogeneous schemas with negligible overhead. All sublayers use pre-normalized LayerNorm~\citep{ba2016layernorm} to stabilize optimization and support deeper scaling.

\section{Pretraining} \label{sect:pretraining}

\subsection{Context-Conditional Masked Modeling for Joint Distribution Learning}

We pretrain \ourA~by randomly masking cells in each row and training the model to recover the hidden entries from the visible context. Exposing the model to various mask patterns forces it to master a wide spectrum of conditional dependencies among variables. When these conditionals are learned properly, they effectively define a single joint model of the data. This joint model can then be queried for diverse tasks with one mechanism: Treat a chosen column as the target to perform regression or classification, fill in missing values by predicting the masked cells from the observed ones, and generate new samples by iteratively masking and refilling subsets of features. 

To better align pretraining with inference, we adopt a Context-Conditional Masked Modeling (CCMM) objective. For each dataset, an episode splits rows into a context subset and a query subset. The model encodes the context and conditions predictions for the query rows on this context through attention or feature modulation, learning to answer queries of the form “predict masked entries in the query given the observed query features and the context”. This episodic formulation enables rapid and label-free adaptation to new datasets at inference time: A handful of context rows establish dataset-specific priors such as category frequencies, marginal scales, and cross-feature couplings, while a single parametric model serves all schemas and tasks. 

Unlike BERT-style masked modeling~\citep{devlin2019bert}, where adaptation is implicit in parameters accumulated during pretraining, CCMM considers context as a non-parametric memory that the model can consult during inference.
This yields per-dataset calibration, better handling of rare categories, and improved robustness to distribution shift without gradient updates, aligning with principles of in-context learning and meta-learning for mixed-type tabular data.

We also find that CCMM improves modeling of the underlying dependency structure compared with recent tabular foundation models~\citep{hollmann2025tabpfn_nature,qu2025tabicl}. By demanding consistency across numerous conditional predictions of all features rather than optimizing the loss of a single conditional prediction of a prefixed feature, the model is compelled to capture stable variable–variable relations instead of brittle decision boundaries. As the coverage of masks becomes richer, these cross-conditional constraints get tightened, yielding more reliable recovery of the joint distribution of all the features and more stable estimates in downstream usages. Please refer to \Cref{sect:theory} for details.

\subsection{Mask Pattern Design}

We employ a heterogeneous schedule that interleaves cell-wise, column-wise, and block masks, enabling the model to practise recovering both isolated entries and coordinated subsets of variables. Cell-wise masks refine local conditional predictions; column-wise masks force the model to treat an entire feature as missing and infer it from the remaining attributes; block masks target higher-order dependencies across semantically related groups (e.g., demographics with outcomes, laboratory panels with diagnoses). Masking rates are stratified by variable type, prevalence, and dispersion to avoid overfitting to common categories and to limit the influence of high-variance continuous features, and we exclude degenerate patterns that remove nearly all informative context. This diversified schedule provides broad coverage of conditional relationships and prevents the model from specializing to a narrow reconstruction regime, yielding a more faithful approximation of the joint distribution. In practice, we sample the mask ratio in [0.1, 0.4] for \ourA.

\subsection{Mask Embedding}
To model masked cells, we introduce learnable mask embeddings that explicitly mark positions to be predicted. For each masked entry, its embedding is replaced by a trainable mask vector that is combined with the column embedding, so the encoder can condition on what is missing as well as where. We align the training masks introduced by the objective with naturally missing/structurally unavailable values observed in real data by using the shared mask embedding and calibrating the masking schedule to match empirical missingness patterns, thereby minimizing distribution shift between synthetic and real scenarios.

The mask embeddings flow through the same attention blocks as observed cells, enabling the model to request information from relevant columns and to produce calibrated distributional predictions at the output head. To reduce the mismatch between pretraining and fine-tuning, we condition the network on the corruption level of each dataset using a mask density feature, defined as the proportion of cells that are masked in the dataset. This scalar is encoded via a lightweight statistics token whose embedding is conditioned on the mask density with a small MLP module. The mask ratio embedding is included alongside the data tokens and participates in self-attention. Providing this cue regularizes the model across a range of masking rates and reduces pretraining to inference mismatch, which improves calibration when the inference pattern, such as masking a single target column, differs from the heavier masks used during pretraining.

\section{Pretraining Data Generation} \label{sect:data-generation}

Performance of foundation models largely depends on the diversity and quality of pretraining data. 
To obtain a well-generalized foundation model for tabular data, we generate synthetic datasets using Directed Acyclic Graphs (DAGs), enabling the creation of datasets with diverse characteristics. The data generation process consists of three stages: DAG generation, data sampling, and task adaptation. 
First, a DAG is constructed to represent complex causal dependencies among variables. Then, a subset of these variables is sampled to define a specific problem, allowing \ourA~to develop causal reasoning capabilities. Finally, the sampled data is processed to align with various downstream tasks. Compared with the data generation methods used in current foundation models ~\citep{hollmann2025tabpfn_nature, qu2025tabicl}, we adopt a hierarchical generation paradigm that establishes causal dependencies in a more controllable and interpretable manner, while the sampling strategies further enhance the ability to generate datasets with diverse solvability and characteristics.

\subsection{DAG Generation based on Hierarchical SCMs}

The theoretical foundation of data generation lies in structural causal models (SCMs).
As illustrated in \Cref{fig: datagen_DAG}, within a DAG, initial data for each root node is independently sampled from an assigned distribution, with the distributions themselves chosen from a collection parameterized by randomized hyperparameters. Beginning at the root nodes, the initial data 
propagates through the DAG along its directed edges, wherein each encountered node represents a distinct local causal structure (LCS). As previously mentioned, the diversity of synthetic data is crucial for effective pretraining. To ensure that DAG generation is both diverse and well-structured, we adopt a hierarchical generation scheme rather than generating DAGs directly, thereby allowing for more fine-grained controls over the generation process. Within the LCS, the data input first propagates to the connected parent nodes, and the data of the child node can be obtained through $X_i = f(\{g_{i,k}(X_k)_{k\in {\rm PA}(X_i)}, \varepsilon_i)\}$ where $g_{i,k}(\cdot)$ represents the edge function from $X_k$ to $X_i$, ${\rm PA}(X_i)$ denotes the set of parent nodes of $X_i$, $f(\cdot)$ is a parameterized aggregation function, and $\varepsilon_i$ is an observational noise.
This framework allows us to capture diverse local causal dependencies. For example, if an LCS contains only one parent node, that node is the direct cause of the child, making the dependency clear and straightforward. However, when multiple parent nodes are included in the LCS, the causal relationship becomes more complex due to the presence of potential confounding variables within the network.

\begin{figure}[t]
    \centering
    \includegraphics[width=0.9\linewidth]{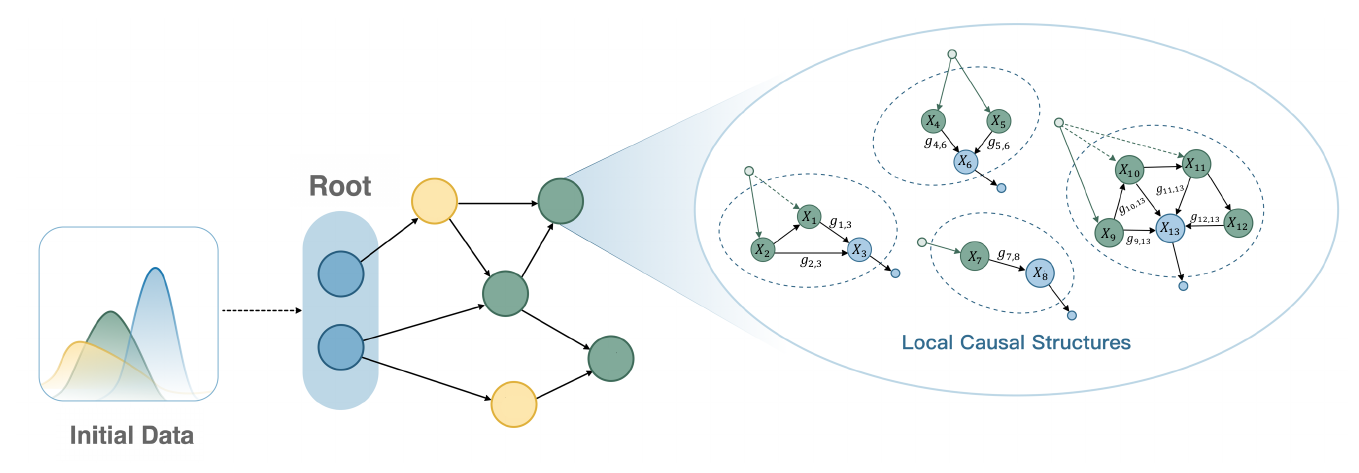}
    \caption{An example of the generated DAG, where $g_{i,j}$ is an edge function defining the relationship between a parent node $X_i$ and its child node $X_j$ in the local causal structures.
    }
    \label{fig: datagen_DAG}
\end{figure}

In addition to the structural properties of LCSs, edge functions and the aggregation function also play crucial roles in shaping the dependencies. In this work, we use three types of edge functions:

\begin{itemize}
    \item \textbf{Multilayer perceptrons (MLPs)}: To determine the architecture of these neural networks, we uniformly sample properties such as the number of linear layers and their associated activation functions. When multiple layers are included, the MLP introduces complex nonlinear dependencies along the edge, whereas with only a single layer, it degenerates into a simple transformation. For weight initialization, we randomly choose from Xavier initialization \citep{glorot2010understanding} and He initialization \citep{he2015delving}. The activation functions are uniformly sampled from identity mapping, hyperbolic tangent, sigmoid, logarithm, absolute value, sine, squaring, modulo operation, heaviside step function, and GELU.

    \item \textbf{Convolutional layers}: For node-level tabular data, the convolution operation is applied along the sample dimension, serving as a local information mixer. The choices of weight initialization and activation functions follow the same procedure as those employed for MLPs.

    \item \textbf{Decision trees}: These are employed to introduce rule-based mappings and can take the form of either classification or regression trees. Unlike some approaches that fit decision trees on random data, we argue that model fitting is neither necessary nor computationally efficient for the purpose of introducing rule-based dependencies. In this work, decision trees are fixed once constructed, with their hyperparameters sampled independently for each edge.
\end{itemize}

Once the child node receives the mapped values, they are aggregated using an aggregation function selected from one of three options: simple average, weighted average, or MLP-based aggregation. With respect to observational noise introduced at each edge, rather than adding noise of a fixed magnitude, we generate noise whose magnitude is scaled according to the distribution of each feature. To construct the DAG from all LCSs, we begin with a single-LCS DAG. Each time a new LCS is incorporated, it may connect to one or multiple existing LCSs in the DAG, whereby the child node of the new LCS is linked to the parent nodes of the target LCSs. This process is repeated until no LCSs remain to be added. In the resulting structure, all nodes with an in-degree of zero serve as the root nodes.

\subsection{Data Sampling}
Determining how to sample a subset of the generated data as the training set is also a critical challenge. Although random sampling is possible, the resulting training data often deviates significantly from being truly representative or good for training models. Thus it is necessary to devise an efficient strategy to sample high-quality training data for pretraining. 
We employ two sampling strategies: graph-aware sampling and solvability-aware sampling. Graph-aware sampling can be regarded as a more advanced variant of random sampling, where the key difference lies in its consideration of the graphical distribution of the sampled training data, which constrains the sampling space and renders it significantly smaller than in the case of random sampling. In contrast to graph-aware sampling, solvability-aware sampling aims to provide training data with varying degrees of solvability, thereby encouraging the model to achieve better generalization.
From this point, we divide the subsampled problems into three classes: high-solvability, moderate-solvability and low-solvability problems. We ensure that the sampling ratios of these classes follow a categorical distribution, each of whose parameters is sampled from a distinct Gaussian distribution. In practice, we alternate between the two sampling strategies according to a predefined probability.

\subsection{Task Adaptation}
During the data generation process, the sampled data may be intended for either classification or regression tasks, and the major difference is the processing of the target variable $y$, which is initially sampled as a continuous variable. 
For classification tasks, it is subsequently discretized into categorical variables. 
For regression tasks, the procedure differs slightly. If the sampled $y$ is already a discrete variable, a regression task will not be constructed; In cases where $y$ is continuous but clusters closely around a limited set of distinct values, an in-order transformation can be applied to achieve a more uniform distribution across the magnitude scale.

\section{Retrieval-based Ensemble}
\label{sec:retrieval}

We adopt an inference-time and retrieval-based ensemble strategy that leverages \ours's learned attention scores to upweight and select representative in-context samples and features without any additional training, so that we can further improve the performance of \ourA.

In terms of ensemble, we run multiple inference pipelines per dataset and aggregate the results. 
In each pipeline, we (i) randomly permute the feature columns and reorder the labels for categorical features or outcomes, and (ii) augment a subset of features with simple, schema-preserving transformations like quantile normalization, log-normal transformation, and high-energy SVD components. 
For classification, the number of inference pipelines is set to 4. For regression, it is set to 8.

In terms of retrieval, we perform two forward passes of \ourA~for each pipeline. The first pass is performed based on all in-context samples and is employed for retrieval. The second pass is performed based on the customized in-context samples retrieved in the first pass. 

The procedure of the first pass is as follows.
First, last-layer feature-level attention provides importance scores over features, which can be used as feature weights for subsequent sample selection. Concretely, for each test sample, we calculate $\rva_f\in\mathbb{R}^{d+1}$, which is the feature-level attention score between the outcome $y_{te}$ and the concatenation of $F$ features and outcome $(\rvx_{te},y_{te})$. 
Then, the module of last-layer sample-level cross-attention between test samples and in-context samples induces importance scores over in-context samples. Concretely, we calculate $\rva_s\in\mathbb{R}^{m_{ct}\times (d+1)}$, which is the sample-level cross-attention scores between the test sample $(\rvx_{te}, y_{te})$ and in-context samples $(\rvx_{ct}, \rvy_{ct})$.
Finally, for a given test sample, we retrieve in-context samples with highest reweighted attention scores as customized in-context samples for this test sample, and perform a forward propagation again for prediction. Concretely, we calculate $\rva_{sf}=\rva_{s}\rva_{f}\in\mathbb{R}^{m_{ct}}$, a weighted average of $\rva_s$ along the feature dimension with $\rva_f$ as the feature weights.

\begin{figure}[th]
    \centering
    \begin{subfigure}[b]{0.45\textwidth}
        \centering
        \includegraphics[width=\textwidth]{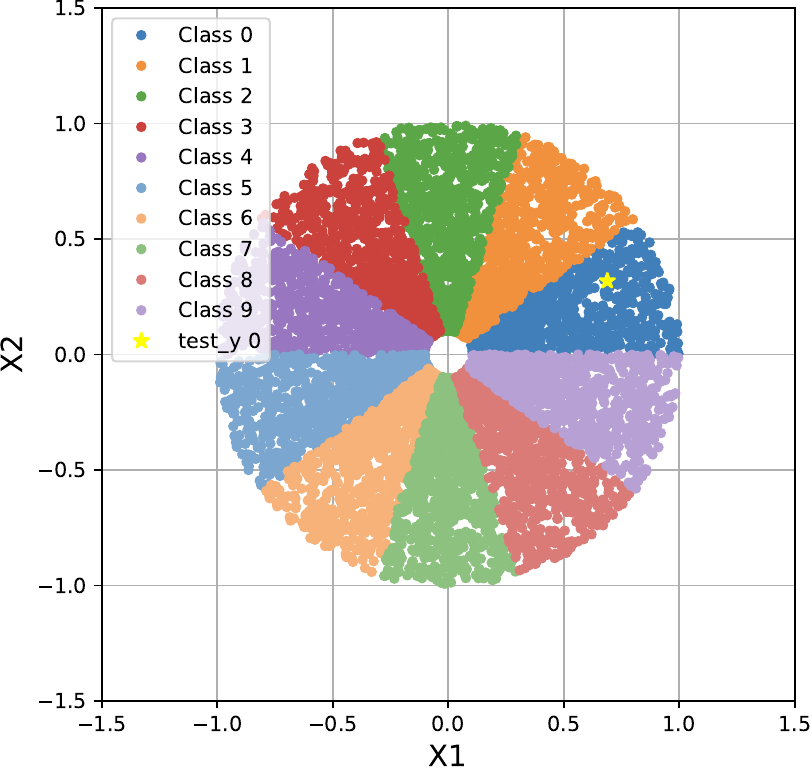}
        \caption{Visualization of synthetic samples. Each sector represents a category of in-context samples. The yellow pentagram represent the test sample. 
        }
        \label{fig:sample_level_total}
    \end{subfigure}
    \hfill
    \begin{subfigure}[b]{0.45\textwidth}
        \centering
        \includegraphics[width=\textwidth]{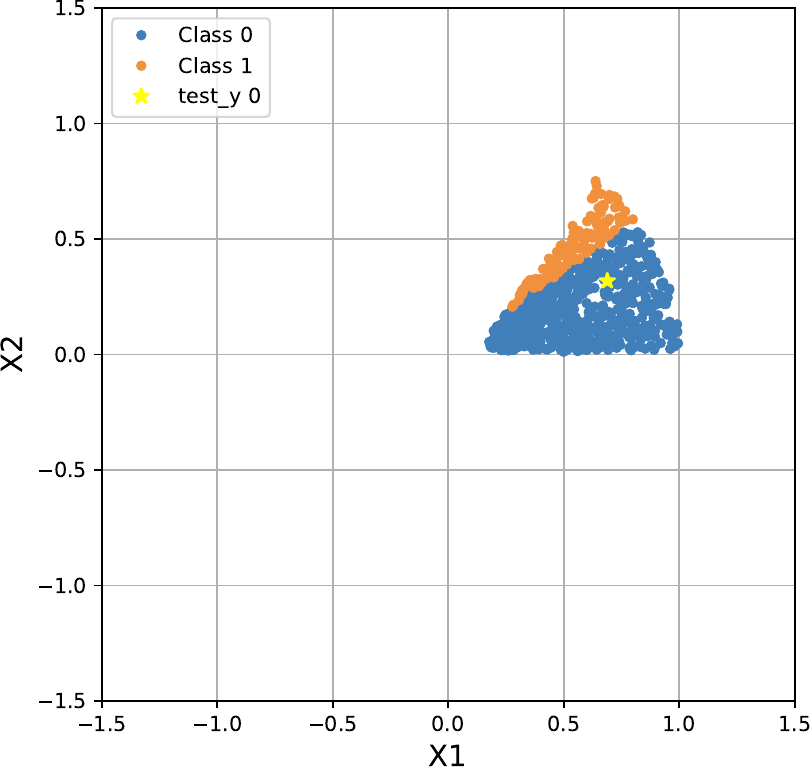}
        \caption{Top 10\% of in-context samples sorted by sample-level attention scores. \\ } 
        \label{fig:sample_level_top_N} 
    \end{subfigure}
    \caption{Toy example of sample-level attention. In-context samples that share the same category as the query sample are assigned higher scores through the attention module of \ourL.
    } 
    \label{fig:sample_level}
\end{figure}

\begin{figure}[th]
    \centering
    \begin{subfigure}[b]{0.48\textwidth}
        \centering
        \includegraphics[width=\textwidth]{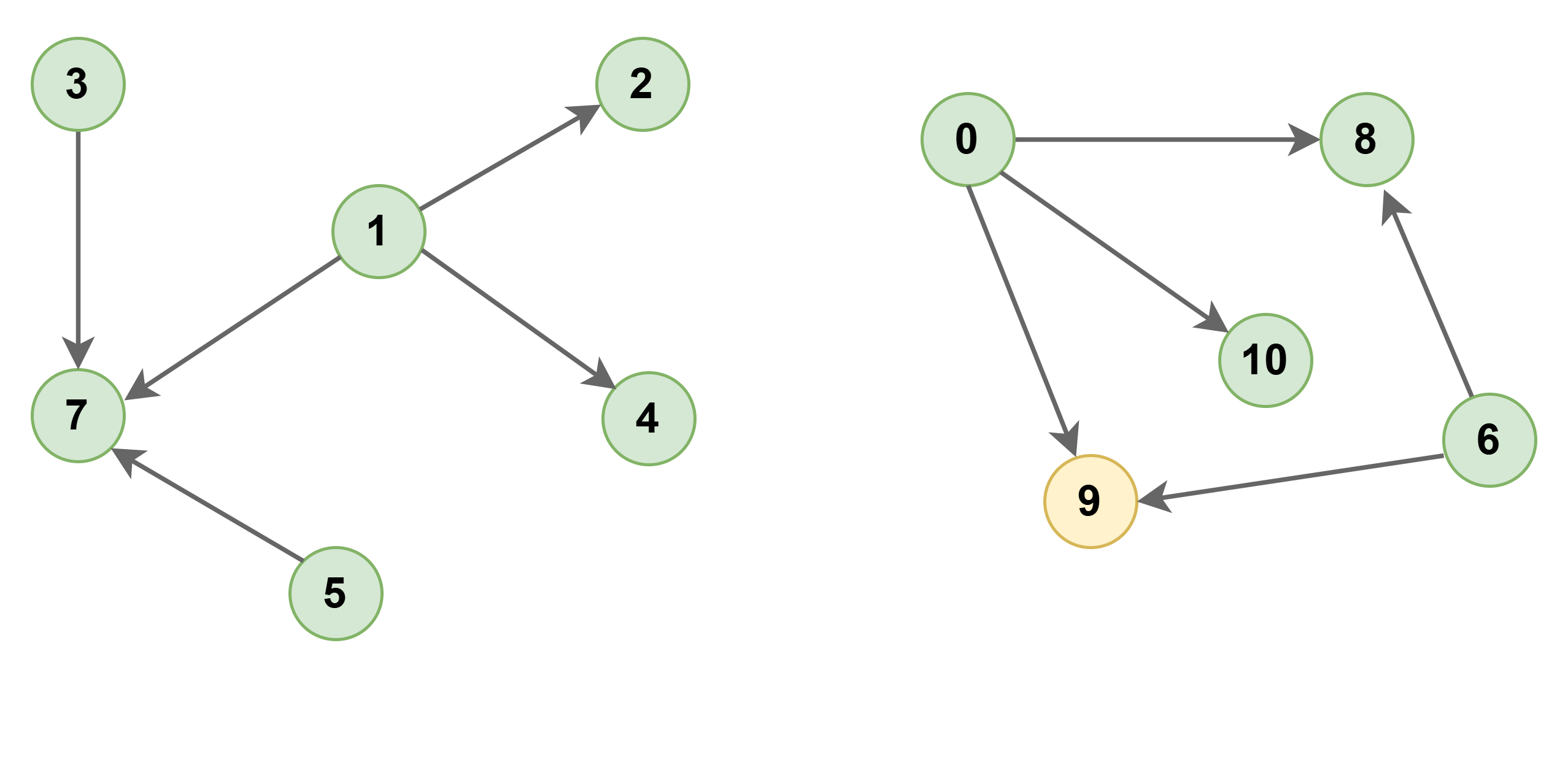}
        \caption{Causal DAG of synthetic samples.}
        \label{fig:feature_scm}
    \end{subfigure}
    \hfill
    \begin{subfigure}[b]{0.48\textwidth}
        \centering
        \includegraphics[width=\textwidth]{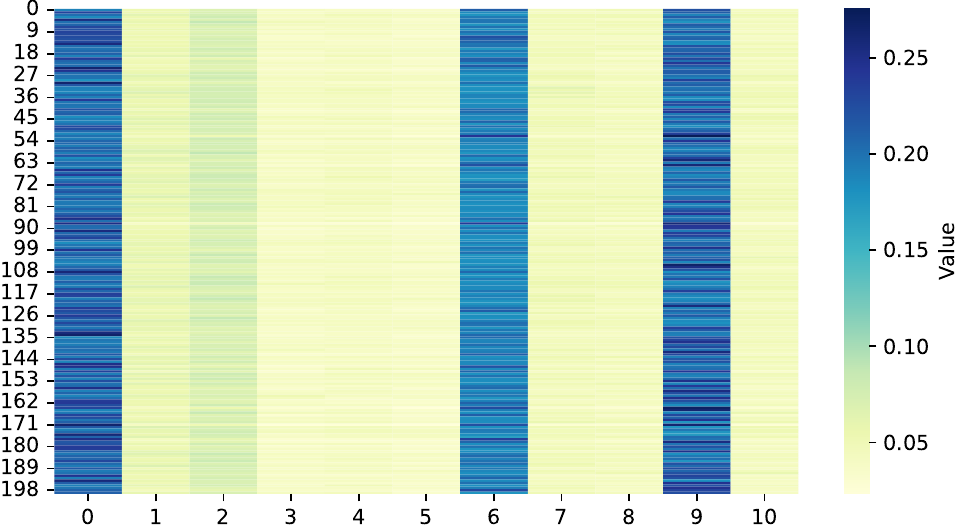}
        \caption{Heatmap of feature-level attention scores of $Y$. }
        \label{fig:feature_attn}
    \end{subfigure}
    \caption{Toy example of feature-level attention for the outcome variable. Direct causes of the outcome and the outcome itself are assigned almost all the attention weights in \ourL. 
    }
    \label{fig:feature_level}
\end{figure}

\paragraph{Toy examples.} We present the effect of bi-level attention-based retrieval via toy examples.

\begin{itemize}
    \item \textbf{Sample-level retrieval.}\quad We generate 2D synthetic data of 10 classes, where each class of data points occupy a sector of a circle in~\Cref{fig:sample_level_total}. 
    
    Since attention depicts sample similarity in the latent embedding space and it also takes the dependency between input features and the category label into account, it is capable of capturing more complex dependencies than naively using Euclidean distance in the original 2D space of features. 
    From \Cref{fig:sample_level_top_N}, we can see that the attention module predominantly assigns large weights to in-context instances sharing the same category label as the query instance. This indicates that the model leverages class-consistent contextual information from in-context samples to assist its prediction.
 
    \item \textbf{Feature-level retrieval.}\quad We generate synthetic data via the SCM in~\Cref{fig:feature_scm}, where each green node denotes an observed feature and the yellow node denotes the outcome. Each edge is a two-layer MLP using ReLU as the activation function added with Gaussian noise. 
    From \Cref{fig:feature_attn}, we can see that the attention module assigns most weights to a subset of features, which is exactly the set of direct causes of the outcome in the SCM. This suggests that feature-level attention could help to focus on causal features and reduce reliance on spurious correlations. 

\end{itemize}

\section{Theoretical Analysis on Context-Conditional Masked Modeling} \label{sect:theory}
We begin by introducing the mathematical formulation underlying our theoretical analysis.  

\paragraph{Notations.}
Let $\Omega$ denote the space for each feature in the table. We represent the $m$ in-context samples, each with $d$ features, by the random matrix $\bfXcontext \in \Omega^{m \times d}$. We use the random vector $\bfXtest = (\Xtest_1, \dots, \Xtest_d) \in \Omega^{d}$ to denote the test sample\footnote{
For simplicity, we assume (i) all features share the same space $\Omega$, though the results generalize to distinct spaces; (ii) we use $m$ instead of $m_{ct}$ in \Cref{sect:architecture}; and (iii) there is only one test sample, i.e., $m_{te} = 1$ in \Cref{sect:architecture}.}. Meanwhile, we do not explicitly introduce a target label $y$ in this section for simplicity; instead, we treat it as a particular dimension of both the in-context and test samples. Consequently, one dimension of $\bfXtest$ is unknown to the model during the test phase.

For any $\pi \subseteq [d]$, let $\bfXtest_\pi = (\Xtest_j)_{j \in \pi}$ and $\bfXtest_{-\pi} = (\Xtest_j)_{j \notin \pi}$ denote the subvectors on $\pi$ and its complement, respectively. In particular, for $\pi = \{j\}$ we use $\bfXtest_{-j}$ as shorthand for $\bfXtest_{-\{j\}}$.

\paragraph{Training and test procedures.}
We are given $n$ i.i.d. samples $\{(\bfxcontexti, \bfxtesti)\}_{i=1}^n$ drawn from $p(\bfXcontext, \bfXtest)$, given in \cref{sect:data-generation}. The model is trained using CCMM, a masked pre-training method (\cref{sect:pretraining}). Let $\Pi \subseteq 2^{[d]}$ be a set of masks and $\uniform(\Pi)$ the uniform distribution over $\Pi$. In practice, we take $\Pi = \{\pi \subseteq [d] : |\pi| \in [0.1d,0.4d]\}$, while for theory we simplify to masks of fixed size $k$:
\begin{equation} \label{eq:Pi-k}
    \Pi_k := \{ \pi \subseteq [d] : |\pi| = k \}.
\end{equation}
It is easy to verify that our theoretical insights extend naturally to settings with variable mask sizes.

For each sample $(\bfXcontext, \bfXtest)$, we draw $\pi \sim \uniform(\Pi_k)$ and train 
$q_{\theta}(\bfXtest_{\pi} | \bfXtest_{-\pi}, \bfXcontext)$, where $\theta \in \Theta$ 
and $\Theta$ denotes the parameter space of the model, to reconstruct the masked features. Let $\pi_i$ be the mask for $(\bfxcontexti, \bfxtesti)$. The empirical loss $\hat{L}_{k}(\theta)$ and estimator $\htheta_{k,n}$ are
\begin{equation} \label{eq:loss-empirical}
    \hat{L}_k(\theta) = \frac{1}{n} \sum_{i=1}^n -\log q_{\theta}\left(\bfxtesti_{\pi_i}|  \bfxtesti_{-\pi_i}, \bfxcontexti\right), 
    \qquad 
    \htheta_{k,n} = \argmin_{\theta \in \Theta} \hat{L}_k(\theta).
\end{equation}  
As $n \to \infty$, this converges to the population-level loss $L_k(\theta)$ and the corresponding solution $\theta^*_k$:
\begin{equation} \label{eq:loss-population}
    L_k(\theta) = \E_{(\bfXcontext, \bfXtest) \sim p,\, \pi \sim \uniform(\Pi_k)} 
    \left[ -\log q_{\theta}(\bfXtest_\pi | \bfXtest_{-\pi}, \bfXcontext) \right], 
    \qquad \theta^*_k = \argmin_{\theta \in \Theta} L_k(\theta).
\end{equation}

At test time, all features in $\bfXcontext$ are observed, while one feature $\Xtest_j$ of $\bfXtest$ is missing and must be inferred. The goal is to output $p(\Xtest_j \mid \bfXtest_{-j}, \bfXcontext)$.

\subsection{Modeling the Joint Conditional with Random Masks} \label{sect:joint-distribution}

We now explain the necessity of randomly sampling masks from $\uniform(\Pi_k)$. Our result is based on the following proposition.

\begin{proposition}[Informal; See \Cref{prop:identifiability-full}] \label{prop:identifiability}
    Under mild assumptions, for any $k \in [d]$, there is a one-to-one correspondence between the distribution $p(\bfXtest|\bfXcontext)$ and the family of conditionals $\{p(\bfXtest_{\pi}|\bfXtest_{-\pi}, \bfXcontext): \forall \pi \in \Pi_k\}$.
\end{proposition}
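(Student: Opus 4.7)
The plan is to verify the two directions of the bijection separately. The forward direction is routine: given the joint $p(\bfXtest \mid \bfXcontext)$, any $k$-conditional $p(\bfXtest_\pi \mid \bfXtest_{-\pi}, \bfXcontext)$ is obtained by standard marginalization and division, and this assignment is well-defined. The content lies in the reverse direction, which asserts that the family $\{p(\bfXtest_\pi \mid \bfXtest_{-\pi}, \bfXcontext) : \pi \in \Pi_k\}$ uniquely identifies the joint.

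The first step is to reduce the $k$-conditional case to the single-coordinate case. For any $\pi \in \Pi_k$ and any $j \in \pi$, the identity
\begin{equation*}
p(\Xtest_j \mid \bfXtest_{-j}, \bfXcontext) \;=\; \frac{p(\bfXtest_\pi \mid \bfXtest_{-\pi}, \bfXcontext)}{\int p(\bfXtest_\pi \mid \bfXtest_{-\pi}, \bfXcontext)\, d\Xtest_j}
\end{equation*}
shows that $p(\Xtest_j \mid \bfXtest_{-j}, \bfXcontext)$ is computable from any $k$-conditional whose mask contains $j$. Because every index $j \in [d]$ lies in some $\pi \in \Pi_k$ for $1 \leq k \leq d$, the entire collection $\{p(\Xtest_j \mid \bfXtest_{-j}, \bfXcontext)\}_{j=1}^d$ of full single-coordinate conditionals is determined by the $k$-conditional family.

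The second step is a Brook-style telescoping argument to reconstruct the joint from these single-coordinate conditionals. Fixing a reference point $\bfxtest^*$ and, for any other $\bfxtest$, defining the path $\bfxtest^{(0)} = \bfxtest^*$ and $\bfxtest^{(j)}$ agreeing with $\bfxtest$ on coordinates $1,\dots,j$ and with $\bfxtest^*$ on coordinates $j{+}1,\dots,d$, the ratio $p(\bfxtest \mid \bfxcontext)/p(\bfxtest^* \mid \bfxcontext)$ telescopes into a product of ratios of the single-coordinate conditionals evaluated along this path. This pins the joint down up to a multiplicative constant, which is then fixed by the normalization $\int p(\bfxtest \mid \bfxcontext)\, d\bfxtest = 1$. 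Combining this with the forward direction gives the asserted one-to-one correspondence.

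The main obstacle is stating the ``mild assumption'' precisely enough to justify the telescoping: every conditioning event along the path must have strictly positive density so that no denominator vanishes and every ratio of single-coordinate conditionals is well-defined. Positivity of $p(\bfXtest \mid \bfXcontext)$ on a Cartesian-product support (or the Hammersley--Clifford-type positivity condition in discrete or mixed-type settings) is exactly what is needed and is what I would encode as the ``mild'' hypothesis in the formal statement. Beyond this, the remaining computations are routine manipulations of conditional densities and require no further analytic subtlety.
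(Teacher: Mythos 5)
Your proposal is correct, and its first step (recovering each full single-coordinate conditional $p(\Xtest_j \mid \bfXtest_{-j}, \bfXcontext)$ by integrating a $k$-conditional over $\Xtest_j$ inside the mask) is exactly the paper's first step. Where you diverge is the reconstruction of the joint from those single-site conditionals: you invoke the classical Brook/Besag telescoping identity along a path from a reference point $\bfxtest^*$ to $\bfxtest$, which pins down $p(\cdot \mid \bfXcontext)$ up to a normalizing constant, whereas the paper instead runs an induction on mask size, showing that $\gP_{k+1}$ is recoverable from $\gP_1$ and $\gP_k$ via the integral identity $\int_\Omega p(\Xtest_j \mid \bfXtest_{\pi'}, \bfXtest_{-\pi}, \bfXcontext)\,/\,p(\bfXtest_{\pi'} \mid \Xtest_j, \bfXtest_{-\pi}, \bfXcontext)\,\mathrm{d}\Xtest_j = 1/p(\bfXtest_{\pi'} \mid \bfXtest_{-\pi}, \bfXcontext)$, terminating at $k=d$ where the family is the joint conditional itself. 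Your route is the more standard and arguably more economical one — it reaches the joint in a single telescoping step and makes transparent why positivity is the essential hypothesis; the paper's route avoids choosing a reference point and, as a byproduct, exhibits every intermediate conditional family $\gP_{k'}$ as recoverable, which is mildly informative but not needed for the statement. Both arguments rest on the same strict-positivity assumption on $p(\bfXtest \mid \bfXcontext)$, which you correctly identified as the content of the ``mild assumptions,'' and which is precisely what the paper's formal version (\Cref{prop:identifiability-full}) assumes.
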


At test time, the target variable $y$ may correspond to any feature $\Xtest_j$ of the test sample $\bfXtest$, so the model must be able to estimate $p(\Xtest_j | \bfXtest_{-j}, \bfXcontext)$ for every $j \in [d]$. As shown in \Cref{prop:identifiability}, this requirement is equivalent to learning the joint conditional distribution $p(\bfXtest | \bfXcontext)$. Hence, a model can achieve strong predictive performance at test time if and only if it learns $p(\bfXtest | \bfXcontext)$. Moreover, if each $p(\bfXtest_{\pi} | \bfXtest_{-\pi}, \bfXcontext)$ can be well approximated by $q_{\theta}(\bfXtest_{\pi} | \bfXtest_{-\pi}, \bfXcontext)$ for all $\pi \in \Pi_k$, then the model can recover the joint conditional distribution and thereby generalize effectively. By contrast, \Cref{example:identifiability} shows that knowledge of $p(\Xtest_{\pi} | \bfXtest_{-\pi}, \bfXcontext)$ for only a subset of $\pi \in \Pi_k$ may be insufficient to recover $p(\bfXtest | \bfXcontext)$, preventing the model from generalizing effectively at test time.

Moreover, the target distribution $p(\bfXtest | \bfXcontext)$ is closely connected to the underlying structural causal models (SCMs). Let $\SCM$ denote a random variable corresponding to a structural causal model (SCM). By the data-generating process, we have $\bfXcontext \perp \bfXtest \mid S$, which yields
\begin{equation} \label{eq:conditional-target}
    p(\bfXtest | \bfXcontext) 
    = \int_{\SCM} p(\bfXtest, S | \bfXcontext) \, \mathrm{d}\SCM 
    = \int_{\SCM} p(S | \bfXcontext) \, p(\bfXtest | S, \bfXcontext) \, \mathrm{d}\SCM 
    = \int_{\SCM} p(S | \bfXcontext) \, p(\bfXtest | S) \, \mathrm{d}\SCM.
\end{equation}
Intuitively, \cref{eq:conditional-target} suggests that one efficient way to learn $p(\bfXtest | \bfXcontext)$ is through two components: (i) $p(S | \bfXcontext)$, the posterior distribution over SCMs given the context, and (ii) $p(\bfXtest | S)$, the likelihood of the test sample under a given SCM.

\subsection{The Choice of Mask Number} \label{sect:mask-number}

In this subsection, we demonstrate that the choice $k$ in \cref{eq:Pi-k} has great impact on models' performances and we choose $k > 1$ due to both sample efficiency and generalization considerations.
Our result is an extension from the analysis by \citet{li2024promises} on the properties of masked sequence prediction.

Based on \Cref{prop:identifiability}, we will henceforth use $q_{\theta}(\bfXtest | \bfXcontext)$ to denote the distributions induced by the learned family of conditional probabilities $\{q_{\theta}(\bfXtest_{\pi} | \bfXtest_{-\pi}, \bfXcontext) : \pi \in \Pi_k\}$.

\paragraph{Sample efficiency.} Theorem below shows that larger $k$ yields lower estimation uncertainty.

\begin{theorem}[Informal; see \Cref{thrm:sample-efficiency-full}] \label{thrm:sample-efficiency}
    Suppose there exists $\theta^* \in \Theta$ such that $q_{\theta^*}(\bfXtest_{\pi} | \bfXtest_{-\pi}, \bfXcontext) = p(\bfXtest_{\pi} | \bfXtest_{-\pi}, \bfXcontext)$ for all $\bfXcontext \in \Omega^{m \times d}$, $\bfXtest \in \Omega^d$, and $\pi \subseteq [d]$, and that the minimizer of $L_k(\theta)$ is unique for every $k$. Then, under mild regularity conditions, as $n \to \infty$,
    \[
        \sqrt{n}\,(\htheta_{k,n} - \theta^*) \xrightarrow{d} \gN(0, \Gamma_k),
    \]
    where $\Gamma_k$ does not depend on $n$ and satisfies $\Gamma_{k+1} \preceq \Gamma_k$.
\end{theorem}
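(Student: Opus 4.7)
The plan is to reduce the problem to a comparison of Fisher-type information matrices and then establish monotonicity via a Rao--Blackwell argument combined with a symmetry-preserving coupling between mask sizes.

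First, I would invoke standard M-estimator asymptotics. Since $\htheta_{k,n}$ minimizes $\hat{L}_k$, under the regularity conditions implicit in the theorem (unique identifiable $\theta^*$, smoothness in $\theta$, an $L^2$ envelope for the score, and interchange of $\nabla_\theta$ with $\int$), classical theory gives $\sqrt{n}(\htheta_{k,n} - \theta^*) \xrightarrow{d} \gN(0, H_k^{-1} V_k H_k^{-1})$, where $H_k = \nabla_\theta^2 L_k(\theta^*)$ and $V_k = \E[\nabla_\theta \ell_k \nabla_\theta \ell_k^\top]$ with $\ell_k(\theta) = -\log q_\theta(\bfXtest_\pi|\bfXtest_{-\pi}, \bfXcontext)$. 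Because $q_{\theta^*}$ matches the true conditional for every mask, applying the Bartlett identity conditionally on $(\pi, \bfXtest_{-\pi}, \bfXcontext)$ and then integrating collapses $H_k = V_k =: I_k$, so $\Gamma_k = I_k^{-1}$, and the target reduces to $I_{k+1} \succeq I_k$ in the Loewner order.

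Next, I would decompose $I_k = \E_{\pi \sim \uniform(\Pi_k)}[F_\pi]$ with $F_\pi = \E[s_\pi s_\pi^\top]$ and $s_\pi = \nabla_\theta \log q_\theta(\bfXtest_\pi|\bfXtest_{-\pi}, \bfXcontext)|_{\theta^*}$. Let $s$ denote the score of the joint $q_\theta(\bfXtest|\bfXcontext)$ induced by mutual consistency of the conditionals at $\theta^*$. Differentiating the chain rule $\log q_\theta(\bfXtest_\pi|\bfXtest_{-\pi}, \bfXcontext) = \log q_\theta(\bfXtest|\bfXcontext) - \log q_\theta(\bfXtest_{-\pi}|\bfXcontext)$ and applying Fisher's identity to the marginal score yields $s_\pi = s - \E[s|\bfXtest_{-\pi}, \bfXcontext]$, hence
\begin{align*}
F_\pi = \E[s s^\top] - \E\bigl[\E[s|\bfXtest_{-\pi}, \bfXcontext]\,\E[s|\bfXtest_{-\pi}, \bfXcontext]^\top\bigr].
\end{align*}
For $\pi \subset \pi'$ the sigma-algebra $\sigma(\bfXtest_{-\pi}, \bfXcontext)$ is finer than $\sigma(\bfXtest_{-\pi'}, \bfXcontext)$, so $L^2$-contractivity of conditional expectation (Rao--Blackwell) forces the subtracted term for $\pi$ to Loewner-dominate the one for $\pi'$, yielding the pointwise inequality $F_\pi \preceq F_{\pi'}$. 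To lift this to an averaged statement, I would couple $\uniform(\Pi_k)$ to $\uniform(\Pi_{k+1})$ by first drawing $\pi' \sim \uniform(\Pi_{k+1})$ and then removing a uniformly random element $j \in \pi'$; a short combinatorial check confirms that $\pi := \pi' \setminus \{j\}$ is marginally uniform on $\Pi_k$. Taking Loewner expectation gives $I_k = \E[F_\pi] \preceq \E[F_{\pi'}] = I_{k+1}$, and inverting yields $\Gamma_{k+1} \preceq \Gamma_k$.

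The hard part is justifying the decomposition $s_\pi = s - \E[s|\bfXtest_{-\pi}, \bfXcontext]$ from the model's side. The parameterization $q_\theta$ nominally defines conditionals mask by mask, so a priori there is no single joint whose gradient reproduces $s_\pi$ for every $\pi$. Well-specification at $\theta^*$ guarantees the conditionals are mutually consistent at that point, but passing this to an identity on scores requires a consistent parameterization in a neighborhood of $\theta^*$ together with sufficient smoothness to differentiate under the marginalizing integral $q_\theta(\bfXtest_{-\pi}|\bfXcontext) = \int q_\theta(\bfXtest|\bfXcontext)\,\mathrm{d}\bfXtest_\pi$. Formalizing these hypotheses is precisely what the ``mild regularity conditions'' in the theorem statement must do; once they are in place, the Rao--Blackwell step above is routine.
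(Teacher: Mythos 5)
Your proposal is correct, and its first half (M-estimator asymptotics plus the Bartlett/information-equality collapse $H_k=V_k$, so $\Gamma_k=I_k^{-1}$ and the claim reduces to $I_{k+1}\succeq I_k$) coincides with the paper's Lemma on $\nabla^2_\theta L_k(\theta^*)=\Cov(-\nabla_\theta\log q_{\theta^*})$ and its use of van der Vaart's Theorem 5.23. The monotonicity step, however, takes a genuinely different route. The paper never introduces the joint score $s=\nabla_\theta\log q_\theta(\bfXtest|\bfXcontext)$: it splits a $(k{+}1)$-mask log-conditional by the chain rule as $\log q_\theta(\bfXtest_\pi|\bfXtest_{-\pi},\bfXcontext)=\log q_\theta(\Xtest_j|\bfXtest_{-\pi},\bfXcontext)+\log q_\theta(\bfXtest_{\gamma}|\bfXtest_{-\gamma},\bfXcontext)$ with $\gamma=\pi\setminus\{j\}$, expands $\nabla^2_\theta L_{k+1}(\theta^*)=A+B+B^\top+C$, identifies $C=\nabla^2_\theta L_k(\theta^*)$ under the same add-one-element coupling you use (in the opposite direction), kills the cross term $B$ by the mean-zero score property, and concludes from $A\succeq 0$. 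You instead represent $s_\pi=s-\E[s\mid\bfXtest_{-\pi},\bfXcontext]$ and invoke Rao--Blackwell contractivity across the nested sigma-algebras $\sigma(\bfXtest_{-\pi'},\bfXcontext)\subseteq\sigma(\bfXtest_{-\pi},\bfXcontext)$ for $\pi\subset\pi'$. Your version buys a stronger, cleaner structural fact -- the pointwise Loewner monotonicity $F_\pi\preceq F_{\pi'}$ for nested masks, not merely the averaged inequality -- at the price of needing Fisher's identity for the marginal score and hence a consistent joint parameterization near $\theta^*$; the paper's version only manipulates adjacent mask sizes and needs only the one-step factorization of its conditionals. Your closing caveat is well placed but applies to the paper too: the paper assumes by notational fiat that all conditionals $\{q_\theta(\bfXtest_\pi|\bfXtest_{-\pi},\bfXcontext)\}$ are induced by a single joint $q_\theta(\bfXtest|\bfXcontext)$, which is exactly the consistency its chain-rule identity (and your score decomposition) requires, so neither argument is more exposed on this point than the other.
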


The assumption on $\theta^*$ states that the optimal solution of the model can approximate any conditional distribution, which is reasonable given the expressive power of transformer-based architectures. This theorem further shows that for sufficiently large and fixed $n$, if $k_1 > k_2$ then $\Gamma_{k_1} \preceq \Gamma_{k_2}$, implying that $\htheta_{k_1,n}$ has lower estimation uncertainty than $\htheta_{k_2,n}$. Equivalently, when $k$ is larger, fewer samples are required to achieve the same uncertainty, leading to greater sample efficiency.

\paragraph{Generalization for joint distribution learning.} Masked pretraining empowers the model to infer the joint distribution of data from the context of training samples. We further show that an increase in the density of random masks enables a more accurate reconstruction of the joint distribution.

\begin{theorem}[Informal; See \Cref{theo:joint_dist-full}] 
\label{theo:joint_dist}
Under regularity conditions on $q_\theta$, with high probability, for any $\theta \in \Theta$, the expected total variation between $q_\theta(\bfXtest|\bfXcontext)$ and the true conditional distribution $p(\bfXtest|\bfXcontext)$ is at most:
\[
\sqrt{ \frac{1}{2}C_k(q_\theta)\big(\hat L_k(\theta) + \text{complexity terms}\big) + \gO(n^{-1/2})},
\]
where constants $C_k(q_\theta)$ depend only on $q_\theta$ and $k$. Furthermore, $C_{k+1}(q_\theta) \leq C_k(q_\theta)$ for any $\theta\in \Theta$.
\end{theorem}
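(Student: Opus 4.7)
The plan is to decompose the argument into three pieces: a deterministic functional-inequality bound that relates the target total variation to the loss being optimized, a concentration step that replaces population expectations by their empirical counterparts uniformly in $\theta$, and a structural monotonicity step for the tensorization constants $C_k(q_\theta)$.

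For the deterministic piece, I would invoke approximate tensorization of entropy for $q_\theta(\bfXtest \mid \bfXcontext)$ against the mask family $\Pi_k$, taking the true distribution $p(\bfXtest \mid \bfXcontext)$ in the role of the reference measure. This yields
$$D_{\mathrm{KL}}\bigl(p(\cdot\mid \bfXcontext)\,\|\,q_\theta(\cdot\mid \bfXcontext)\bigr) \leq C_k(q_\theta)\cdot \E_{\pi\sim\uniform(\Pi_k)}\bigl[D_{\mathrm{KL}}\bigl(p(\bfXtest_\pi\mid \bfXcontext,\bfXtest_{-\pi})\,\|\, q_\theta(\bfXtest_\pi\mid \bfXcontext,\bfXtest_{-\pi})\bigr)\bigr].$$
Taking expectation over $\bfXcontext\sim p$ and using the identity $-\E_p[\log q_\theta] = H(p) + D_{\mathrm{KL}}(p\|q_\theta)$ rewrites the right-hand side as $C_k(q_\theta)\,(L_k(\theta) - H^{\star})$ for a $\theta$-independent entropy constant $H^{\star}$. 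Jensen's inequality together with Pinsker's inequality then converts the expected KL bound into the announced $\sqrt{\tfrac{1}{2}C_k(q_\theta)(\hat{L}_k(\theta)+\ldots)}$ bound on $\E_{\bfXcontext}[D_{\mathrm{TV}}(q_\theta(\cdot|\bfXcontext)\,\|\,p(\cdot|\bfXcontext))]$.

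For the concentration piece, I would replace $L_k(\theta)$ by $\hat{L}_k(\theta)$ uniformly over $\Theta$. The regularity hypothesis gives a pointwise lower bound $q_\theta\geq\beta$ on the support of $p$, hence $|\log q_\theta|\leq \log(1/\beta)$ on each summand of the loss, together with an $\epsilon/2$-Lipschitz cover of cardinality $\gN(\Theta,\epsilon)$. Hoeffding's inequality at each cover element, followed by a union bound over the cover and over the $d$ possible values of $k$, then produces a fluctuation term of order $\sqrt{\log(\gN(\Theta,\epsilon)/\delta)/n}$ plus the $\epsilon$-cover slack and a lower-order $\sqrt{|\Omega|^{3d(m+1)}/(\delta n)}$ correction from upgrading expected TV to root-mean-square TV across $\bfXcontext$; these are absorbed into the ``complexity terms'' and the $\gO(n^{-1/2})$ remainder appearing in the statement.

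For the monotonicity $C_{k+1}(q_\theta)\leq C_k(q_\theta)$, I would use an averaging/recursion argument in the spirit of Caputo--Parisi block factorization. Any $(k{+}1)$-mask $\pi$ decomposes as a $k$-mask $\pi'\subset \pi$ plus one extra coordinate, so averaging over $\pi\in\Pi_{k+1}$ can be realized as an iterated average: first draw $\pi'\in\Pi_k$ uniformly, then add one extra index. Applying the chain rule of KL on each $(k{+}1)$-block and using convexity of relative entropy, the local KL over $\Pi_{k+1}$ dominates the corresponding local KL over $\Pi_k$, so any factorization constant admissible at level $k$ is also admissible at level $k{+}1$; taking infima yields the claim. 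The main obstacle I expect is precisely this monotonicity step: ensuring the recursion is genuinely monotone and does not silently introduce multiplicative slack depending on $d$, $m$, or on the conditioning variable $\bfXcontext$. Carrying the recursion through cleanly requires careful bookkeeping of the conditionings on $\bfXcontext$ and on $\bfXtest_{-\pi}$, and is the technical core distinguishing our context-conditional tabular setting from the unconditional masked-sequence analysis of \citet{li2024promises}.
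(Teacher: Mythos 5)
Your proposal is correct and follows essentially the same route as the paper: the monotonicity $\underline{C_{k+1}}(q_\theta)\le \underline{C_k}(q_\theta)$ is proved exactly as you describe (decomposing a $(k{+}1)$-mask as a $k$-mask plus one extra coordinate and using the chain rule/data-processing inequality for relative entropy so that the level-$(k{+}1)$ local KL dominates the level-$k$ one), while the generalization bound is obtained by reducing $\E_{\bfXcontext}\left[D_{\operatorname{TV}}(q_\theta(\cdot|\bfXcontext)\,\|\,p(\cdot|\bfXcontext))\right]$ to the joint total variation and then invoking Theorem~4 of \citet{li2024promises} as a black box, whose internals are precisely your ATE-plus-Pinsker step followed by the covering/Hoeffding concentration step. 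The obstacle you flag in the monotonicity recursion does not materialize: the data-processing inequality gives domination of the local entropies with constant exactly $1$, so no slack in $d$, $m$, or $\bfXcontext$ is introduced.
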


The theorem establishes an upper bound for the generalization error of the estimated conditional joint distribution $q_{\theta}(\bfXtest|\bfXcontext)$ compared to the true joint distribution $p(\bfXtest|\bfXcontext)$, and shows that the upper bound decreases monotonically with respect to the number of masked cells.

\section{Evaluation}

\subsection{Classification}

\paragraph{Benchmarks.} For the quantitative evaluation of classification performance, we utilize multiple benchmarks, including TALENT-CLS~\citep{liu2024talenttabularanalyticslearning}, OpenML-CC18~\citep{bischl2017openml}, PFN-CLS~\citep{hollmann2025tabpfn_nature}, TabZilla~\citep{mcelfresh2023neuralzilla}, and TabArena-CLS~\citep{erickson2025tabarena}. Among these benchmarks, the datasets containing more than 50,000 training samples (The number of testing samples is not constrained), 10,000 features, or 10 target categories were excluded. This selection process resulted in a final collection of 179 datasets from TALENT-CLS, 62 from OpenML-CC18, 29 from PFN-CLS, 27 from TabZilla, and 33 from TabArena-CLS.

Furthermore, we introduce \textbf{B}alanced \textbf{C}omprehensive \textbf{C}hallenging \textbf{O}mni-domain (BCCO) Benchmark, comprising BCCO-CLS and BCCO-REG, for the evaluation of \ourA~and baseline models. The BCCO benchmark is constructed from extensive open-source structured-data corpora, meticulously deduplicated and cleaned. It presents a significant challenge due to several intrinsic characteristics: the distribution of dataset attributes (such as the ratio of categorical features), and the diversity of real-world prediction targets. Unlike previous benchmarks, nearly one-third of the datasets in our BCCO benchmark contain missing values. The BCCO benchmarks are publicly available at \url{https://huggingface.co/datasets/stableai-org/bcco_cls} and \url{https://huggingface.co/datasets/stableai-org/bcco_reg}.

The BCCO-CLS benchmark comprising 106 datasets. The collection spans diverse sources, domains, and scales, and is designed to cover a wide range of problem characteristics, including the number of samples, number of features, number of classes, categorical–to-numerical feature ratio, sample-to-feature ratio, and proportion of missing values. This diversity allows for uniform binning along these dimensions, enabling results to be reported within bins and macro-averaged across bins, thereby providing a nearly unbiased assessment of model performance across heterogeneous task regimes. Additionally, datasets containing more than 50,000 training samples (The number of testing samples is not constrained), 10,000 features, or 10 target categories were excluded.

For each dataset in these benchmarks, we use the provided train-test split when available. If no predefined test set exists, the data are partitioned into a 70\% training set and a 30\% test set using stratified sampling to preserve the label distribution.

\vspace{-5pt}
\paragraph{Baselines.} We compare \ourA~with a range of state-of-the-art baseline models, categorized into tree-based models, neural networks (NN), and recent ICL-based approaches. 
\vspace{-5pt}

\begin{itemize}
    \item \textbf{Tree-based approaches.}\quad We include XGBoost~\citep{chen2016xgboost}, LightGBM~\citep{ke2017lightgbm}, 
    CatBoost~\citep{dorogush2018catboost},   
    Random Forest (RF)~\citep{breiman2001random}, and Extra Trees (ET)~\citep{geurts2006extremely}. All models are optimized using the Optuna~\citep{akiba2019optuna} framework via 5-fold stratified cross-validation, with hyperparameters sampled from the ranges specified in \cref{appendix:searchspace}. Additionally, for AutoGluon-Tabular~\citep{erickson2020autogluon}, which automates workflows of model searching and ensemble, we use the default search space and set a default 600s time constraint for hyperparameter searching for each dataset. 
    \item \textbf{NN-based approaches.}\quad We evaluate against SNN~\citep{klambauer2017self},
    AutoInt~\citep{song2019autoint}, NODE~\citep{popov2019neuralnode}, 
    TabTransformer~\citep{huang2020tabtransformer}, 
    GrowNet~\citep{badirli2020gradient},
    TabNet~\citep{arik2021tabnet}, 
    DCN-v2~\citep{wang2021dcn},  
    FT-Transformer~\citep{gorishniy2021revisiting}, 
    MLP~\citep{goodfellow2016deep,gorishniy2021revisiting},
    ResNet~\citep{he2016deepresnet,gorishniy2021revisiting}, 
    SAINT~\citep{somepalli2021saint},
    MLP-PLR~\citep{gorishniy2022embeddingsmlpplr}, 
    DANets~\citep{chen2022danets},
    TANGOS~\citep{jeffares2023tangos}, 
    T2G-Former~\citep{yan2023t2g}, 
    ExcelFormer~\citep{chen2023excelformer},
    Trompt~\citep{chen2023trompt},
    TabR~\citep{gorishniy2023tabr},
    TabCaps~\citep{chen2023tabcaps},
    RealMLP~\citep{holzmuller2024betterrealmlp}, 
    SwitchTab~\citep{wu2024switchtab},
    and ModernNCA~\citep{ye2024revisitingmodernnca}.
    These NN-based models are trained using the TALENT~\citep{liu2024talenttabularanalyticslearning} Toolbox. 
    \item \textbf{ICL-based models.}\quad Recent baselines includes TabPFN-v2~\citep{hollmann2025tabpfn_nature}, 
    TabICL~\citep{qu2025tabicl},
    and Mitra~\citep{exampleWebsite}.
\end{itemize}
Since TabDPT~\citep{ma2024tabdpt}, another recent ICL-based model, is pretrained on real data that has a large overlap with datasets in the benchmarks, we do not adopt it as a baseline in the evaluation of standard classification and regression for a fair comparison. 
However, we include it in our experiments of fine-tuning in \Cref{sec:ft} to show the superiority of \ourA.

\paragraph{Metrics.} To evaluate model performance, we employ ROC AUC (area under the receiver operating characteristic curve), accuracy, and F1 score for classification tasks. For multi-class classification, the One-vs-One strategy is applied to both ROC AUC and F1 score calculations.

The critical difference diagram is also employed to compare the performance differences between \ourA~and the baseline models. We conducted a Friedman test followed by a post-hoc Wilcoxon-Holm test, using a significance level of 0.05. In the diagram, the horizontal line indicates the range of ranks among which differences are not statistically significant.
                
\paragraph{Results.} \Cref{fig:classification_wf} shows that \ourL~achieves the best performance among all methods on most datasets in BCCO-CLS, followed closely by \ourS. The most competitive baselines include other ICL-based models TabICL and TabPFN-v2, and the AutoML ensemble framework AutoGluon. 
In all critical difference diagrams, i.e. \Cref{fig:cdd106,fig:cdd_talent,fig:cc18,fig:tabarena,fig:cdd_pfn,fig:cddtabzilla}, we can see that \ourL~achieves the highest ranking in terms of all three evaluation metrics on BCCO-CLS, OpenML-CC18, and TALENT-CLS, where critical differences can be observed in most cases, indicating a significant margin. Despite operating under limited computational resources, \ourS~still delivers highly competitive results.

For detailed quantitative results listed in \Cref{tab:results_cls_BCCO,tab:talent-cls,tab:results_cls_CC18,tab:results_cls_TabArena,tab:results_cls_TabPFN-v2 classifier,tab:results_tabzilla}, \ourL~outperforms all baselines on every benchmark in terms of both mean and rank of all three metrics. All these results clearly demonstrate that \ourL~achieves state-of-the-art performance in terms of tabular classification tasks, surpassing not only traditional ensemble methods but also advanced ICL-based models. Across all classification benchmarks, \ourS~also outperforms other strong baselines, despite operating under highly limited computational resources.

\begin{figure}[H]
    \centering
    \includegraphics[width=0.95\linewidth]{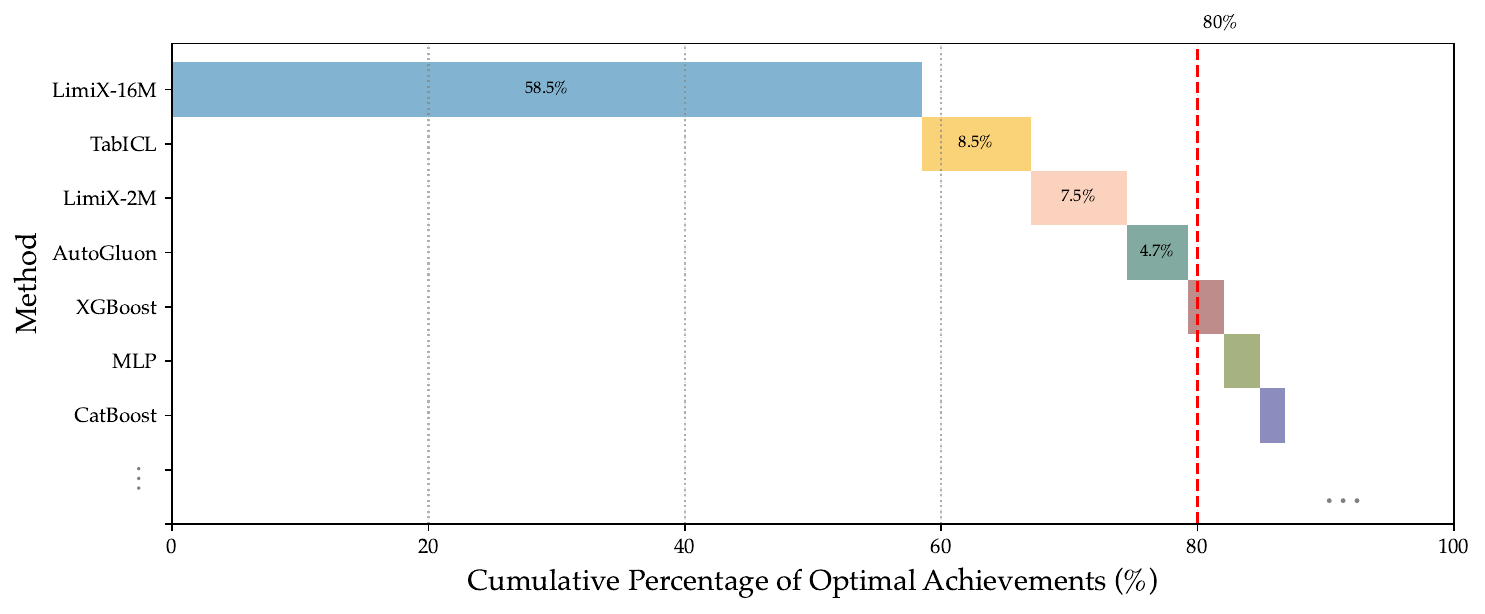}
    \caption{The proportion of models achieving the best AUC. The length of each bar represents the proportion of the 106 datasets in BCCO-CLS where a given method achieves the highest AUC.
    }
    \label{fig:classification_wf}
\end{figure}

\paragraph{Subgroup analysis.} We use the sample subgroups when building BCCO-CLS to perform stratified analyses. Subgroups are defined based on the following criteria: the type of classification (binary or multi-class), the number of training samples, the ratio between the number of samples and features (length-to-width ratio), the proportion of categorical features, and the presence or absence of missing values. The number of training samples, categorical feature ratio, and length-to-width ratio ared discretized into terciles (equal-frequency bins), ensuring that approximately the same number of datasets is allocated to each stratum and that the distribution of the dataset attribute which is not considered during analysis remains nearly uniform across strata.

From \Cref{fig:Subgroup analysis}, we observe that \ourL~exhibits leading performance across all subgroups compared with other methods. Notably, after stratification, under some subgroups like the third subgroup in \Cref{fig:barplot_group-sub3} that indicates a larger training sample size, AutoGluon proves to be a strong competitor to ICL-based models. In these cases, \ourL~is the only ICL-based model that outperforms AutoGluon.
Notably, \Cref{fig:barplot_group-sub6} shows that, as the proportion of categorical features increases, performance for most baselines drops rapidly, reflecting the sparsity and high-cardinality challenges. In contrast, \ourL~exhibits only modest degradation and remains comparatively stable across these regimes.

\begin{table}[H]
\centering
\renewcommand{\arraystretch}{1.05}
\setlength{\tabcolsep}{12pt} 
\caption{Classification results on the BCCO-CLS benchmark. The best scores are shown in bold.}
\label{tab:results_cls_BCCO}
\begin{adjustbox}{max width=0.95\textwidth}
\begin{tabular}{l|cccccc}
\toprule
& \multicolumn{6}{c}{\rule{0pt}{20pt}\raisebox{5pt}{BCCO-CLS}} \\ \midrule
& \multicolumn{3}{c|}{Mean} & \multicolumn{3}{c}{Rank} \\
\cmidrule(lr){2-4}\cmidrule(lr){5-7}
\multicolumn{1}{l|}{Model} & AUC ($\uparrow$) & Acc. ($\uparrow$) & \multicolumn{1}{c|}{F1 ($\uparrow$)} & AUC ($\downarrow$) & Acc. ($\downarrow$) & F1 ($\downarrow$) \\
\midrule
\multicolumn{1}{l|}{LimiX-16M} & \textbf{0.871} & \textbf{0.804} & \multicolumn{1}{c|}{\textbf{0.731}} & \textbf{2.642} & \textbf{2.245} & \textbf{2.887} \\
\multicolumn{1}{l|}{LimiX-2M} & 0.858 & 0.787 & \multicolumn{1}{c|}{0.701} & 6.170 & 6.481 & 8.538 \\
\multicolumn{1}{l|}{TabICL} & 0.847 & 0.768 & \multicolumn{1}{c|}{0.672} & 7.255 & 8.726 & 10.783 \\
\multicolumn{1}{l|}{AutoGluon} & 0.846 & 0.771 & \multicolumn{1}{c|}{0.677} & 8.500 & 8.500 & 9.934 \\
\multicolumn{1}{l|}{TabPFN-v2} & 0.843 & 0.772 & \multicolumn{1}{c|}{0.679} & 9.208 & 9.821 & 11.396 \\
\multicolumn{1}{l|}{XGBoost} & 0.834 & 0.762 & \multicolumn{1}{c|}{0.674} & 10.660 & 10.849 & 11.547 \\
\multicolumn{1}{l|}{Mitra} & 0.836 & 0.764 & \multicolumn{1}{c|}{0.664} & 10.953 & 11.698 & 13.585 \\
\multicolumn{1}{l|}{LightGBM} & 0.832 & 0.763 & \multicolumn{1}{c|}{0.678} & 11.594 & 10.811 & 11.406 \\
\multicolumn{1}{l|}{RF} & 0.829 & 0.756 & \multicolumn{1}{c|}{0.652} & 12.170 & 12.406 & 13.726 \\
\multicolumn{1}{l|}{CatBoost} & 0.829 & 0.757 & \multicolumn{1}{c|}{0.664} & 12.792 & 11.755 & 12.557 \\
\multicolumn{1}{l|}{ET} & 0.825 & 0.745 & \multicolumn{1}{c|}{0.618} & 13.292 & 14.745 & 17.585 \\
\multicolumn{1}{l|}{FT-Transformer} & 0.813 & 0.744 & \multicolumn{1}{c|}{0.642} & 14.340 & 14.708 & 15.104 \\
\multicolumn{1}{l|}{T2G-Former} & 0.808 & 0.742 & \multicolumn{1}{c|}{0.646} & 15.321 & 15.170 & 14.377 \\
\multicolumn{1}{l|}{ModernNCA} & 0.815 & 0.752 & \multicolumn{1}{c|}{0.658} & 16.217 & 16.434 & 16.047 \\
\multicolumn{1}{l|}{MLP-PLR} & 0.804 & 0.733 & \multicolumn{1}{c|}{0.635} & 16.566 & 16.915 & 16.113 \\
\multicolumn{1}{l|}{ExcelFormer} & 0.810 & 0.742 & \multicolumn{1}{c|}{0.655} & 16.660 & 16.009 & 14.566 \\
\multicolumn{1}{l|}{TabR} & 0.809 & 0.750 & \multicolumn{1}{c|}{0.657} & 17.274 & 15.057 & 14.104 \\
\multicolumn{1}{l|}{DCN-v2} & 0.794 & 0.725 & \multicolumn{1}{c|}{0.618} & 17.783 & 17.736 & 17.453 \\
\multicolumn{1}{l|}{ResNet} & 0.800 & 0.728 & \multicolumn{1}{c|}{0.641} & 17.783 & 17.943 & 16.670 \\
\multicolumn{1}{l|}{TANGOS} & 0.799 & 0.731 & \multicolumn{1}{c|}{0.641} & 18.057 & 17.717 & 16.151 \\
\multicolumn{1}{l|}{RealMLP} & 0.794 & 0.738 & \multicolumn{1}{c|}{0.644} & 19.132 & 17.236 & 15.934 \\
\multicolumn{1}{l|}{MLP} & 0.787 & 0.720 & \multicolumn{1}{c|}{0.614} & 19.387 & 18.764 & 19.594 \\
\multicolumn{1}{l|}{SAINT} & 0.791 & 0.726 & \multicolumn{1}{c|}{0.623} & 19.604 & 18.415 & 17.066 \\
\multicolumn{1}{l|}{AutoInt} & 0.779 & 0.718 & \multicolumn{1}{c|}{0.601} & 21.113 & 20.292 & 20.660 \\
\multicolumn{1}{l|}{DANets} & 0.771 & 0.705 & \multicolumn{1}{c|}{0.601} & 21.302 & 20.377 & 20.708 \\
\multicolumn{1}{l|}{TabTransformer} & 0.762 & 0.699 & \multicolumn{1}{c|}{0.566} & 21.557 & 20.698 & 21.170 \\
\multicolumn{1}{l|}{SNN} & 0.773 & 0.708 & \multicolumn{1}{c|}{0.584} & 21.585 & 21.160 & 21.415 \\
\multicolumn{1}{l|}{SwitchTab} & 0.766 & 0.700 & \multicolumn{1}{c|}{0.590} & 22.340 & 22.849 & 21.764 \\
\multicolumn{1}{l|}{TabCaps} & 0.744 & 0.701 & \multicolumn{1}{c|}{0.580} & 24.670 & 22.632 & 23.019 \\
\multicolumn{1}{l|}{NODE} & 0.754 & 0.695 & \multicolumn{1}{c|}{0.531} & 24.877 & 23.443 & 26.198 \\
\multicolumn{1}{l|}{TabNet} & 0.712 & 0.685 & \multicolumn{1}{c|}{0.561} & 28.123 & 25.547 & 25.679 \\
\multicolumn{1}{l|}{GrowNet} & 0.682 & 0.641 & \multicolumn{1}{c|}{0.522} & 28.679 & 28.330 & 27.321 \\
\bottomrule
\end{tabular}
\end{adjustbox}
\end{table}

\begin{table}[H]
\centering
\caption{Statistical profile of the benchmark BCCO-CLS, where Q10, Q50, and Q90 correspond to the 10\%, 50\%, and 90\% quantiles, respectively; for categorical feature statistics, we only consider features that are either string-typed or have fewer than 10 unique values.}
\resizebox{0.9\textwidth}{!}{
    \begin{tabular}{cl ccccccc}
    \toprule
    & & \multicolumn{7}{c}{Statistics} \\
    \cmidrule(lr){3-9}
     & Metric   & Q10  & Q50 & Q90 & Mean & Std  & Min & Max \\
    \midrule
    & \# Features & 4 & 12 & 48 & 23 & 33 & 1 & 259 \\
    & \# Classes & 2 & 2 & 5 & 3 & 2 & 2 & 10 \\
    & Missing Values (Ratio) & 0 & 0 & 0.069 & 0.024 & 0.068 & 0 & 0.403 \\
    & Categorical Features (Ratio) & 0 & 0.384 & 0.929 & 0.399 & 0.337 & 0 & 1 \\
    & Features w/ Missing Values (Ratio) & 0 & 0 & 0.523 & 0.126 & 0.241 & 0 & 0.909 \\
    \bottomrule
    \end{tabular}
}
\label{tab:profile_cls_BCCO}
\end{table}

\begin{figure}[H]
    \centering
    \begin{subfigure}[b]{\textwidth}
        \centering
        \includegraphics[width=\textwidth]{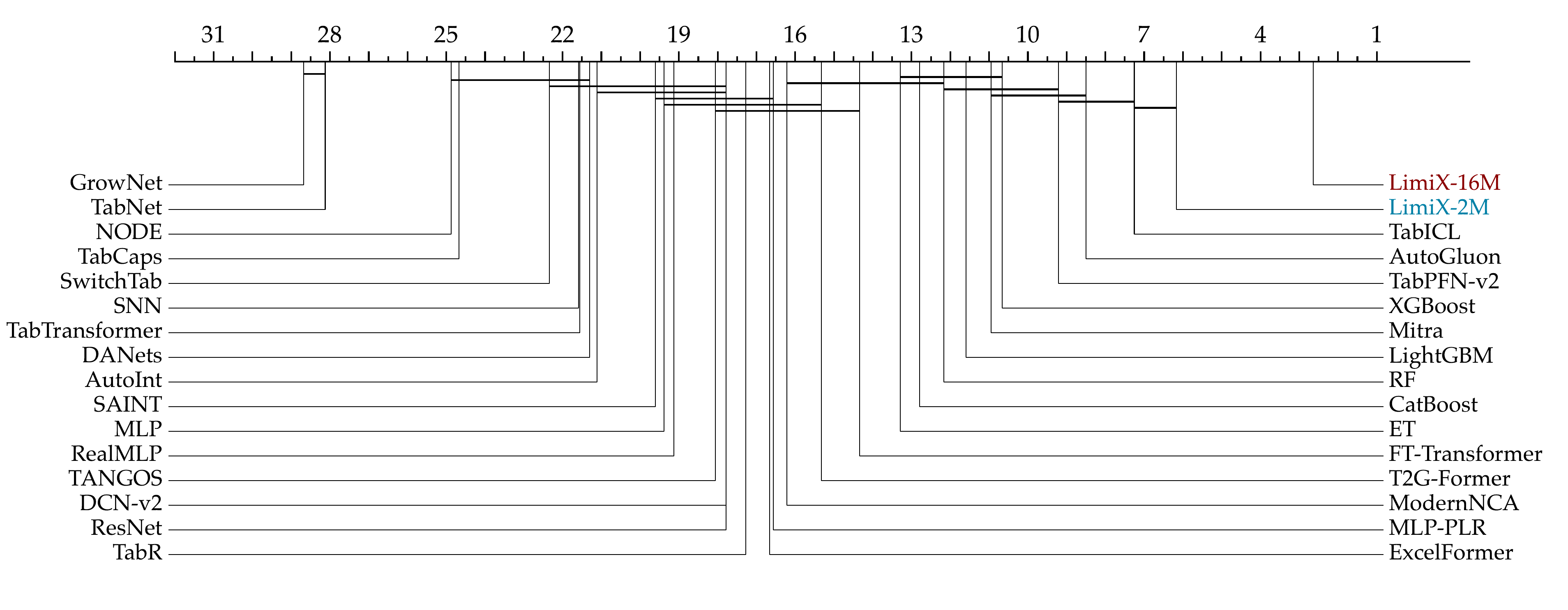}
        \caption{AUC on the BCCO-CLS benchmark.}
        \label{fig:cdd106-sub1}
    \end{subfigure}
    
    \vspace{2em}
    
    \begin{subfigure}[b]{\textwidth}
        \centering
           \includegraphics[width=\textwidth]{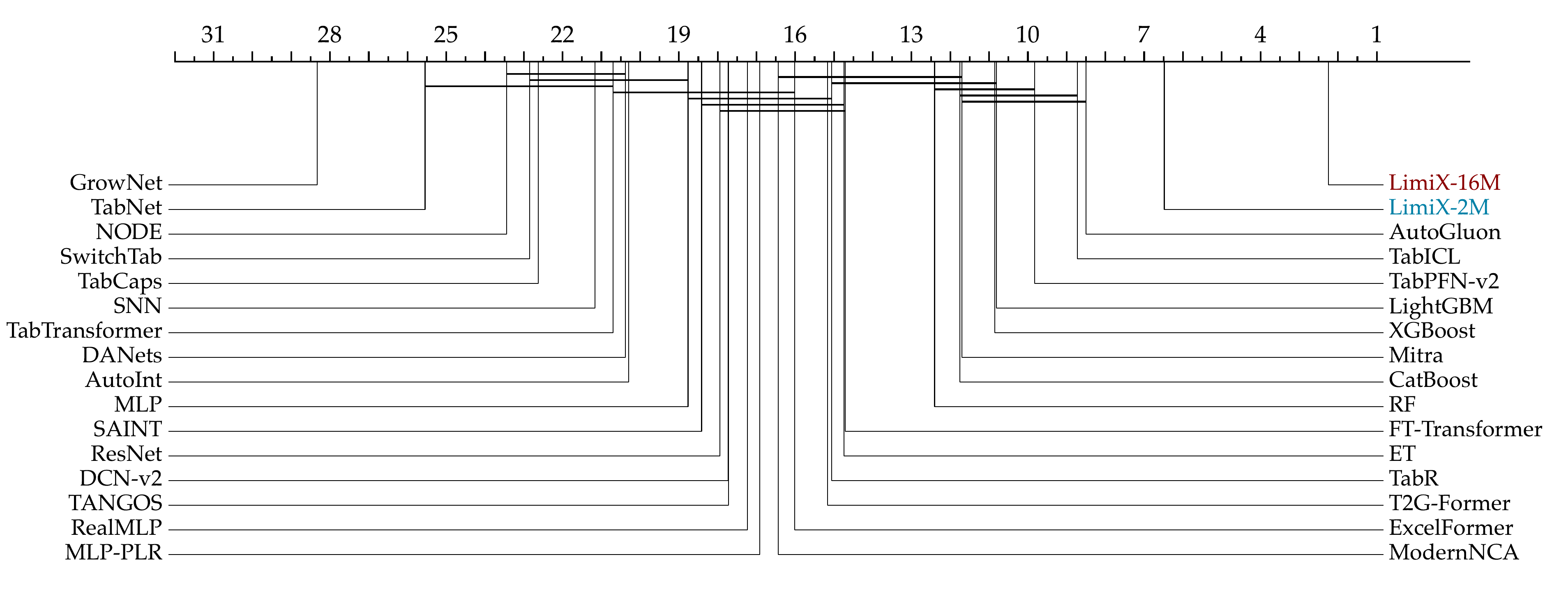}
        \caption{Accuracy on the BCCO-CLS benchmark.}
        \label{fig:cdd106-sub2}
    \end{subfigure}
    
    \vspace{2em}

    \begin{subfigure}[b]{\textwidth}
        \centering
        \includegraphics[width=\textwidth]{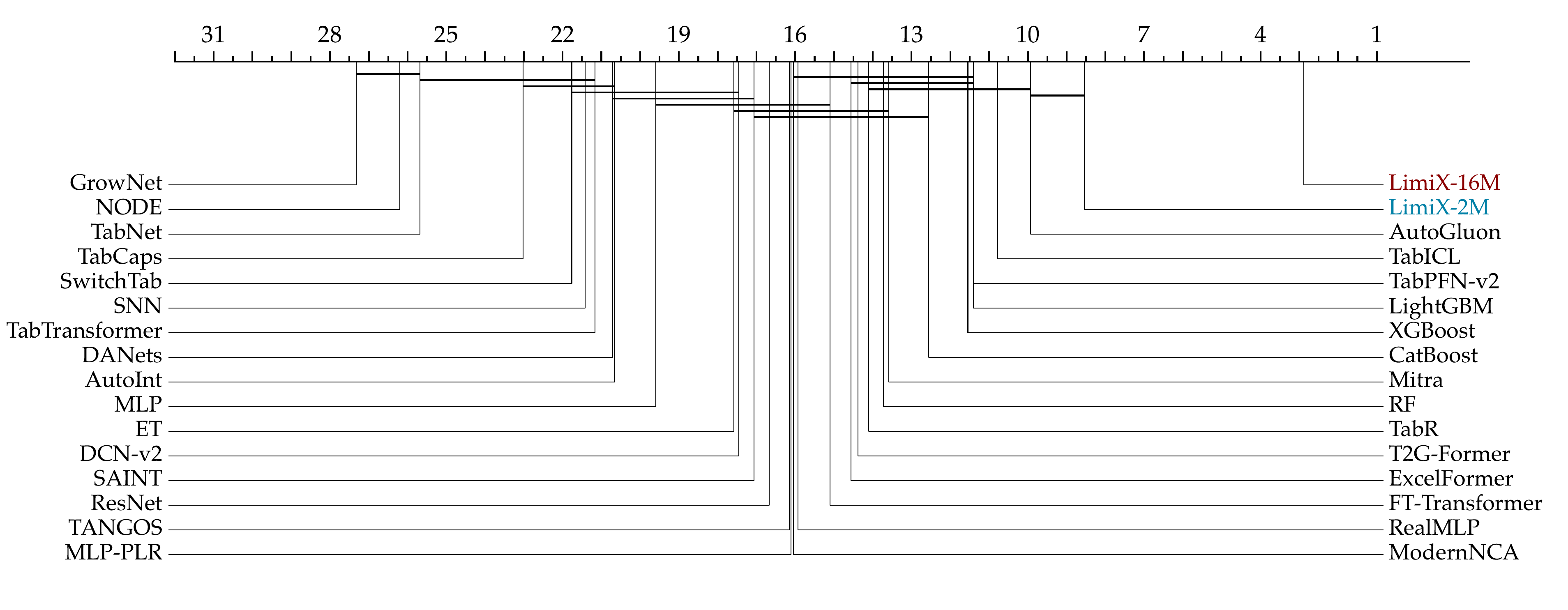}
        \caption{F1-score on the BCCO-CLS benchmark.}
        \label{fig:cdd106-sub3}
    \end{subfigure}
    
    \caption{Critical difference diagrams on BCCO-CLS benchmark.}
    \label{fig:cdd106}
\end{figure}

\clearpage
\begin{table}[H]
\centering
\renewcommand{\arraystretch}{1.05}
\setlength{\tabcolsep}{12pt} 
\caption{Classification results on the TALENT-CLS benchmark. The best scores are shown in bold.}
\label{tab:talent-cls}
\begin{adjustbox}{max width=0.95\textwidth}
\begin{tabular}{l|cccccc}
\toprule
& \multicolumn{6}{c}{\rule{0pt}{20pt}\raisebox{5pt}{TALENT-CLS}} \\ \midrule
& \multicolumn{3}{c|}{Mean} & \multicolumn{3}{c}{Rank} \\
\cmidrule(lr){2-4}\cmidrule(lr){5-7}
\multicolumn{1}{l|}{Model} & AUC ($\uparrow$) & Acc. ($\uparrow$) & \multicolumn{1}{c|}{F1 ($\uparrow$)} & AUC ($\downarrow$) & Acc. ($\downarrow$) & F1 ($\downarrow$) \\
\midrule
\multicolumn{1}{l|}{LimiX-16M} & \textbf{0.903} & \textbf{0.861} & \multicolumn{1}{c|}{\textbf{0.752}} & \textbf{4.073} & \textbf{3.190} & \textbf{4.246} \\
\multicolumn{1}{l|}{LimiX-2M} & 0.897 & 0.853 & \multicolumn{1}{c|}{0.734} & 5.804 & 5.642 & 6.994 \\
\multicolumn{1}{l|}{TabICL} & 0.894 & 0.845 & \multicolumn{1}{c|}{0.715} & 6.073 & 7.028 & 8.514 \\
\multicolumn{1}{l|}{TabPFN-v2} & 0.895 & 0.850 & \multicolumn{1}{c|}{0.727} & 6.670 & 6.458 & 8.296 \\
\multicolumn{1}{l|}{AutoGluon} & 0.891 & 0.845 & \multicolumn{1}{c|}{0.719} & 6.899 & 7.291 & 8.067 \\
\multicolumn{1}{l|}{XGBoost} & 0.881 & 0.837 & \multicolumn{1}{c|}{0.713} & 10.034 & 10.123 & 10.425 \\
\multicolumn{1}{l|}{LightGBM} & 0.880 & 0.836 & \multicolumn{1}{c|}{0.713} & 10.307 & 10.475 & 10.777 \\
\multicolumn{1}{l|}{Mitra} & 0.882 & 0.834 & \multicolumn{1}{c|}{0.689} & 11.162 & 11.849 & 13.916 \\
\multicolumn{1}{l|}{CatBoost} & 0.876 & 0.828 & \multicolumn{1}{c|}{0.704} & 12.302 & 12.302 & 12.508 \\
\multicolumn{1}{l|}{RF} & 0.877 & 0.828 & \multicolumn{1}{c|}{0.691} & 13.179 & 13.637 & 14.754 \\
\multicolumn{1}{l|}{ET} & 0.875 & 0.821 & \multicolumn{1}{c|}{0.662} & 13.877 & 16.229 & 17.788 \\
\multicolumn{1}{l|}{ExcelFormer} & 0.870 & 0.826 & \multicolumn{1}{c|}{0.699} & 14.212 & 14.855 & 13.749 \\
\multicolumn{1}{l|}{ResNet} & 0.866 & 0.825 & \multicolumn{1}{c|}{0.695} & 15.793 & 15.514 & 13.955 \\
\multicolumn{1}{l|}{FT-Transformer} & 0.859 & 0.822 & \multicolumn{1}{c|}{0.678} & 16.536 & 16.715 & 16.413 \\
\multicolumn{1}{l|}{T2G-Former} & 0.858 & 0.823 & \multicolumn{1}{c|}{0.683} & 16.615 & 16.374 & 15.933 \\
\multicolumn{1}{l|}{TANGOS} & 0.861 & 0.818 & \multicolumn{1}{c|}{0.684} & 16.905 & 16.810 & 15.542 \\
\multicolumn{1}{l|}{MLP} & 0.862 & 0.817 & \multicolumn{1}{c|}{0.675} & 17.302 & 16.989 & 17.162 \\
\multicolumn{1}{l|}{ModernNCA} & 0.861 & 0.825 & \multicolumn{1}{c|}{0.683} & 17.955 & 17.693 & 16.966 \\
\multicolumn{1}{l|}{TabR} & 0.858 & 0.824 & \multicolumn{1}{c|}{0.680} & 18.263 & 16.804 & 15.922 \\
\multicolumn{1}{l|}{DCN-v2} & 0.854 & 0.815 & \multicolumn{1}{c|}{0.662} & 18.704 & 18.849 & 18.687 \\
\multicolumn{1}{l|}{MLP-PLR} & 0.849 & 0.816 & \multicolumn{1}{c|}{0.663} & 19.385 & 18.168 & 18.201 \\
\multicolumn{1}{l|}{DANets} & 0.848 & 0.805 & \multicolumn{1}{c|}{0.654} & 19.961 & 19.721 & 19.374 \\
\multicolumn{1}{l|}{RealMLP} & 0.839 & 0.820 & \multicolumn{1}{c|}{0.678} & 21.106 & 18.525 & 16.793 \\
\multicolumn{1}{l|}{SAINT} & 0.813 & 0.781 & \multicolumn{1}{c|}{0.630} & 21.162 & 20.274 & 19.715 \\
\multicolumn{1}{l|}{AutoInt} & 0.842 & 0.803 & \multicolumn{1}{c|}{0.646} & 21.492 & 22.156 & 21.587 \\
\multicolumn{1}{l|}{TabCaps} & 0.834 & 0.813 & \multicolumn{1}{c|}{0.654} & 22.257 & 19.106 & 19.570 \\
\multicolumn{1}{l|}{SwitchTab} & 0.842 & 0.795 & \multicolumn{1}{c|}{0.637} & 22.274 & 22.777 & 21.989 \\
\multicolumn{1}{l|}{TabTransformer} & 0.832 & 0.790 & \multicolumn{1}{c|}{0.627} & 22.637 & 22.598 & 21.760 \\
\multicolumn{1}{l|}{SNN} & 0.836 & 0.796 & \multicolumn{1}{c|}{0.625} & 22.933 & 22.855 & 22.972 \\
\multicolumn{1}{l|}{NODE} & 0.830 & 0.779 & \multicolumn{1}{c|}{0.570} & 23.810 & 24.145 & 25.726 \\
\multicolumn{1}{l|}{TabNet} & 0.818 & 0.794 & \multicolumn{1}{c|}{0.630} & 25.983 & 23.553 & 23.983 \\
\multicolumn{1}{l|}{GrowNet} & 0.743 & 0.704 & \multicolumn{1}{c|}{0.542} & 28.430 & 27.933 & 25.933 \\
\bottomrule
\end{tabular}
\end{adjustbox}
\end{table}

\begin{table}[H]
\centering
\caption{Statistical profile of the benchmark TALENT-CLS, where Q10, Q50, and Q90 correspond to the 10\%, 50\%, and 90\% quantiles, respectively; for categorical feature statistics, we only consider features that are either string-typed or have fewer than 10 unique values.}
\resizebox{0.9\textwidth}{!}{
    \begin{tabular}{cl ccccccc}
    \toprule
    & & \multicolumn{7}{c}{Statistics} \\
    \cmidrule(lr){3-9}
     & Metric   & Q10  & Q50 & Q90 & Mean & Std  & Min & Max \\
    \midrule
    & \# Features & 7 & 19 & 70 & 33 & 47 & 3 & 308 \\
    & \# Classes & 2 & 2 & 10 & 6 & 14 & 2 & 100 \\
    & Missing Values (Ratio) & 0 & 0 & 0 & 0.001 & 0.008 & 0 & 0.1 \\
    & Categorical Features (Ratio) & 0 & 0.121 & 0.972 & 0.306 & 0.365 & 0 & 1 \\
    & Features w/ Missing Values (Ratio) & 0 & 0 & 0 & 0.025 & 0.121 & 0 & 0.979 \\
    \bottomrule
    \end{tabular}
}
\label{tab:profile_cls_TALENT}
\end{table}


    

 
    

\begin{figure}[H]
    \centering
    \begin{subfigure}[b]{\textwidth}
        \centering
        \includegraphics[width=\textwidth]{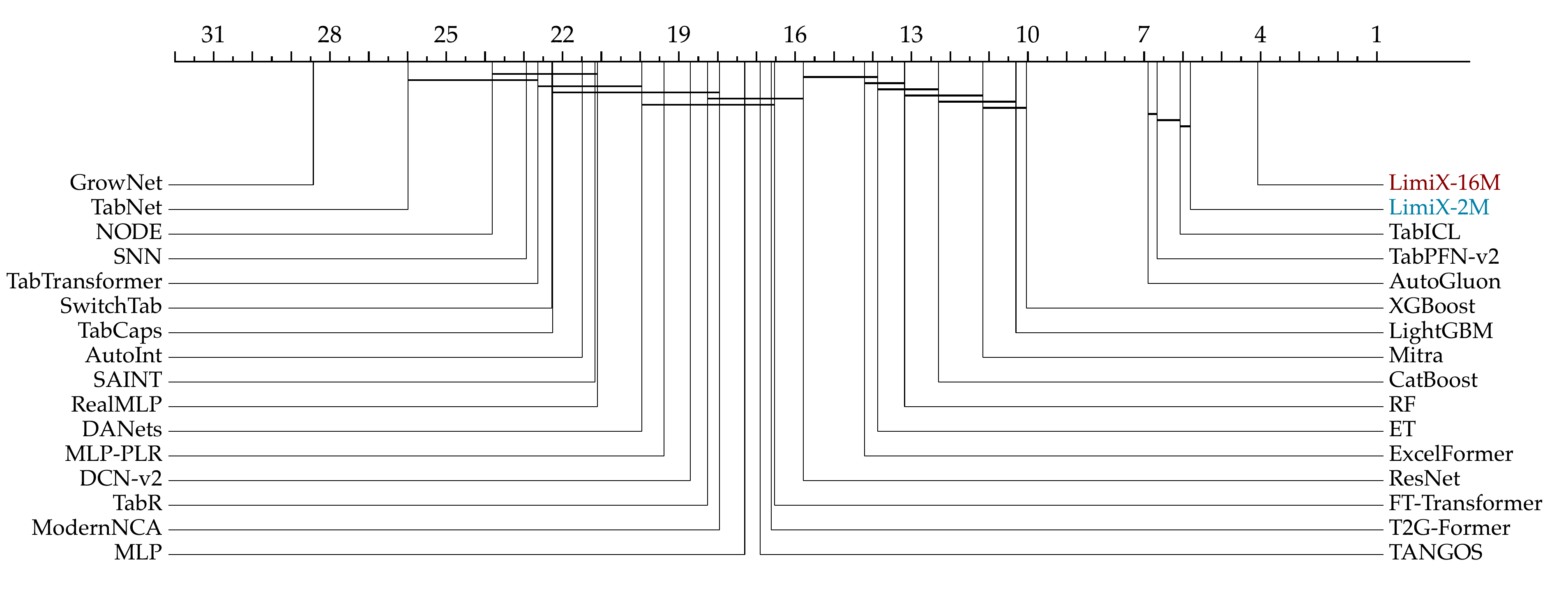}
        \caption{AUC on the TALENT-CLS benchmark.}
        \label{fig:cdd_talent-sub1}
    \end{subfigure}

    \vspace{2em}
    
    \begin{subfigure}[b]{\textwidth}
        \centering
        \includegraphics[width=\textwidth]{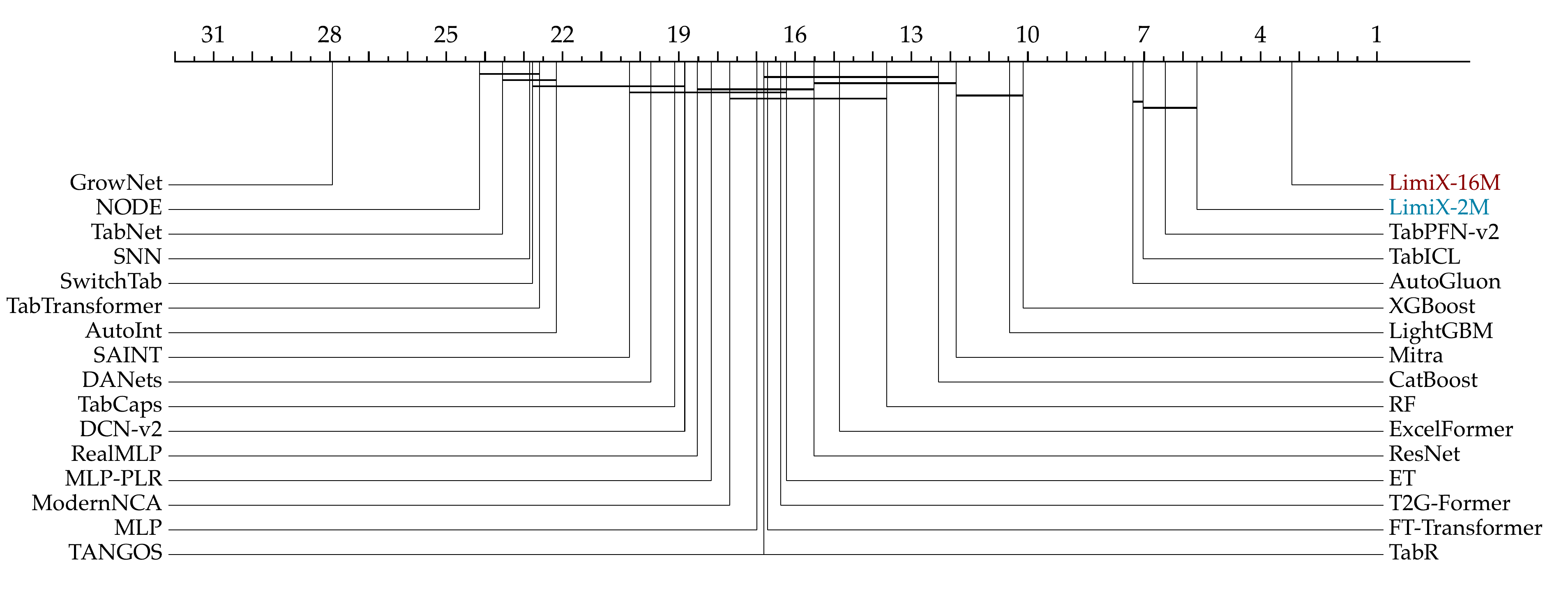}
        \caption{Accuracy on the TALENT-CLS benchmark.}
        \label{fig:cdd_talent-sub2}
    \end{subfigure}

    \vspace{2em}
 
    \begin{subfigure}[b]{\textwidth}
        \centering
        \includegraphics[width=\textwidth]{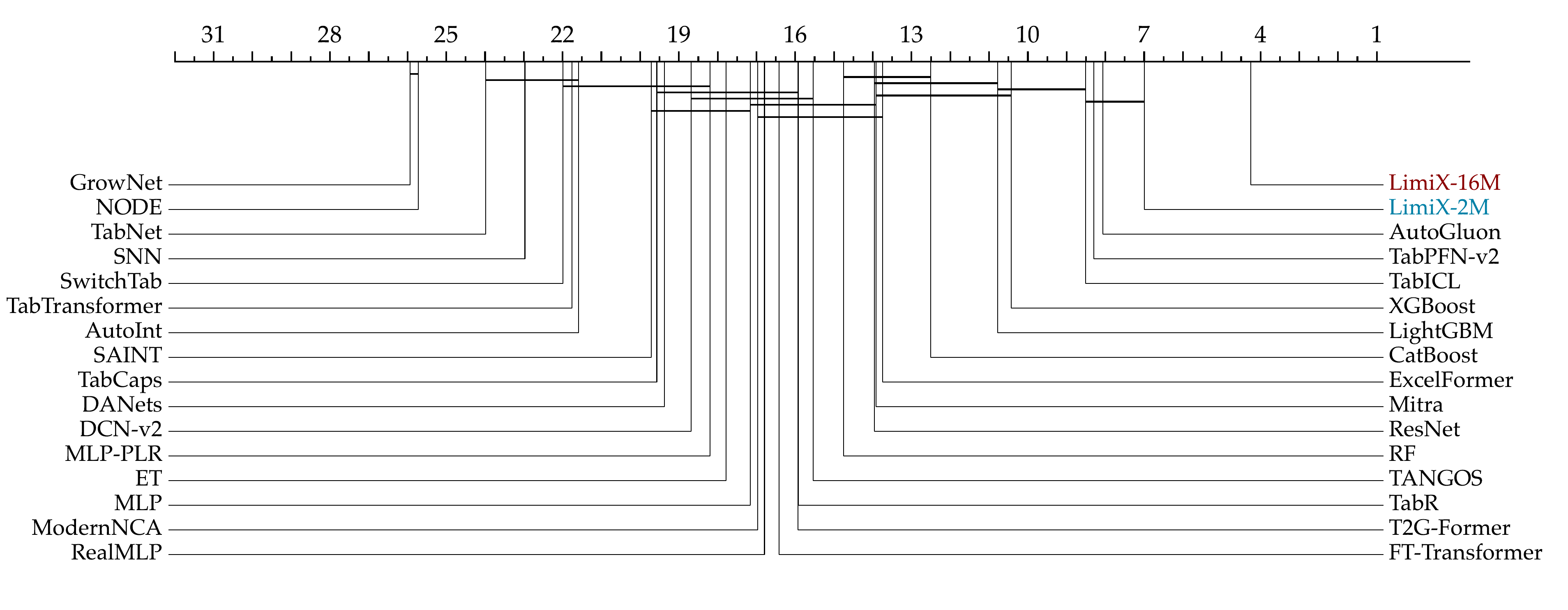}
        \caption{F1-score on the TALENT-CLS benchmark.}
        \label{fig:cdd_talent-sub3}
    \end{subfigure}
    
    \caption{Critical difference diagram on the TALENT-CLS benchmark}
    \label{fig:cdd_talent}
\end{figure}
\clearpage

\begin{table}[H]
\centering
\renewcommand{\arraystretch}{1.05}
\setlength{\tabcolsep}{12pt} 
\caption{Classification results on the OpenML-CC18 benchmark. The best scores are shown in bold.}
\label{tab:results_cls_CC18}
\begin{adjustbox}{max width=0.95\textwidth}
\begin{tabular}{l|cccccc}
\toprule
& \multicolumn{6}{c}{\rule{0pt}{20pt}\raisebox{5pt}{OpenML-cc18}} \\ \midrule
& \multicolumn{3}{c|}{Mean} & \multicolumn{3}{c}{Rank} \\
\cmidrule(lr){2-4}\cmidrule(lr){5-7}
\multicolumn{1}{l|}{Model} & AUC ($\uparrow$) & Acc. ($\uparrow$) & \multicolumn{1}{c|}{F1 ($\uparrow$)} & AUC ($\downarrow$) & Acc. ($\downarrow$) & F1 ($\downarrow$) \\
\midrule
\multicolumn{1}{l|}{LimiX-16M} & \textbf{0.939} & \textbf{0.893} & \multicolumn{1}{c|}{\textbf{0.811}} & \textbf{4.339} & \textbf{5.323} & \textbf{4.548} \\
\multicolumn{1}{l|}{LimiX-2M} & 0.936 & 0.891 & \multicolumn{1}{c|}{0.803} & 5.065 & 5.387 & 5.919 \\
\multicolumn{1}{l|}{AutoGluon} & 0.932 & 0.885 & \multicolumn{1}{c|}{0.790} & 6.500 & 7.113 & 7.613 \\
\multicolumn{1}{l|}{TabPFN-v2} & 0.929 & 0.886 & \multicolumn{1}{c|}{0.790} & 7.887 & 7.726 & 8.161 \\
\multicolumn{1}{l|}{TabICL} & 0.927 & 0.875 & \multicolumn{1}{c|}{0.782} & 8.839 & 10.339 & 10.306 \\
\multicolumn{1}{l|}{XGBoost} & 0.929 & 0.879 & \multicolumn{1}{c|}{0.775} & 9.661 & 10.839 & 10.790 \\
\multicolumn{1}{l|}{LightGBM} & 0.927 & 0.879 & \multicolumn{1}{c|}{0.775} & 10.065 & 10.226 & 10.323 \\
\multicolumn{1}{l|}{CatBoost} & 0.926 & 0.870 & \multicolumn{1}{c|}{0.770} & 12.177 & 13.516 & 13.419 \\
\multicolumn{1}{l|}{Mitra} & 0.920 & 0.866 & \multicolumn{1}{c|}{0.743} & 12.952 & 14.419 & 16.048 \\
\multicolumn{1}{l|}{ET} & 0.922 & 0.861 & \multicolumn{1}{c|}{0.721} & 13.774 & 15.887 & 17.774 \\
\multicolumn{1}{l|}{RF} & 0.925 & 0.871 & \multicolumn{1}{c|}{0.762} & 13.774 & 13.161 & 14.048 \\
\multicolumn{1}{l|}{ExcelFormer} & 0.918 & 0.870 & \multicolumn{1}{c|}{0.773} & 13.935 & 13.032 & 12.839 \\
\multicolumn{1}{l|}{TANGOS} & 0.910 & 0.863 & \multicolumn{1}{c|}{0.759} & 14.161 & 14.129 & 14.290 \\
\multicolumn{1}{l|}{ResNet} & 0.913 & 0.860 & \multicolumn{1}{c|}{0.764} & 14.226 & 14.371 & 13.581 \\
\multicolumn{1}{l|}{MLP} & 0.908 & 0.857 & \multicolumn{1}{c|}{0.743} & 14.710 & 14.871 & 15.871 \\
\multicolumn{1}{l|}{T2G-Former} & 0.908 & 0.859 & \multicolumn{1}{c|}{0.748} & 15.919 & 15.500 & 15.484 \\
\multicolumn{1}{l|}{MLP-PLR} & 0.896 & 0.858 & \multicolumn{1}{c|}{0.734} & 16.694 & 16.435 & 16.790 \\
\multicolumn{1}{l|}{RealMLP} & 0.889 & 0.858 & \multicolumn{1}{c|}{0.742} & 17.032 & 15.435 & 14.823 \\
\multicolumn{1}{l|}{FT-Transformer} & 0.904 & 0.856 & \multicolumn{1}{c|}{0.739} & 17.339 & 16.161 & 17.065 \\
\multicolumn{1}{l|}{ModernNCA} & 0.906 & 0.858 & \multicolumn{1}{c|}{0.747} & 17.452 & 16.839 & 16.371 \\
\multicolumn{1}{l|}{TabR} & 0.900 & 0.863 & \multicolumn{1}{c|}{0.757} & 17.468 & 14.210 & 13.790 \\
\multicolumn{1}{l|}{AutoInt} & 0.889 & 0.835 & \multicolumn{1}{c|}{0.717} & 18.984 & 19.032 & 19.129 \\
\multicolumn{1}{l|}{DANets} & 0.900 & 0.840 & \multicolumn{1}{c|}{0.715} & 19.000 & 18.435 & 19.435 \\
\multicolumn{1}{l|}{DCN-v2} & 0.899 & 0.851 & \multicolumn{1}{c|}{0.729} & 19.581 & 18.355 & 18.968 \\
\multicolumn{1}{l|}{SNN} & 0.896 & 0.835 & \multicolumn{1}{c|}{0.698} & 21.081 & 20.871 & 21.661 \\
\multicolumn{1}{l|}{SwitchTab} & 0.881 & 0.819 & \multicolumn{1}{c|}{0.671} & 22.355 & 21.758 & 22.419 \\
\multicolumn{1}{l|}{TabCaps} & 0.875 & 0.848 & \multicolumn{1}{c|}{0.692} & 22.548 & 19.484 & 21.097 \\
\multicolumn{1}{l|}{TabTransformer} & 0.829 & 0.776 & \multicolumn{1}{c|}{0.616} & 23.500 & 22.532 & 23.274 \\
\multicolumn{1}{l|}{NODE} & 0.889 & 0.809 & \multicolumn{1}{c|}{0.626} & 24.129 & 25.129 & 27.161 \\
\multicolumn{1}{l|}{SAINT} & 0.512 & 0.492 & \multicolumn{1}{c|}{0.441} & 26.129 & 25.532 & 24.210 \\
\multicolumn{1}{l|}{TabNet} & 0.863 & 0.816 & \multicolumn{1}{c|}{0.666} & 26.226 & 24.065 & 25.823 \\
\multicolumn{1}{l|}{GrowNet} & 0.816 & 0.748 & \multicolumn{1}{c|}{0.581} & 26.387 & 26.694 & 25.903 \\
\bottomrule
\end{tabular}
\end{adjustbox}
\end{table}

\begin{table}[H]
\centering
\caption{Statistical profile of the benchmark OpenML-CC18, where Q10, Q50, and Q90 correspond to the 10\%, 50\%, and 90\% quantiles, respectively; for categorical feature statistics, we only consider features that are either string-typed or have fewer than 10 unique values.}
\resizebox{0.9\textwidth}{!}{
    \begin{tabular}{cl ccccccc}
    \toprule
    & & \multicolumn{7}{c}{Statistics} \\
    \cmidrule(lr){3-9}
     & Metric   & Q10  & Q50 & Q90 & Mean & Std  & Min & Max \\
    \midrule
    & \# Features & 6 & 30 & 611 & 336 & 1344 & 4 & 10935 \\
    & \# Classes & 2 & 3 & 10 & 6 & 7 & 2 & 46 \\
    & Missing Values (Ratio) & 0 & 0 & 0.002 & 0.004 & 0.019 & 0 & 0.139 \\
    & Categorical Features (Ratio) & 0 & 0.091 & 1 & 0.327 & 0.406 & 0 & 1 \\
    & Features w/ Missing Values (Ratio) & 0 & 0 & 0.204 & 0.054 & 0.165 & 0 & 0.757 \\
    \bottomrule
    \end{tabular}
}
\label{tab:profile_cls_CC18}
\end{table}

\begin{figure}[H]
    \centering
    \begin{subfigure}[b]{\textwidth}
        \centering
        \includegraphics[width=\textwidth]{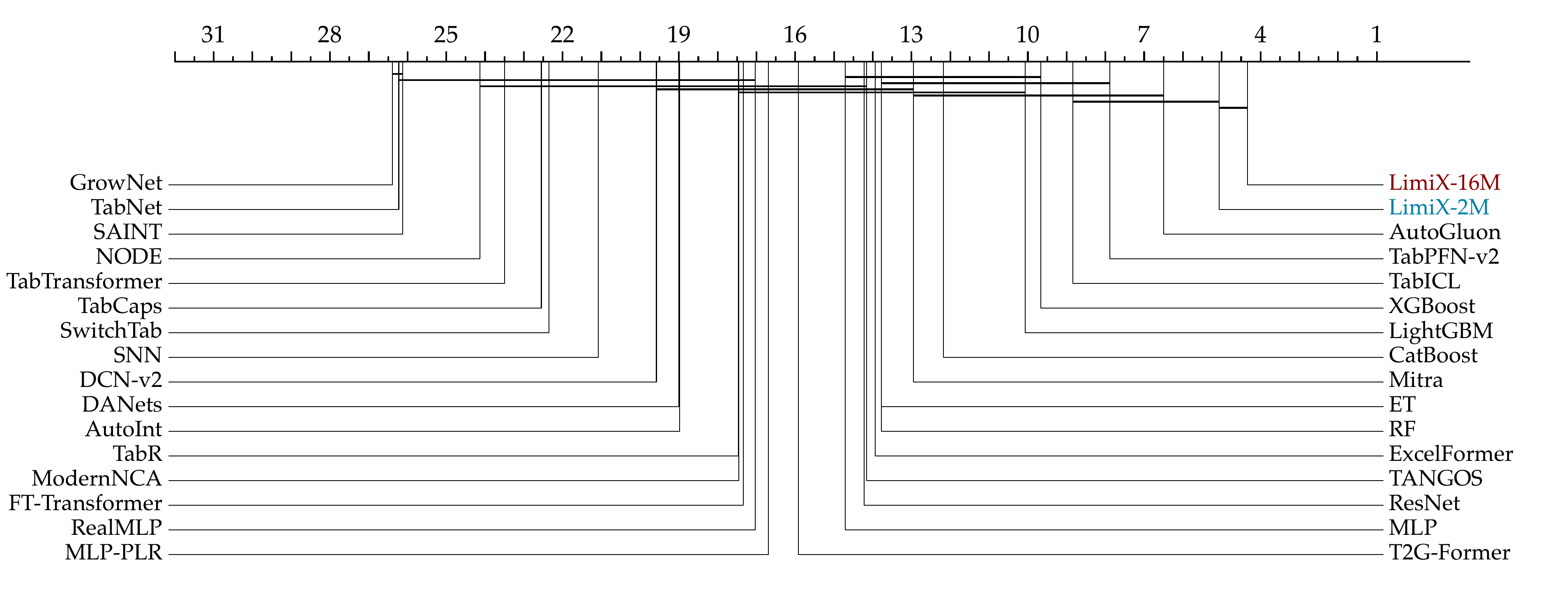}
        \caption{AUC on the OpenML-CC18 benchmark.}
        \label{fig:cc18-sub1}
    \end{subfigure}
    
    \vspace{2em}
    
    \begin{subfigure}[b]{\textwidth}
        \centering
        \includegraphics[width=\textwidth]{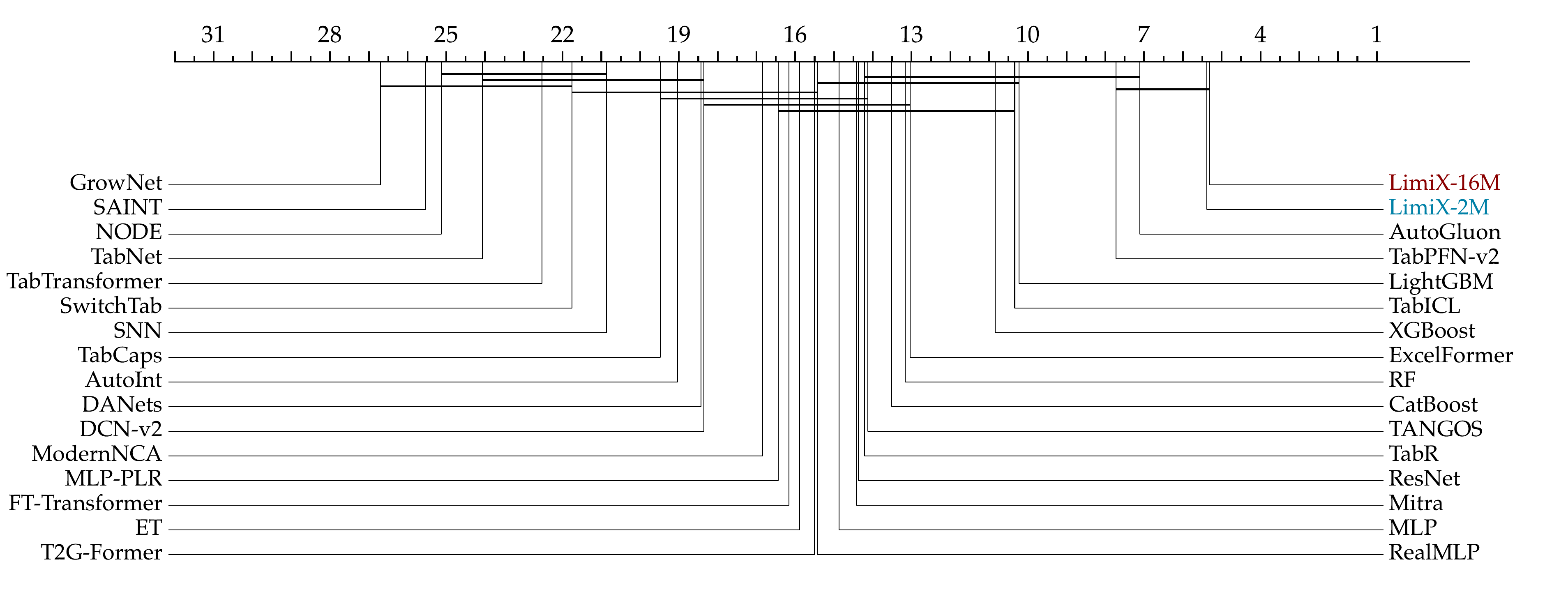}
        \caption{Accuracy on the OpenML-CC18 benchmark.}
        \label{fig:cc18-sub2}
    \end{subfigure}
    
    \vspace{2em}
    
    \begin{subfigure}[b]{\textwidth}
        \centering
        \includegraphics[width=\textwidth]{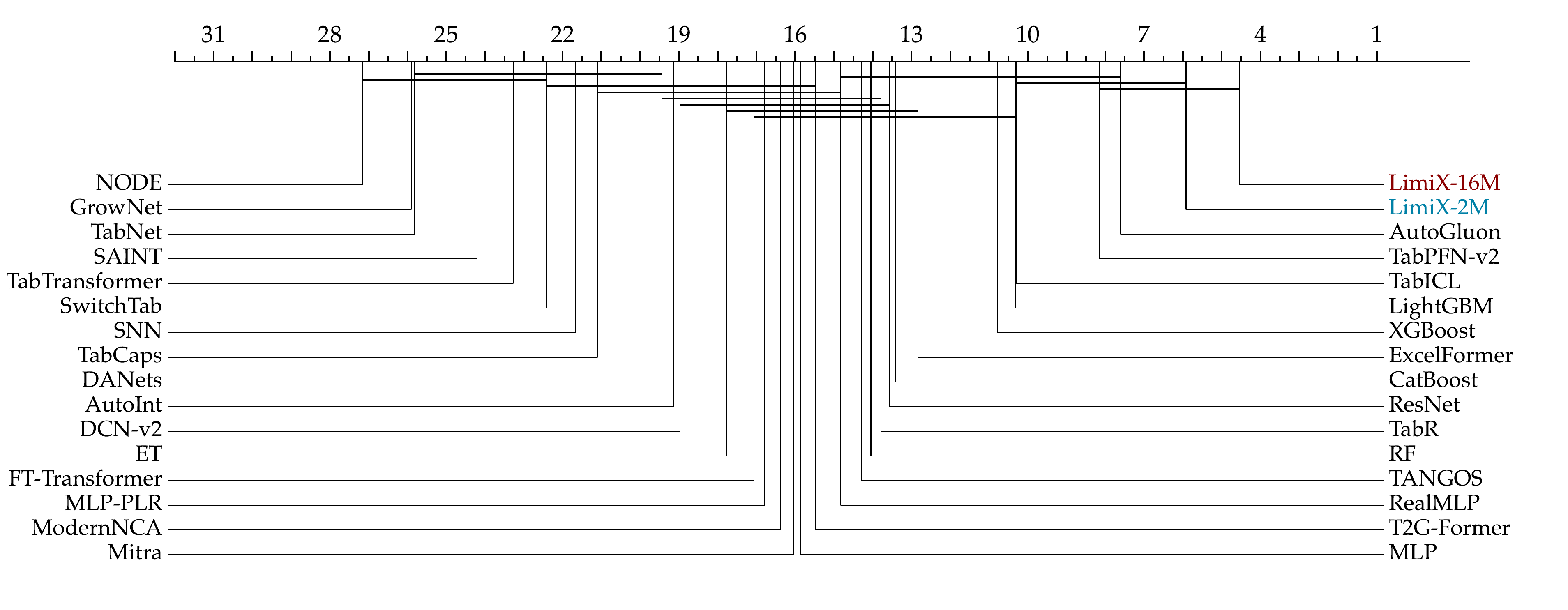}
        \caption{F1-score on the OpenML-CC18 benchmark.}
        \label{fig:cc18-sub3}
    \end{subfigure}
    
    \caption{Critical difference diagrams on the OpenML-CC18 benchmark.}
    \label{fig:cc18}
\end{figure}
\clearpage

\begin{table}[H]
\centering
\renewcommand{\arraystretch}{1.05}
\setlength{\tabcolsep}{12pt} 
\caption{Classification results on the TabArena-CLS benchmark. The best scores are shown in bold.}
\label{tab:results_cls_TabArena}
\begin{adjustbox}{max width=0.95\textwidth}
\begin{tabular}{l|cccccc}
\toprule
& \multicolumn{6}{c}{\rule{0pt}{20pt}\raisebox{5pt}{TabArena-CLS}} \\ \midrule
& \multicolumn{3}{c|}{Mean} & \multicolumn{3}{c}{Rank} \\
\cmidrule(lr){2-4}\cmidrule(lr){5-7}
\multicolumn{1}{l|}{Model} & AUC ($\uparrow$) & Acc. ($\uparrow$) & \multicolumn{1}{c|}{F1 ($\uparrow$)} & AUC ($\downarrow$) & Acc. ($\downarrow$) & F1 ($\downarrow$) \\
\midrule
\multicolumn{1}{l|}{LimiX-16M} & \textbf{0.849} & \textbf{0.877} & \multicolumn{1}{c|}{\textbf{0.597}} & \textbf{3.455} & 3.394 & 7.000 \\
\multicolumn{1}{l|}{LimiX-2M} & 0.846 & 0.876 & \multicolumn{1}{c|}{0.594} & 4.879 & \textbf{3.333} & \textbf{6.727} \\
\multicolumn{1}{l|}{AutoGluon} & 0.844 & 0.870 & \multicolumn{1}{c|}{0.574} & 5.818 & 5.364 & 9.091 \\
\multicolumn{1}{l|}{TabICL} & 0.840 & 0.870 & \multicolumn{1}{c|}{0.553} & 6.879 & 6.394 & 10.576 \\
\multicolumn{1}{l|}{TabPFN-v2} & 0.838 & 0.872 & \multicolumn{1}{c|}{0.589} & 7.273 & 4.636 & 8.848 \\
\multicolumn{1}{l|}{LightGBM} & 0.841 & 0.868 & \multicolumn{1}{c|}{0.574} & 7.455 & 8.333 & 10.545 \\
\multicolumn{1}{l|}{XGBoost} & 0.838 & 0.867 & \multicolumn{1}{c|}{0.567} & 8.394 & 7.818 & 10.909 \\
\multicolumn{1}{l|}{RF} & 0.837 & 0.864 & \multicolumn{1}{c|}{0.558} & 9.818 & 8.303 & 12.061 \\
\multicolumn{1}{l|}{CatBoost} & 0.835 & 0.867 & \multicolumn{1}{c|}{0.574} & 10.121 & 7.606 & 9.970 \\
\multicolumn{1}{l|}{ET} & 0.833 & 0.857 & \multicolumn{1}{c|}{0.505} & 10.939 & 11.212 & 16.333 \\
\multicolumn{1}{l|}{Mitra} & 0.815 & 0.862 & \multicolumn{1}{c|}{0.533} & 12.121 & 10.364 & 14.970 \\
\multicolumn{1}{l|}{ExcelFormer} & 0.810 & 0.849 & \multicolumn{1}{c|}{0.555} & 14.697 & 16.939 & 14.970 \\
\multicolumn{1}{l|}{MLP} & 0.772 & 0.822 & \multicolumn{1}{c|}{0.459} & 18.515 & 17.636 & 19.576 \\
\multicolumn{1}{l|}{TANGOS} & 0.791 & 0.844 & \multicolumn{1}{c|}{0.522} & 18.576 & 17.848 & 16.000 \\
\multicolumn{1}{l|}{T2G-Former} & 0.779 & 0.822 & \multicolumn{1}{c|}{0.482} & 18.667 & 20.061 & 15.848 \\
\multicolumn{1}{l|}{MLP-PLR} & 0.781 & 0.836 & \multicolumn{1}{c|}{0.460} & 19.030 & 19.091 & 19.667 \\
\multicolumn{1}{l|}{ModernNCA} & 0.783 & 0.846 & \multicolumn{1}{c|}{0.511} & 19.606 & 20.394 & 16.970 \\
\multicolumn{1}{l|}{AutoInt} & 0.769 & 0.826 & \multicolumn{1}{c|}{0.474} & 19.697 & 20.636 & 18.576 \\
\multicolumn{1}{l|}{TabR} & 0.785 & 0.842 & \multicolumn{1}{c|}{0.510} & 20.152 & 19.606 & 16.091 \\
\multicolumn{1}{l|}{NODE} & 0.769 & 0.792 & \multicolumn{1}{c|}{0.352} & 20.303 & 21.424 & 24.333 \\
\multicolumn{1}{l|}{ResNet} & 0.781 & 0.824 & \multicolumn{1}{c|}{0.532} & 20.364 & 21.485 & 16.303 \\
\multicolumn{1}{l|}{FT-Transformer} & 0.770 & 0.803 & \multicolumn{1}{c|}{0.468} & 20.515 & 21.636 & 18.606 \\
\multicolumn{1}{l|}{TabTransformer} & 0.739 & 0.781 & \multicolumn{1}{c|}{0.438} & 20.606 & 21.424 & 19.091 \\
\multicolumn{1}{l|}{RealMLP} & 0.777 & 0.822 & \multicolumn{1}{c|}{0.512} & 20.758 & 21.030 & 14.424 \\
\multicolumn{1}{l|}{DCN-v2} & 0.769 & 0.833 & \multicolumn{1}{c|}{0.482} & 21.303 & 19.424 & 18.212 \\
\multicolumn{1}{l|}{SNN} & 0.755 & 0.818 & \multicolumn{1}{c|}{0.442} & 22.242 & 22.061 & 22.000 \\
\multicolumn{1}{l|}{TabCaps} & 0.742 & 0.837 & \multicolumn{1}{c|}{0.471} & 22.333 & 17.970 & 19.667 \\
\multicolumn{1}{l|}{DANets} & 0.749 & 0.776 & \multicolumn{1}{c|}{0.453} & 23.152 & 23.848 & 18.333 \\
\multicolumn{1}{l|}{SwitchTab} & 0.754 & 0.799 & \multicolumn{1}{c|}{0.409} & 23.212 & 24.879 & 22.303 \\
\multicolumn{1}{l|}{SAINT} & 0.694 & 0.739 & \multicolumn{1}{c|}{0.437} & 24.545 & 25.061 & 21.515 \\
\multicolumn{1}{l|}{TabNet} & 0.709 & 0.789 & \multicolumn{1}{c|}{0.438} & 25.848 & 22.364 & 23.182 \\
\multicolumn{1}{l|}{GrowNet} & 0.646 & 0.674 & \multicolumn{1}{c|}{0.361} & 26.636 & 28.091 & 23.273 \\
\bottomrule
\end{tabular}
\end{adjustbox}
\end{table}

\vspace{1em}

\begin{table}[H]
\centering
\caption{Statistical profile of the benchmark TabArena, where Q10, Q50, and Q90 correspond to the 10\%, 50\%, and 90\% quantiles, respectively; for categorical feature statistics, we only consider features that are either string-typed or have fewer than 10 unique values.}
\resizebox{0.9\textwidth}{!}{
    \begin{tabular}{cl ccccccc}
    \toprule
    & & \multicolumn{7}{c}{Statistics} \\
    \cmidrule(lr){3-9}
     & Metric   & Q10  & Q50 & Q90 & Mean & Std  & Min & Max \\
    \midrule
    & \# Features & 9 & 21 & 129 & 125 & 374 & 4 & 1776 \\
    & \# Classes & 2 & 2 & 3 & 262 & 1577 & 2 & 9856 \\
    & Missing Values (Ratio) & 0 & 0 & 0.056 & 0.027 & 0.109 & 0 & 0.672 \\
    & Categorical Features (Ratio) & 0 & 0.47 & 1 & 0.45 & 0.366 & 0 & 1 \\
    & Features w/ Missing Values (Ratio) & 0 & 0 & 0.696 & 0.139 & 0.294 & 0 & 0.994 \\
    \bottomrule
    \end{tabular}
}
\label{tab:profile_cls_TabArena}
\end{table}

\begin{figure}[H]
    \centering
    \begin{subfigure}[b]{\textwidth}
        \centering
        \includegraphics[width=\textwidth]{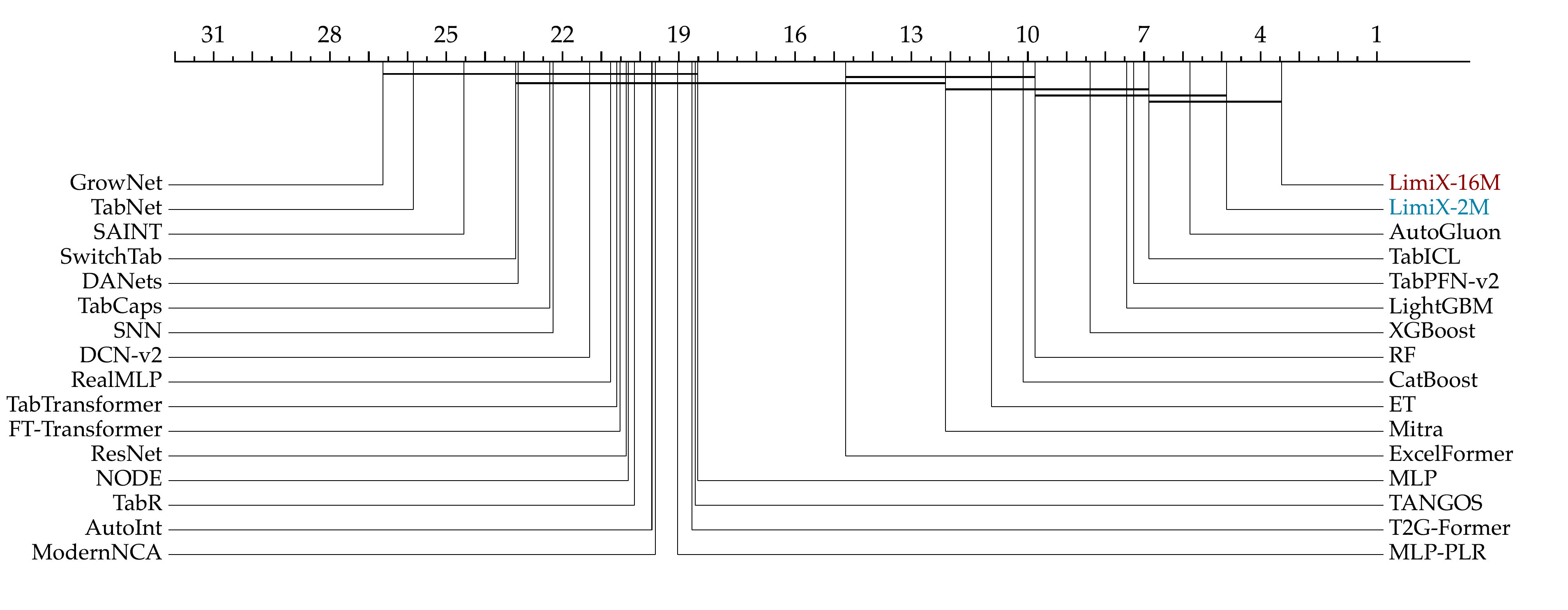}
        \caption{AUC on the TabArena-CLS benchmark.}
        \label{fig:tabarena-sub1}
    \end{subfigure}
    
    \vspace{2em}
    
    \begin{subfigure}[b]{\textwidth}
        \centering
        \includegraphics[width=\textwidth]{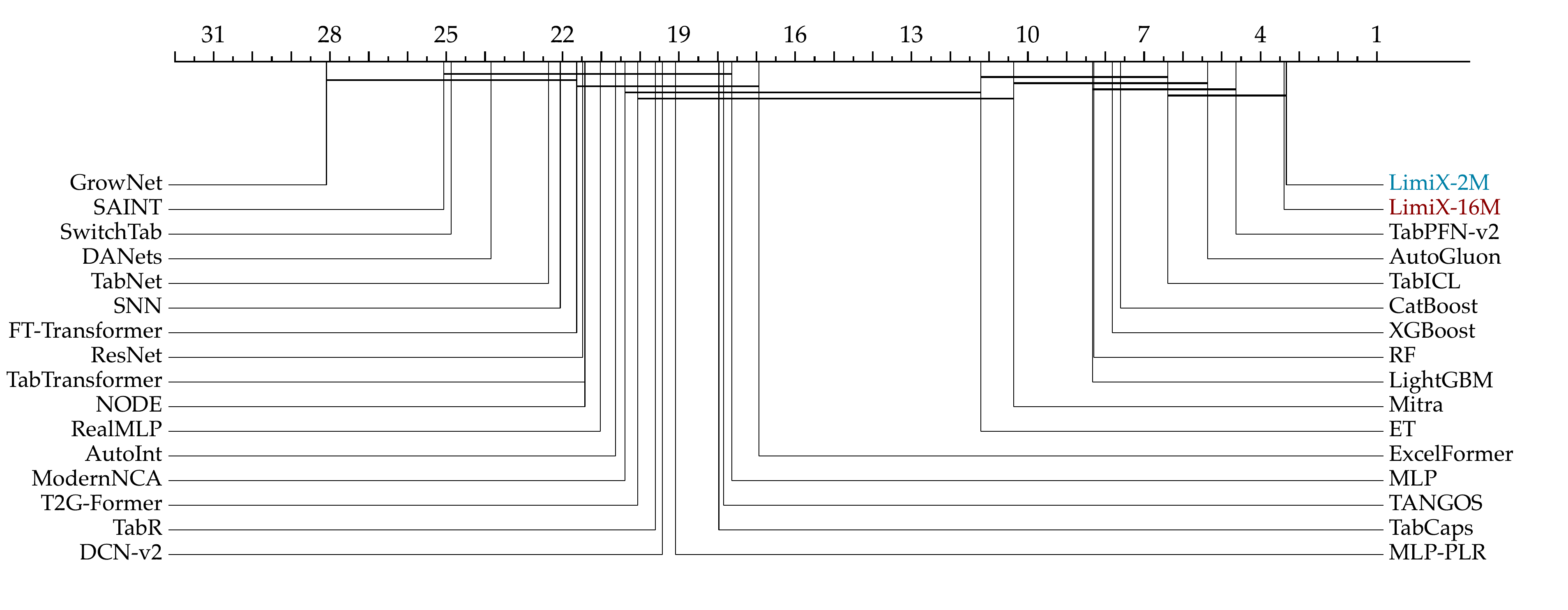}
        \caption{Accuracy on the TabArena-CLS benchmark.}
        \label{fig:tabarena-sub2}
    \end{subfigure}
    
    \vspace{2em}

    \begin{subfigure}[b]{\textwidth}
        \centering
        \includegraphics[width=\textwidth]{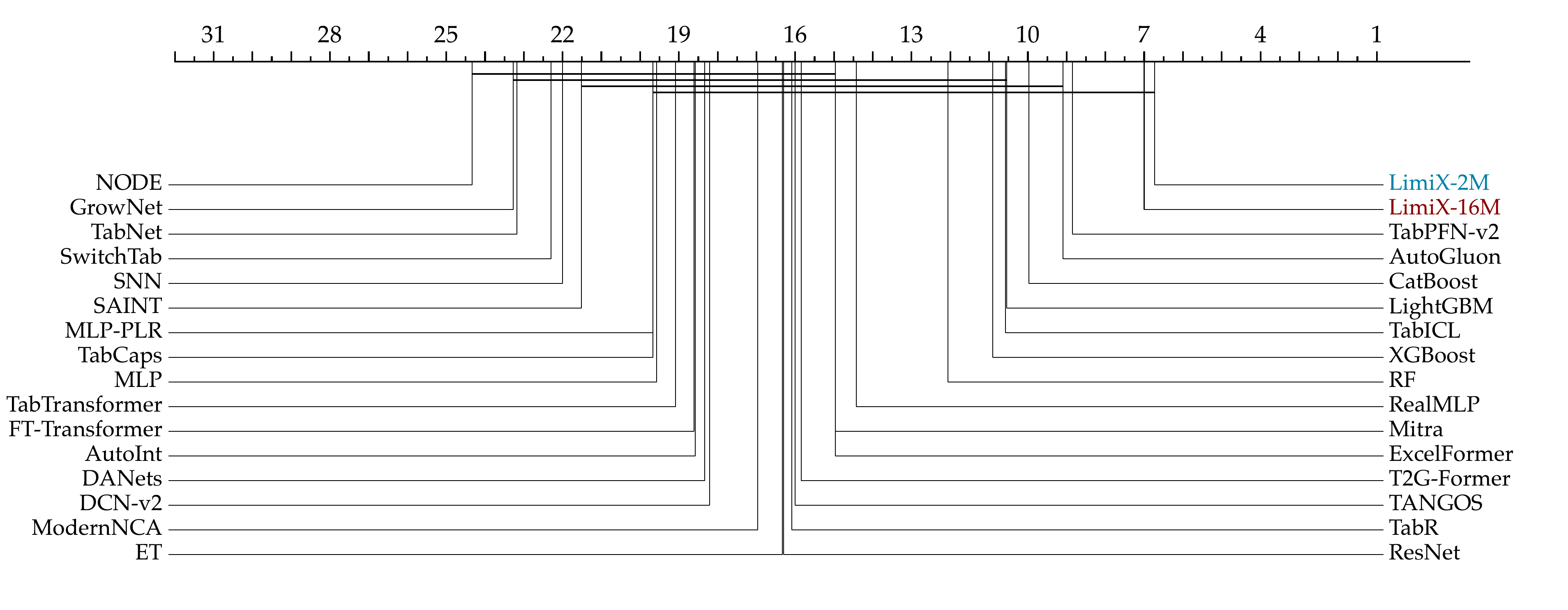}
        \caption{F1-score on the TabArena-CLS benchmark.}
        \label{fig:tabarena-sub3}
    \end{subfigure}
    
    \caption{Critical difference diagrams on the TabArena-CLS benchmark.}
    \label{fig:tabarena}
\end{figure}
\clearpage

\begin{table}[H]
\centering
\renewcommand{\arraystretch}{1.05}
\setlength{\tabcolsep}{12pt} 
\caption{Classification results on the PFN-CLS benchmark. The best scores are shown in bold.}
\label{tab:results_cls_TabPFN-v2 classifier}
\begin{adjustbox}{max width=0.95\textwidth}
\begin{tabular}{l|cccccc}
\toprule
& \multicolumn{6}{c}{\rule{0pt}{20pt}\raisebox{5pt}{PFN-CLS}} \\ \midrule
& \multicolumn{3}{c|}{Mean} & \multicolumn{3}{c}{Rank} \\
\cmidrule(lr){2-4}\cmidrule(lr){5-7}
\multicolumn{1}{l|}{Model} & AUC ($\uparrow$) & Acc. ($\uparrow$) & \multicolumn{1}{c|}{F1 ($\uparrow$)} & AUC ($\downarrow$) & Acc. ($\downarrow$) & F1 ($\downarrow$) \\
\midrule
\multicolumn{1}{l|}{LimiX-16M} & \textbf{0.923} & \textbf{0.862} & \multicolumn{1}{c|}{\textbf{0.786}} & \textbf{2.172} & \textbf{3.207} & \textbf{3.103} \\
\multicolumn{1}{l|}{LimiX-2M} & 0.913 & 0.848 & \multicolumn{1}{c|}{0.766} & 4.034 & 6.621 & 5.897 \\
\multicolumn{1}{l|}{TabPFN-v2} & 0.910 & 0.845 & \multicolumn{1}{c|}{0.756} & 5.586 & 7.172 & 7.138 \\
\multicolumn{1}{l|}{AutoGluon} & 0.906 & 0.835 & \multicolumn{1}{c|}{0.738} & 5.897 & 7.586 & 7.379 \\
\multicolumn{1}{l|}{TabICL} & 0.903 & 0.832 & \multicolumn{1}{c|}{0.742} & 6.862 & 8.552 & 9.034 \\
\multicolumn{1}{l|}{XGBoost} & 0.898 & 0.831 & \multicolumn{1}{c|}{0.733} & 9.103 & 7.483 & 8.448 \\
\multicolumn{1}{l|}{Mitra} & 0.897 & 0.826 & \multicolumn{1}{c|}{0.719} & 9.483 & 10.310 & 12.103 \\
\multicolumn{1}{l|}{LightGBM} & 0.893 & 0.826 & \multicolumn{1}{c|}{0.725} & 9.586 & 8.724 & 9.345 \\
\multicolumn{1}{l|}{CatBoost} & 0.895 & 0.819 & \multicolumn{1}{c|}{0.720} & 10.414 & 10.069 & 11.414 \\
\multicolumn{1}{l|}{RF} & 0.896 & 0.822 & \multicolumn{1}{c|}{0.721} & 11.241 & 10.897 & 12.103 \\
\multicolumn{1}{l|}{ET} & 0.893 & 0.809 & \multicolumn{1}{c|}{0.675} & 12.138 & 13.517 & 15.483 \\
\multicolumn{1}{l|}{ExcelFormer} & 0.883 & 0.812 & \multicolumn{1}{c|}{0.713} & 14.310 & 12.379 & 13.759 \\
\multicolumn{1}{l|}{ResNet} & 0.869 & 0.801 & \multicolumn{1}{c|}{0.700} & 15.241 & 15.379 & 14.828 \\
\multicolumn{1}{l|}{TANGOS} & 0.865 & 0.797 & \multicolumn{1}{c|}{0.698} & 15.897 & 16.759 & 15.207 \\
\multicolumn{1}{l|}{MLP} & 0.866 & 0.795 & \multicolumn{1}{c|}{0.695} & 17.276 & 16.517 & 16.138 \\
\multicolumn{1}{l|}{T2G-Former} & 0.848 & 0.792 & \multicolumn{1}{c|}{0.688} & 18.276 & 16.690 & 16.759 \\
\multicolumn{1}{l|}{ModernNCA} & 0.860 & 0.798 & \multicolumn{1}{c|}{0.692} & 18.276 & 16.586 & 15.586 \\
\multicolumn{1}{l|}{FT-Transformer} & 0.849 & 0.789 & \multicolumn{1}{c|}{0.683} & 19.000 & 18.759 & 17.448 \\
\multicolumn{1}{l|}{SwitchTab} & 0.858 & 0.776 & \multicolumn{1}{c|}{0.626} & 19.172 & 20.379 & 20.000 \\
\multicolumn{1}{l|}{MLP-PLR} & 0.848 & 0.786 & \multicolumn{1}{c|}{0.650} & 19.310 & 19.759 & 20.931 \\
\multicolumn{1}{l|}{DANets} & 0.844 & 0.770 & \multicolumn{1}{c|}{0.631} & 19.862 & 19.690 & 20.862 \\
\multicolumn{1}{l|}{TabCaps} & 0.834 & 0.788 & \multicolumn{1}{c|}{0.636} & 21.000 & 19.034 & 20.138 \\
\multicolumn{1}{l|}{TabR} & 0.842 & 0.789 & \multicolumn{1}{c|}{0.688} & 21.690 & 18.621 & 17.276 \\
\multicolumn{1}{l|}{RealMLP} & 0.829 & 0.780 & \multicolumn{1}{c|}{0.678} & 22.172 & 20.069 & 19.138 \\
\multicolumn{1}{l|}{TabTransformer} & 0.821 & 0.761 & \multicolumn{1}{c|}{0.604} & 22.379 & 21.517 & 23.310 \\
\multicolumn{1}{l|}{DCN-v2} & 0.846 & 0.771 & \multicolumn{1}{c|}{0.633} & 22.759 & 23.759 & 24.069 \\
\multicolumn{1}{l|}{NODE} & 0.844 & 0.754 & \multicolumn{1}{c|}{0.535} & 23.000 & 24.690 & 26.966 \\
\multicolumn{1}{l|}{AutoInt} & 0.838 & 0.772 & \multicolumn{1}{c|}{0.640} & 23.069 & 21.966 & 22.034 \\
\multicolumn{1}{l|}{SAINT} & 0.708 & 0.669 & \multicolumn{1}{c|}{0.563} & 23.207 & 22.138 & 21.207 \\
\multicolumn{1}{l|}{SNN} & 0.831 & 0.762 & \multicolumn{1}{c|}{0.595} & 24.345 & 21.931 & 23.966 \\
\multicolumn{1}{l|}{TabNet} & 0.825 & 0.768 & \multicolumn{1}{c|}{0.631} & 25.759 & 23.552 & 24.759 \\
\multicolumn{1}{l|}{GrowNet} & 0.756 & 0.689 & \multicolumn{1}{c|}{0.497} & 28.621 & 27.586 & 27.138 \\
\bottomrule
\end{tabular}
\end{adjustbox}
\end{table}

\begin{table}[H]
\centering
\caption{Statistical profile of the benchmark PFN-CLS, where Q10, Q50, and Q90 correspond to the 10\%, 50\%, and 90\% quantiles, respectively; for categorical feature statistics, we only consider features that are either string-typed or have fewer than 10 unique values.}
\resizebox{0.9\textwidth}{!}{
    \begin{tabular}{cl ccccccc}
    \toprule
    & & \multicolumn{7}{c}{Statistics} \\
    \cmidrule(lr){3-9}
     & Metric   & Q10  & Q50 & Q90 & Mean & Std  & Min & Max \\
    \midrule
    & \# Features & 6 & 21 & 187 & 58 & 80 & 4 & 308 \\
    & \# Classes & 2 & 2 & 7 & 4 & 2 & 2 & 10 \\
    & Missing Values (Ratio) & 0 & 0 & 0 & 0.001 & 0.006 & 0 & 0.032 \\
    & Categorical Features (Ratio) & 0 & 0.081 & 0.956 & 0.291 & 0.381 & 0 & 1 \\
    & Features w/ Missing Values (Ratio) & 0 & 0 & 0 & 0.016 & 0.086 & 0 & 0.474 \\
    \bottomrule
    \end{tabular}
}
\label{tab:profile_cls_TabPFN-v2 classifier}
\end{table}

\begin{figure}[H]
    \centering
    \begin{subfigure}[b]{\textwidth}
        \centering
        \includegraphics[width=\textwidth]{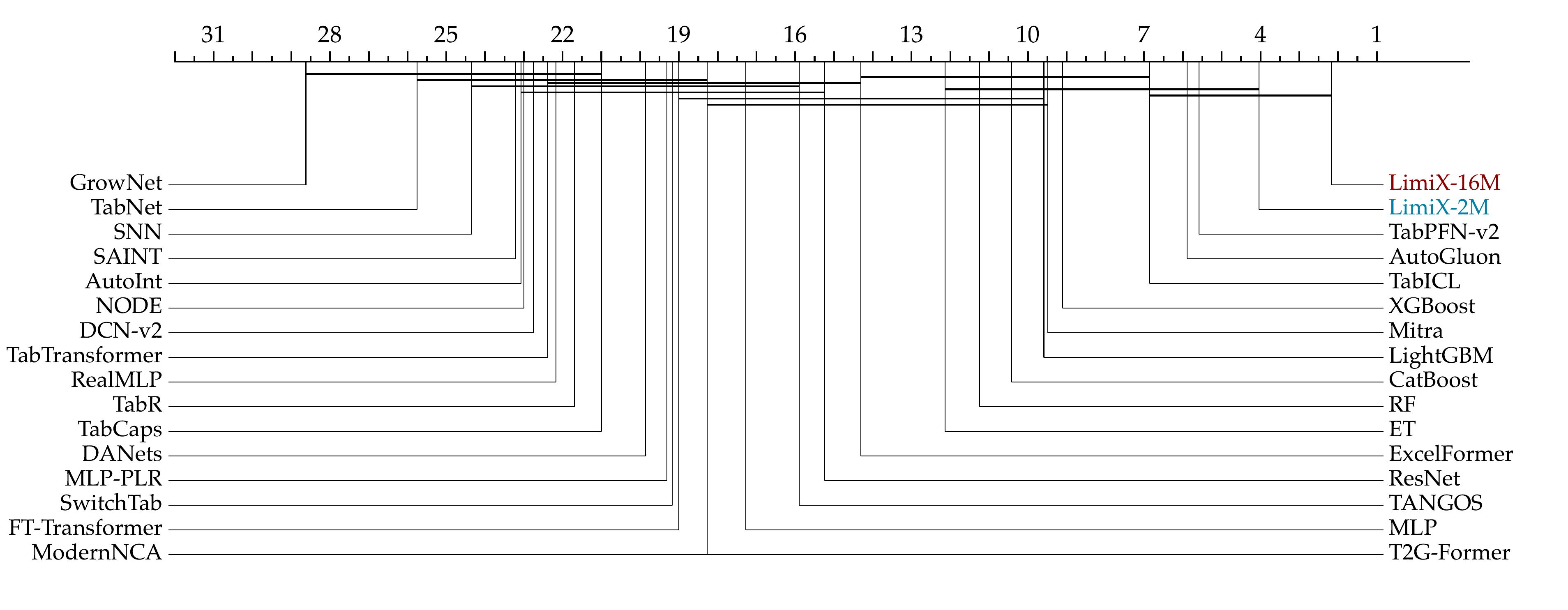}
        \caption{AUC on the PFN-CLS benchmark}
        \label{fig:cdd_pfn-sub1}
    \end{subfigure}

    \vspace{2em}
    
    \begin{subfigure}[b]{\textwidth}
        \centering
        \includegraphics[width=\textwidth]{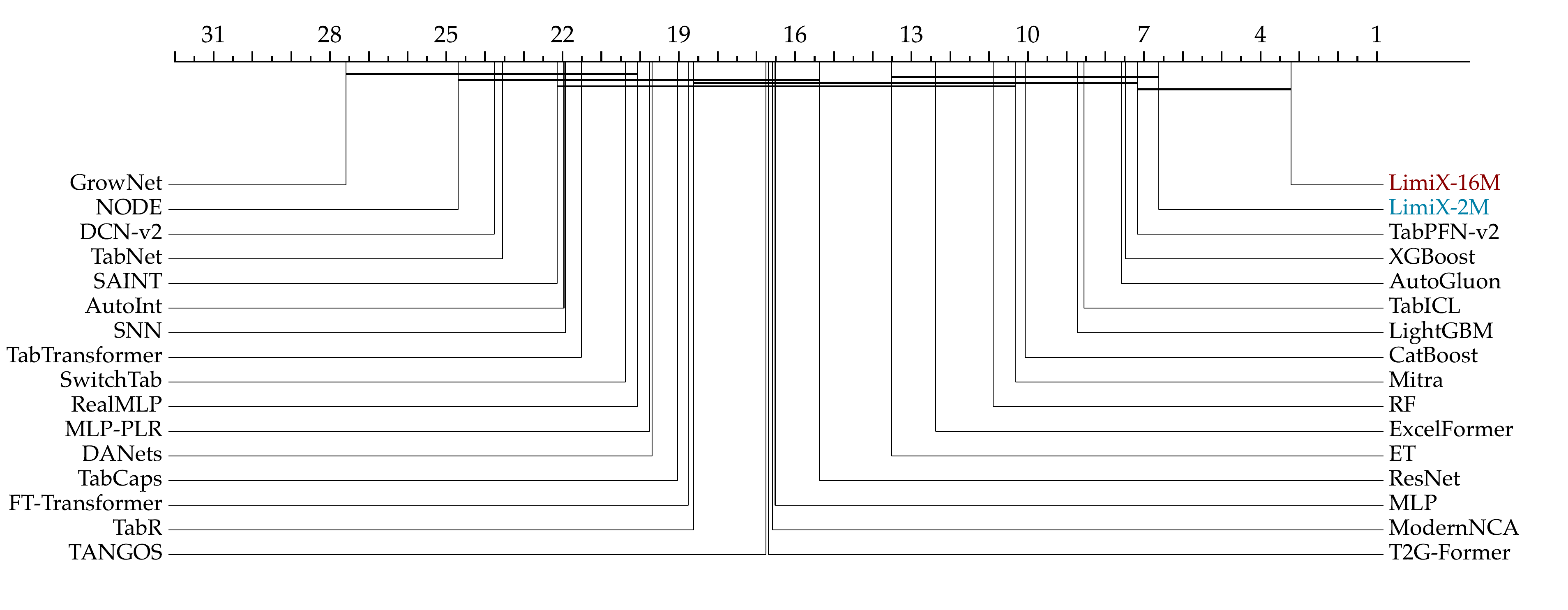}
        \caption{Accuracy on the PFN-CLS benchmark}
        \label{fig:cdd_pfn-sub2}
    \end{subfigure}

    \vspace{2em}
 
    \begin{subfigure}[b]{\textwidth}
        \centering
        \includegraphics[width=\textwidth]{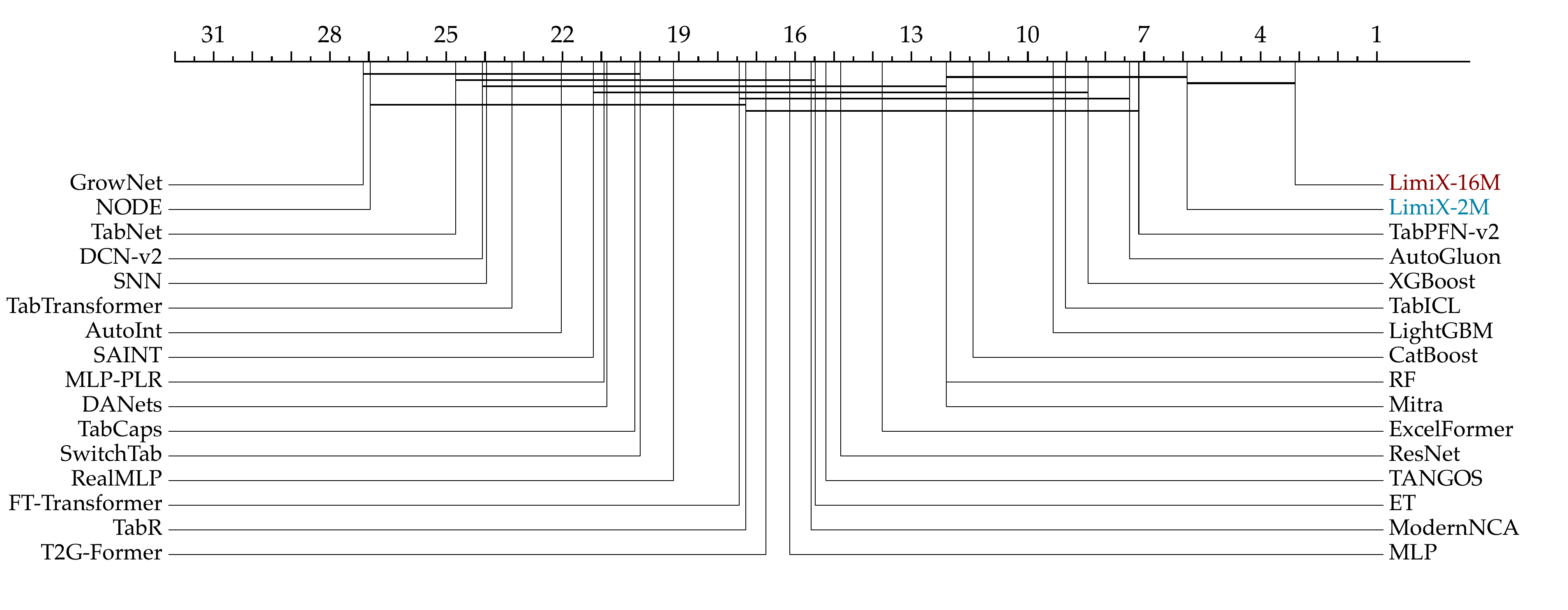}
        \caption{F1-score on the PFN-CLS benchmark}
        \label{fig:cdd_pfn-sub3}
    \end{subfigure}
    
    \caption{Critical difference diagrams on PFN-CLS benchmark}
    \label{fig:cdd_pfn}
\end{figure}
\clearpage
\begin{table}[H]
\centering
\renewcommand{\arraystretch}{1.05}
\setlength{\tabcolsep}{12pt} 
\caption{Classification results on the TabZilla benchmark. The best scores are shown in bold.}
\label{tab:results_tabzilla}
\begin{adjustbox}{max width=0.95\textwidth}
\begin{tabular}{l|cccccc}
\toprule
& \multicolumn{6}{c}{\rule{0pt}{20pt}\raisebox{5pt}{TabZilla}} \\ \midrule
& \multicolumn{3}{c|}{Mean} & \multicolumn{3}{c}{Rank} \\
\cmidrule(lr){2-4}\cmidrule(lr){5-7}
\multicolumn{1}{l|}{Model} & AUC ($\uparrow$) & Acc. ($\uparrow$) & \multicolumn{1}{c|}{F1 ($\uparrow$)} & AUC ($\downarrow$) & Acc. ($\downarrow$) & F1 ($\downarrow$) \\
\midrule
\multicolumn{1}{l|}{LimiX-16M} & \textbf{0.943} & \textbf{0.885} & \multicolumn{1}{c|}{\textbf{0.836}} & \textbf{4.815} & \textbf{4.963} & \textbf{6.185} \\
\multicolumn{1}{l|}{LimiX-2M} & 0.938 & 0.883 & \multicolumn{1}{c|}{0.832} & 5.667 & 6.222 & 6.630 \\
\multicolumn{1}{l|}{AutoGluon} & 0.933 & 0.871 & \multicolumn{1}{c|}{0.803} & 7.370 & 7.556 & 9.148 \\
\multicolumn{1}{l|}{TabPFN-v2} & 0.929 & 0.863 & \multicolumn{1}{c|}{0.797} & 8.000 & 9.037 & 9.481 \\
\multicolumn{1}{l|}{TabICL} & 0.933 & 0.864 & \multicolumn{1}{c|}{0.803} & 9.111 & 9.815 & 10.815 \\
\multicolumn{1}{l|}{XGBoost} & 0.929 & 0.863 & \multicolumn{1}{c|}{0.789} & 9.407 & 10.667 & 11.926 \\
\multicolumn{1}{l|}{LightGBM} & 0.927 & 0.863 & \multicolumn{1}{c|}{0.796} & 11.185 & 10.296 & 10.852 \\
\multicolumn{1}{l|}{RF} & 0.924 & 0.852 & \multicolumn{1}{c|}{0.773} & 11.704 & 12.593 & 13.185 \\
\multicolumn{1}{l|}{CatBoost} & 0.922 & 0.848 & \multicolumn{1}{c|}{0.780} & 12.926 & 13.926 & 13.926 \\
\multicolumn{1}{l|}{ExcelFormer} & 0.915 & 0.861 & \multicolumn{1}{c|}{0.802} & 14.667 & 12.259 & 12.296 \\
\multicolumn{1}{l|}{Mitra} & 0.915 & 0.841 & \multicolumn{1}{c|}{0.758} & 14.926 & 14.815 & 15.778 \\
\multicolumn{1}{l|}{T2G-Former} & 0.909 & 0.852 & \multicolumn{1}{c|}{0.790} & 14.926 & 14.852 & 15.148 \\
\multicolumn{1}{l|}{ModernNCA} & 0.907 & 0.850 & \multicolumn{1}{c|}{0.794} & 14.963 & 15.333 & 14.630 \\
\multicolumn{1}{l|}{ET} & 0.912 & 0.837 & \multicolumn{1}{c|}{0.745} & 15.333 & 15.741 & 17.037 \\
\multicolumn{1}{l|}{TabR} & 0.904 & 0.853 & \multicolumn{1}{c|}{0.793} & 15.741 & 13.963 & 14.370 \\
\multicolumn{1}{l|}{MLP-PLR} & 0.906 & 0.847 & \multicolumn{1}{c|}{0.773} & 15.741 & 15.519 & 16.704 \\
\multicolumn{1}{l|}{TANGOS} & 0.909 & 0.841 & \multicolumn{1}{c|}{0.776} & 15.926 & 14.444 & 14.963 \\
\multicolumn{1}{l|}{FT-Transformer} & 0.903 & 0.842 & \multicolumn{1}{c|}{0.769} & 16.741 & 16.444 & 16.074 \\
\multicolumn{1}{l|}{MLP} & 0.903 & 0.825 & \multicolumn{1}{c|}{0.747} & 17.000 & 17.333 & 17.852 \\
\multicolumn{1}{l|}{ResNet} & 0.908 & 0.834 & \multicolumn{1}{c|}{0.769} & 17.333 & 16.000 & 15.556 \\
\multicolumn{1}{l|}{AutoInt} & 0.896 & 0.833 & \multicolumn{1}{c|}{0.748} & 17.593 & 16.852 & 17.370 \\
\multicolumn{1}{l|}{DCN-v2} & 0.904 & 0.844 & \multicolumn{1}{c|}{0.781} & 18.333 & 17.370 & 17.889 \\
\multicolumn{1}{l|}{SAINT} & 0.824 & 0.764 & \multicolumn{1}{c|}{0.680} & 20.074 & 19.667 & 19.370 \\
\multicolumn{1}{l|}{RealMLP} & 0.892 & 0.846 & \multicolumn{1}{c|}{0.781} & 20.259 & 17.259 & 16.556 \\
\multicolumn{1}{l|}{SNN} & 0.874 & 0.816 & \multicolumn{1}{c|}{0.706} & 21.333 & 19.481 & 20.667 \\
\multicolumn{1}{l|}{DANets} & 0.881 & 0.800 & \multicolumn{1}{c|}{0.712} & 21.815 & 20.852 & 21.444 \\
\multicolumn{1}{l|}{TabTransformer} & 0.814 & 0.759 & \multicolumn{1}{c|}{0.659} & 22.222 & 20.630 & 21.593 \\
\multicolumn{1}{l|}{SwitchTab} & 0.860 & 0.764 & \multicolumn{1}{c|}{0.660} & 22.815 & 24.444 & 24.111 \\
\multicolumn{1}{l|}{TabCaps} & 0.887 & 0.816 & \multicolumn{1}{c|}{0.729} & 23.926 & 21.222 & 21.556 \\
\multicolumn{1}{l|}{NODE} & 0.869 & 0.784 & \multicolumn{1}{c|}{0.633} & 24.444 & 24.481 & 26.519 \\
\multicolumn{1}{l|}{GrowNet} & 0.829 & 0.732 & \multicolumn{1}{c|}{0.651} & 26.407 & 26.000 & 25.037 \\
\multicolumn{1}{l|}{TabNet} & 0.860 & 0.771 & \multicolumn{1}{c|}{0.668} & 27.370 & 26.778 & 26.407 \\
\bottomrule
\end{tabular}
\end{adjustbox}
\end{table}

\begin{table}[H]
\centering
\caption{Statistical profile of the benchmark TabZilla, where Q10, Q50, and Q90 correspond to the 10\%, 50\%, and 90\% quantiles, respectively; for categorical feature statistics, we only consider features that are either string-typed or have fewer than 10 unique values.}
\resizebox{0.9\textwidth}{!}{
    \begin{tabular}{cl ccccccc}
    \toprule
    & & \multicolumn{7}{c}{Statistics} \\
    \cmidrule(lr){3-9}
     & Metric   & Q10  & Q50 & Q90 & Mean & Std  & Min & Max \\
    \midrule
    & \# Features & 6 & 24 & 132 & 226 & 756 & 4 & 4296 \\
    & \# Classes & 2 & 2 & 10 & 7 & 16 & 2 & 100 \\
    & Missing Values (Ratio) & 0 & 0 & 0.078 & 0.022 & 0.059 & 0 & 0.205 \\
    & Categorical Features (Ratio) & 0 & 0.528 & 1 & 0.474 & 0.39 & 0 & 1 \\
    & Features w/ Missing Values (Ratio) & 0 & 0 & 0.483 & 0.102 & 0.218 & 0 & 0.808 \\
    \bottomrule
    \end{tabular}
}
\label{tab:profile_cls_tabzilla}
\end{table}

\begin{figure}[H]
    \centering
    \begin{subfigure}[b]{\textwidth}
        \centering
        \includegraphics[width=\textwidth]{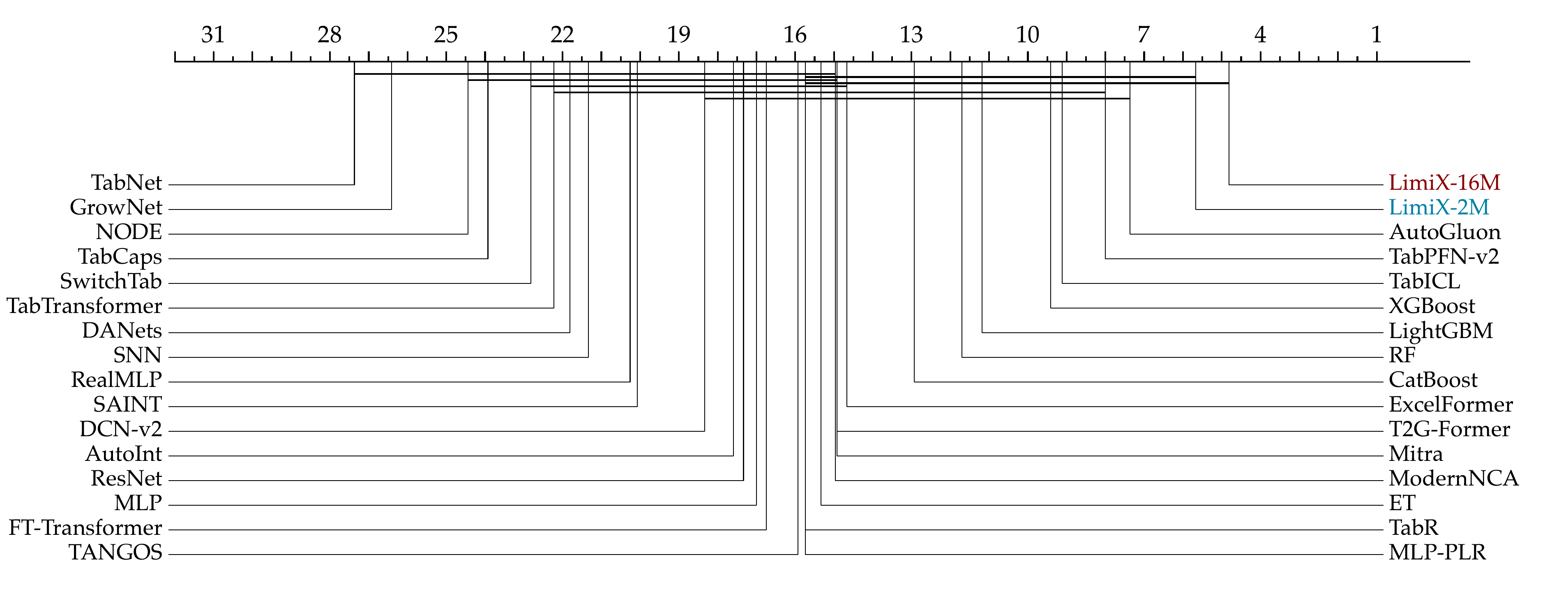}
        \caption{AUC on the TabZilla benchmark.}
        \label{fig:cddtabzilla-sub1}
    \end{subfigure}
    
    \vspace{2em}
    
    \begin{subfigure}[b]{\textwidth}
        \centering
        \includegraphics[width=\textwidth]{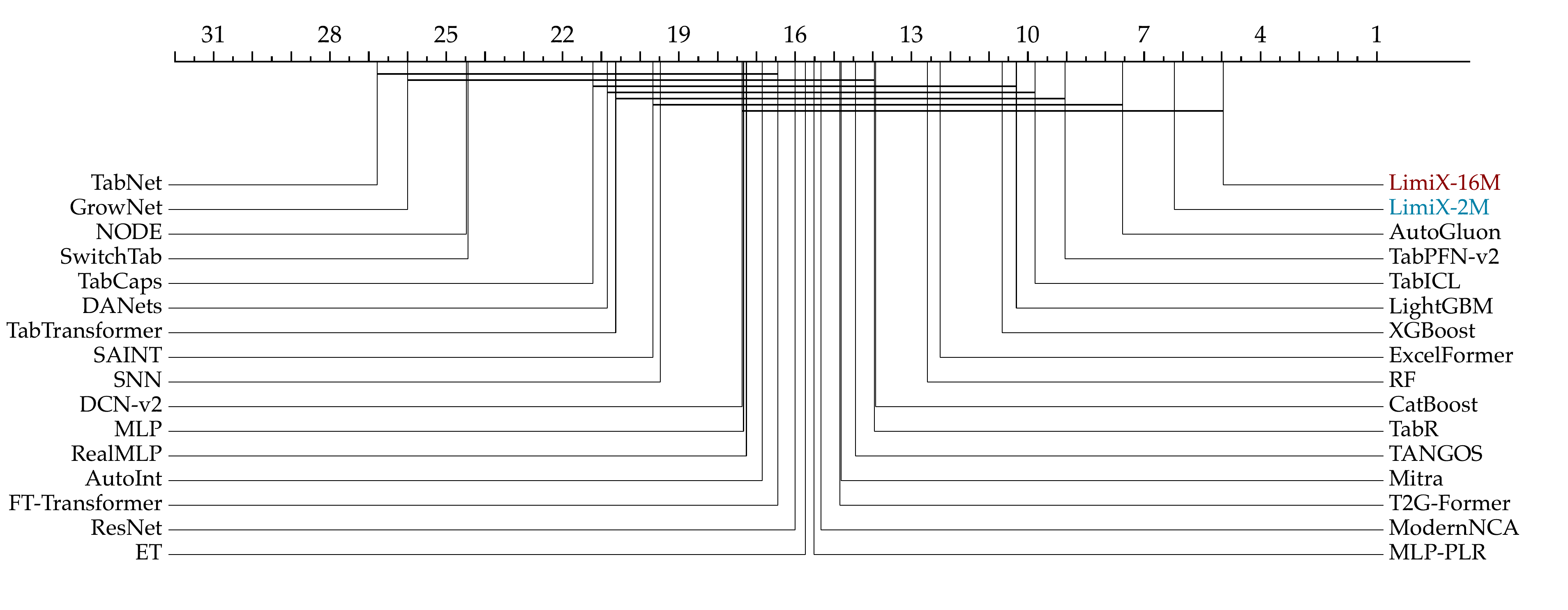}
        \caption{Accuracy on the TabZilla benchmark.}
        \label{fig:cddtabzilla-sub2}
    \end{subfigure}
    
    \vspace{2em}

    \begin{subfigure}[b]{\textwidth}
        \centering
        \includegraphics[width=\textwidth]{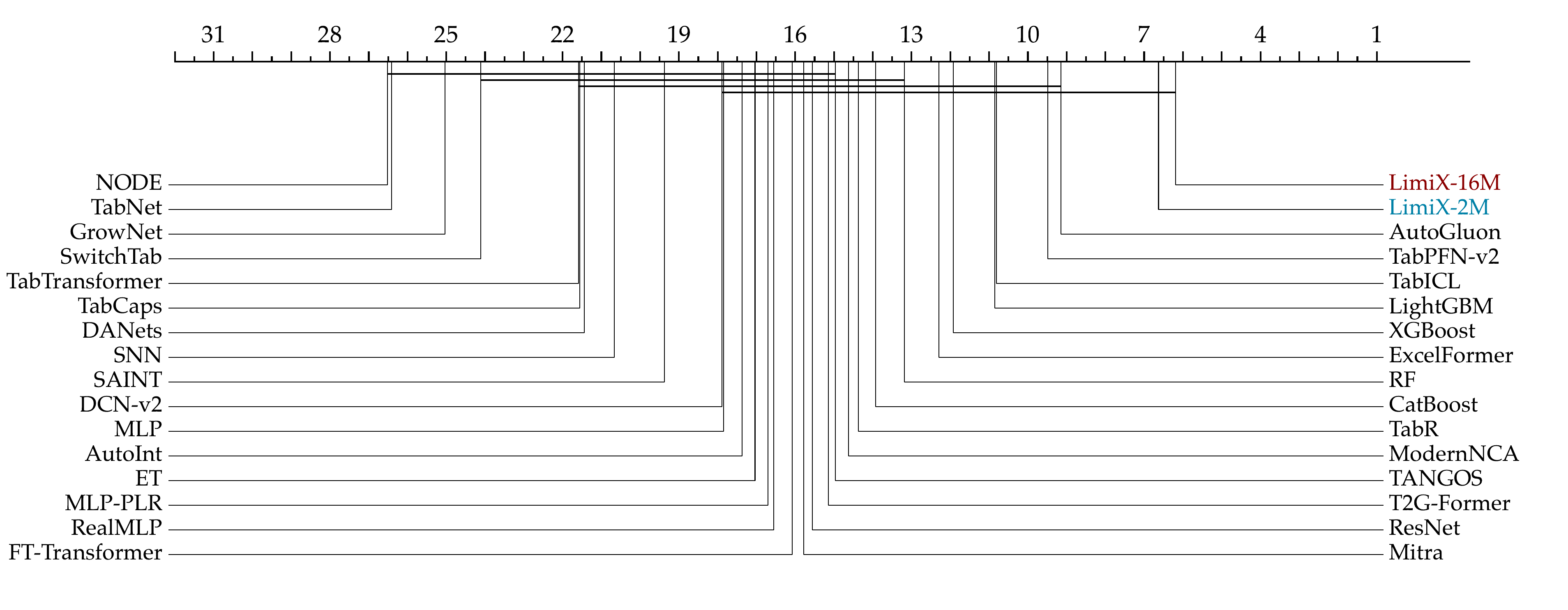}
        \caption{F1-score on the TabZilla benchmark.}
        \label{fig:cddtabzilla-sub3}
    \end{subfigure}
    
    \caption{Critical difference diagrams on the TabZilla benchmark.}
    \label{fig:cddtabzilla}
\end{figure}

\clearpage
\subsection{Regression}

\paragraph{Benchmark.} For the quantitative evaluation of regression performance, we leverage four benchmarks, including three open-source benchmark, TALENT-REG~\citep{liu2024talenttabularanalyticslearning}, PFN-REG~\citep{hollmann2025tabpfn_nature},TabArena-REG~\citep{erickson2025tabarena}, and CTR23~\citep{fischer2023openml_ctr23}.  After applying filters to exclude datasets containing more than 50,000 training samples or 10,000 features, we have 33 datasets from CTR23, 28 from PFN-REG, 13 from TabArena-REG, and 99 from TALENT-REG.

Similar to BCCO-CLS, we introduce BCCO-REG, a balanced regression benchmark comprising 50 datasets. For each dataset, we adopt the provided train-test split when available. If no predefined test set is available, we partition the data into a 70\% training set and a 30\% testing set randomly. Similar to the protocol employed in classification tasks, regression datasets with more than 50,000 training samples or 10,000 features are excluded.

\paragraph{Baselines.} For comparison, we include tree-based methods, NN-based methods, AutoML frameworks, and ICL-based models, which are basically consistent with baselines employed in classification tasks described above. TabICL and TabCaps are excluded because it cannot handle regression tasks. 
We add another baseline, DNNR~\citep{nader2022dnnr}, which is specifically designed for regression.
The model training procedure remains aligned with that employed in the classification experiments.

\paragraph{Metrics.} To evaluate regression performance properly, we employ normalized RMSE and R\textsuperscript{2} as the two evaluation metrics. We also analyze the ranks of models with respect to both metrics.

 \begin{figure}[H]
    \centering
    \includegraphics[width=0.9\linewidth]{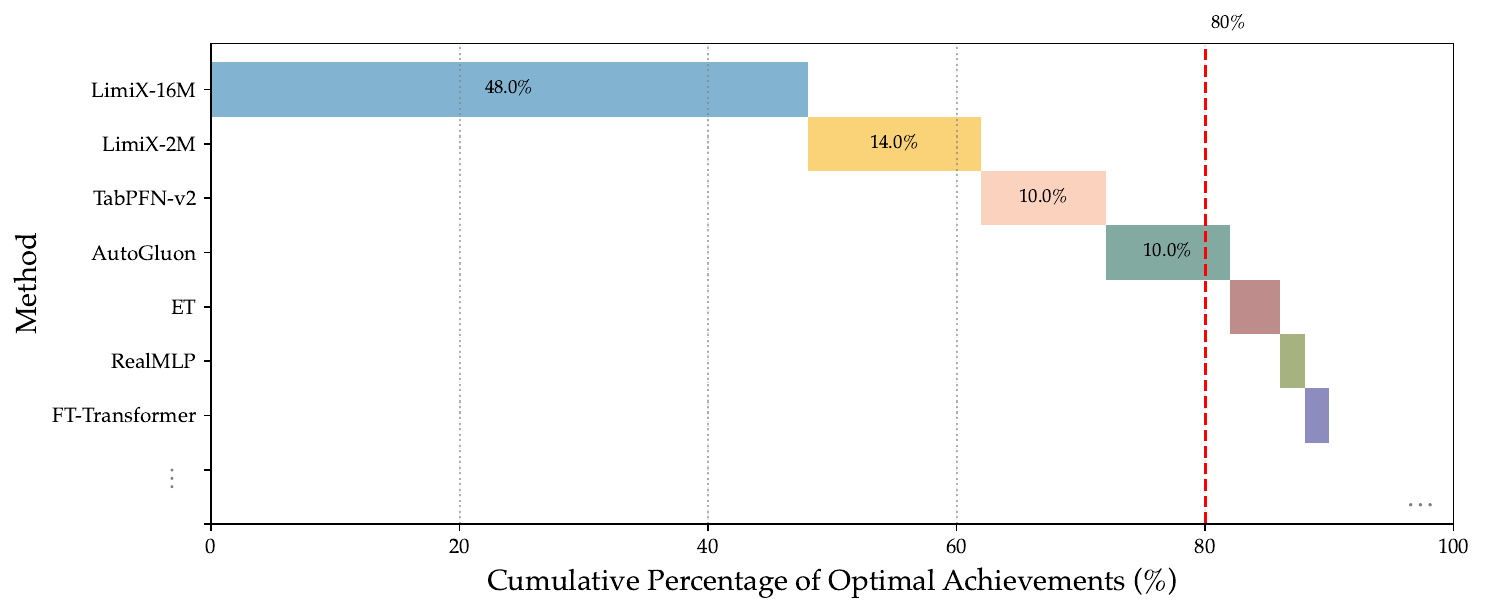}
    \caption{The proportion of models achieving the best R\textsuperscript{2}. The length of each bar represents the proportion of 106 datasets in which a given method achieved the highest R\textsuperscript{2} on the BCCO-REG benchmark. }
    \label{fig:regression_wf}
\end{figure}

\paragraph{Results.} 
As shown by critical difference diagrams, i.e. \Cref{fig:cdd-bcco-reg,fig:cdd-talent-reg,fig:cdd-ctr23-reg,fig:cdd-PFN-reg,fig:cdd_tabarenareg}, \ourL~always achieves the best performance, outperforming strong baselines like AutoGluon, XGBoost, and TabPFN-v2. 
Quantitative results in \Cref{tab:results_bcco_reg,tab:results_reg_pfn,tab:results_talent_reg,tab:results_ctr23,tab:results_reg_tabarena} also confirms that \ourL~achieves state-of-the-art regression performance, leading in both normalized RMSE and R\textsuperscript{2}. On the regression benchmarks, \ourS~exhibits predictive performance comparable to AutoGluon and TabPFN-v2, while requiring substantially less computational resources and runtime.

\paragraph{Subgroup analysis.} 
Similar to the subgroup analysis in classification, for regression, we also use the sample subgroups when building BCCO-REG to perform stratified analyses, whose criteria include the number of training samples, the ratio between the number of samples and features (length-to-width ratio), and the proportion of categorical features. 
\Cref{fig:Subgroup analysis regression} shows that \ourL~outperforms other methods across all subgroups compared with other methods. 
For \Cref{fig:barplot_reg_group-sub2,fig:barplot_reg_group-sub3,fig:barplot_reg_group-sub4}, we can see that performance of Mitra substantially drops for the second subgroup, while \ourL~do not exhibit a large drop.

\begin{table}[H]
\caption{Regression results on the BCCO-REG benchmark. The best scores are shown in bold.}
\label{tab:results_bcco_reg}
\centering
\setlength{\tabcolsep}{14pt} 
\begin{tabular}{l|cc|cc}
\toprule
& \multicolumn{4}{c}{\rule{0pt}{18pt}\raisebox{4pt}{BCCO-REG}} \\ \midrule
& \multicolumn{2}{c|}{Mean} & \multicolumn{2}{c}{Rank} \\
\cmidrule(lr){2-3}\cmidrule(lr){4-5}
\multicolumn{1}{l|}{Model} & R\textsuperscript{2} ($\uparrow$) & RMSE ($\downarrow$) & R\textsuperscript{2} ($\downarrow$) & RMSE ($\downarrow$) \\
\midrule
LimiX-16M            & \textbf{0.794} & \textbf{0.386} & \textbf{3.340} & \textbf{4.120} \\
AutoGluon            & 0.781 & 0.398 & 4.260 & 4.220 \\
TabPFN-v2            & 0.772 & 0.404 & 5.480 & 5.380 \\
LimiX-2M             & 0.785 & 0.392 & 5.720 & 5.600 \\
XGBoost              & 0.764 & 0.415 & 8.520 & 8.560 \\
LightGBM             & 0.715 & 0.423 & 8.760 & 8.720 \\
ET                   & 0.757 & 0.431 & 10.720 & 10.660 \\
CatBoost             & 0.741 & 0.427 & 11.320 & 11.160 \\
RF                   & 0.752 & 0.438 & 12.080 & 12.040 \\
ExcelFormer          & 0.743 & 0.443 & 12.260 & 12.320 \\
RealMLP              & 0.725 & 0.441 & 13.240 & 13.200 \\
T2G-Former           & 0.743 & 0.442 & 13.680 & 13.640 \\
FT-Transformer       & 0.737 & 0.448 & 14.240 & 14.220 \\
DCN-v2               & 0.739 & 0.448 & 14.600 & 14.600 \\
TabR                 & 0.733 & 0.448 & 15.300 & 15.360 \\
ResNet               & 0.720 & 0.468 & 15.940 & 15.900 \\
ModernNCA            & 0.598 & 0.471 & 16.060 & 15.960 \\
TANGOS               & 0.719 & 0.468 & 16.620 & 16.620 \\
Mitra                & 0.667 & 0.474 & 16.620 & 16.540 \\
MLP-PLR              & 0.734 & 0.453 & 16.660 & 16.660 \\
SAINT                & 0.701 & 0.481 & 17.440 & 17.460 \\
MLP                  & 0.701 & 0.487 & 17.860 & 17.920 \\
AutoInt              & 0.724 & 0.465 & 18.120 & 18.040 \\
NODE                 & 0.643 & 0.543 & 21.000 & 20.920 \\
TabNet               & 0.670 & 0.516 & 21.600 & 21.540 \\
DNNR                 & -2.152 & 1.329 & 21.780 & 21.800 \\
SNN                  & 0.434 & 0.720 & 26.200 & 26.200 \\
GrowNet              & 0.201 & 0.864 & 27.840 & 27.820 \\
DANets               & 0.005 & 0.979 & 29.440 & 29.480 \\
TabTransformer       & -0.000 & 0.981 & 29.600 & 29.640 \\
SwitchTab            & 0.001 & 0.981 & 29.700 & 29.700 \\
\bottomrule
\end{tabular}
\end{table}

\begin{table}[H]
\centering
\caption{Statistical profile of the benchmark BCCO-REG, where Q10, Q50, and Q90 correspond to the 10\%, 50\%, and 90\% quantiles, respectively; for categorical feature statistics, we only consider features that are either string-typed or have fewer than 10 unique values.}
\resizebox{0.9\textwidth}{!}{
    \begin{tabular}{cl ccccccc}
    \toprule
    & & \multicolumn{7}{c}{Statistics} \\
    \cmidrule(lr){3-9}
     & Metric   & Q10  & Q50 & Q90 & Mean & Std  & Min & Max \\
    \midrule
    & \# Features & 6 & 12 & 33 & 16 & 14 & 4 & 81 \\
    & Missing Values (Ratio) & 0 & 0 & 0 & 0 & 0 & 0 & 0.001 \\
    & Categorical Features (Ratio) & 0 & 0.16 & 0.65 & 0.242 & 0.264 & 0 & 0.967 \\
    & Features w/ Missing Values (Ratio) & 0 & 0 & 0 & 0.008 & 0.051 & 0 & 0.333 \\
    \bottomrule
    \end{tabular}
}
\label{tab:profile_reg_BCCO}
\end{table}

\clearpage
\begin{table}[H]
\caption{Regression results on the TALENT-REG benchmark. The best scores are shown in bold.}
\label{tab:results_talent_reg}
\centering
\setlength{\tabcolsep}{14pt} 
\begin{tabular}{l|cc|cc}
\toprule
& \multicolumn{4}{c}{\rule{0pt}{18pt}\raisebox{4pt}{TALENT-REG}} \\ \midrule
& \multicolumn{2}{c|}{Mean} & \multicolumn{2}{c}{Rank} \\
\cmidrule(lr){2-3}\cmidrule(lr){4-5}
\multicolumn{1}{l|}{Model} & R\textsuperscript{2} ($\uparrow$) & RMSE ($\downarrow$) & R\textsuperscript{2} ($\downarrow$) & RMSE ($\downarrow$) \\
\midrule
LimiX-16M            & \textbf{0.735} & \textbf{0.433} & \textbf{2.808} & \textbf{3.364} \\
AutoGluon            & 0.722 & 0.448 & 4.818 & 4.929 \\
TabPFN-v2            & 0.695 & 0.465 & 6.192 & 6.101 \\
LimiX-2M             & 0.721 & 0.451 & 6.263 & 6.182 \\
XGBoost              & 0.710 & 0.462 & 7.424 & 7.354 \\
LightGBM             & 0.707 & 0.461 & 8.182 & 8.152 \\
ET                   & 0.696 & 0.476 & 9.970 & 9.919 \\
CatBoost             & 0.700 & 0.471 & 10.424 & 10.354 \\
RF                   & 0.697 & 0.474 & 10.556 & 10.556 \\
ExcelFormer          & 0.653 & 0.512 & 13.313 & 13.354 \\
RealMLP              & 0.656 & 0.510 & 13.444 & 13.444 \\
T2G-Former           & 0.656 & 0.512 & 14.202 & 14.141 \\
TabR                 & 0.651 & 0.516 & 15.283 & 15.242 \\
DCN-v2               & -0.361 & 0.818 & 15.576 & 15.616 \\
FT-Transformer       & 0.648 & 0.519 & 15.848 & 15.798 \\
Mitra                & 0.602 & 0.547 & 15.848 & 15.848 \\
MLP-PLR              & 0.653 & 0.521 & 16.101 & 16.061 \\
ResNet               & 0.562 & 0.550 & 16.616 & 16.586 \\
ModernNCA            & 0.633 & 0.530 & 16.646 & 16.626 \\
TANGOS               & 0.592 & 0.547 & 17.131 & 17.101 \\
MLP                  & 0.556 & 0.564 & 17.596 & 17.636 \\
SAINT                & -1.541 & 0.571 & 18.384 & 18.253 \\
AutoInt              & 0.636 & 0.538 & 19.040 & 19.051 \\
NODE                 & 0.568 & 0.600 & 20.515 & 20.374 \\
TabNet               & 0.576 & 0.586 & 21.747 & 21.687 \\
DNNR                 & -9.172 & 2.528 & 23.505 & 23.545 \\
SNN                  & 0.344 & 0.777 & 25.465 & 25.485 \\
GrowNet              & -0.182 & 0.920 & 27.131 & 27.162 \\
DANets               & 0.005 & 0.998 & 28.434 & 28.455 \\
TabTransformer       & 0.001 & 1.001 & 28.576 & 28.626 \\
SwitchTab            & -0.002 & 1.002 & 28.949 & 28.990 \\
\bottomrule
\end{tabular}
\end{table}

\begin{table}[H]
\centering
\caption{Statistical profile of the benchmark TALENT-REG, where Q10, Q50, and Q90 correspond to the 10\%, 50\%, and 90\% quantiles, respectively; for categorical feature statistics, we only consider features that are either string-typed or have fewer than 10 unique values.}
\resizebox{0.9\textwidth}{!}{
    \begin{tabular}{cl ccccccc}
    \toprule
    & & \multicolumn{7}{c}{Statistics} \\
    \cmidrule(lr){3-9}
     & Metric   & Q10  & Q50 & Q90 & Mean & Std  & Min & Max \\
    \midrule
    & \# Features & 6 & 11 & 79 & 30 & 49 & 4 & 266 \\
    & Missing Values (Ratio) & 0 & 0 & 0 & 0.007 & 0.034 & 0 & 0.283 \\
    & Categorical Features (Ratio) & 0 & 0.131 & 0.667 & 0.238 & 0.283 & 0 & 1 \\
    & Features w/ Missing Values (Ratio) & 0 & 0 & 0.013 & 0.043 & 0.161 & 0 & 0.875 \\
    \bottomrule
    \end{tabular}
}
\label{tab:profile_reg_TALENT-REG}
\end{table}

\clearpage
\begin{table}[H]
\caption{Regression results on the CTR23 benchmark. The best scores are shown in bold.}
\label{tab:results_ctr23}
\centering
\setlength{\tabcolsep}{14pt} 
\begin{tabular}{l|cc|cc}
\toprule
& \multicolumn{4}{c}{\rule{0pt}{18pt}\raisebox{4pt}{CTR23}} \\ \midrule
& \multicolumn{2}{c|}{Mean} & \multicolumn{2}{c}{Rank} \\
\cmidrule(lr){2-3}\cmidrule(lr){4-5}
\multicolumn{1}{l|}{Model} & R\textsuperscript{2} ($\uparrow$) & RMSE ($\downarrow$) & R\textsuperscript{2} ($\downarrow$) & RMSE ($\downarrow$) \\
\midrule
LimiX-16M            & \textbf{0.745} & \textbf{0.477} & \textbf{4.061} & \textbf{4.152} \\
AutoGluon            & 0.725 & 0.497 & 5.303 & 5.212 \\
LimiX-2M             & 0.730 & 0.495 & 6.879 & 6.909 \\
TabPFN-v2            & 0.716 & 0.503 & 7.636 & 7.515 \\
XGBoost              & 0.712 & 0.511 & 8.697 & 8.758 \\
LightGBM             & 0.706 & 0.516 & 9.727 & 9.818 \\
ET                   & 0.697 & 0.535 & 10.515 & 10.576 \\
CatBoost             & 0.700 & 0.528 & 11.212 & 11.212 \\
RF                   & 0.694 & 0.539 & 11.485 & 11.455 \\
ExcelFormer          & 0.665 & 0.556 & 12.515 & 12.606 \\
T2G-Former           & 0.674 & 0.544 & 13.879 & 13.848 \\
DCN-v2               & 0.670 & 0.545 & 14.515 & 14.576 \\
ResNet               & 0.645 & 0.587 & 14.545 & 14.606 \\
RealMLP              & 0.661 & 0.549 & 14.576 & 14.606 \\
FT-Transformer       & 0.667 & 0.549 & 14.848 & 14.939 \\
TabR                 & 0.671 & 0.543 & 14.939 & 14.879 \\
MLP-PLR              & 0.672 & 0.553 & 15.788 & 15.818 \\
MLP                  & 0.608 & 0.623 & 15.848 & 15.848 \\
ModernNCA            & 0.667 & 0.550 & 15.909 & 15.576 \\
Mitra                & 0.624 & 0.583 & 16.727 & 16.848 \\
TANGOS               & 0.642 & 0.586 & 16.727 & 16.758 \\
SAINT                & 0.654 & 0.561 & 16.758 & 16.667 \\
AutoInt              & 0.655 & 0.568 & 17.848 & 17.818 \\
NODE                 & 0.568 & 0.666 & 20.545 & 20.394 \\
TabNet               & 0.605 & 0.623 & 20.576 & 20.636 \\
DNNR                 & -2.969 & 1.651 & 22.636 & 22.667 \\
SNN                  & 0.369 & 0.834 & 25.667 & 25.697 \\
GrowNet              & 0.185 & 0.944 & 27.606 & 27.576 \\
DANets               & 0.001 & 1.052 & 28.970 & 28.970 \\
TabTransformer       & 0.000 & 1.053 & 29.000 & 29.000 \\
SwitchTab            & -0.006 & 1.057 & 30.061 & 30.061 \\
\bottomrule
\end{tabular}
\end{table}

\begin{table}[H]
\centering
\caption{Statistical profile of the benchmark CTR23, where Q10, Q50, and Q90 correspond to the 10\%, 50\%, and 90\% quantiles, respectively; for categorical feature statistics, we only consider features that are either string-typed or have fewer than 10 unique values.}
\resizebox{0.9\textwidth}{!}{
    \begin{tabular}{cl ccccccc}
    \toprule
    & & \multicolumn{7}{c}{Statistics} \\
    \cmidrule(lr){3-9}
     & Metric   & Q10  & Q50 & Q90 & Mean & Std  & Min & Max \\
    \midrule
    & \# Features & 6 & 11 & 39 & 20 & 22 & 5 & 116 \\
    & Missing Values (Ratio) & 0 & 0 & 0 & 0.008 & 0.036 & 0 & 0.209 \\
    & Categorical Features (Ratio) & 0 & 0.143 & 0.842 & 0.312 & 0.345 & 0 & 1 \\
    & Features w/ Missing Values (Ratio) & 0 & 0 & 0 & 0.011 & 0.05 & 0 & 0.286 \\
    \bottomrule
    \end{tabular}
}
\label{tab:profile_reg_CTR23}
\end{table}

\clearpage
\begin{table}[H]
\centering
\renewcommand{\arraystretch}{1.05}
\setlength{\tabcolsep}{14pt} 
\caption{Regression results on the PFN-REG benchmark. The best scores are shown in bold.}
\label{tab:results_reg_pfn}
\begin{tabular}{l|cc|cc}
\toprule
& \multicolumn{4}{c}{\rule{0pt}{18pt}\raisebox{4pt}{PFN-REG}} \\ \midrule
& \multicolumn{2}{c|}{Mean} & \multicolumn{2}{c}{Rank} \\
\cmidrule(lr){2-3}\cmidrule(lr){4-5}
\multicolumn{1}{l|}{Model} & R\textsuperscript{2} ($\uparrow$) & RMSE ($\downarrow$) & R\textsuperscript{2} ($\downarrow$) & RMSE ($\downarrow$) \\
\midrule
LimiX-16M            & 0.692 & 0.461 & \textbf{4.607} & 5.464 \\
LimiX-2M             & \textbf{0.696} & \textbf{0.458} & 5.250 & \textbf{5.250} \\
AutoGluon            & 0.677 & 0.482 & 7.321 & 7.321 \\
TabPFN-v2            & 0.676 & 0.475 & 7.857 & 7.607 \\
XGBoost              & 0.671 & 0.487 & 9.464 & 9.464 \\
RealMLP              & 0.664 & 0.484 & 10.250 & 10.179 \\
LightGBM             & 0.664 & 0.496 & 10.464 & 10.393 \\
CatBoost             & 0.661 & 0.498 & 11.429 & 11.393 \\
ET                   & 0.652 & 0.515 & 11.500 & 11.429 \\
Mitra                & 0.630 & 0.531 & 13.679 & 13.714 \\
RF                   & 0.643 & 0.524 & 13.821 & 13.893 \\
T2G-Former           & 0.640 & 0.503 & 13.964 & 13.857 \\
ExcelFormer          & 0.646 & 0.513 & 14.036 & 14.071 \\
FT-Transformer       & 0.636 & 0.509 & 14.964 & 14.821 \\
MLP                  & 0.577 & 0.576 & 14.964 & 15.107 \\
ModernNCA            & 0.645 & 0.508 & 15.107 & 15.286 \\
ResNet               & 0.599 & 0.553 & 15.286 & 15.321 \\
MLP-PLR              & 0.638 & 0.513 & 15.321 & 15.393 \\
TabR                 & 0.636 & 0.509 & 15.393 & 15.250 \\
TANGOS               & 0.587 & 0.562 & 15.607 & 15.750 \\
DCN-v2               & 0.638 & 0.511 & 15.607 & 15.429 \\
SAINT                & -8.088 & 0.612 & 17.214 & 16.821 \\
AutoInt              & 0.610 & 0.539 & 19.143 & 19.036 \\
NODE                 & 0.496 & 0.666 & 20.964 & 20.786 \\
TabNet               & 0.430 & 0.643 & 22.071 & 22.000 \\
DNNR                 & -7.850 & 2.224 & 23.286 & 23.429 \\
SNN                  & 0.378 & 0.756 & 25.179 & 25.214 \\
GrowNet              & 0.096 & 0.942 & 26.607 & 26.571 \\
DANets               & 0.001 & 0.997 & 27.893 & 27.929 \\
SwitchTab            & -0.026 & 1.011 & 28.393 & 28.429 \\
TabTransformer       & -0.021 & 1.007 & 29.357 & 29.393 \\
\bottomrule
\end{tabular}
\end{table}

\begin{table}[H]
\centering
\caption{Statistical profile of the benchmark PFN-REG, where Q10, Q50, and Q90 correspond to the 10\%, 50\%, and 90\% quantiles, respectively; for categorical feature statistics, we only consider features that are either string-typed or have fewer than 10 unique values.}
\resizebox{0.9\textwidth}{!}{
    \begin{tabular}{cl ccccccc}
    \toprule
    & & \multicolumn{7}{c}{Statistics} \\
    \cmidrule(lr){3-9}
     & Metric   & Q10  & Q50 & Q90 & Mean & Std  & Min & Max \\
    \midrule
    & \# Features & 6 & 16 & 176 & 66 & 93 & 3 & 376 \\
    & Missing Values (Ratio) & 0 & 0 & 0.089 & 0.029 & 0.076 & 0 & 0.335 \\
    & Categorical Features (Ratio) & 0 & 0.209 & 0.949 & 0.355 & 0.381 & 0 & 1 \\
    & Features w/ Missing Values (Ratio) & 0 & 0 & 0.211 & 0.059 & 0.169 & 0 & 0.841 \\
    \bottomrule
    \end{tabular}
}
\label{tab:profile_reg_pfn}
\end{table}

\clearpage
\begin{table}[H]
\centering
\renewcommand{\arraystretch}{1.05}
\setlength{\tabcolsep}{14pt} 
\caption{Regression results on the TabArena-REG benchmark. The best scores are shown in bold.}
\label{tab:results_reg_tabarena}
\begin{tabular}{l|cc|cc}
\toprule
& \multicolumn{4}{c}{\rule{0pt}{18pt}\raisebox{4pt}{TabArena-REG}} \\ \midrule
& \multicolumn{2}{c|}{Mean} & \multicolumn{2}{c}{Rank} \\
\cmidrule(lr){2-3}\cmidrule(lr){4-5}
\multicolumn{1}{l|}{Model} & R\textsuperscript{2} ($\uparrow$) & RMSE ($\downarrow$) & R\textsuperscript{2} ($\downarrow$) & RMSE ($\downarrow$) \\
\midrule
LimiX-16M            & \textbf{0.796} & \textbf{0.406} & \textbf{2.923} & \textbf{2.923} \\
AutoGluon            & 0.791 & 0.414 & 3.154 & 3.154 \\
LimiX-2M             & 0.788 & 0.413 & 3.846 & 3.846 \\
TabPFN-v2            & 0.777 & 0.422 & 5.385 & 5.385 \\
CatBoost             & 0.774 & 0.431 & 6.231 & 6.231 \\
XGBoost              & 0.778 & 0.430 & 6.231 & 6.231 \\
LightGBM             & 0.771 & 0.435 & 6.769 & 6.769 \\
RF                   & 0.758 & 0.456 & 9.769 & 9.769 \\
ET                   & 0.746 & 0.464 & 10.231 & 10.154 \\
TabR                 & 0.729 & 0.476 & 13.692 & 13.692 \\
ExcelFormer          & 0.681 & 0.521 & 13.923 & 13.923 \\
NODE                 & 0.665 & 0.542 & 14.538 & 14.538 \\
TANGOS               & 0.673 & 0.534 & 15.077 & 15.077 \\
DCN-v2               & 0.718 & 0.486 & 15.385 & 15.462 \\
RealMLP              & 0.719 & 0.487 & 15.385 & 15.385 \\
Mitra                & 0.666 & 0.538 & 16.615 & 16.615 \\
ModernNCA            & 0.712 & 0.496 & 17.231 & 17.231 \\
MLP-PLR              & 0.714 & 0.496 & 17.231 & 17.231 \\
AutoInt              & 0.705 & 0.503 & 17.923 & 17.923 \\
ResNet               & 0.687 & 0.521 & 18.538 & 18.538 \\
MLP                  & 0.694 & 0.516 & 18.538 & 18.538 \\
SAINT                & 0.712 & 0.498 & 19.000 & 19.000 \\
T2G-Former           & 0.677 & 0.526 & 19.385 & 19.385 \\
TabNet               & 0.641 & 0.564 & 19.615 & 19.615 \\
FT-Transformer       & 0.626 & 0.572 & 22.077 & 22.077 \\
DNNR                 & -0.368 & 1.058 & 24.538 & 24.538 \\
SNN                  & 0.451 & 0.729 & 25.692 & 25.692 \\
GrowNet              & 0.316 & 0.814 & 27.385 & 27.385 \\
TabTransformer       & 0.016 & 0.993 & 29.333 & 29.333 \\
DANets               & 0.012 & 0.998 & 29.462 & 29.462 \\
SwitchTab            & 0.004 & 1.002 & 29.923 & 29.923 \\
\bottomrule
\end{tabular}
\end{table}

\begin{table}[H]
\centering
\caption{Statistical profile of the benchmark TabArena-REG, where Q10, Q50, and Q90 correspond to the 10\%, 50\%, and 90\% quantiles, respectively; for categorical feature statistics, we only consider features that are either string-typed or have fewer than 10 unique values.}
\resizebox{0.9\textwidth}{!}{
    \begin{tabular}{cl ccccccc}
    \toprule
    & & \multicolumn{7}{c}{Statistics} \\
    \cmidrule(lr){3-9}
     & Metric   & Q10  & Q50 & Q90 & Mean & Std  & Min & Max \\
    \midrule
    & \# Features & 6 & 9 & 67.8 & 92.23 & 280.67 & 5 & 1024 \\
    & Missing Values (Ratio) & 0 & 0 & 0 & 0 & 0 & 0 & 0 \\
    & Categorical Features (Ratio) & 0 & 0.333 & 0.619 & 0.287 & 0.299 & 0 & 1 \\
    & Features w/ Missing Values (Ratio) & 0 & 0 & 0 & 0 & 0 & 0 & 0 \\
    \bottomrule
    \end{tabular}
}
\label{tab:profile_reg_tabarena}
\end{table}

\begin{figure}[H]
    \centering
    \begin{subfigure}[b]{0.9\textwidth}
        \centering
        \includegraphics[width=\textwidth]{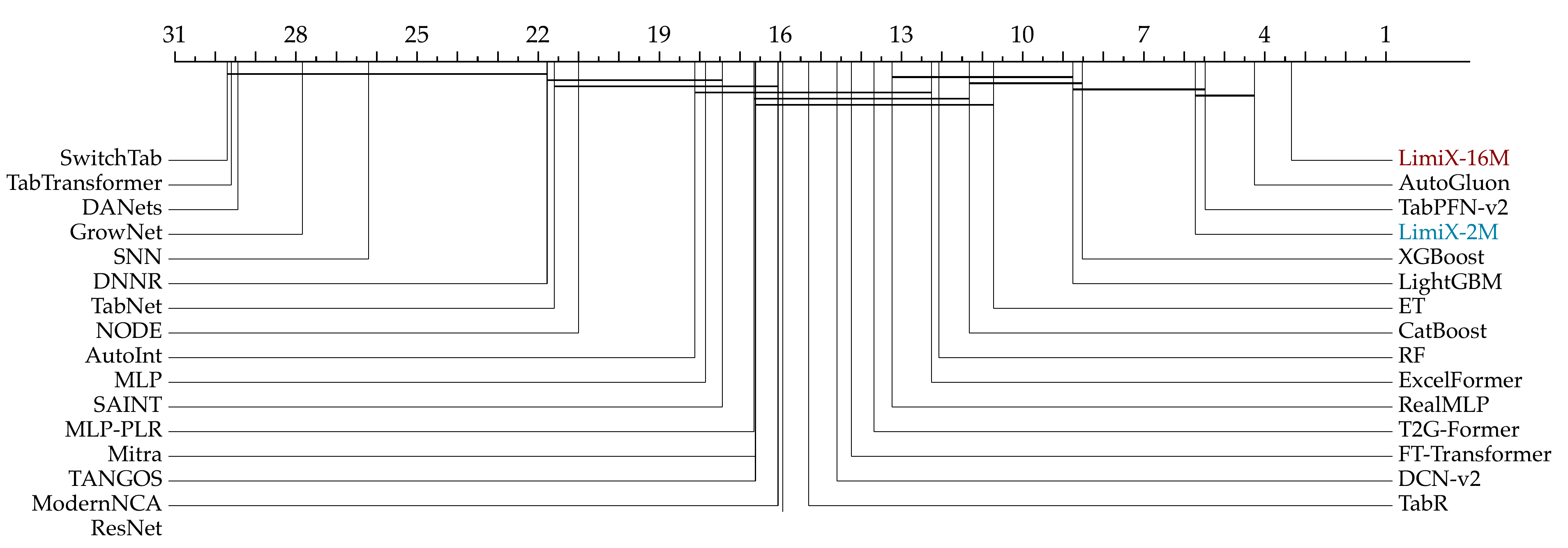}
        \caption{R\textsuperscript{2} on the BCCO-REG benchmark}
        \label{fig:cdd-bcco-reg-sub1}
    \end{subfigure}

    \begin{subfigure}[b]{0.85\textwidth}
        \centering
        \includegraphics[width=\textwidth]{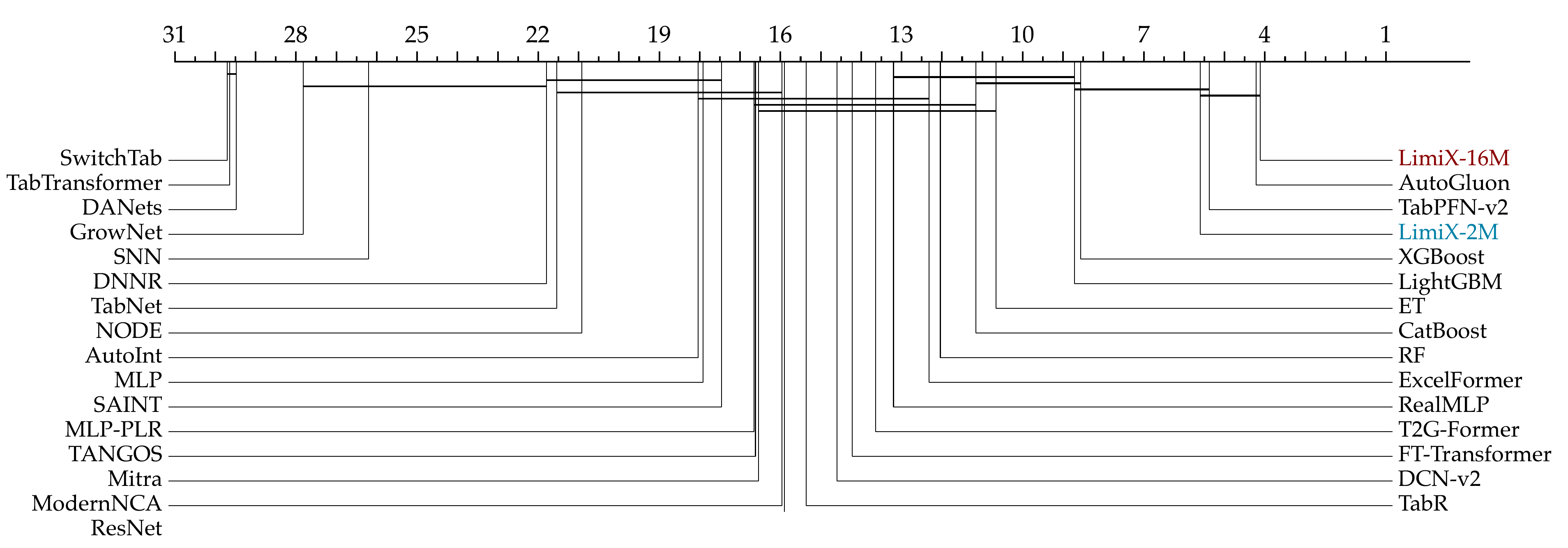}
        \caption{RMSE on the BCCO-REG benchmark}
        \label{fig:cdd-bcco-reg-sub2}
    \end{subfigure}
    
    \caption{Critical difference diagram on the BCCO-REG benchmark}
    \label{fig:cdd-bcco-reg}
\end{figure}

\begin{figure}[H]
    \centering
    \begin{subfigure}[b]{0.85\textwidth}
        \centering
        \includegraphics[width=\textwidth]{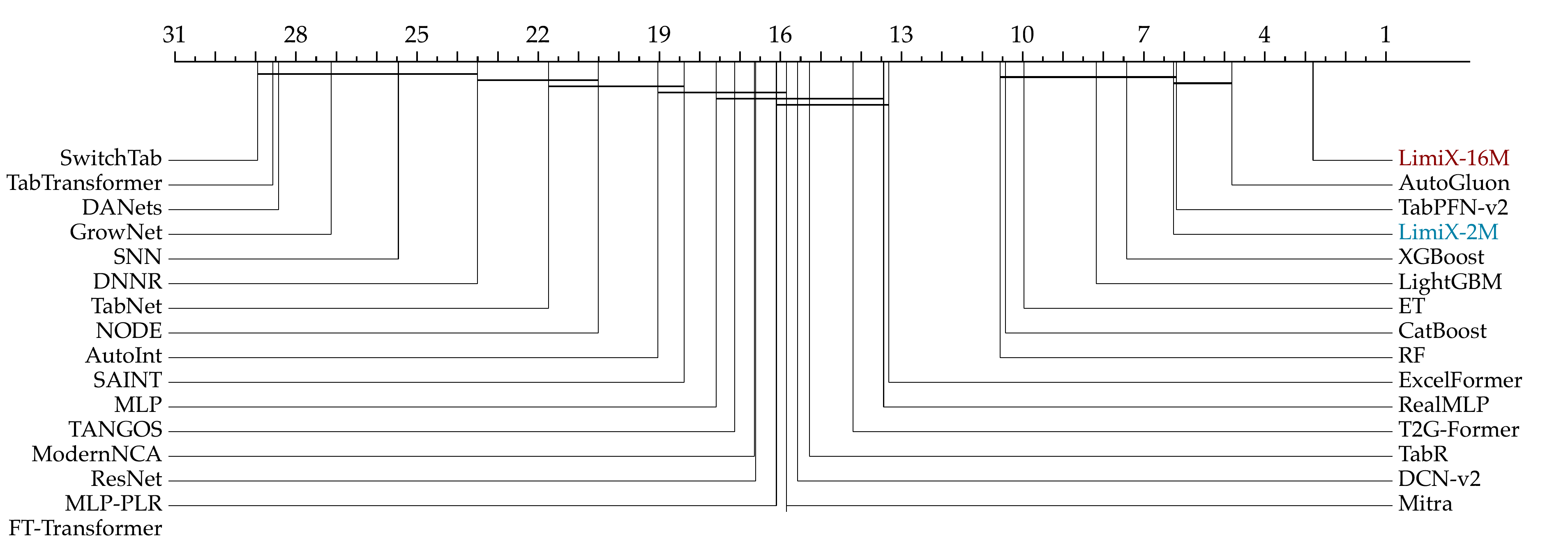}
        \caption{R\textsuperscript{2} on the TALENT-REG benchmark}
        \label{fig:cdd-talent-reg-sub1}
    \end{subfigure}

    \begin{subfigure}[b]{0.9\textwidth}
        \centering
        \includegraphics[width=\textwidth]{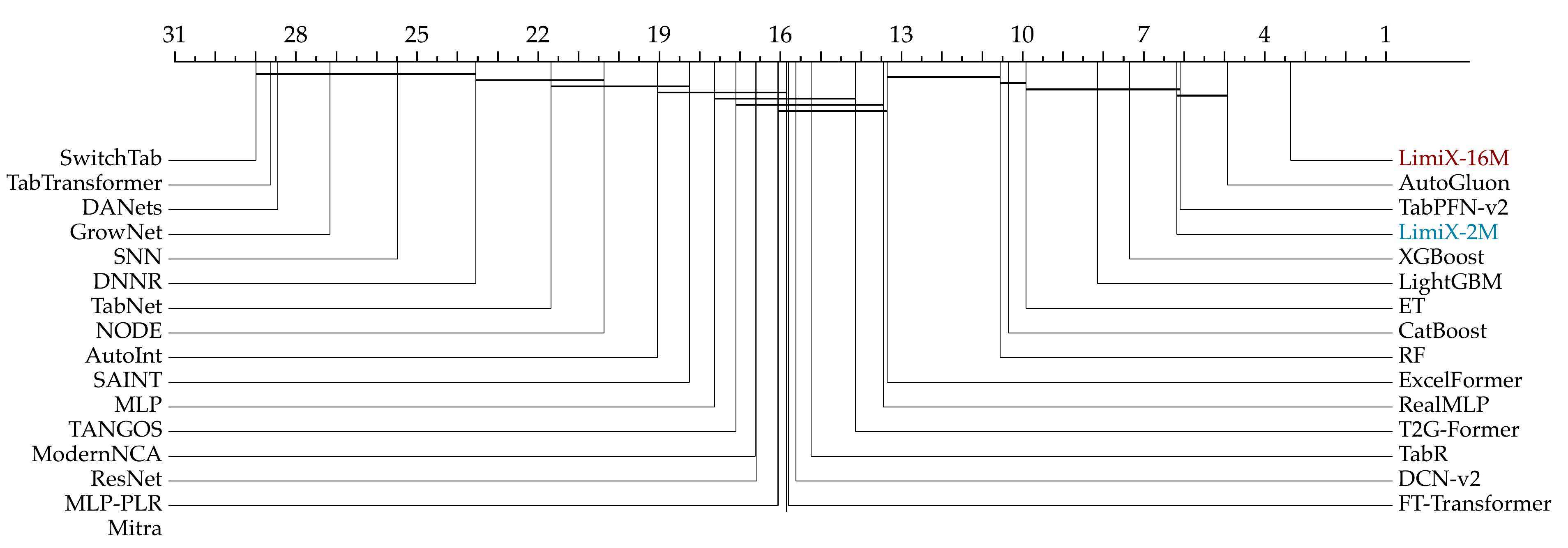}
        \caption{RMSE on TALENT-REG benchmark}
        \label{fig:cdd-talent-reg-sub2}
    \end{subfigure}
    
    \caption{Critical difference diagram on the TALENT-REG benchmark}
    \label{fig:cdd-talent-reg}
\end{figure}

\begin{figure}[H]
    \centering
    \begin{subfigure}[b]{0.9\textwidth}
        \centering
        \includegraphics[width=\textwidth]{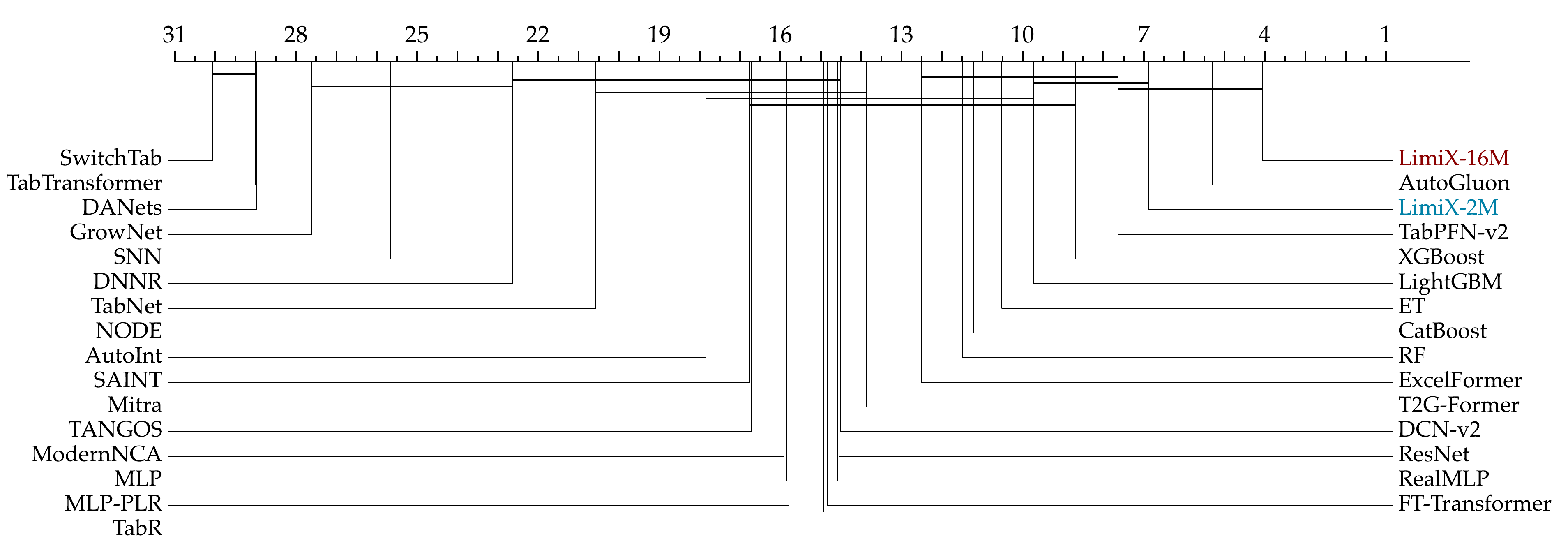}
        \caption{R\textsuperscript{2} on the CTR23 benchmark}
        \label{fig:cdd-ctr23-sub1}
    \end{subfigure}
    \begin{subfigure}[b]{0.9\textwidth}
        \centering
        \includegraphics[width=\textwidth]{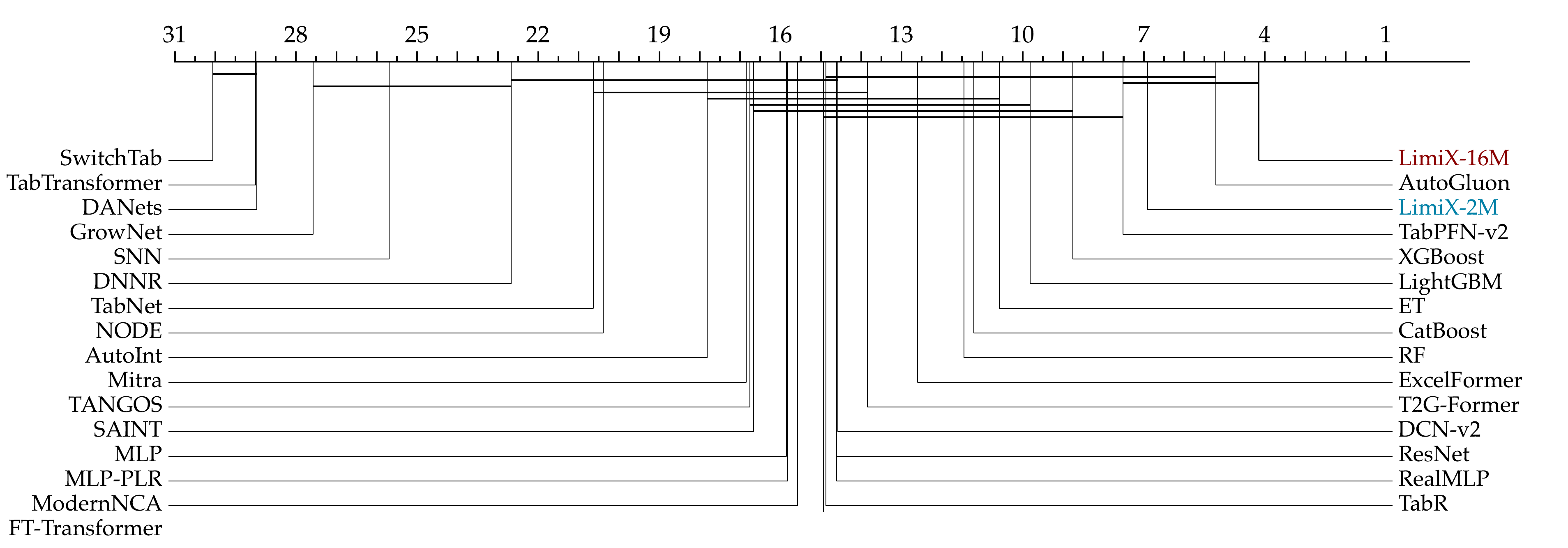}
        \caption{RMSE on the CTR23 benchmark}
        \label{fig:cdd-ctr23-sub2}
    \end{subfigure}
    \caption{Critical difference diagram on the CTR23 benchmark}
    \label{fig:cdd-ctr23-reg} 
\end{figure}

\begin{figure}[H]
    \centering
    \begin{subfigure}[b]{0.85\textwidth}
        \centering
        \includegraphics[width=\textwidth]{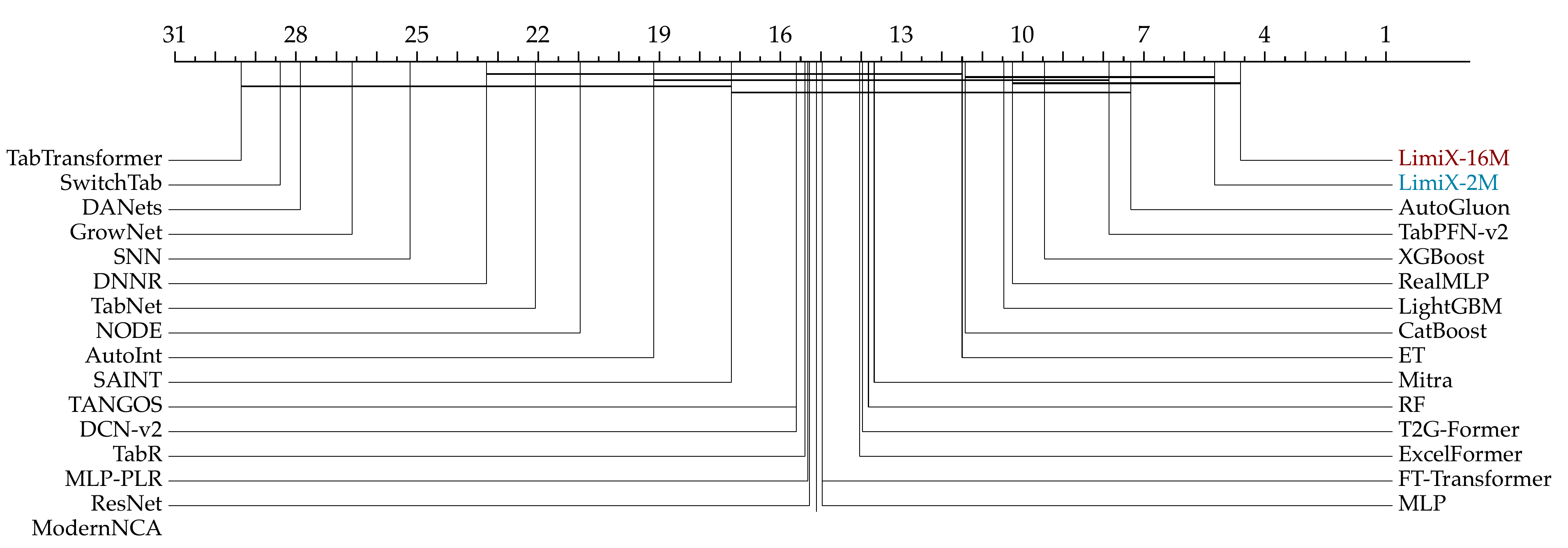}
        \caption{R\textsuperscript{2} on the PFN-REG benchmark}
        \label{fig:cdd-PFN-reg-sub1}
    \end{subfigure}
    \begin{subfigure}[b]{0.85\textwidth}
        \centering
        \includegraphics[width=\textwidth]{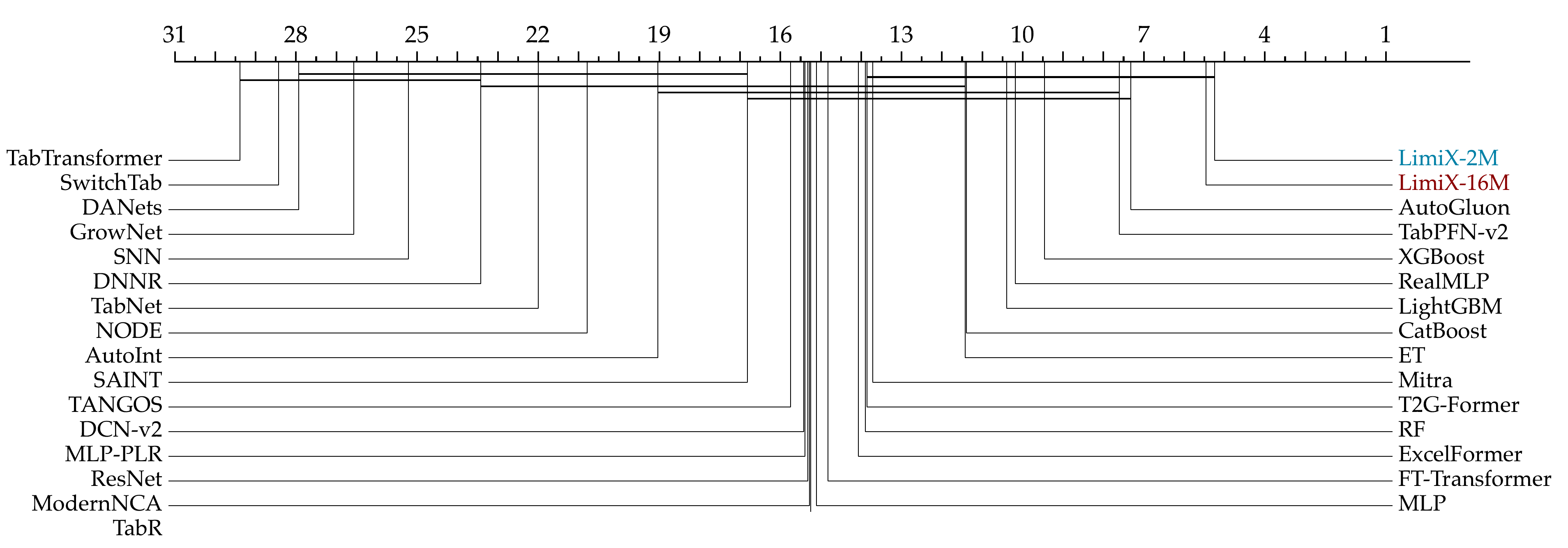}
        \caption{RMSE on PFN-REG benchmark}
        \label{fig:cdd-PFN-reg-sub2}
    \end{subfigure}
    \caption{Critical difference diagram on the PFN-REG benchmark}
    \label{fig:cdd-PFN-reg} 
\end{figure}

\begin{figure}[H]
    \centering
    \begin{subfigure}[b]{0.9 \textwidth}
        \centering
        \includegraphics[width=\textwidth]{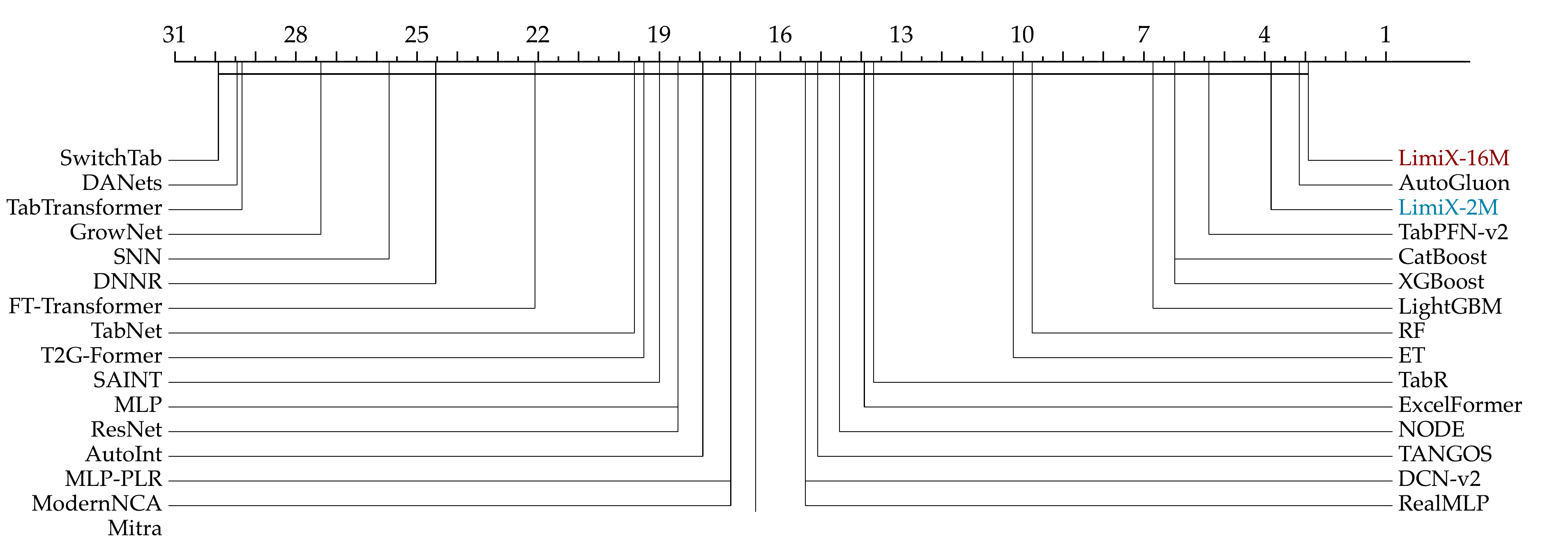}
        \caption{R\textsuperscript{2} on the TabArena-REG benchmark.}
        \label{fig:cdd106-sub1}
    \end{subfigure}
    
    \vspace{2em}
    
    \begin{subfigure}[b]{0.9 \textwidth}
        \centering
           \includegraphics[width=\textwidth]{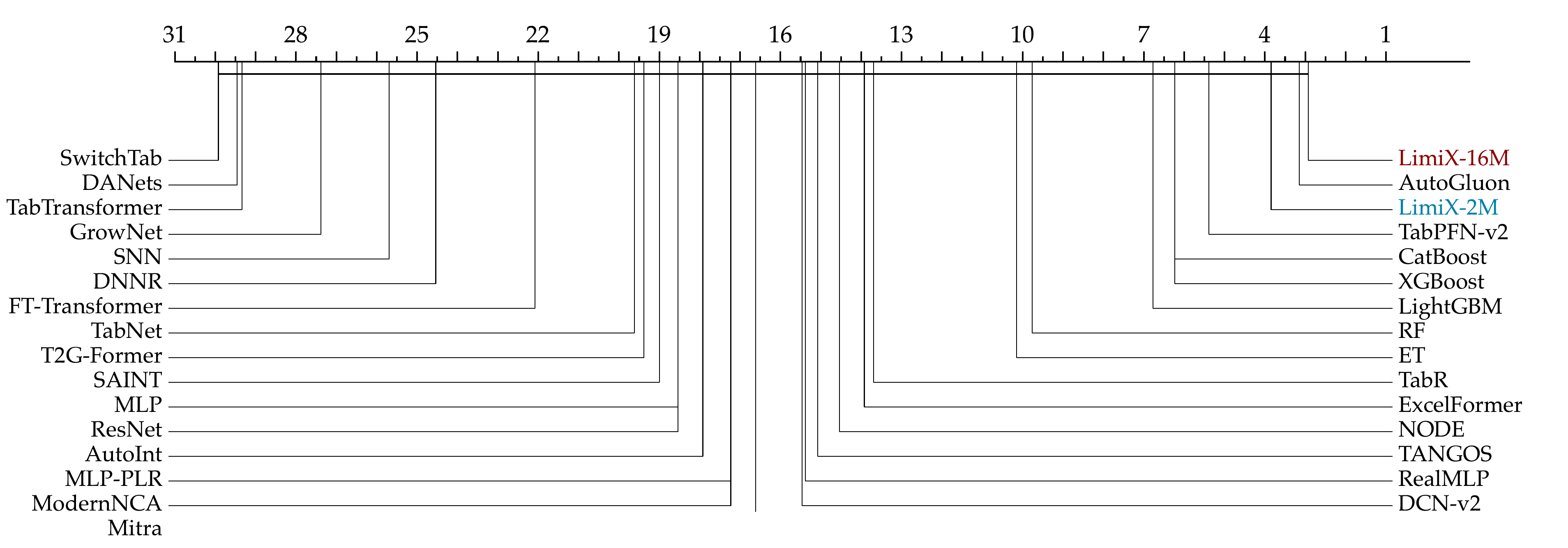}
        \caption{RMSE on the TabArena-REG benchmark.}
        \label{fig:cdd106-sub2}
    \end{subfigure}
    
    \caption{Critical difference diagrams on TabArena-REG benchmark.}
    \label{fig:cdd_tabarenareg}
\end{figure}

\clearpage

\begin{figure}[H]
    \centering
    \begin{subfigure}[b]{0.29\textwidth}
        \centering
        \includegraphics[width=\textwidth]{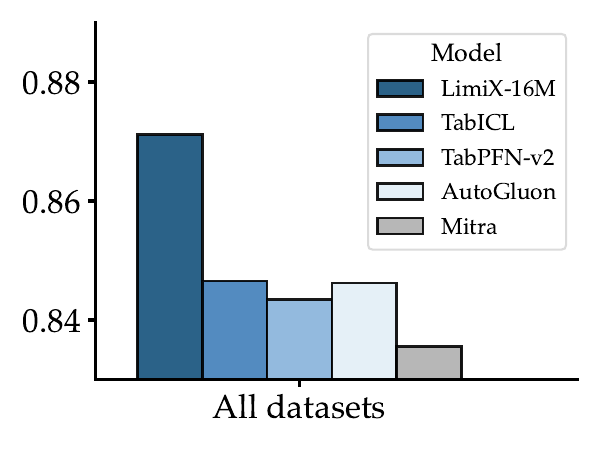}
        \caption{All datasets}
        \label{fig:barplot_group-sub1}
    \end{subfigure}
    \hfill
    \begin{subfigure}[b]{0.29\textwidth}
        \centering
        \includegraphics[width=\textwidth]{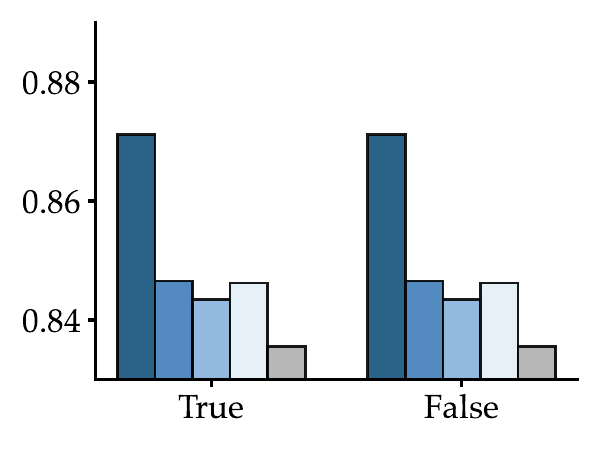}
        \caption{Binary classification}
        \label{fig:barplot_group-sub2}
    \end{subfigure}
    \hfill
    \begin{subfigure}[b]{0.36\textwidth}
        \centering
        \includegraphics[width=\textwidth]{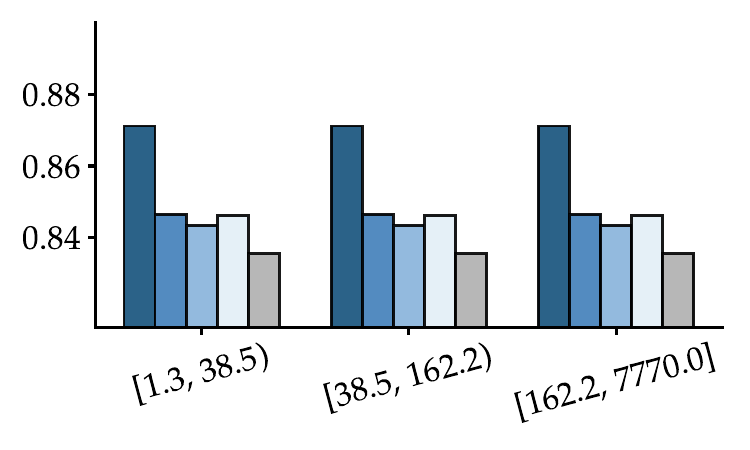}
        \caption{Length-to-width ratio}
        \label{fig:barplot_group-sub3}
    \end{subfigure}
    \vspace{0.5cm}
    \begin{subfigure}[b]{0.29\textwidth}
    \centering
    \includegraphics[width=\textwidth]{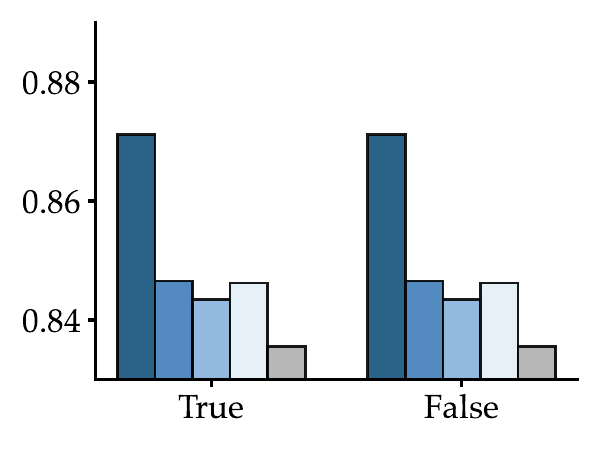}
    \caption{Presence of missing values}
    \label{fig:barplot_group-sub5}
    \end{subfigure}
    \hfill
    \begin{subfigure}[b]{0.29\textwidth}
    \centering
    \includegraphics[width=\textwidth]{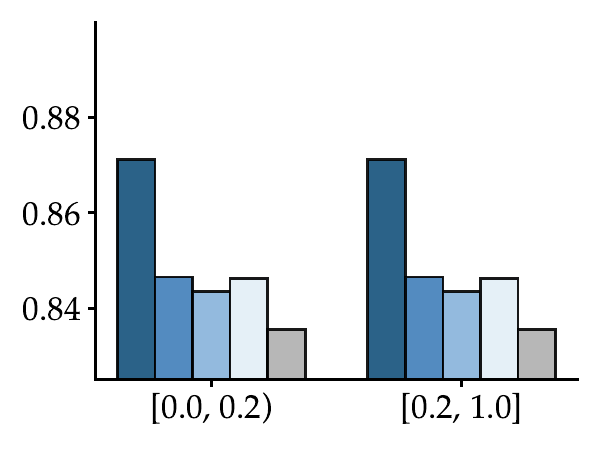}
    \caption{Categorical-feature proportion}
    \label{fig:barplot_group-sub6}
    \end{subfigure}
    \hfill
    \begin{subfigure}[b]{0.36\textwidth}
        \centering
        \includegraphics[width=\textwidth]{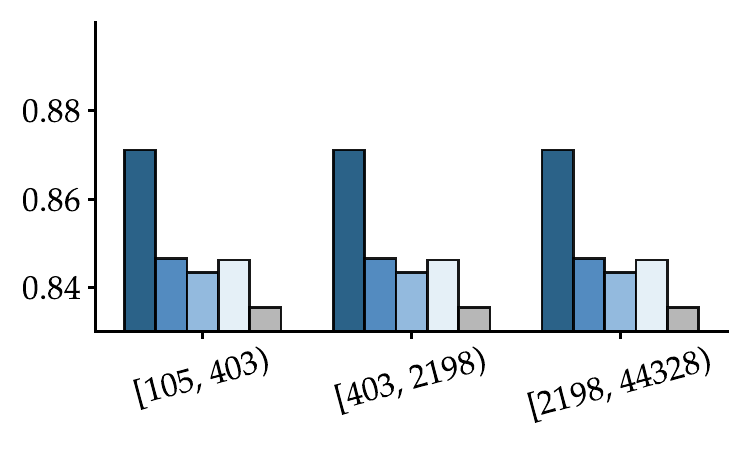}
        \caption{Number of training samples}
        \label{fig:barplot_group-sub4}
    \end{subfigure}
    \caption{AUC on subsets with various sample size, number of classes, categorical–to-numerical feature ratios, missing values, and sample-to-feature ratios.}
    \label{fig:Subgroup analysis}
\end{figure}

\begin{figure}[H]
    \centering
    \begin{subfigure}[b]{0.35\textwidth}
        \centering
        \includegraphics[width=\textwidth]{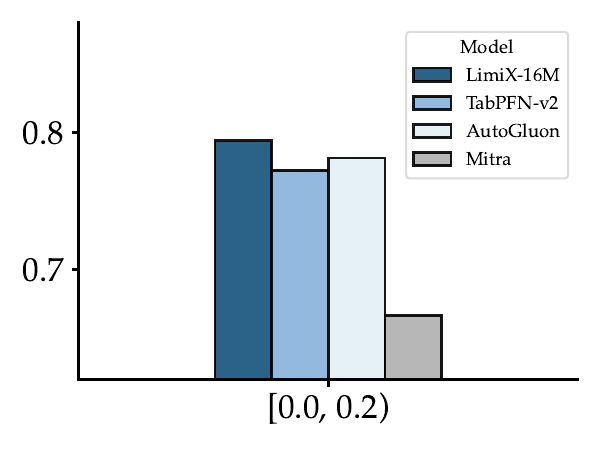}
        \caption{All datasets}
        \label{fig:barplot_reg_group-sub1}
    \end{subfigure}
    \hspace{20pt}
    \begin{subfigure}[b]{0.35\textwidth}
        \centering
        \includegraphics[width=\textwidth]{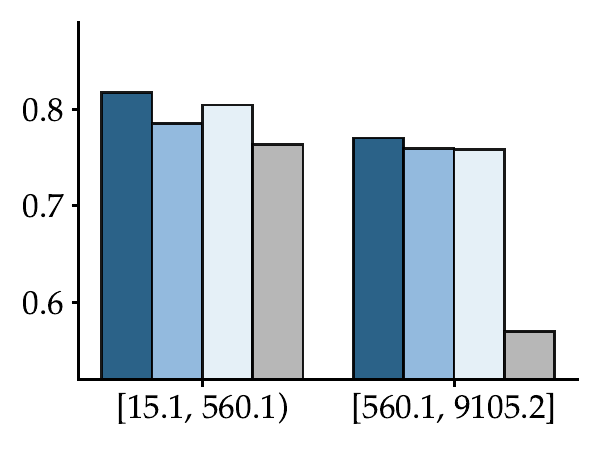}
        \caption{Length-to-width ratio}
        \label{fig:barplot_reg_group-sub2}
    \end{subfigure}
    \hfill
    \begin{subfigure}[b]{0.35\textwidth}
    \centering
    \includegraphics[width=\textwidth]{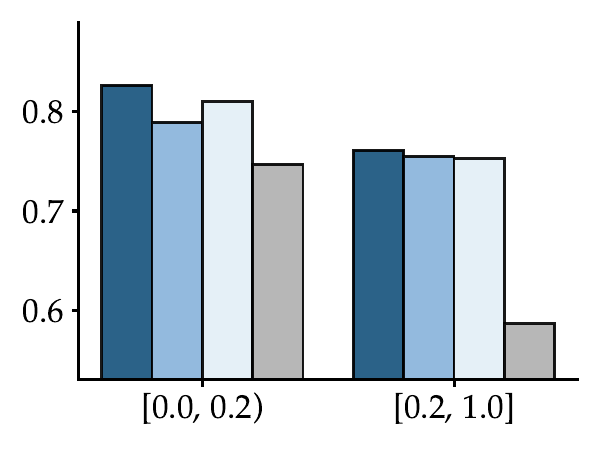}
    \caption{Categorical-feature proportion}
    \label{fig:barplot_reg_group-sub3}
    \end{subfigure}
    \hspace{20pt}
    \begin{subfigure}[b]{0.35\textwidth}
        \centering
        \includegraphics[width=\textwidth]{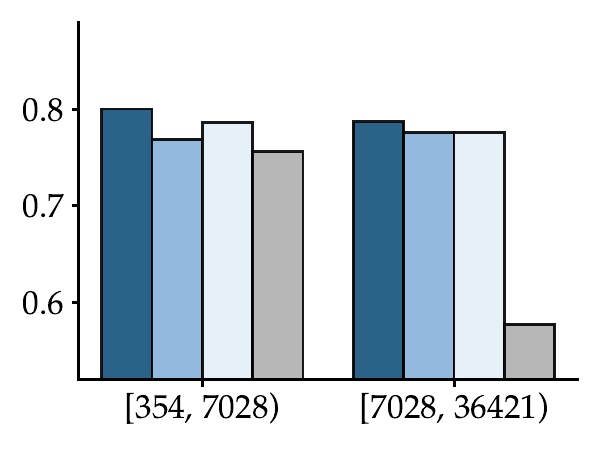}
        \caption{Number of training samples}
        \label{fig:barplot_reg_group-sub4}
    \end{subfigure}
    \caption{R\textsuperscript{2} on subsets with various sample size, categorical–to-numerical feature ratios, and sample-to-feature ratios.}
    \label{fig:Subgroup analysis regression}
\end{figure}

\clearpage
\subsection{Missing Value Imputation}

The ubiquity of missing values in tabular data is harmful to downstream tasks or statistical analyses~\citep{lin2020missing}. 
Meanwhile, the problem of missing value imputation poses a bigger challenge compared with standard classification and regression tasks since it requires modeling of the joint distribution of $P(X,Y)$ instead of $P(Y|X)$ only. 
A noteworthy capability of \ourA~is missing value imputation, which naturally stems from the mask modeling during pretraining. 
Although there are existing deep learning approaches for missing value imputation~\citep{jarrett2022hyperimpute,zhang2025diffputer}, they all require additional training on unseen datasets. 
In contrast, \ourA~are the first to perform missing value imputation on unseen datasets via in-context learning without additional training. This brings great convenience to the usage in downstream tasks.

We conduct experiments on the following datasets: Analcatdata BroadwayMult~\citep{simonoff2003analyzing,openml_analcatdata}, Early Stage Diabetes 2020~\citep{islam2019likelihood,early_stage_diabetes_risk_prediction_529}, Forty Soybean Cultivars from Subsequent Harvests~\citep{de2023dataset,forty_soybean_cultivars_from_subsequent_harvests_913}, HCC survival~\citep{santos2015new,hcc_survival_423}, Vehicle~\citep{siebert1987vehicle,statlog_(vehicle_silhouettes)_149}, Eucalyptus~\citep{openml_eucalyptus}, and Z-Alizadeh Sani~\citep{z-alizadeh_sani_412}.
To simulate the scenario of missing value imputation, we randomly mask a fixed proportion $\alpha$ of cells in $\rvx_{te}$ and try to recover values of these cells. In our experiments, we set $\alpha=0.05$.
Baselines include: Mean/Mode (use mean for continuous features and mode for categorical features), k-nearest neighbors (KNN), MICE~\citep{van2011mice}, MissForest~\citep{stekhoven2012missforest}, GAIN~\citep{yoon2018gain}, MIWAE~\citep{mattei2019miwae}, and HyperImpute~\citep{jarrett2022hyperimpute}. 
For continuous features, we normalize each feature using MinMaxScaler and calculate RMSE between imputed values and ground truth values after normalization as the evaluation metric. 
For categorical features, the error rate is used as the evaluation metric. 
\Cref{tab:imputation} shows that \ourL~consistently outperforms previous baselines, all of which require additional training or fitting.

\begin{table}[H]
\centering
\caption{Evaluation of missing value imputation. For continuous features, we calculate RMSE. For categorical features, we calculate classification error. Our method outperforms previous missing value imputation methods. }
\label{tab:imputation}
\resizebox{\textwidth}{!}{%
\begin{tabular}{@{}c|ccccc|cc@{}}
\toprule
Metric      & \multicolumn{5}{c|}{Regression RMSE (↓)}                             & \multicolumn{2}{c}{Classification Error (↓)} \\ \midrule
Method      & Analcatdata & Early Diabetes & Harvests & Hcc Survival & Vehicle & Eucalyptus      & Z-Alizadeh Sani      \\ \midrule
Mean/Mode   & 0.321       & 0.235          & 0.179    & 0.351        & 0.209   & 0.804           & 0.200                  \\
KNN         & 0.358       & 0.205          & 0.158    & 0.246        & 0.083   & 0.275           & 0.206                  \\
MICE        & 0.294       & 0.244          & 0.155    & 0.234        & 0.102   & 0.627           & 0.181                  \\
MissForest  & 0.203       & 0.223          & 0.136    & 0.215        & 0.107   & 0.137           & 0.156                  \\
GAIN        & 0.299       & 0.328          & 0.196    & 0.413        & 0.102   & 0.647           & 0.175                  \\
MIWAE       & 0.561       & 0.478          & 0.322    & 0.639        & 0.415   & 0.706           & 0.294                  \\
HyperImpute & 0.272       & 0.270          & 0.152    & 0.297        & 0.086   & 0.647           & 0.194                  \\
\ourL       & \textbf{0.194}       & \textbf{0.161}          & \textbf{0.104}    & \textbf{0.188}        & \textbf{0.064}   & \textbf{0.118}           & \textbf{0.131}                  \\ \bottomrule
\end{tabular}%
}
\end{table}

\clearpage
\subsection{Robustness}

Neural networks have been found to be vulnerable to various types of perturbations and attacks~\citep{carlini2017towards,akhtar2018threat}. 
In order to investigate the robustness of \ours, following~\citet{hollmann2025tabpfn_nature}, we conduct experiments under two types of controlled perturbations: adding uninformative features and outliers. 

For uninformative features, we randomly select columns from the original dataset, shuffle each column, and concatenate them to the original dataset. 
For outliers, we multiply a outlier coefficient to each cell value in the original dataset with a probability of 2\%. The outlier coefficient is randomly chosen between 0 and the outlier factor.

In \Cref{fig:robustness-cls}, we show Normalized AUC in classification tasks under the two types of perturbations. We compare \ours~with TabPFN-v2, TabICL, and CatBoost.
The left figure shows that even when adding up to $90\%$ uninformative features, normalized AUC of \ours~remains nearly unchanged. In contrast, the performance of TabICL and CatBoost drops significantly. 
This indicates that \ours~is much more robust than TabICL and CatBoost. 
The right figure shows that \ours~consistently outperforms other methods regardless of the outlier factor. 
In \Cref{fig:robustness-reg}, we show RMSE in regression tasks under perturbations. In the left figure of adding uninformative features, a similar phenomenon is observed as that in classification tasks. In the right figure of adding outliers, we find that RMSE of TabPFN-v2 rapidly increases when the outlier factor grows from 100 to 10000, while \ours~do not. This demonstrates the superior robustness of \ourL~compared with TabPFN-v2 in regression tasks. 
Overall, \ours~exhibits superior robustness relative to the baselines. 

\begin{figure}[H]
    \centering
    \includegraphics[width=0.9\linewidth]{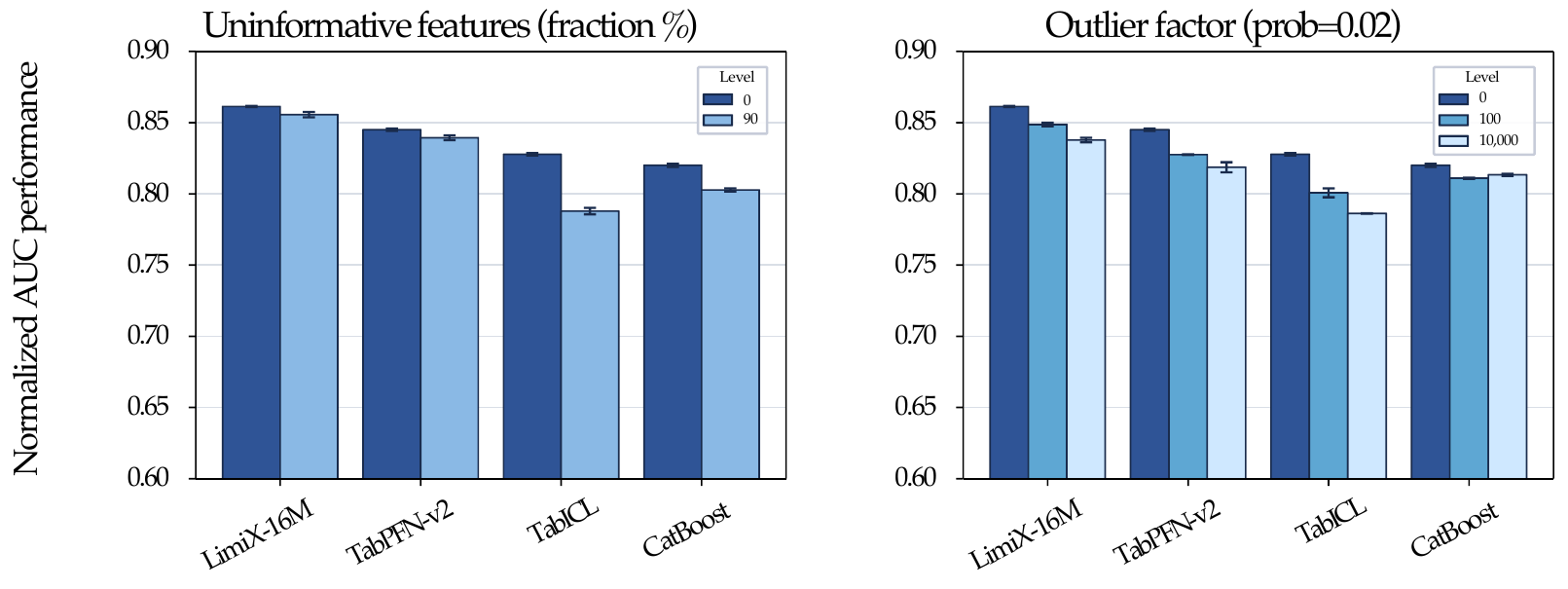}
    \caption{Robustness analysis in classification Tasks. \ours~consistently exhibits the best performance and superior robustness under perturbations.}
    \label{fig:robustness-cls}
\end{figure}

\begin{figure}[H]
    \centering
    \includegraphics[width=0.9\linewidth]{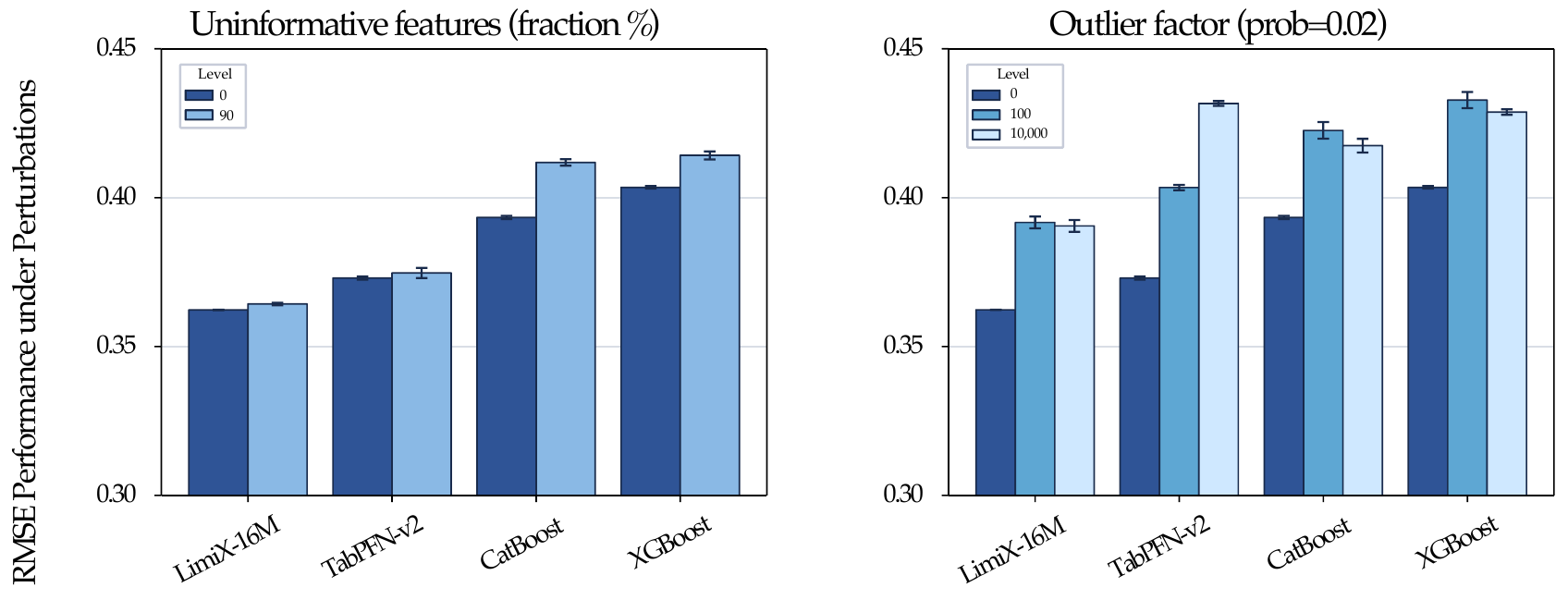}
    \caption{Robustness analysis in regression tasks. \ours~consistently exhibits the best performance and superior robustness under perturbations. }
    \label{fig:robustness-reg}
\end{figure}

\subsection{Embedding}

Pretrained models~\citep{devlin2019bert, oquab2023dinov2} are generally expected to provide effective and transferable feature representations that can be leveraged for downstream tasks. 
To assess the quality of embeddings extracted by \ourL, we conduct experiments on six datasets of various sample sizes and numbers of categories. We compare embeddings of \ourL~with those of MLP~\citep{goodfellow2016deep, gorishniy2021revisiting}, ResNet~\citep{he2016deepresnet, gorishniy2021revisiting}, ModernNCA~\citep{ye2024revisitingmodernnca}, TabPFN-v2~\citep{hollmann2025tabpfn_nature}, and TabICL~\citep{qu2025tabicl}. 
For each model, we treat representations prior to the classification head as embeddings. 
\Cref{fig:embed_tsne} shows t-SNE visualization of embeddings. We can see that embeddings of different categories extracted by \ourL~are more separated than those extracted by other models.

To further evaluate the quality of embeddings extracted by \ourL, we additionally conduct linear probing experiments, which is widely adopted in the analysis of feature representations~\citep{kumar2022fine}. 
The experiments are conducted on BCCO-CLS. 
\Cref{tab:linear prob} shows that embeddings of \ourL~achieve the highest average AUC and ranks among three ICL-based models. 
Overall, \ourL~consistently outperforms baselines in both qualitative and quantitative experiments.

\begin{figure}[H]
    \centering
    \includegraphics[width=1.0\linewidth]{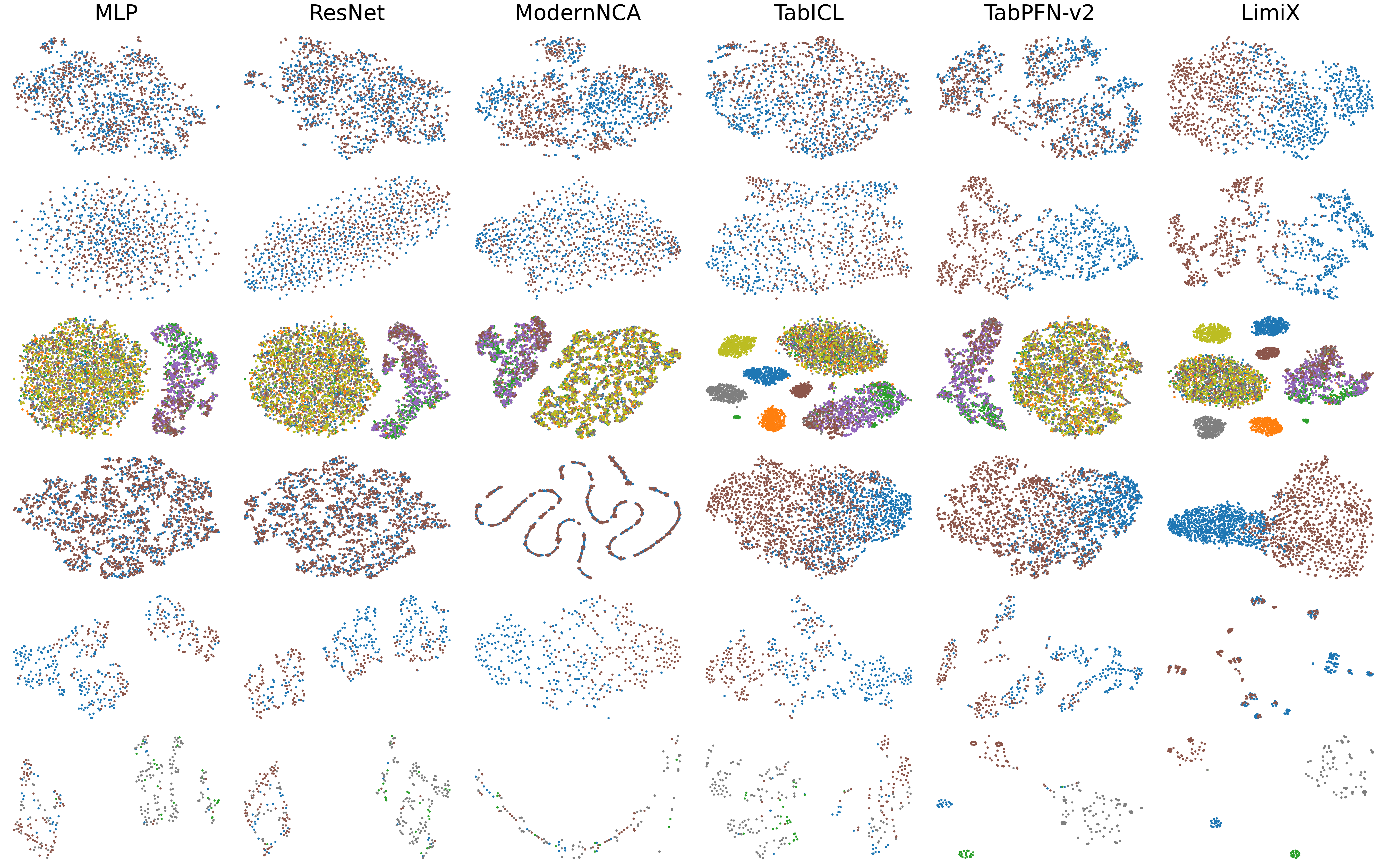}
    \caption{t-SNE visualization of embeddings. Different colors represent different categories. Embeddings extracted by \ourL~are better separated between categories than other models.}
    \label{fig:embed_tsne}
\end{figure}

\begin{table}[H]
\centering
\caption{Results of linear probing on BCCO-CLS. \ourL~outperforms other two ICL-based models, indicating the strongest embedding capability.}
\begin{tabular}{@{}c|cc@{}}
\toprule
Model     & AUC(↑) & Rank(↓) \\ \midrule
TabPFN-v2 & 0.832  & 2.189   \\
TabICL    & 0.838  & 1.981   \\
\ourL     & \textbf{0.850}  & \textbf{1.792}   \\ \bottomrule
\end{tabular}
\label{tab:linear prob}
\end{table}

\clearpage
\subsection{Fine-tuning}
\label{sec:ft}

Long contexts increase memory cost and can degrade optimization during fine-tuning. Therefore, it is common practice to subsample the training corpus to a manageable budget, usually via random or KNN selection~\citep{thomas2024retrieval,xu2024mixture}. Based on our attention-based retrieval strategy, we adopt a retrieval-guided downsampling scheme that concentrates on locally most relevant patterns within each dataset, thus improving sample efficiency and prediction performance.

For each dataset, we split the original training set into a retrieval pool and a query set. For each instance in the query set, we select samples from the retrieval pool via the strategy proposed in \Cref{sec:retrieval} as in-context samples and construct an episode of in-context learning. 
Fine-tuning then proceeds over these episodes rather than over full, unfiltered contexts.
This strategy substantially reduces the number of epochs required to achieve good performance. 
Note that the retrieved in-context sets may contain duplicated samples, which leads to a trade-off between relevance and diversity. 
We empirically find that the retrieved context size controls the balance between computation efficiency and prediction performance.

We compare \ourL~with some representative baselines that are trained or fine-tuned on real datasets, including TabDPT~\citep{ma2024tabdpt}. 
For the fine-tuning of TabPFN-v2, we adopt the strategy and hyperparameter configurations from its official repository\footnote{\url{https://github.com/PriorLabs/TabPFN}}.
As shown in \Cref{tab:classification tasks of sft,tab:regression tasks of sft}, \ourL~significantly outperforms the others across various metrics in most benchmarks after fine-tuning. Meanwhile, we illustrate the change of AUC for \ourL~on some datasets of BCCO-CLS before and after fine-tuning. From \Cref{fig:sft}, a significant improvement can be observed in most cases. 

\begin{figure}[H]
    \centering
    \includegraphics[width=0.95\linewidth]{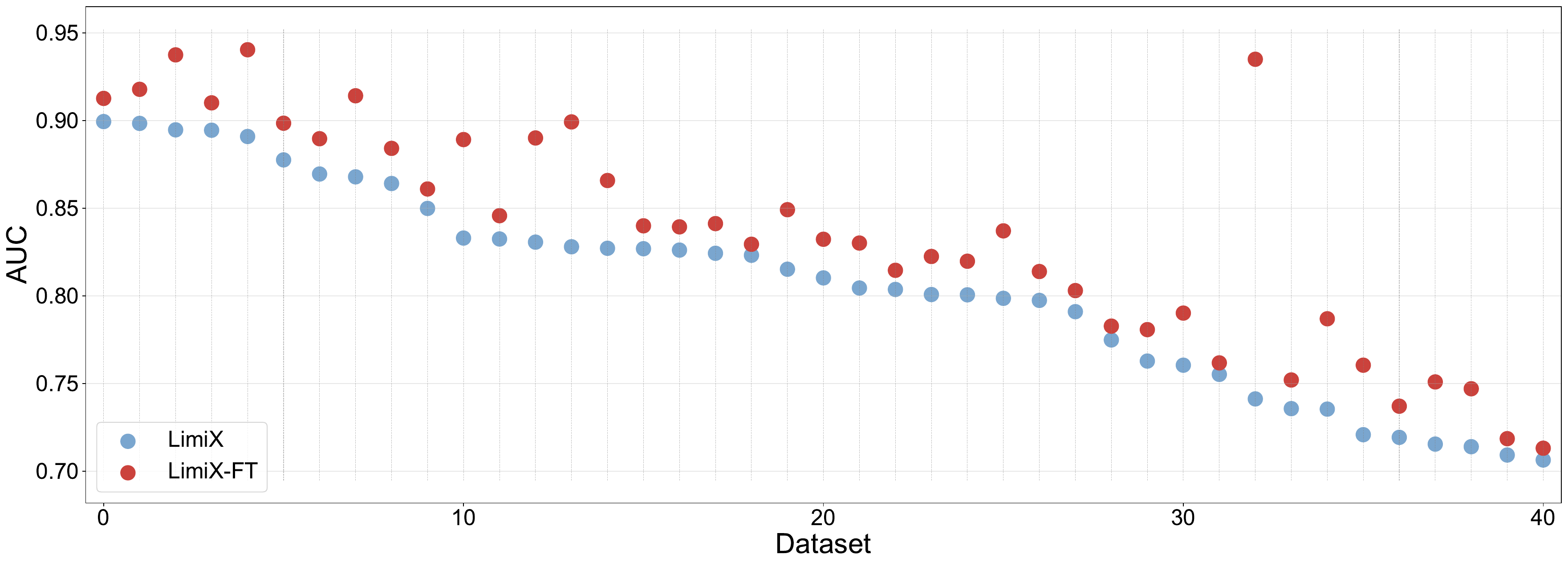}
    \caption{The improvement of fine-tuning in AUC on the BCCO-CLS benchmark.}
    \label{fig:sft}
\end{figure}

\clearpage

\begin{table}[H]
\centering
        \caption{Performance comparison of representative methods on benchmarks of classification. 
        "FT" indicates the model performance after fine-tuning. 
        TabDPT-NE represents the performance of TabDPT without ensemble while its default version has an ensemble size of 8.  The best scores are shown in bold.}
\label{tab:classification tasks of sft}
\resizebox{\linewidth}{!}{%
\begin{tabular}{l|c|ccc|ccc}
\toprule
Benchmark & Method & AUC (↑) & Acc. (↑) & F1 (↑) & \makecell{AUC-Rank} (↓) & \makecell{Acc.-Rank} (↓) & \makecell{F1-Rank} (↓) \\
\midrule
\multirow{8}{*}{BCCO-CLS} 
&XGBoost & 0.834 & 0.762 & 0.674 & 5.811 & 5.660 & 5.509 \\
&CatBoost & 0.829 & 0.757 & 0.664 & 6.264 & 6.000 & 5.802 \\ \cmidrule(l){2-8} 
&TabDPT-NE & 0.841 & 0.774 & 0.685 & 5.085 & 4.679 & 4.783 \\
&TabDPT & 0.846 & 0.777 & 0.687 & 4.453 & 4.358 & 4.604 \\ \cmidrule(l){2-8} 
&TabPFN-v2 & 0.843 & 0.772 & 0.679 & 5.170 & 5.208 & 5.274 \\
&TabPFN-v2-FT & 0.842 & 0.773 & 0.678 & 5.198 & 4.925 & 5.330 \\ \cmidrule(l){2-8} 
&\ourL & 0.871 & 0.804 & 0.731 & 2.538 & 2.575 & 2.651 \\
&\ourL-FT& \textbf{0.873} & \textbf{0.806} & \textbf{0.733} & \textbf{1.396} & \textbf{1.434} & \textbf{1.481} \\ 
\midrule
\multirow{8}{*}{TALENT-CLS} 
&XGBoost & 0.881 & 0.837 & 0.713 & 5.603 & 5.564 & 5.402 \\
&CatBoost & 0.876 & 0.828 & 0.704 & 6.318 & 6.229 & 6.000 \\ \cmidrule(l){2-8} 
&TabDPT-NE& 0.891 & 0.846 & 0.719 & 4.955 & 4.754 & 4.933 \\
&TabDPT  & 0.893 & 0.849 & 0.723 & 4.179 & 4.207 & 4.413\\ \cmidrule(l){2-8} 
&TabPFN-v2& 0.895 & 0.850 & 0.727 & 4.436 & 4.464 & 4.570\\
&TabPFN-v2-FT & 0.889 & 0.842 & 0.718 & 5.212 & 5.156 & 5.106\\ \cmidrule(l){2-8} 
&\ourL & 0.903 & 0.861 & 0.752 & 3.061 & 2.860 & 2.933\\
&\ourL-FT & \textbf{0.904} & \textbf{0.863} & \textbf{0.755} & \textbf{1.726} & \textbf{1.598} & \textbf{1.777}\\
\midrule
\multirow{8}{*}{OpenML-CC18} 
&XGBoost & 0.929 & 0.879 & 0.775 & 5.177 & 5.194 & 5.500 \\
&CatBoost & 0.926 & 0.870 & 0.770 & 6.548 & 6.032 & 6.274 \\ \cmidrule(l){2-8} 
&TabDPT-NE & 0.927 & 0.881 & 0.786 & 4.677 & 4.758 & 4.984 \\
&TabDPT & 0.930 & 0.885 & 0.799 & 3.919 & 3.919 & 4.161 \\ \cmidrule(l){2-8} 
&TabPFN-v2& 0.929 & 0.886 & 0.790 & 5.065 & 4.468 & 4.613 \\
&TabPFN-v2-FT & 0.933 & 0.889 & \textbf{0.883} & 4.161 & 3.823 & 3.032 \\ \cmidrule(l){2-8} 
&\ourL & 0.939 & 0.893 & 0.811 & 2.952 & 3.677 & 3.613 \\
&\ourL-FT & \textbf{0.941} & \textbf{0.894} & 0.813 & \textbf{1.500} & \textbf{2.306} & \textbf{2.435} \\ 
\midrule
\multirow{8}{*}{TabArena} 
&XGBoost & 0.838 & 0.867 & 0.567 & 5.424 & 6.030 & 5.788\\
&CatBoost & 0.835 & 0.867 & 0.574 & 6.242 & 5.545 & 5.121 \\ \cmidrule(l){2-8} 
&TabDPT-NE & 0.891 & 0.846 & 0.719 & 4.950 & 4.749 & 4.927 \\
&TabDPT & 0.839 & 0.870 & 0.580 & 4.970 & 4.636 & 4.667 \\ \cmidrule(l){2-8} 
&TabPFN-v2 & 0.838 & 0.872 & 0.589 & 4.939 & 4.303 & 4.818 \\
&TabPFN-v2-FT & 0.850 & 0.874 & \textbf{0.713} & 3.697 & 4.273 & \textbf{2.682} \\ \cmidrule(l){2-8} 
&\ourL & 0.849 & 0.877 & 0.597 & 3.242 & 3.030 & 4.212 \\
&\ourL-FT& \textbf{0.851} & \textbf{0.878} & 0.600 & \textbf{2.030} & \textbf{1.697} & 2.939 \\ 
\midrule
\multirow{8}{*}{TabZilla} 
&XGBoost & 0.929 & 0.863 & 0.789 & 5.185 & 5.148 & 5.481\\
&CatBoost & 0.922 & 0.848 & 0.780 & 6.556 & 6.037 & 6.074 \\ \cmidrule(l){2-8} 
&TabDPT-NE & 0.932 & 0.880 & 0.824 & 4.333 & 3.963 & 4.037 \\
&TabDPT & 0.934 & 0.882 & 0.824 & 3.815 & 3.815 & 4.148\\ \cmidrule(l){2-8} 
&TabPFN-v2 & 0.929 & 0.863 & 0.797 & 5.111 & 4.815 & 5.185 \\
&TabPFN-v2-FT & 0.932 & 0.876 & \textbf{0.847} & 4.185 & 4.556 & 3.318 \\ \cmidrule(l){2-8}
&\ourL  & 0.943 & 0.885 & 0.836 & 3.556 & 3.444 & 3.889 \\
&\ourL-FT & \textbf{0.944} & \textbf{0.887} & 0.838 & \textbf{2.074} & \textbf{2.407} & \textbf{3.136} \\  
\midrule
\multirow{8}{*}{PFN-CLS} 
&XGBoost & 0.898 & 0.831 & 0.733 & 6.000 & 5.138 & 5.103 \\
&CatBoost & 0.895 & 0.819 & 0.720 & 6.310 & 5.828 & 6.138 \\ \cmidrule(l){2-8} 
&TabDPT-NE& 0.891 & 0.822 & 0.727 & 5.034 & 5.483 & 5.828 \\
&TabDPT & 0.896 & 0.833 & 0.742 & 4.655 & 4.103 & 4.448 \\ \cmidrule(l){2-8} 
&TabPFN-v2 & 0.910 & 0.845 & 0.756 & 4.138 & 4.655 & 4.552\\
&TabPFN-v2-FT & 0.900 & 0.837 & 0.759 & 5.379 & 5.310 & 4.828 \\ \cmidrule(l){2-8} 
&\ourL& 0.923 & 0.862 & 0.786 & 2.241 & 2.690 & 2.759 \\
&\ourL-FT & \textbf{0.924} & \textbf{0.864} & \textbf{0.788} & \textbf{1.276} & \textbf{1.517} & \textbf{1.724} \\
\bottomrule
\end{tabular}}%
\end{table}

\begin{table}[H]
    \centering
        \caption{Performance comparison of representative methods on benchmarks of regression. "FT" indicates the model performance after fine-tuning. 
        TabDPT-NE represents the performance of TabDPT without ensemble while its default version has an ensemble size of 8. 
        The best scores are shown in bold.}
    \label{tab:regression tasks of sft}%
    \begin{tabular}{l|c|cc|cc}
\toprule
Benchmark& Method & R\textsuperscript{2} (↑) & RMSE (↓) &  R\textsuperscript{2}-Rank (↓) & RMSE-Rank (↓) \\ 

\midrule
\multirow{8}{*}{BCCO-REG} 
& XGBoost & 0.764 & 0.415 & 5.820 & 5.740 \\
& CatBoost & 0.741 & 0.427 & 6.600 & 6.460 \\ \cmidrule(l){2-6} 
& TabDPT-NE & 0.769 & 0.410 & 5.220 & 5.060 \\
& TabDPT & 0.772 & 0.406 & 4.460 & 4.260 \\ \cmidrule(l){2-6}
& TabPFN-v2 & 0.772 & 0.404 & 4.600 & 4.280 \\
& TabPFN-v2-FT & 0.777 & 0.399 & 4.020 & 3.720 \\ \cmidrule(l){2-6}
& \ourL & 0.794 & 0.386 & 3.320 & \textbf{2.860} \\
& \ourL-FT & \textbf{0.796} & \textbf{0.385} & \textbf{1.960} & 2.880 \\
 \midrule
 \multirow{8}{*}{TALENT-REG}
& XGBoost & 0.710 & 0.462 & 5.545 & 5.404 \\
& CatBoost & 0.700 & 0.471 & 6.485 & 6.434 \\ \cmidrule(l){2-6}
& TabDPT-NE & 0.709 & 0.461 & 5.061 & 4.929 \\
& TabDPT & 0.711 & 0.458 & 4.263 & 4.081 \\ \cmidrule(l){2-6}
& TabPFN-v2 & 0.695 & 0.465 & 4.980 & 4.747 \\
& TabPFN-v2-FT & 0.702 & 0.459 & 4.879 & 4.657 \\ \cmidrule(l){2-6}
& \ourL & 0.735 & 0.433 & 2.970 & 2.566 \\
& \ourL-FT & \textbf{0.737} & \textbf{0.417} & \textbf{1.818} & \textbf{2.177} \\
 \midrule
 \multirow{8}{*}{CTR23}
& XGBoost & 0.712 & 0.511 & 5.636 & 5.606 \\
& CatBoost & 0.700 & 0.528 & 6.273 & 6.212 \\ \cmidrule(l){2-6}
& TabDPT-NE & 0.728 & 0.500 & 4.909 & 4.818 \\
& TabDPT & 0.731 & 0.498 & 4.273 & 4.121 \\ \cmidrule(l){2-6}
& TabPFN-v2 & 0.716 & 0.503 & 4.545 & 4.394 \\
& TabPFN-v2-FT & 0.722 & 0.498 & 4.636 & 4.545 \\ \cmidrule(l){2-6}
& \ourL & 0.745 & 0.477 & 3.576 & 2.818 \\
& \ourL-FT & \textbf{0.748} & \textbf{0.473} & \textbf{2.152} & \textbf{2.437} \\
 \midrule
 \multirow{8}{*}{PFN-REG} 
& XGBoost & 0.671 & 0.487 & 4.929 & 4.714 \\
& CatBoost & 0.661 & 0.498 & 6.214 & 6.071 \\ \cmidrule(l){2-6}
& TabDPT-NE & 0.672 & 0.491 & 5.393 & 5.071 \\
& TabDPT & 0.675 & 0.484 & 4.607 & 4.500 \\ \cmidrule(l){2-6}
& TabPFN-v2 & 0.676 & 0.475 & 4.393 & 4.143 \\
& TabPFN-v2-FT & 0.687 & 0.466 & 4.179 & 3.857 \\ \cmidrule(l){2-6}
& \ourL & 0.692 & \textbf{0.461} & 3.857 & \textbf{3.571} \\
& \ourL-FT & \textbf{0.695} & 0.466 & \textbf{2.429} & 4.071 \\

\bottomrule
\end{tabular}%
\end{table}

\clearpage
\subsection{Data Generation}

Data generation is a challenging and meaningful task across multiple modalities since it needs to capture the joint distribution of $P(X,Y)$ compared with prediction tasks that focus on modeling $P(Y|X)$. 
While there is abundant literature on the generation of images~\citep{croitoru2023diffusion}, videos~\citep{xing2024survey}, and text~\citep{li2024pre}, relatively less attention has been paid to tabular data generation. 
However, data generation is even more critical for tabular data due to its scarcity and possible privacy issues~\citep{hernandez2022synthetic}. 
As a foundation model, \ourA~have the ability of tabular data generation given an unseen real dataset. 
A significant advantage is that it does not require additional training of generative models. 
Following TabPFN-v2~\citep{hollmann2025tabpfn_nature}, firstly we conduct data generation in an iterative style. 
For the first column, we sample from the empirical categorical distribution if it is a categorical feature, or we use the original first column with added random noise if it is a continuous feature. 
For other columns, we generate $j^{th}$ column based on the $j-1$ columns of real data (treated as $\rvx^{ct}$) and generated data (treated as $\bfxtest$). 
We iterate this process until the last column is generated.
Then for \ourA, we randomly mask a proportion of cell values of generated data and conduct missing value imputation for multiple times so that we can leverage \ours's capability of modeling the joint distribution. 
We conduct experiments on the following datasets: Early Stage Diabetes 2020~\citep{islam2019likelihood,early_stage_diabetes_risk_prediction_529}, Vertebral Column~\citep{vertebral_column_212}, Seeds~\citep{openml_seeds}, Wine~\citep{openml_wine}, and Grub Damage~\citep{openml_grub}.
For evaluation, we use Trend and Shape to evaluate fidelity, which are proposed by SDMetrics\footnote{\url{https://docs.sdv.dev/sdmetrics}}. 
We also calculate the AUC of XGBoost on a hold-out test dataset using generated data or real data. 
From~\Cref{tab:gen}, we find that \ourL~outperforms TabPFN-v2 in terms of all metrics on the five classification tasks. On the dataset of Grub Damage, the prediction performance using data generated by \ourL~is even higher than using real data. 
The results show that \ourA~are capable of capturing dependencies between features of $X$, which is brought by the pretraining strategy of mask prediction. In contrast, TabPFN-v2 is only capable of modeling $P(Y|X)$.

\begin{table}[H]
\centering
\caption{Evaluation of data generation on five classification tasks. "Trend" and "Shape" are two fidelity metrics. "AUC" measures the classification performance of XGBoost using generated (or real) data. \ourL~consistently outperforms TabPFN-v2 in all three metrics.  }
\label{tab:gen}%
\begin{tabular}{@{}c|c|ccccc@{}}
\toprule
Metric                 & Method    & Early Diabetes & Vertebra & Seeds & Wine  & Grub Damage \\ \midrule
\multirow{2}{*}{Trend (↑)} & TabPFN-v2 & 0.797          & 0.580   & 0.696 & 0.686 & 0.486       \\
                       & \ourL     & 0.804          & 0.591   & 0.699 & 0.688 & 0.673       \\ \midrule
\multirow{2}{*}{Shape (↑)} & TabPFN-v2 & 0.889          & 0.754   & 0.739 & 0.622 & 0.635       \\
                       & \ourL     & 0.902          & 0.763   & 0.768 & 0.646 & 0.762       \\ \midrule
\multirow{3}{*}{AUC (↑)}   & TabPFN-v2 & 0.839          & 0.652   & 0.861 & 0.670 & 0.695       \\
                       & \ourL     & 0.879          & 0.783   & 0.932 & 0.912 & 0.727       \\ \cmidrule(l){2-7} 
                       & Real      & 0.969          & 0.915   & 0.982 & 0.996 & 0.710       \\ \bottomrule
\end{tabular}%

\end{table}

\clearpage
\subsection{Out-of-Distribution Generalization}
\label{sec:ood}

In real-world applications, tabular data is often subject to distribution shifts between training data and the test data encountered during deployment. 
For instance, a machine learning model may be trained on patient records collected from one hospital but deployed on data from a different hospital. Such distribution shifts arising from domain variation can lead to substantial degradation in model performance. This challenge is commonly referred to as the Out-of-Distribution (OOD) generalization problem, and has been the focus of extensive research in the machine learning community~\citep{liu2021towards,yu2024survey}. 

We evaluate the OOD generalization performance of various tabular models on 10 public datasets drawn from the TableShift~\citep{gardner2023benchmarking}, which is a benchmark for distribution shifts in tabular data. It contains 15 binary classification tasks in total, covering a wide range of data sources, including finance, medical diagnosis, policy, etc. More details can be found in \Cref{appendix:tableshift}. 
To ensure fair comparisons, for each dataset, if the number of training or test samples exceeds 10,000, we randomly subsample 10,000 instances. Each experiment is repeated five times and we report the average.

From~\Cref{table:tableshift_results}, we can see that \ours~achieves state-of-the-art performance, securing top ranks in both ID (In-Distribution) and OOD evaluations. 
This could be attributed to \ours's integration of causal data with a Context-Conditional Masked Modeling framework. This strategy enables the model to capture robust causal relationships rather than superficial correlations, leading to significantly enhanced generalization on OOD data. 
As for baselines, models that are also based on in-context learning (ICL), notably TabICL and TabPFN-v2, also deliver competitive results. This indicates that the ICL mechanism could better capture latent invariant patterns in data, thereby endowing models with strong generalization potential.

\begin{table}[H]
\centering
\caption{Average AUC and ranks on TableShift. We can see that \ours~consistently outperforms all baselines in terms of ID or OOD performance.}
\resizebox{0.8\textwidth}{!}{%
\begin{tabular}{lcccc}
\toprule
Model & ID\_AUC ($\uparrow$) & OOD\_AUC ($\uparrow$) & ID\_Rank ($\downarrow$) & OOD\_Rank ($\downarrow$) \\
\midrule
\ourL & \textbf{0.848} & \textbf{0.806} & \textbf{2.5} & \textbf{1.3} \\ 
TabICL & 0.847 & 0.799 & 4.1 & 3.9 \\
AutoGluon & 0.842 & 0.797 & 3.5 & 4.0 \\
TabPFN-v2 & 0.841 & 0.797 & 6.4 & 5.2 \\
CatBoost & 0.840 & 0.793 & 4.1 & 5.9 \\
LightGBM & 0.836 & 0.790 & 6.1 & 7.3 \\
MLP & 0.839 & 0.792 & 7.5 & 7.6 \\
FT-Transformer & 0.840 & 0.789 & 7.8 & 8.3 \\
ResNet & 0.837 & 0.789 & 8.6 & 8.3 \\
NODE & 0.836 & 0.789 & 9.2 & 9.2 \\
XGBoost & 0.830 & 0.783 & 9.9 & 9.5 \\
TabDPT & 0.822 & 0.763 & 12.6 & 12.7 \\
TabR & 0.820 & 0.767 & 13.2 & 13.0 \\
TANGOS & 0.801 & 0.752 & 13.8 & 13.6 \\
ModernNCA & 0.801 & 0.757 & 15.4 & 14.6 \\
\bottomrule
\label{table:tableshift_results}
\end{tabular}%
}
\end{table}

\section{Scaling law}

Recent advances in large language models (LLMs) reveal a class of empirical regularities known as \textit{scaling laws}, which describe predictable relationships between model performance and key resource dimensions such as parameter count, dataset size, and compute budget \citep{kaplan2020scaling,hoffmann2022training}. 
These scaling laws enable researchers to forecast model behavior and guide efficient resource allocation when scaling up architectures. 

However, most prior studies focus on large language models (LLMs), analyzing how loss scales with model parameters and compute under fixed or approximately fixed data distributions. Despite the growing importance of large structured-data models (LDMs), their scaling behavior remains largely unexplored. 

Understanding the quantitative interaction between loss, data, and model parameters informs optimal trade-offs between data acquisition and model expansion. Motivated by this, this work systematically investigates the scaling laws for LDMs, focusing on the empirical relationships among model size, dataset size, and various performance metrics, including Log-Loss, AUC, F1, and accuracy for classification, as well as RMSE-Loss and $R^2$ for regression tasks. Our primary focus is on evaluating these metrics on downstream tasks, as they directly reflect the model's practical effectiveness and its ability to generalize to real-world applications. By analyzing performance on tasks like classification and regression, we aim to identify the optimal balance between model size, data, and task-specific performance.

For the experimental setup, we construct a series of models with parameter counts ranging from 1.05M to 16.53M (and up to 1.6G in extrapolated fits), trained on datasets spanning $1 \times 10^{11}$ to $1.7 \times 10^{12}$ tokens. Each configuration maintains consistent architecture and optimization settings, isolating the effects of data and parameter scaling. The downstream benchmark we used is BCCO-CLS and BCCO-REG. For this experiment, the evaluation was performed without the retrieval procedure, which may lead to minor deviations in performance compared to the results reported in the experimental section.

\subsection{Scaling with Dataset Size and Model Parameters}

Following previous findings in large language models (LLMs), we assume that both the loss and performance metrics of our model follow a power-law relationship with respect to computational resources, including dataset size and model parameter count. This assumption reflects the empirical regularity that as data volume or model capacity increases, the model’s loss tends to decrease while its predictive performance improves in a predictable manner. Formally, we hypothesize that each evaluation metric $M$ (e.g., Log-Loss, RMSE-Loss, AUC, ACC, or $R^2$) can be expressed as a power-law function of the resource variable $N$:

\begin{equation}
M = a \cdot N^{c} + b,
\end{equation}

where $a$, $b$, and $c$ are constants to be determined. To estimate these parameters, we employ an ordinary least squares (OLS) regression over measurements collected across multiple scales of $N$. For each metric and task configuration, the fitted parameters and the coefficient of determination ($R^2$) are reported to assess the strength and consistency of the scaling behavior. This procedure enables systematic quantification of how both loss reduction and performance improvement scale with resource growth, facilitating direct comparisons between different model sizes and data regimes.

Figure~\ref{fig:loss_vs_tokens} illustrates the empirical loss scaling with dataset size across different model configurations. All models achieve highly consistent fits with $R^2 > 0.99$, confirming the robustness of the scaling law. As dataset size increases, the loss decreases rapidly at smaller scales and then transitions to a more gradual improvement trend. Larger models achieve consistently lower loss values and exhibit smoother, more stable scaling behavior as additional data are introduced.

\begin{figure}[ht]
    \centering
    
    \begin{subfigure}[b]{0.48\textwidth}
        \centering
        \includegraphics[width=\textwidth]{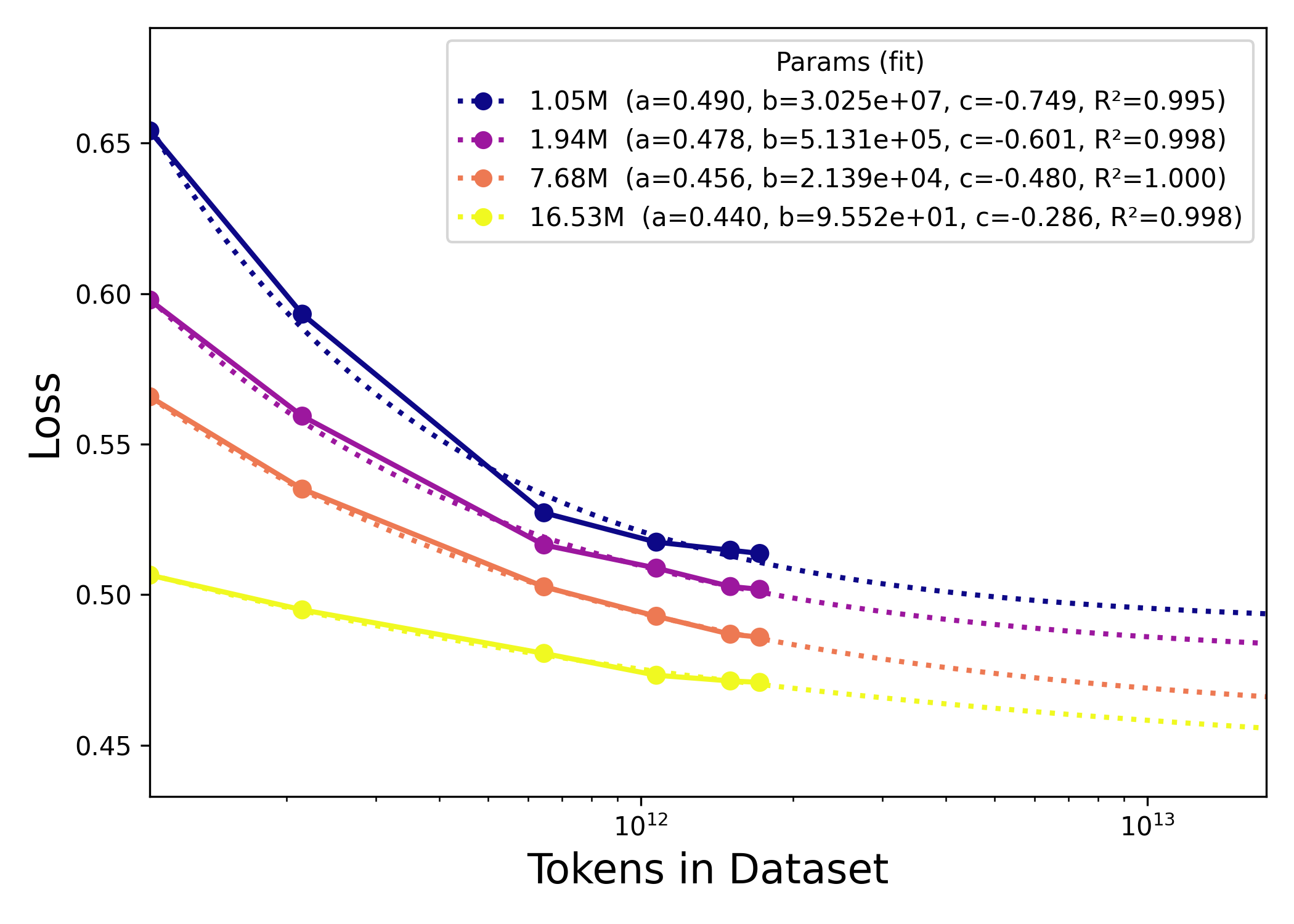}
        \caption{Log-Loss versus dataset size on BCCO-CLS}
        \label{fig:bcco_cls_loss}
    \end{subfigure}
    \hfill
    \begin{subfigure}[b]{0.48\textwidth}
        \centering
        \includegraphics[width=\textwidth]{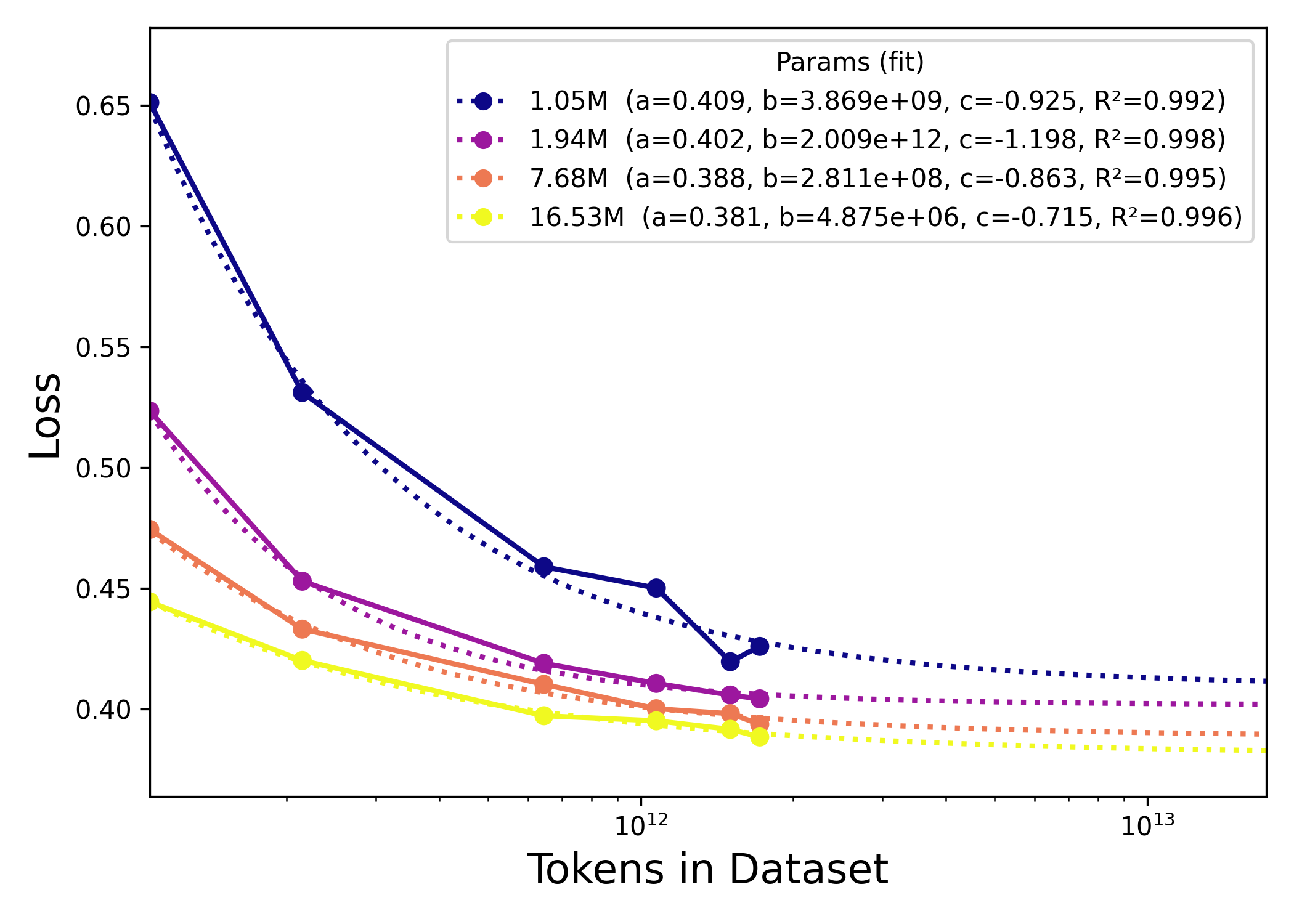}
        \caption{RMSE-Loss versus dataset size on BCCO-REG}
        \label{fig:bcco_reg_loss}
    \end{subfigure}

    \caption{Loss versus dataset size for models of different parameter counts. Dotted lines indicate fitted power-law relations.}
    \label{fig:loss_vs_tokens}
\end{figure}

Furthermore, we extend our analysis to examine how downstream task loss and performance scale with model parameters when unconstrained by dataset size or computational budget. As shown in Figure~\ref{fig:loss_vs_params}, multiple performance metrics exhibit clear dependencies on the model parameter count $N$. For the classification task, the empirical fits achieve $R^2 > 0.9$ for Log-Loss, AUC, and ACC, confirming the robustness of the scaling law, while the fit for F1 yields a slightly lower $R^2$ of 0.68. For the regression task, both RMSE-Loss and the $R^2$ score exhibit strong correlations with model size, with empirical fits achieving $R^2 > 0.9$.

The empirical fits are reported below, with only those achieving $R^2 > 0.9$ shown:

\begin{equation}
\begin{aligned}
    \text{Log-Loss} &= -132.43 + 133.12N^{-0.000}, & (R^2=0.9646)\\
    \text{AUC} &= 1.30 - 0.56N^{-0.014}, & (R^2=0.9044)\\
    \text{ACC} &= 0.91 - 0.31N^{-0.058}, & (R^2=0.9933)\\
    \text{RMSE-loss} &=-30.63+31.13N^{-0.000}, & (R^2=0.9890) \\
    \text{$R^2$} &=26.33-25.65N^{-0.000}, & (R^2=0.9890) \\
\end{aligned}
\end{equation}
 
Loss decays sharply with increasing $N$, while AUC, ACC and F1 improve gradually, indicating that while error convergence is achieved relatively early, fine-grained discriminative improvements require substantially larger models.

\begin{figure}[ht]
    \centering
    
    \begin{subfigure}[b]{0.48\textwidth}
        \centering
        \includegraphics[width=\textwidth]{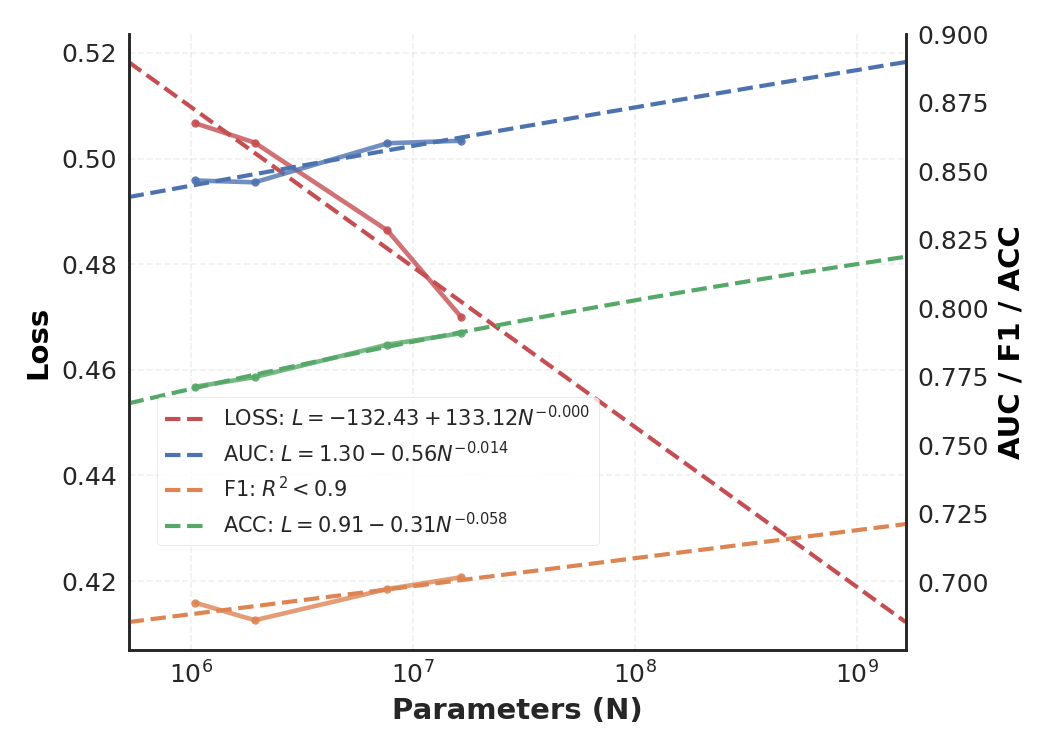}
        \caption{Log-Loss and performance versus parameter count on BCCO-CLS}
        \label{fig:loss_vs_params_cls}
    \end{subfigure}
    \hfill
    \begin{subfigure}[b]{0.48\textwidth}
        \centering
        \includegraphics[width=\textwidth]{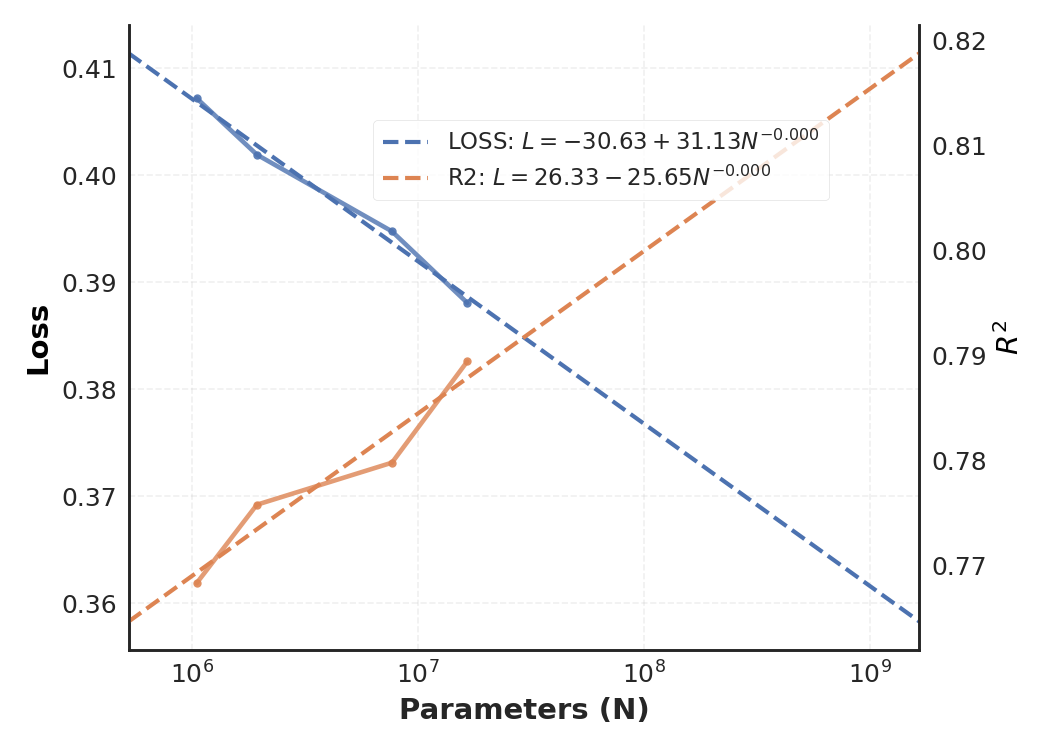}
        \caption{RMSE-Loss and performance versus parameter count on BCCO-REG}
        \label{fig:loss_vs_params_reg}
    \end{subfigure}

    \caption{Joint scaling of Loss, AUC, F1, ACC, and $R^2$ with respect to parameter count. Real and fitted trends are shown as solid and dashed lines, respectively.}
    \label{fig:loss_vs_params}
\end{figure}

\subsection{Conclusions and Insights}

This study provides the first systematic characterization of scaling laws in \textbf{LDMs}. Our key findings are summarized as follows:

\textbf{Loss versus Dataset Size}: both the classification (BCCO-CLS) and regression (BCCO-REG) tasks exhibit a consistent power-law relationship between loss and dataset size. When the token count increases from approximately $10^{11}$ to $10^{13}$, the loss decreases systematically across all model configurations. Smaller models (e.g., 1.05M parameters) display a sharper initial decline in loss, while larger models (e.g., 16.53M parameters) achieve overall lower loss levels and maintain more stable improvement trends as data volume grows. The fitting results demonstrate exceptionally high explanatory power ($R^2 > 0.99$), confirming the robustness of the scaling behavior. These observations suggest that increasing the dataset size consistently enhances model performance, with improvement rates varying according to model capacity.

\textbf{Loss and Performance versus Model Parameters} As the number of model parameters increases from 1.05M to approximately one billion, the loss decreases markedly while overall predictive performance improves. In classification tasks, loss declines rapidly at smaller scales and stabilizes at larger ones, while AUC and ACC values continue to increase consistently. Similarly, in regression tasks, RMSE decreases and the coefficient of determination ($R^2$) rises with model size, both following strong power-law trends ($R^2 > 0.98$). These results suggest that scaling up the model enhances predictive stability and generalization. However, the diminishing rate of improvement in loss relative to performance metrics implies that additional parameters primarily contribute to refined discriminative capability rather than further error convergence.

Overall, the scaling behavior of the LDM follows a predictable power-law trend analogous to that observed in large language models. Both increased dataset size and model capacity yield systematic and highly correlated reductions in loss ($R^2 > 0.99$). Nevertheless, the returns on scaling diminish progressively, as larger models require substantially greater data and computational resources to achieve marginal gains in performance. These results highlight the importance of balancing dataset expansion and model scaling to optimize the trade-off between predictive accuracy and computational efficiency.

\section{Conclusion}

In this work, we present \ours, a unified family of our LDM series, and release its first two variants, \ourL~and \ourS. \ourA~treat structured-data inputs as a joint distribution over variables and missingness, so that classification, regression, missing value imputation, data generation, and sample selection for interpretability, can all be expressed as conditional queries to a single model. Methodologically, \ourA~adopt a lightweight, scalable architecture that models causal relations among variables while jointly capturing dependencies across features and samples. 
Meanwhile, \ourA~combine masked joint-distribution pretraining with an episodic, context-conditional objective of per-dataset adaptation for the versatility in downstream tasks. The pretraining data for \ourA~is generated with hierarchical SCMs via graph-aware and solvability-aware sampling. Attention-guided retrieval of \ourA~further supports efficient inference-time ensemble and fine-tuning if desired. Empirically, experiments across 11 large structured-data benchmarks, spanning wide ranges of sample sizes, feature dimensions, number of classes, categorical-to-numerical feature ratios, missingness, and sample-to-feature ratios, confirm the effectiveness of \ourA~. As a single model, \ourL~consistently surpasses competitive baselines on various downstream tasks. 
We present the first systematic investigation of scaling laws in LDMs, revealing a predictable power-law relationship between scale and performance, analogous to that observed in large language models.

\clearpage
\section{Contribution}

\textbf{Project Design and Lead}

Xingxuan Zhang, Peng Cui

\vspace{15pt}

\textbf{Core Contributors}

Gang Ren, Han Yu, Hao Yuan, Hui Wang, Jiansheng Li, Jiayun Wu, Lang Mo, Li Mao, Mingchao Hao, Ningbo Dai, Renzhe Xu, Shuyang Li, Tianyang Zhang, Yue He, Yuanrui Wang, Yunjia Zhang, Zijing Xu

\vspace{15pt}

\textbf{Contributors}

Dongzhe Li, Fang Gao, Hao Zou, Jiandong Liu, Jiashuo Liu, Jiawei Xu, Kaijie Cheng, Kehan Li, Linjun Zhou, Qing Li, Shaohua Fan, Xiaoyu Lin, Xinyan Han, Xuanyue Li, Yan Lu, Yuan Xue, Yuanyuan Jiang, Zimu Wang, Zhenlei Wang

\clearpage

\bibliography{biblio}
\bibliographystyle{colm2024_conference}

\clearpage
\appendix
\section{Experimental Details}
\label{appendix:exp-details}

\subsection{Details of Datasets with Distribution Shifts}
\label{appendix:tableshift}

The 10 public datasets we adopt from TableShift include Voting\footnote{\url{https://electionstudies.org/}},   Unemployment\footnote{\url{https://www.census.gov/programs-surveys/acs}\label{census}}, Sepsis~\citep{reyna2020early}, Public Health Insurance\footref{census}, Income\footref{census}, Hospital Readmission~\citep{strack2014impact}\footnote{\url{https://archive.ics.uci.edu/ml/datasets/Diabetes+130-US+hospitals+for+years+1999-
2008}}, Food Stamps\footref{census}, Diabetes\footnote{\url{https://www.cdc.gov/brfss/index.html}}, College Scorecard\footnote{\url{https://collegescorecard.ed.gov/}}, and ASSISTments~\citep{metz2020assistments}\footnote{\url{https://new.assistments.org}}. 
Detailed information of the benchmark TableShift we adopt in \Cref{sec:ood} can be found in~\Cref{tab-ood:dataset_stats}. 

\begin{table}[htbp]
\centering
\caption{Statistics of datasets in TableShift. "\#Features" indicates the number of features. "\#Train" indicates the number of training samples. "\#ID\_test" indicates the number of ID (In-Distribution) test samples. "\#OOD\_test" indicates the number of OOD (Out-of-Distribution) test samples. "Shift Type" indicates how the datasets are split into multiple domains to create distribution shifts. "\#Domains" indicates the numbers of training and test domains.}
\label{tab-ood:dataset_stats}
\footnotesize
\setlength{\tabcolsep}{4pt}
\begin{tabular}{lcccccc}
\toprule
\textbf{Dataset} & \textbf{\#Features} & \textbf{\#Train} & \textbf{\#ID\_test} & \textbf{\#OOD\_test} & \textbf{Shift type} & \textbf{\#Domains} \\
\midrule
Voting & 365 & 34,796 & 4,350 & 21,231 & Geographic Region & 4/1 \\
ASSISTments & 26 & 2,132,526 & 266,566 & 1,906 & School & 386/10 \\
Diabetes & 142 & 969,229 & 121,154 & 209,375 & Race & 1/5 \\
Food Stamps & 239 & 629,018 & 78,628 & 48,878 & Geographic Region & 9/1 \\
Hospital Readmission & 183 & 34,288 & 4,287 & 50,968 & Admission Source & 15/1 \\
Income & 232 & 1,264,123 & 158,016 & 75,911 & Geographic Region & 9/1 \\
Health Insurance & 135 & 4,006,249 & 500,782 & 817,877 & Disability Status & 1/1 \\
Sepsis & 41 & 1,122,299 & 140,288 & 134,402 & Length of Stay & 47/2 \\
Unemployment & 223 & 1,290,914 & 161,365 & 163,611 & Education Level & 9/15 \\
College ScoreCard & 118 & 98,556 & 12,320 & 1,352 & Institution Type & 26/8 \\
\bottomrule
\end{tabular}
\end{table}

\subsection{Hyperparameter Search Space for Baselines}
\label{appendix:searchspace}

The hyperparameter search space for the baseline models is summarized in Table \ref{tab:hyperparameter_space}.

\clearpage

\begin{table}[H]
\centering
\caption{The hyperparameter search space for tree-based models is utilized by Optuna, which provides automatic suggestions via Bayesian optimization. The only difference between classification and regression configurations lies in the optimization target. All other hyperparameter settings remain identical.}
\label{tab:hyperparameter_space}
\begin{tabular}{@{}ccccc@{}}
\toprule
Baseline & Hyperparameter & Data Type & Log & Search Space \\ \midrule
RF & n\_estimators & int & no & {[}100, 500{]} \\
 & max\_depth & int & no & {[}3, 20{]} \\
 & min\_samples\_split & int & no & {[}2, 20{]} \\
 & min\_samples\_leaf & int & no & {[}1, 20{]} \\
 & max\_features & categorical & no & \{Sqrt, Log\textsubscript{2}, None\} \\
 & bootstrap & categorical & no & \{True, False\} \\ \midrule
ET & n\_estimators & int & no & {[}100, 500{]} \\
 & max\_depth & int & no & {[}3, 20{]} \\
 & min\_samples\_split & int & no & {[}2, 20{]} \\
 & min\_samples\_leaf & int & no & {[}1, 20{]} \\ \midrule
XGBoost & n\_estimators & int & no & {[}100, 300{]} \\
 & max\_depth & int & no & {[}3, 9{]} \\
 & learning\_rate & float & yes & {[}0.01, 0.3{]} \\
 & subsample & float & no & {[}0.5, 1.0{]} \\
 & colsample\_bytree & float & no & {[}0.5, 1.0{]} \\ \midrule
LightGBM & n\_estimators & int & no & {[}100, 300{]} \\
 & learning\_rate & float & yes & {[}0.01, 0.3{]} \\
 & max\_depth & int & no & {[}-1, 20{]} \\
 & subsample & float & no & {[}0.5, 1.0{]} \\
 & colsample\_bytree & float & no & {[}0.5, 1.0{]} \\ \midrule
CatBoost & iterations & int & no & {[}100, 300{]} \\
 & depth & int & no & {[}4, 10{]} \\
 & learning\_rate & float & yes & {[}0.01, 0.3{]} \\ \bottomrule
\end{tabular}
\end{table}

\section{Omitted Details in \cref{sect:theory}}
\subsection{Omitted Details in \Cref{sect:joint-distribution}} \label{sect:proof-identifiability}
\begin{proposition}[Formal Version of \Cref{prop:identifiability}] \label{prop:identifiability-full}
    Suppose that the distribution $p(\bfXtest|\bfXcontext)$ has a strictly positive probability density function.
    Then there is a one-to-one correspondence between the distribution $p(\bfXtest|\bfXcontext)$ and the family of conditionals $\{p(\bfXtest_{\pi}|\bfXtest_{-\pi}, \bfXcontext): \forall \pi \in \Pi_k\}$.
\end{proposition}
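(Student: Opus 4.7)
The forward direction is routine: given $p(\bfXtest | \bfXcontext)$ with strictly positive density, every conditional $p(\bfXtest_\pi | \bfXtest_{-\pi}, \bfXcontext)$ for $\pi \in \Pi_k$ is obtained by marginalizing over $\bfXtest_\pi$ and dividing, with positivity guaranteeing the denominator never vanishes. The substance of the proposition therefore lies in the reverse direction: showing that the family $\{p(\bfXtest_\pi | \bfXtest_{-\pi}, \bfXcontext) : \pi \in \Pi_k\}$ uniquely determines the joint conditional $p(\bfXtest | \bfXcontext)$.

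My plan is a two-step reduction: first collapse the $k$-subset conditionals to single-coordinate full conditionals, then reconstruct the joint via a Brook-type telescoping identity. The case $k = d$ is immediate since $\bfXtest_{-\pi}$ is empty and the single member of the family is the joint itself, so I assume $1 \leq k < d$. For any coordinate $j \in [d]$, I would choose any $\pi \in \Pi_k$ with $j \in \pi$ (possible because $k \geq 1$); by the chain rule for conditional densities,
\[
p(\Xtest_j \mid \bfXtest_{-j}, \bfXcontext)
= \frac{p(\bfXtest_\pi \mid \bfXtest_{-\pi}, \bfXcontext)}{\displaystyle\int_{\Omega} p(\bfXtest_\pi \mid \bfXtest_{-\pi}, \bfXcontext)\, d\Xtest_j},
\]
where the denominator is strictly positive by hypothesis. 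This recovers all $d$ univariate full conditionals $p(\Xtest_j \mid \bfXtest_{-j}, \bfXcontext)$ from the given family.

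With the full conditionals in hand, I would invoke Brook's lemma: fix any reference point $\bfxtest^* \in \Omega^d$ and write
\[
\frac{p(\bfxtest \mid \bfXcontext)}{p(\bfxtest^* \mid \bfXcontext)}
= \prod_{j=1}^{d}
\frac{p\bigl(\xtest_j \mid \xtest_1,\ldots,\xtest_{j-1},\xtest^*_{j+1},\ldots,\xtest^*_d, \bfXcontext\bigr)}{p\bigl(\xtest^*_j \mid \xtest_1,\ldots,\xtest_{j-1},\xtest^*_{j+1},\ldots,\xtest^*_d, \bfXcontext\bigr)},
\]
and determine the normalizing scalar $p(\bfxtest^* \mid \bfXcontext)$ from $\int p(\bfxtest \mid \bfXcontext)\, d\bfxtest = 1$. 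Strict positivity ensures every factor in the product is finite and positive, so $p(\bfxtest \mid \bfXcontext)$ is pinned down uniquely.

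The main obstacle I expect is the first reduction for $1 \leq k < d$: one must verify that the displayed ratio genuinely isolates a single-coordinate conditional density and that the result is independent of which $\pi \ni j$ is selected (consistency across choices of mask is a compatibility condition inherited from the fact that the family arises from an actual joint). Strict positivity of the density is precisely the hypothesis that renders both the ratio well-defined and Brook's telescoping valid; classical counterexamples to Hammersley--Clifford indicate that dropping it can break the reconstruction, which is why the assumption in \Cref{prop:identifiability-full} cannot be weakened for free.
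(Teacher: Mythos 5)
Your proposal is correct, and its first half coincides with the paper's argument: both reduce the $k$-subset conditionals to the univariate full conditionals $p(\Xtest_j \mid \bfXtest_{-j}, \bfXcontext)$ by the same ratio, with strict positivity guaranteeing the denominator $\int_{\Omega} p(\bfXtest_\pi \mid \bfXtest_{-\pi}, \bfXcontext)\,\mathrm{d}\Xtest_j = p(\bfXtest_{\pi\setminus\{j\}} \mid \bfXtest_{-\pi}, \bfXcontext)$ is nonzero. Where you diverge is the second half. You reassemble the joint from the full conditionals via Brook's telescoping identity plus normalization, which is the classical Hammersley--Clifford-style route: short, standard, and it makes transparent exactly where positivity is needed (every factor in the product must be finite and nonzero). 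The paper instead runs an induction on mask size, showing that $\gP_1$ determines $\gP_{k'}$ for every $k' \le d$ via the identity $\int_{\Omega} p(\Xtest_j \mid \bfXtest_{-\pi}, \bfXcontext)\big/p(\bfXtest_{\pi'} \mid \bfXtest_{-\pi}, \bfXcontext)\,\mathrm{d}\Xtest_j = 1/p(\bfXtest_{\pi'} \mid \bfXtest_{-\pi}, \bfXcontext)$ with $\pi' = \pi\setminus\{j\}$, terminating at $\gP_d = \{p(\bfXtest \mid \bfXcontext)\}$; this is slightly longer but recovers the entire hierarchy of intermediate conditionals along the way, which dovetails with how the paper later speaks of the distribution induced by a family of $\pi$-conditionals for arbitrary $k$. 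Your worry about well-definedness across different choices of $\pi \ni j$ is moot for the injectivity claim, since the family is assumed to arise from an actual joint, so consistency is automatic; and your observation that positivity cannot be dropped is consonant with the paper's hypothesis. Both arguments are sound.
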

\begin{proof}
    It is clear that $p(\bfXtest | \bfXcontext)$ directly yields all conditionals $\{p(\bfXtest_{\pi} | \bfXtest_{-\pi}, \bfXcontext): \pi \in \Pi_k\}$. We now show the converse: $p(\bfXtest | \bfXcontext)$ can be recovered from this family of conditionals.
    
    As the first step, we derive $\{p(\Xtest_j | \bfXtest_{-j}, \bfXcontext): j \in [d]\}$. For any $j \in [d]$, select a mask $\pi \in \Pi_k$ with $j \in \pi$. Then
    \[
    p(\Xtest_j | \bfXtest_{-j}, \bfXcontext) 
    = \frac{p(\bfXtest_\pi | \bfXtest_{-\pi}, \bfXcontext)}{p(\bfXtest_{\pi\setminus \{j\}} | \bfXtest_{-\pi}, \bfXcontext)} 
    = \frac{p(\bfXtest_\pi | \bfXtest_{-\pi}, \bfXcontext)}{\int_{\Omega}p\left(\bfXtest_{\pi\setminus \{j\}}, {\Xtest_{j}}' | \bfXtest_{-\pi}, \bfXcontext\right) \mathrm{d} {\Xtest_{j}}'}.
    \]
    Each term on the right-hand side belongs to $\{p(\bfXtest_{\pi} | \bfXtest_{-\pi}, \bfXcontext): \pi \in \Pi_k\}$, hence $p(\Xtest_j | \bfXtest_{-j}, \bfXcontext)$ can indeed be recovered.

    We now use induction to show that the knowledge of $\{p(\Xtest_j | \bfXtest_{-j}, \bfXcontext): j \in [d]\}$ suffices to recover all conditionals in
    \[
    \gP_k = \{p(\bfXtest_{\pi} | \bfXtest_{-\pi}, \bfXcontext): |\pi| = k\}.
    \]
    Clearly, $\gP_1$ coincides with $\{p(\Xtest_j | \bfXtest_{-j}, \bfXcontext): j \in [d]\}$. Suppose $\gP_{k'}$ is obtainable for all $k' \le k$. Consider $p(\bfXtest_{\pi} | \bfXtest_{-\pi}, \bfXcontext) \in \gP_{k+1}$ with $|\pi| = k+1$. Pick $j \in \pi$ and set $\pi' = \pi \setminus \{j\}$. Then
    \[
    \int_{\Omega} \frac{p(\Xtest_j | \bfXtest_{\pi'}, \bfXtest_{-\pi}, \bfXcontext)}{p(\bfXtest_{\pi'} | \Xtest_j, \bfXtest_{-\pi}, \bfXcontext)} \, \mathrm{d}\Xtest_j
    = \int_{\Omega} \frac{p(\Xtest_j | \bfXtest_{-\pi}, \bfXcontext)}{p(\bfXtest_{\pi'} | \bfXtest_{-\pi}, \bfXcontext)} \, \mathrm{d}\Xtest_j
    = \frac{1}{p(\bfXtest_{\pi'} | \bfXtest_{-\pi}, \bfXcontext)}.
    \]
    Each term on the left-hand side belongs to $\gP_1$ or $\gP_k$, so $p(\bfXtest_{\pi'} | \bfXtest_{-\pi}, \bfXcontext)$ is obtainable. Finally,
    \[
    p(\bfXtest_{\pi} | \bfXtest_{-\pi}, \bfXcontext) 
    = p(\bfXtest_{\pi'} | \bfXtest_{-\pi}, \bfXcontext) \cdot p(\Xtest_j | \bfXtest_{\pi'}, \bfXtest_{-\pi}, \bfXcontext),
    \]
    showing that $p(\bfXtest_{\pi} | \bfXtest_{-\pi}, \bfXcontext)$ can be obtained as well. Hence, any element in $\gP_{k+1}$ can be obtained. By induction, the claim follows.
\end{proof}

Now we show that the knowledge of a single conditional distribution $p(\Xtest_j | \bfXtest_{-j}, \bfXcontext)$ for some $j \in [d]$ (i.e., when $\Pi = \{j\}$) is insufficient to recover the full conditional distribution $p(\bfXtest | \bfXcontext)$.

\begin{example} \label{example:identifiability}
    Consider the case $d=2$ and $m=0$, i.e., a setting with no in-context samples and a feature dimension of $2$. In this case, the target distribution $p(\bfXtest | \bfXcontext)$ reduces to $p(\bfXtest)$. Suppose further that $\bfXtest \sim \gN(\bfmu, \Sigma)$, where $\bfmu = (\mu_1, \mu_2) \in \sR^2$ and $\Sigma =\begin{pmatrix}\Sigma_{11} & \Sigma_{12}\\\Sigma_{21} & \Sigma_{22} \end{pmatrix} \succeq 0$.
    Then the conditional distribution of $\Xtest_1$ given $\Xtest_2$ is
    \[
        \Xtest_1 \mid \Xtest_2 \sim \gN\left(\Sigma_{12}\Sigma_{22}^{-1}\Xtest_2 + \mu_1 - \Sigma_{12}\Sigma_{22}^{-1}\mu_2, \Sigma_{11} - \Sigma_{12}\Sigma_{22}^{-1}\Sigma_{21}\right).
    \]
    Consequently, knowledge of $p(\Xtest_1 | \Xtest_2)$ alone provides access only to the quantities $\Sigma_{12}\Sigma_{22}^{-1}$, $\mu_1 - \Sigma_{12}\Sigma_{22}^{-1}\mu_2$, and $\Sigma_{11} - \Sigma_{12}\Sigma_{22}^{-1}\Sigma_{21}$. These are insufficient to uniquely determine the full parameter set $(\bfmu, \Sigma)$, and therefore $p(\bfXtest)$ cannot be fully recovered from $p(\Xtest_1 |\Xtest_2)$ alone. By symmetry, $p(\bfXtest)$ also cannot be recovered solely from $p(\Xtest_2|\Xtest_1)$.
\end{example}

\subsection{Omitted Details in \Cref{sect:mask-number}}
We denote by $q_\theta(\bfXtest|\bfXcontext), \theta \in \Theta$ the distribution induced by the learned family of conditional probabilities $\{q_{\theta}(\bfXtest_{\pi} | \bfXtest_{-\pi}, \bfXcontext) : \pi \in \Pi_k\}$. We overload the notation by writing $q_\theta(\bfXcontext, \bfXtest) \coloneqq q_{\theta}(\bfXtest|\bfXcontext)p(\bfXcontext)$.

\subsubsection{Sample Efficiency}
\begin{theorem}[Formal Version of \Cref{thrm:sample-efficiency}]  \label{thrm:sample-efficiency-full}
    Suppose there exists $\theta^* \in \Theta$ such that $q_{\theta^*}(\bfXtest_{\pi} | \bfXtest_{-\pi}, \bfXcontext) = p(\bfXtest_{\pi} | \bfXtest_{-\pi}, \bfXcontext)$ for all $\bfXcontext \in \Omega^{m \times d}$, $\bfXtest \in \Omega^d$, and $\pi \subseteq [d]$, and that the minimizer of $L_k(\theta)$ is unique for every $k$. Assume that for all $\theta \in \Theta$, $\bfXcontext \in \Omega^{m \times d}$, $\bfXtest \in \Omega^d$, and $\pi \subseteq [d]$, the gradient norm $\lVert\nabla_{\theta} q_{\theta}(\bfXtest_{\pi} | \bfXtest_{-\pi}, \bfXcontext)\rVert_2$ and the Hessian norm $\lVert\nabla^2_{\theta} q_{\theta}(\bfXtest_{\pi} | \bfXtest_{-\pi}, \bfXcontext)\rVert_F$ exist and are finite. Assume $\nabla^2_{\theta} L_k(\theta^*) \succ 0$ for all $k \in [d]$. Then, for every sufficiently small neighborhood $\gB$ of $\theta^*$, there exists a sufficiently large $n$ such that $\htheta_{k,n}$ is the unique minimizer of $\hat{L}_k(\theta)$ in $\gB$. Moreover,
    \[
    \sqrt{n}(\htheta_{k,n} - \theta^*) \xrightarrow{d} \gN(0, \Gamma_k),
    \]
    where $\Gamma_k$ does not depend on $n$ and satisfies $\Gamma_{k+1} \preceq \Gamma_k$.
\end{theorem}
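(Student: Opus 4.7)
The plan is to treat $\htheta_{k,n}$ as a well-specified M-estimator and combine the classical asymptotic normality argument with a Fisher-information monotonicity result obtained from a chain-rule decomposition of the conditional likelihood. First I would verify that $\theta^*$ is the unique minimizer of every population risk $L_k$: since $q_{\theta^*}$ matches every conditional exactly, \Cref{prop:identifiability-full} gives
\[
L_k(\theta)-L_k(\theta^*)=\E_{\pi\sim\uniform(\Pi_k)}\bigl[\KL\bigl(p(\bfXtest_\pi|\bfXtest_{-\pi},\bfXcontext)\,\|\,q_\theta(\bfXtest_\pi|\bfXtest_{-\pi},\bfXcontext)\bigr)\bigr]\ge 0,
\]
which, together with the stated uniqueness of the minimizer and $\nabla^2 L_k(\theta^*)\succ 0$, makes $\theta^*$ a non-degenerate local minimum. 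A standard uniform-law argument then turns the finite gradient/Hessian norm assumptions into uniform convergence of $\hat L_k$ and its derivatives on compact subsets of $\Theta$, so with probability approaching one the empirical Hessian is positive definite on a small neighborhood $\gB$ of $\theta^*$ and $\hat L_k$ has a unique stationary point $\htheta_{k,n}\in\gB$ that is consistent.

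Next I would carry out the usual delta-method expansion: writing $0=\nabla\hat L_k(\htheta_{k,n})$, Taylor-expanding around $\theta^*$, and applying the CLT to the i.i.d.\ per-sample scores gives
\[
\sqrt{n}\,(\htheta_{k,n}-\theta^*)\xrightarrow{d}\gN(0,H_k^{-1}V_kH_k^{-1}),
\]
with $H_k=\nabla^2 L_k(\theta^*)$ and $V_k$ the covariance of the per-sample score. Because the model is well-specified at every conditional, the conditional information identity applied inside the expectation over $\pi$ yields $V_k=H_k=I_k$, the averaged Fisher information $I_k:=\E_{\pi\sim\uniform(\Pi_k)}[I_\pi]$, where $I_\pi$ is the Fisher information carried by $q_\theta(\bfXtest_\pi|\bfXtest_{-\pi},\bfXcontext)$. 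Hence $\Gamma_k=I_k^{-1}$, and the monotonicity claim $\Gamma_{k+1}\preceq\Gamma_k$ reduces to proving $I_{k+1}\succeq I_k$.

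The monotonicity step is the main conceptual ingredient. For any $\pi\in\Pi_{k+1}$ and any $j\in\pi$, set $\pi'=\pi\setminus\{j\}$; the chain rule gives
\[
\log q_\theta(\bfXtest_\pi|\bfXtest_{-\pi},\bfXcontext)=\log q_\theta(\bfXtest_{\pi'}|\bfXtest_{-\pi'},\bfXcontext)+\log q_\theta(\Xtest_j|\bfXtest_{-\pi},\bfXcontext),
\]
so the score at $\theta^*$ decomposes additively. Conditioning on $(\bfXtest_{-\pi'},\bfXcontext)$ makes the second summand deterministic while the first has mean zero, so the cross-covariance vanishes and $I_\pi=I_{\pi'}+\Delta_{\pi,j}$ with $\Delta_{\pi,j}\succeq 0$. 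The coupling $\pi\sim\uniform(\Pi_{k+1}),\,j\sim\uniform(\pi)$ induces the marginal $\pi'\sim\uniform(\Pi_k)$ by a counting argument, so averaging yields $I_{k+1}=I_k+\E[\Delta_{\pi,j}]\succeq I_k$. The main technical hurdle is the first step: upgrading the pointwise gradient/Hessian bounds to genuine uniform control on $\gB$ so as to secure the local existence, uniqueness, and consistency of $\htheta_{k,n}$; once that is in place, the asymptotic-normality computation is routine and the monotonicity falls out cleanly from the score decomposition.
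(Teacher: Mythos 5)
Your proposal is correct and follows essentially the same route as the paper: establish $\htheta_{k,n}$ as a well-specified M-estimator (the paper invokes van der Vaart's Theorem 5.23 where you sketch the standard uniform-LLN/Taylor/CLT argument), use the Bartlett/information identity to reduce $\Gamma_k$ to $(\nabla^2_\theta L_k(\theta^*))^{-1}$, and prove monotonicity via the chain-rule decomposition $\log q_\theta(\bfXtest_\pi|\bfXtest_{-\pi},\bfXcontext)=\log q_\theta(\bfXtest_{\pi'}|\bfXtest_{-\pi'},\bfXcontext)+\log q_\theta(\Xtest_j|\bfXtest_{-\pi},\bfXcontext)$ with the cross-covariance of the two scores vanishing by conditioning on $(\bfXtest_{-\pi'},\bfXcontext)$. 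Your uniform coupling of $(\pi,\pi')$ is the same one the paper uses (read in the opposite direction), so the averaged identity $I_{k+1}=I_k+\E[\Delta_{\pi,j}]\succeq I_k$ matches the paper's $\nabla^2_\theta L_{k+1}(\theta^*)=A+\nabla^2_\theta L_k(\theta^*)$ with $A\succeq 0$.
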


The proof of \Cref{thrm:sample-efficiency-full} is based on the following lemma.

\begin{lemma}[\citet{van2000asymptotic}, Theorem 5.23; statement adapted from \citet{qin2024fit,li2024promises}] \label{lemma:convergence}
    Consider a loss $L: \Theta \rightarrow \mathbb{R}$, such that $L(\theta) = \E_{\bfX \sim p}[\ell_{\theta}(\bfX)]$ for $\ell_{\theta}: \gX \rightarrow \sR$. Let $\Theta^*$ be the the set of global minima of $L$, i.e.,
    \[
    \Theta^* = \{\theta^*: L(\theta^*) = \min_{\theta \in \Theta} L(\theta)\}.
    \]
    Suppose the following conditions are met:
    \begin{itemize}
        \item (Gradient bounds on $\ell_{\theta}$) The map $\theta \mapsto \ell_{\theta}$ is measurable and differentiable at every $\theta^* \in \Theta^*$ for $p$-almost every $\bfX$. Furthermore, there exists a function $B(\bfX)$, s.t. $\E[B(\bfX)^2] < \infty$ and and for every $\theta_1, \theta_2$ near $\theta^*$, we have
        \[
        \left|\ell_{\theta_1}(\bfX) - \ell_{\theta_2}(\bfX)\right|< B(\bfX) \left\lVert\theta_1 - \theta_2\right\rVert_2
        \]
        \item (Twice-differentiability of $L$) $L(\theta)$ is twice-differentiable at every $\theta^* \in \Theta^*$ with Hessian $\nabla^2_{\theta} L(\theta^*)$, and furthermore $\nabla^2_{\theta} L(\theta^*) \succ 0$.
        \item (Uniform law of large numbers) The loss $L$ satisfies a uniform law of large numbers, that is
        \[
        \sup_{\theta \in \Theta} \left|\hat{\E}[\ell_{\theta}(\bfX)] - L(\theta)\right| \xrightarrow{p} 0.
        \]
        \item (Realizability) The data distribution $p$ satisfies: $\exists \theta^* \in \Theta$ such that $p_{\theta^*} = p$.
    \end{itemize}
    Then for every $\theta^* \in \Theta^*$, and every sufficiently small neighborhood $S$ of $\theta^*$, there exists a sufficiently large $n$, such that there is a unique minimizer $\htheta_n$ of $\hat{\E}[\ell_{\theta}(\bfX)]$ in $S$. Furthermore, $\htheta_n$ satisfies:
    \[
    \sqrt{n}\left(\htheta_n-\theta^*\right) \xrightarrow{d} \gN\left(0,\left(\nabla_\theta^2 L\left(\theta^*\right)\right)^{-1} \operatorname{Cov}\left(\nabla_\theta \ell_{\theta^*}\left(\bfX\right)\right)\left(\nabla_\theta^2 L\left(\theta^*\right)\right)^{-1}\right).
    \]
\end{lemma}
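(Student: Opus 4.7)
The plan proceeds in two stages: (i) derive the asymptotic normality by invoking \Cref{lemma:convergence}, and (ii) establish the comparison $\Gamma_{k+1}\preceq\Gamma_k$ via a Fisher-information monotonicity across mask sizes.

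For stage (i), I would introduce the augmented observation $\mathbf{Z}=(\bfXcontext,\bfXtest,\pi)$ with $\pi\sim\uniform(\Pi_k)$ drawn independently of $(\bfXcontext,\bfXtest)$, set the per-sample loss $\ell_{k,\theta}(\mathbf{Z})=-\log q_\theta(\bfXtest_\pi\mid\bfXtest_{-\pi},\bfXcontext)$, so that $L_k(\theta)=\E[\ell_{k,\theta}(\mathbf{Z})]$ and $\hat L_k(\theta)=n^{-1}\sum_i \ell_{k,\theta}(\mathbf{Z}_i)$. Each hypothesis of \Cref{lemma:convergence} would be checked from the stated regularity: the finite gradient/Hessian norms of $q_\theta$ yield, via a chain-rule expansion of $\nabla_\theta\log q_\theta$, a local Lipschitz envelope $B(\mathbf{Z})$ with $\E[B(\mathbf{Z})^2]<\infty$; twice-differentiability of $L_k$ and $\nabla^2_\theta L_k(\theta^*)\succ0$ are assumed directly; the uniform LLN follows from the envelope via a standard bracketing argument on a small neighborhood of $\theta^*$; realizability is the first hypothesis of the theorem. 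Together with uniqueness of $\theta^*$ as minimizer of $L_k$, these conditions imply (via uniform convergence and the continuous-mapping argument) that $\htheta_{k,n}$ eventually lies in any prescribed neighborhood $\gB$ of $\theta^*$ for $n$ large, identifying it with the local minimizer produced by the lemma. The lemma then yields
\[
\sqrt{n}(\htheta_{k,n}-\theta^*)\xrightarrow{d}\gN\!\left(0,\ (\nabla_\theta^2 L_k(\theta^*))^{-1}\Cov(\nabla_\theta\ell_{k,\theta^*}(\mathbf{Z}))(\nabla_\theta^2 L_k(\theta^*))^{-1}\right).
\]

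For stage (ii), I would first simplify $\Gamma_k$ using realizability. Since $q_{\theta^*}(\cdot\mid\bfXtest_{-\pi},\bfXcontext)=p(\cdot\mid\bfXtest_{-\pi},\bfXcontext)$, conditioning on $(\bfXcontext,\bfXtest_{-\pi},\pi)$ and applying the classical Fisher identity pointwise gives $\E[\nabla_\theta\ell_{k,\theta^*}\mid\bfXcontext,\bfXtest_{-\pi},\pi]=0$ and $\E[\nabla_\theta\ell_{k,\theta^*}\nabla_\theta\ell_{k,\theta^*}^\top\mid\cdot]=\E[-\nabla_\theta^2\log q_{\theta^*}\mid\cdot]$. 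Taking outer expectations yields $\Cov(\nabla_\theta\ell_{k,\theta^*}(\mathbf{Z}))=\nabla_\theta^2 L_k(\theta^*)=:I_k$, so the sandwich collapses to $\Gamma_k=I_k^{-1}$. To compare $I_{k+1}$ with $I_k$, I use the chain-rule factorization: for any $\pi\in\Pi_{k+1}$ and $j\in\pi$, Bayes' rule at $\theta^*$ gives
\[
q_{\theta^*}(\bfXtest_\pi\mid\bfXtest_{-\pi},\bfXcontext)=q_{\theta^*}(\bfXtest_{\pi\setminus\{j\}}\mid\bfXtest_{-(\pi\setminus\{j\})},\bfXcontext)\cdot q_{\theta^*}(\Xtest_j\mid\bfXtest_{-\pi},\bfXcontext),
\]
so the score decomposes additively as $S_\pi=S_{\pi\setminus\{j\}}+W_j^\pi$, where $W_j^\pi:=\nabla_\theta\log q_{\theta^*}(\Xtest_j\mid\bfXtest_{-\pi},\bfXcontext)$ is measurable with respect to $(\bfXcontext,\bfXtest_{-(\pi\setminus\{j\})})$. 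Because $\E[S_{\pi\setminus\{j\}}\mid\bfXcontext,\bfXtest_{-(\pi\setminus\{j\})}]=0$ (score identity under realizability), $S_{\pi\setminus\{j\}}$ and $W_j^\pi$ are $L^2$-orthogonal, giving the Pythagorean identity $I_\pi=I_{\pi\setminus\{j\}}+\E[W_j^\pi(W_j^\pi)^\top]\succeq I_{\pi\setminus\{j\}}$. Averaging over $j\in\pi$ and then over $\pi\in\Pi_{k+1}$, together with the double-counting identity
\[
\frac{1}{|\Pi_{k+1}|(k+1)}\sum_{\pi\in\Pi_{k+1}}\sum_{j\in\pi}I_{\pi\setminus\{j\}}=\frac{1}{|\Pi_k|}\sum_{\pi'\in\Pi_k}I_{\pi'}=I_k,
\]
yields $I_{k+1}\succeq I_k$, and inverting gives $\Gamma_{k+1}=I_{k+1}^{-1}\preceq I_k^{-1}=\Gamma_k$.

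The main technical hurdle lies in stage (i), where one must verify the uniform LLN on a possibly noncompact parameter space with a general sample space $\Omega$; this reduces to a standard bracketing argument once the assumed global gradient/Hessian bounds are combined with a small neighborhood around $\theta^*$ on which $q_\theta$ is bounded away from zero (ensured by continuity and realizability). The conceptually critical step is the chain-rule orthogonality in stage (ii): it holds precisely because scores are evaluated at $\theta^*$ where the model matches the true distribution, which is exactly what decouples the two score components and produces the monotone Fisher-information ordering.
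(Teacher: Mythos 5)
Your proposal does not prove the stated result. The statement in question is \Cref{lemma:convergence} itself --- the classical asymptotic-normality theorem for M-estimators (van der Vaart, Theorem~5.23) --- which the paper does \emph{not} prove but simply imports from the literature as a black box. Your argument instead \emph{invokes} \Cref{lemma:convergence} in stage~(i) and then uses it to derive the conclusion of \Cref{thrm:sample-efficiency-full}; with respect to the actual target this is circular, since you are assuming the very lemma you were asked to establish. A genuine proof of the lemma would require the standard M-estimation machinery that none of your steps supply: consistency of $\htheta_n$ from the uniform law of large numbers together with a well-separated minimum, a local quadratic (Taylor) expansion of the empirical criterion around $\theta^*$ controlled by the Lipschitz envelope $B(\bfX)$, a central limit theorem for the empirical score $\hat{\E}[\nabla_\theta \ell_{\theta^*}(\bfX)]$, and a Slutsky argument to assemble the sandwich covariance $\left(\nabla_\theta^2 L(\theta^*)\right)^{-1}\Cov\left(\nabla_\theta \ell_{\theta^*}(\bfX)\right)\left(\nabla_\theta^2 L(\theta^*)\right)^{-1}$.

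That said, the content you wrote is not wasted: it is essentially the paper's own proof of \Cref{thrm:sample-efficiency-full}. Your collapse of the sandwich to $\Gamma_k = I_k^{-1}$ via the information-matrix equality is the paper's \Cref{lemma:nabla-cov}, and your chain-rule score decomposition $S_\pi = S_{\pi\setminus\{j\}} + W_j^\pi$ with the $L^2$-orthogonality of the two components (the paper's cross term $B=0$) and the double-counting average over $j\in\pi$ reproduces the paper's reparametrization $\pi = \gamma\cup\{j\}$ with $\gamma\sim\uniform(\Pi_k)$, $j\sim\uniform([d]\setminus\gamma)$, yielding $\nabla^2_\theta L_{k+1}(\theta^*) = A + \nabla^2_\theta L_k(\theta^*) \succeq \nabla^2_\theta L_k(\theta^*)$. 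Had the target been that theorem, your route would be the same as the paper's. As it stands, you need to either supply the van der Vaart argument for the lemma itself or note explicitly that it is a cited external result whose proof lies outside the paper.
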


Similar to Lemma 2 in \citet{li2024promises}, we have the following lemma.

\begin{lemma} \label{lemma:nabla-cov}
    Under the same assumptions as in \Cref{thrm:sample-efficiency-full}, we have
    \begin{equation} \label{eq:nabla-cov}
    \nabla_{\theta}^{2} L_k(\theta^*)  = \mathrm{Cov}_{(\bfXcontext, \bfXtest) \sim p, \, \pi \sim \uniform(\Pi_k)} \left(-\nabla_{\theta} \log q_{\theta^*}\left(\bfXtest_{\pi} \mid \bfXtest_{-\pi}, \bfXcontext\right)\right).
    \end{equation}
\end{lemma}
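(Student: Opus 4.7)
The plan is to invoke the classical information matrix identity, adapted to the conditional masked-modeling setting. Under the realizability assumption $q_{\theta^*}(\bfXtest_\pi | \bfXtest_{-\pi}, \bfXcontext) = p(\bfXtest_\pi | \bfXtest_{-\pi}, \bfXcontext)$ for every $\pi \in \Pi_k$, $\bfXcontext$, and $\bfXtest$, the expected Hessian of the negative conditional log-likelihood collapses to the expected outer product of the score, which is precisely the right-hand side of \eqref{eq:nabla-cov} once one verifies that the score has conditional mean zero.

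I would proceed in three steps. First, using the assumed finiteness of $\lVert \nabla_{\theta} q_{\theta}(\bfXtest_\pi | \bfXtest_{-\pi}, \bfXcontext)\rVert_2$ and $\lVert \nabla^2_{\theta} q_{\theta}(\bfXtest_\pi | \bfXtest_{-\pi}, \bfXcontext)\rVert_F$ together with dominated convergence, I would bring both derivatives inside the outer expectation to obtain
\[
\nabla^2_\theta L_k(\theta^*) = -\E_{(\bfXcontext,\bfXtest)\sim p,\,\pi\sim\uniform(\Pi_k)}\!\left[\nabla^2_\theta \log q_{\theta^*}(\bfXtest_\pi | \bfXtest_{-\pi}, \bfXcontext)\right].
\]
Second, I would expand the log-Hessian via the elementary identity
\[
-\nabla^2_\theta \log q_\theta = -\frac{\nabla^2_\theta q_\theta}{q_\theta} + \nabla_\theta \log q_\theta\,(\nabla_\theta \log q_\theta)^\top,
\]
and show that, after conditioning on $(\bfXcontext, \bfXtest_{-\pi}, \pi)$, the first term has zero expectation under $p(\bfXtest_\pi | \bfXtest_{-\pi}, \bfXcontext)$ by realizability:
\[
\int \frac{\nabla^2_\theta q_{\theta^*}(\bfXtest_\pi | \bfXtest_{-\pi}, \bfXcontext)}{q_{\theta^*}(\bfXtest_\pi | \bfXtest_{-\pi}, \bfXcontext)}\,p(\bfXtest_\pi | \bfXtest_{-\pi}, \bfXcontext)\,d\bfXtest_\pi = \nabla^2_\theta \!\int q_{\theta^*}(\bfXtest_\pi | \bfXtest_{-\pi}, \bfXcontext)\,d\bfXtest_\pi = \nabla^2_\theta(1) = 0.
\]
Third, an identical argument applied to the first derivative shows $\E[\nabla_\theta \log q_{\theta^*}(\bfXtest_\pi | \bfXtest_{-\pi}, \bfXcontext)\mid \bfXcontext,\bfXtest_{-\pi},\pi] = 0$, so the second-moment expression coincides with the full covariance after integrating out $(\bfXcontext, \bfXtest_{-\pi}, \pi)$. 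Combining the three displays gives \eqref{eq:nabla-cov}.

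The main obstacle is the careful justification of the two interchanges between differentiation and integration: once on the inner integral over $\bfXtest_\pi \in \Omega^{|\pi|}$ (needed to conclude $\int \nabla^2_\theta q_{\theta^*} = \nabla^2_\theta \int q_{\theta^*} = 0$), and once on the outer expectation over $(\bfXcontext,\bfXtest)$ and $\pi$. The outer interchange follows directly from the hypothesized uniform gradient and Hessian bounds combined with the finiteness of $\Pi_k$. The inner one uses the same Leibniz-rule/dominated-convergence argument as in the standard Fisher identity proof, with the hypotheses of \Cref{thrm:sample-efficiency-full} playing a role parallel to those used for Lemma~2 in \citet{li2024promises}; since the details are routine once the bounds are in place, I would relegate them to an auxiliary remark and focus the proof body on the three algebraic steps above.
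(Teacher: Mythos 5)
Your proposal is correct and follows essentially the same route as the paper's proof: the same expansion of the log-Hessian into the score outer product minus $\nabla^2_\theta q_{\theta^*}/q_{\theta^*}$, the same use of realizability to kill that term and to show the score has zero conditional mean, and the same identification of the covariance with the second moment. The only difference is that you are somewhat more explicit about justifying the differentiation-under-the-integral steps, which the paper performs without comment.
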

\begin{proof}
    It is easy to verify that $\theta^*$ is the minimizer of $L_k(\theta)$ for any $k \in [d]$ and hence $\theta^*_k = \theta^*$ by assumption.

    Rewrite $\nabla_{\theta}^{2} L_k(\theta^*)$ and we get that
    \begin{equation} \label{eq:proof-nabla-cov-1}
        \begin{aligned}
            & \, \nabla_{\theta}^{2} L_k(\theta^*) \\
            = & \, \nabla_{\theta}^{2} \E_{\bfXcontext, \bfXtest, \pi} \left[ -\log q_{\theta^*}(\bfXtest_\pi | \bfXtest_{-\pi}, \bfXcontext) \right] \\
            = & \, \E_{\bfXcontext, \bfXtest, \pi} \left[- \nabla_{\theta}^{2} \log q_{\theta^*}(\bfXtest_\pi | \bfXtest_{-\pi}, \bfXcontext) \right] \\
            = & \, \E_{\bfXcontext, \bfXtest, \pi} \left[\left(\nabla_{\theta}\log q_{\theta^*}(\bfXtest_\pi | \bfXtest_{-\pi}, \bfXcontext)\right)\left(\nabla_{\theta}\log q_{\theta^*}(\bfXtest_\pi | \bfXtest_{-\pi}, \bfXcontext)\right)^{\top} - \frac{\nabla_{\theta}^{2} q_{\theta^*}(\bfXtest_\pi | \bfXtest_{-\pi}, \bfXcontext)}{q_{\theta^*}(\bfXtest_\pi | \bfXtest_{-\pi}, \bfXcontext)}\right].
        \end{aligned}
    \end{equation}
    In addition, we have
    \begin{align*}
        & \, \E_{\bfXcontext, \bfXtest, \pi}\left[\frac{\nabla_{\theta}^{2} q_{\theta^*}(\bfXtest_\pi | \bfXtest_{-\pi}, \bfXcontext)}{q_{\theta^*}(\bfXtest_\pi | \bfXtest_{-\pi}, \bfXcontext)}\right] = \E_{\bfXcontext, \pi}\E_{\bfXtest_{-\pi}|\bfXcontext}\E_{\bfXtest_{\pi} | \bfXtest_{-\pi}, \bfXcontext}\left[\frac{\nabla_{\theta}^{2} q_{\theta^*}(\bfXtest_\pi | \bfXtest_{-\pi}, \bfXcontext)}{q_{\theta^*}(\bfXtest_\pi | \bfXtest_{-\pi}, \bfXcontext)}\right] \\
        = & \, \E_{\bfXcontext, \pi}\E_{\bfXtest_{-\pi}|\bfXcontext} \left[\int_{\Omega^{k}} \frac{\nabla_{\theta}^{2} q_{\theta^*}(\bfXtest_\pi | \bfXtest_{-\pi}, \bfXcontext)}{q_{\theta^*}(\bfXtest_\pi | \bfXtest_{-\pi}, \bfXcontext)} \cdot p(\bfXtest_\pi | \bfXtest_{-\pi}, \bfXcontext) \mathrm{d} \bfXtest_{\pi}\right] \\
        = & \, \E_{\bfXcontext, \pi}\E_{\bfXtest_{-\pi}|\bfXcontext} \left[\int_{\Omega^{k}} \nabla_{\theta}^{2} q_{\theta^*}(\bfXtest_\pi | \bfXtest_{-\pi}, \bfXcontext) \mathrm{d} \bfXtest_{\pi}\right] \tag{Due to the assumption $q_{\theta^*} = p$} \\
        = & \, \E_{\bfXcontext, \pi}\E_{\bfXtest_{-\pi}|\bfXcontext} \left[\nabla_{\theta}^{2}  \int_{\Omega^{k}} q_{\theta^*}(\bfXtest_\pi | \bfXtest_{-\pi}, \bfXcontext) \mathrm{d} \bfXtest_{\pi}\right] = \E_{\bfXcontext, \pi}\E_{\bfXtest_{-\pi}|\bfXcontext} \left[\nabla_{\theta}^{2}  1 \right] = 0.
    \end{align*}
    Combined with \Cref{eq:proof-nabla-cov-1}, we can get that
    \begin{equation} \label{eq:proof-nabla-cov-2}
        \nabla_{\theta}^{2} L_k(\theta^*) = \E_{\bfXcontext, \bfXtest, \pi} \left[\left(\nabla_{\theta}\log q_{\theta^*}(\bfXtest_\pi | \bfXtest_{-\pi}, \bfXcontext)\right)\left(\nabla_{\theta}\log q_{\theta^*}(\bfXtest_\pi | \bfXtest_{-\pi}, \bfXcontext)\right)^{\top}\right].
    \end{equation}
    Now rewrite the right-hand side of \Cref{eq:nabla-cov} and we get
    \begin{equation} \label{eq:proof-nabla-cov-3}
    \begin{aligned}
        & \, \mathrm{Cov} \left(-\nabla_{\theta} \log q_{\theta^*}\left(\bfXtest_{\pi} \mid \bfXtest_{-\pi}, \bfXcontext\right)\right) \\
        = & \, \E_{\bfXcontext, \bfXtest, \pi} \left[\left(\nabla_{\theta}\log q_{\theta^*}(\bfXtest_\pi | \bfXtest_{-\pi}, \bfXcontext)\right)\left(\nabla_{\theta}\log q_{\theta^*}(\bfXtest_\pi | \bfXtest_{-\pi}, \bfXcontext)\right)^{\top}\right] \\
        & \, - \E_{\bfXcontext, \bfXtest, \pi}\left[\nabla_{\theta}\log q_{\theta^*}(\bfXtest_\pi | \bfXtest_{-\pi}, \bfXcontext)\right] \E_{\bfXcontext, \bfXtest, \pi}\left[\nabla_{\theta}\log q_{\theta^*}(\bfXtest_\pi | \bfXtest_{-\pi}, \bfXcontext)\right] ^\top
    \end{aligned}.
    \end{equation}
    Note that
    \begin{align*}
        & \, \E_{\bfXcontext, \bfXtest, \pi}\left[\nabla_{\theta}\log q_{\theta^*}(\bfXtest_\pi | \bfXtest_{-\pi}, \bfXcontext)\right] = \E_{\bfXcontext, \bfXtest, \pi}\left[\frac{\nabla_{\theta}q_{\theta^*}(\bfXtest_\pi | \bfXtest_{-\pi}, \bfXcontext)}{q_{\theta^*}(\bfXtest_\pi | \bfXtest_{-\pi}, \bfXcontext)}\right] \\
        = & \, \E_{\bfXcontext, \pi}\E_{\bfXtest_{-\pi}|\bfXcontext}\E_{\bfXtest_{\pi} | \bfXtest_{-\pi}, \bfXcontext} \left[\frac{\nabla_{\theta}q_{\theta^*}(\bfXtest_\pi | \bfXtest_{-\pi}, \bfXcontext)}{q_{\theta^*}(\bfXtest_\pi | \bfXtest_{-\pi}, \bfXcontext)}\right] \\
        = & \, \E_{\bfXcontext, \pi}\E_{\bfXtest_{-\pi}|\bfXcontext}\left[\int_{\Omega^k}\frac{\nabla_{\theta}q_{\theta^*}(\bfXtest_\pi | \bfXtest_{-\pi}, \bfXcontext)}{q_{\theta^*}(\bfXtest_\pi | \bfXtest_{-\pi}, \bfXcontext)} \cdot p(\bfXtest_\pi | \bfXtest_{-\pi}, \bfXcontext) \mathrm{d} \bfXtest_{\pi}\right] \\
        = & \, \E_{\bfXcontext, \pi}\E_{\bfXtest_{-\pi}|\bfXcontext}\left[\int_{\Omega^k}\nabla_{\theta}q_{\theta^*}(\bfXtest_\pi | \bfXtest_{-\pi}, \bfXcontext)\mathrm{d} \bfXtest_{\pi}\right] \tag{Due to the assumption $q_{\theta^*} = p$} \\
        = & \, \E_{\bfXcontext, \pi}\E_{\bfXtest_{-\pi}|\bfXcontext}\left[\nabla_{\theta} \int_{\Omega^k}q_{\theta^*}(\bfXtest_\pi | \bfXtest_{-\pi}, \bfXcontext)\mathrm{d} \bfXtest_{\pi}\right] \\
        = & \, \E_{\bfXcontext, \pi}\E_{\bfXtest_{-\pi}|\bfXcontext}\left[\nabla_{\theta} 1\right] = 0.
    \end{align*}
    Combined with \Cref{eq:proof-nabla-cov-3}, we can get that
    \begin{equation} \label{eq:proof-nabla-cov-4}
    \mathrm{Cov} \left(-\nabla_{\theta} \log q_{\theta^*}\left(\bfXtest_{\pi} \mid \bfXtest_{-\pi}, \bfXcontext\right)\right) = \E \left[\left(\nabla_{\theta}\log q_{\theta^*}(\bfXtest_\pi | \bfXtest_{-\pi}, \bfXcontext)\right)\left(\nabla_{\theta}\log q_{\theta^*}(\bfXtest_\pi | \bfXtest_{-\pi}, \bfXcontext)\right)^{\top}\right].
    \end{equation}
    Now the claim follows from \Cref{eq:proof-nabla-cov-2,eq:proof-nabla-cov-4}.
\end{proof}
Based on the above lemmas, we can now prove \Cref{thrm:sample-efficiency-full}. The proof follows a similar idea to that of Theorem~1 in \citet{li2024promises}.
\begin{proof}[Proof of \Cref{thrm:sample-efficiency-full}]
    It is easy to verify that $\theta^*$ is the minimizer of $L_k(\theta)$ for any $k \in [d]$ and hence $\theta^*_k = \theta^*$ by assumption. According to \Cref{lemma:convergence}, it holds that for every sufficiently small neighborhood $S$ of $\theta^*$, there exists a sufficiently large $n$, such that there is a unique minimizer $\htheta_{k,n}$ in $S$. Furthermore, $\htheta_{k,n}$ satisfies:
    \[
    \sqrt{n}\left(\htheta_{k,n}-\theta^*\right) \xrightarrow{d} \gN\left(0, \Gamma_k\right).
    \]
    Here due to \Cref{lemma:nabla-cov}, it holds that
    \begin{equation} \label{eq:Gamma-k}
        \begin{aligned}
        \Gamma_k & = \left(\nabla_\theta^2 L_k\left(\theta^*\right)\right)^{-1} \mathrm{Cov}_{(\bfXcontext, \bfXtest) \sim p, \, \pi \sim \uniform(\Pi_k)} \left(-\nabla_{\theta} \log q_{\theta^*}\left(\bfXtest_{\pi} | \bfXtest_{-\pi}, \bfXcontext\right)\right)\left(\nabla_\theta^2 L_k\left(\theta^*\right)\right)^{-1} \\
        & = \left(\nabla_\theta^2 L_k\left(\theta^*\right)\right)^{-1}. \\
        \end{aligned}
    \end{equation}
    Fix a $k \in [d - 1]$. Then for every $\pi \in \Pi_{k+1}$ and $j \in \pi$, let $\gamma = \pi \setminus \{j\}$ and we have
    \begin{equation} \label{eq:proof-thrm:sample-efficiency-full-1}
        \begin{aligned}
            \log q_{\theta}(\bfXtest_{\pi}|\bfXtest_{-\pi}, \bfXcontext) & = \log q_{\theta}\left(\bfXtest_{\gamma}, \Xtest_j | \bfXtest_{-(\gamma \cup \{j\})}, \bfXcontext\right) \\
            & = \log q_{\theta}\left(\Xtest_j | \bfXtest_{-(\gamma \cup \{j\})}, \bfXcontext\right) + \log q_{\theta}\left(\bfXtest_{\gamma} | \bfXtest_{-\gamma}, \bfXcontext\right).
        \end{aligned}
    \end{equation}
    As a result, we have
    \begin{align*}
        & \, \nabla^2_{\theta} L_{k+1}(\theta^*) \\
        = & \, \E_{(\bfXcontext, \bfXtest) \sim p, \pi \sim \uniform(\Pi_{k+1})} \left[\left(\nabla_{\theta}\log q_{\theta^*}(\bfXtest_\pi | \bfXtest_{-\pi}, \bfXcontext)\right)\left(\nabla_{\theta}\log q_{\theta^*}(\bfXtest_\pi | \bfXtest_{-\pi}, \bfXcontext)\right)^{\top}\right] \tag{Due to \Cref{eq:proof-nabla-cov-2}} \\
        = & \, \E_{(\bfXcontext, \bfXtest) \sim p, \gamma \sim \uniform(\Pi_{k}), j \in \uniform([d] \setminus \gamma)} \left[\left(\nabla_{\theta}\log q_{\theta^*}(\bfXtest_\pi | \bfXtest_{-\pi}, \bfXcontext)\right)\left(\nabla_{\theta}\log q_{\theta^*}(\bfXtest_\pi | \bfXtest_{-\pi}, \bfXcontext)\right)^{\top}\right] \tag{Letting $\pi = \gamma \cup \{j\}$} \\
        = & \, \E_{\bfXcontext, \bfXtest, \gamma, j}\Bigg[\left(\nabla_\theta \log q_{\theta^*}\left(\Xtest_j | \bfXtest_{-(\gamma \cup \{j\})}, \bfXcontext\right) + \nabla_\theta \log q_{\theta^*}\left(\bfXtest_{\gamma} | \bfXtest_{-\gamma}, \bfXcontext\right)\right) \times \\
        & \qquad \left(\nabla_\theta \log q_{\theta^*}\left(\Xtest_j | \bfXtest_{-(\gamma \cup \{j\})}, \bfXcontext\right) + \nabla_\theta \log q_{\theta^*}\left(\bfXtest_{\gamma} | \bfXtest_{-\gamma}, \bfXcontext\right)\right)^\top\Bigg]. \tag{By \Cref{eq:proof-thrm:sample-efficiency-full-1}} \\
    \end{align*}
    Define
    \[
    \begin{aligned}
        A & = \E_{\bfXcontext, \bfXtest, \gamma, j}\left[\left(\nabla_\theta \log q_{\theta^*}\left(\Xtest_j | \bfXtest_{-(\gamma \cup \{j\})}, \bfXcontext\right)\right)\left(\nabla_\theta \log q_{\theta^*}\left(\Xtest_j | \bfXtest_{-(\gamma \cup \{j\})}, \bfXcontext\right)\right)^\top\right]. \\
        B & = \E_{\bfXcontext, \bfXtest, \gamma, j}\left[\left(\nabla_\theta \log q_{\theta^*}\left(\Xtest_j | \bfXtest_{-(\gamma \cup \{j\})}, \bfXcontext\right)\right)\left(\nabla_\theta \log q_{\theta^*}\left(\bfXtest_{\gamma} | \bfXtest_{-\gamma}, \bfXcontext\right)\right)^\top\right]. \\
        C & = \E_{\bfXcontext, \bfXtest, \gamma, j}\left[\left(\nabla_\theta \log q_{\theta^*}\left(\bfXtest_{\gamma} | \bfXtest_{-\gamma}, \bfXcontext\right)\right)\left(\nabla_\theta \log q_{\theta^*}\left(\bfXtest_{\gamma} | \bfXtest_{-\gamma}, \bfXcontext\right)\right)^\top\right].
    \end{aligned}
    \]
    Then we have
    \[
    \nabla^2_{\theta} L_{k+1}(\theta^*) = A + B + B^\top + C.
    \]
    It is easy to verify that $C = \nabla^2_{\theta} L_{k}(\theta^*)$. Now consider $B$. It holds that
    \begin{align*}
        B & = \E_{\bfXcontext, \bfXtest, \gamma, j}\left[\left(\nabla \log q_{\theta^*}\left(\Xtest_j | \bfXtest_{-(\gamma \cup \{j\})}, \bfXcontext\right)\right)\left(\nabla \log q_{\theta^*}\left(\bfXtest_{\gamma} | \bfXtest_{-\gamma}, \bfXcontext\right)\right)^\top\right] \\
        & = \E_{\bfXcontext, \gamma, j, \bfXtest_{-\gamma}}\left[\left(\nabla \log q_{\theta^*}\left(\Xtest_j | \bfXtest_{-(\gamma \cup \{j\})}, \bfXcontext\right)\right) \cdot \E_{\bfXtest_{\gamma}|\bfXtest_{-\gamma},\bfXcontext}\left[\left(\nabla \log q_{\theta^*}\left(\bfXtest_{\gamma} | \bfXtest_{-\gamma}, \bfXcontext\right)\right)^\top\right]\right] \\
        & = \E_{\bfXcontext, \gamma, j, \bfXtest_{-\gamma}}\left[\left(\nabla \log q_{\theta^*}\left(\Xtest_j | \bfXtest_{-(\gamma \cup \{j\})}, \bfXcontext\right)\right) \cdot 0\right] \tag{See \Cref{eq:proof-thrm:sample-efficiency-full-2} below} \\
        & = 0.
    \end{align*}
    Here the third equation is due to:
    \begin{equation} \label{eq:proof-thrm:sample-efficiency-full-2}
    \begin{aligned}
        \E_{\bfXtest_{\gamma}|\bfXtest_{-\gamma},\bfXcontext}\left[\nabla_\theta \log q_{\theta^*}\left(\bfXtest_{\gamma} | \bfXtest_{-\gamma}, \bfXcontext\right)\right] & = \int_{\Omega^k} \frac{\nabla_\theta q_{\theta^*}\left(\bfXtest_{\gamma} | \bfXtest_{-\gamma}, \bfXcontext\right)}{q_{\theta^*}\left(\bfXtest_{\gamma} | \bfXtest_{-\gamma}, \bfXcontext\right)} \cdot p\left(\bfXtest_{\gamma} | \bfXtest_{-\gamma}, \bfXcontext\right) \mathrm{d} \bfXtest_{\gamma} \\
        & = \int_{\Omega^k} \nabla_\theta q_{\theta^*}\left(\bfXtest_{\gamma} | \bfXtest_{-\gamma}, \bfXcontext\right) \mathrm{d} \bfXtest_{\gamma} \\
        & = \nabla_{\theta}\int_{\Omega^k} q_{\theta^*}\left(\bfXtest_{\gamma} | \bfXtest_{-\gamma}, \bfXcontext\right) \mathrm{d} \bfXtest_{\gamma} = \nabla_{\theta}1 = 0.
    \end{aligned}
    \end{equation}
    In addition, since $A \succeq 0$, we have
    \[
    \nabla^2_{\theta} L_{k+1}(\theta^*) = A + B + B^\top + C = A + \nabla^2_{\theta} L_{k}(\theta^*) \succeq \nabla^2_{\theta} L_{k}(\theta^*).
    \]
    Noting that $\Gamma_k = \big(\nabla^2_{\theta} L_k(\theta^*)\big)^{-1}$ by \Cref{eq:Gamma-k}, it follows that $\Gamma_{k+1} \preceq \Gamma_k$. Now the claim follows.
\end{proof}

\subsubsection{Generalization for Joint Distribution Learning}
The generalization error for joint distribution learning is characterized by a key concept termed \emph{approximate tensorization of entropy}. It measures the ``complexity'' of the distribution over $(\bfXtest,\bfXcontext)$ by evaluating how easily an algorithm can generate samples from the joint distribution $q(\bfXtest,\bfXcontext)$ with access to local conditional distributions $q(\cdot|\bfXcontext, \bfXtest_{-\pi})$. Technically, approximate tensorization of entropy is associated with the mixing time of Gibbs sampling dynamics, which is the sample generation algorithm to be considered.

\begin{definition}[Approximate Tensorization of Entropy~\citep{caputo2021block}]
\label{def:entropy}
For a distribution $q(\bfXtest,\bfXcontext)$ and the set of $k$-cell masks $\Pi_k$, if there exists a constant $C_k(q)$ depending on $q$ and $k$, such that for any distribution $r$ over $(\bfXtest,\bfXcontext)$,
\[
    D_{\operatorname{KL}} (r\parallel q) \leq C_k(q)\cdot \E_{(\bfXcontext, \bfXtest) \sim p, \pi \sim \uniform(\Pi_k)} \left [ D_{\operatorname{KL}} (r(\cdot|\bfXcontext, \bfXtest_{-\pi}) \parallel  q(\cdot|\bfXcontext, \bfXtest_{-\pi})) \right],
\]
then $q$ satisfies approximate tensorization of entropy with respect to the constant $C_k$ and the mask set $\Pi_k$. Let $\underline{C_k} (q)$ be the minimum of all possible constants $C_k(q)$ such that $q$ satisfies approximate tensorization of entropy.
\end{definition}

Before presenting the main result, we introduce a few regularity conditions in the parametric class $q_\theta(\bfXtest|\bfXcontext)$ that defines the model.
\begin{assumption}[Regularity Conditions in the Parametric Class~\citep{li2024promises}]
\label{assum:regularity}

\hspace{1pt}
\begin{enumerate}[leftmargin=*]
\item There exists $\beta \in (0,1)$ such that  $\forall~1\leq k \leq d,\forall~\pi \in \Pi_k$ and $\forall~\theta\in\Theta$, $p(\bfXtest_\pi|\bfXcontext,\bfXtest_{-\pi}) > 0$ implies $q_\theta(\bfXtest_\pi|\bfXcontext,\bfXtest_{-\pi}) > \beta$.
\item For any $\epsilon>0$, there exists a partition $\text{Par}(\Theta) = \{ \Theta_1,\dots,\Theta_{|\text{Par}(\Theta)|}\}$ of $\Theta$, such that $\forall~1\leq k \leq d,\forall~\pi \in \Pi_k$, $\forall~\Theta_i\in \text{Par}(\Theta)$, $\forall~\theta_1,\theta_2 \in \Theta_i$, and any $(\bfXtest,\bfXcontext)$,
\[
\left| \log q_{\theta_1}(\bfXtest_\pi|\bfXcontext, \bfXtest_{-\pi}) - \log q_{\theta_2}(\bfXtest_\pi|\bfXcontext, \bfXtest_{-\pi}) \right| 
\leq 
\frac{\epsilon}{2}.
\]
Let $N(\Theta,\epsilon)$ be the cardinality of the smallest partition $\text{Par}(\Theta)$ that satisfies the condition above.
\end{enumerate}
\end{assumption}
The first assumption implies that the true distribution $p(\cdot | \bfXcontext, \bfXtest_{-\pi})$ supports the parametric distribution $q_\theta(\cdot | \bfXcontext, \bfXtest_{-\pi})$ . The second assumption specifies the covering number of the parameter space $\Theta$ and the lipschitz continuity of the log-likelihood loss function.

\begin{theorem}[Formal Version of \Cref{theo:joint_dist}] 
\label{theo:joint_dist-full}
For $\theta \in \Theta$, assume $q_\theta(\bfXtest,\bfXcontext)$ satisfies approximate tensorization of entropy with respect to some constant $C_1(q_{\theta})$ and the mask set $\Pi_1$. Then for any $1\leq k \leq d$, $q_\theta(\bfXtest,\bfXcontext)$ satisfies approximate tensorization of entropy with respect to some constant $C_k(q_{\theta})$ and the mask set $\Pi_k$. Furthermore, $\underline{C_{k+1}}(q_{\theta}) \leq \underline{C_{k}}(q_{\theta})$.

Under \Cref{assum:regularity}, for any $\epsilon>0$ and any $\delta\in(0,\frac{1}{d})$, with probability at least $1-d\delta$, for any $1\leq k \leq d$ and any $\theta \in \Theta$,
\[
\E_{\bfXcontext\sim p} \left[ D_{\operatorname{TV}}(q_{\theta}(\cdot|\bfXcontext) \parallel p(\cdot|\bfXcontext)) \right]
<
\sqrt{
\frac{1}{2}\underline{C_k}(q_{\theta}) \left( \hat{L}_k(\theta) + B\log\frac{1}{\beta} + \epsilon \right) + C
},
\]
where $B=\sqrt{\frac{1}{\delta} \cdot (8|\Omega|)^{d(m+1)} N(\Theta, \epsilon) }
+
\sqrt{\frac{1}{2n} \cdot \log \frac{8 N(\Theta, \epsilon)}{\delta}}$, and 
$C = \sqrt{\frac{|\Omega|^{3d(m+1)}}{8\delta n}}$.
\end{theorem}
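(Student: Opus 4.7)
The statement has two components: a structural claim that approximate tensorization of entropy for $\Pi_1$ propagates to every $\Pi_k$ with a monotonically non-increasing optimal constant, and a uniform generalization bound converting the empirical loss $\hat L_k(\theta)$ into a bound on $\E_{\bfXcontext}[D_{\operatorname{TV}}(q_{\theta}(\cdot|\bfXcontext) \parallel p(\cdot|\bfXcontext))]$. The plan is to prove the two parts independently and then compose them.

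\textbf{Structural part.} The goal is to establish, for any $r$ and $k$,
\[
\E_{p,\, \pi\sim \mathrm{Unif}(\Pi_{k+1})}\!\left[\KL(r(\cdot|\bfXcontext,\bfXtest_{-\pi}) \,\|\, q_\theta(\cdot|\bfXcontext,\bfXtest_{-\pi}))\right] \;\ge\; \E_{p,\, \pi'\sim \mathrm{Unif}(\Pi_{k})}\!\left[\KL(r(\cdot|\bfXcontext,\bfXtest_{-\pi'}) \,\|\, q_\theta(\cdot|\bfXcontext,\bfXtest_{-\pi'}))\right],
\]
which yields $\underline{C_{k+1}}(q_\theta)\le \underline{C_k}(q_\theta)$ directly from \Cref{def:entropy}. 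The key tool is the chain rule of KL: for $\pi \in \Pi_{k+1}$ and $j\in\pi$, writing $\pi'=\pi\setminus\{j\}$,
\[
\KL(r(\bfXtest_\pi|\bfXtest_{-\pi}) \,\|\, q_\theta(\bfXtest_\pi|\bfXtest_{-\pi})) = \KL(r(\Xtest_j|\bfXtest_{-\pi}) \,\|\, q_\theta(\Xtest_j|\bfXtest_{-\pi})) + \E_{\Xtest_j\sim r}\!\left[\KL(r(\bfXtest_{\pi'}|\bfXtest_{-\pi'}) \,\|\, q_\theta(\bfXtest_{\pi'}|\bfXtest_{-\pi'}))\right].
\]
Averaging uniformly over $\pi\in\Pi_{k+1}$ and $j\in\pi$, re-indexing so that $\pi'$ becomes uniform over $\Pi_k$ and $j$ uniform over $[d]$, and discarding the non-negative single-coordinate residual gives the inequality. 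This is the standard Caputo--Parisi block-factorization trick adapted to the conditional-on-$\bfXcontext$ setting.

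\textbf{Generalization bound.} The plan is to chain four inequalities. First, Pinsker combined with Jensen on the outer expectation gives $\E_{\bfXcontext}[D_{\operatorname{TV}}]^2 \le \tfrac{1}{2}\E_{\bfXcontext}[\KL(p(\cdot|\bfXcontext)\,\|\,q_\theta(\cdot|\bfXcontext))]$, and the right side equals $\tfrac{1}{2}\KL(p(\bfXtest,\bfXcontext)\,\|\,q_\theta(\bfXtest,\bfXcontext))$ since $q_\theta(\bfXcontext)=p(\bfXcontext)$ by construction. Second, approximate tensorization (now available for every $k$ by the structural part) upper-bounds this by $\tfrac{1}{2}\underline{C_k}(q_\theta)\cdot\E_{p,\pi\sim\mathrm{Unif}(\Pi_k)}[\KL(p(\cdot|\bfXcontext,\bfXtest_{-\pi})\,\|\,q_\theta(\cdot|\bfXcontext,\bfXtest_{-\pi}))]$; expanding the inner KL, this expectation equals $L_k(\theta)-H_k^p$, where $H_k^p = \E_p[-\log p(\bfXtest_\pi|\bfXtest_{-\pi},\bfXcontext)] \ge 0$ in the discrete regime and can be discarded. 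Third, \Cref{assum:regularity}(1) bounds $-\log q_\theta(\bfXtest_\pi|\bfXtest_{-\pi},\bfXcontext)\le \log(1/\beta)$ on $\mathrm{supp}(p)$, so Hoeffding yields a pointwise control on $L_k(\theta)-\hat L_k(\theta)$; combining with the $\epsilon$-partition of $\Theta$ of cardinality $N(\Theta,\epsilon)$ (which provides the $\epsilon/2$ log-Lipschitz bridge between nearby $\theta$'s) lifts this to a uniform-in-$\theta$ statement contributing the Hoeffding piece $\log(1/\beta)\sqrt{\tfrac{1}{2n}\log\tfrac{8N(\Theta,\epsilon)}{\delta}}$ of $B$. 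The heavier piece $\log(1/\beta)\sqrt{\tfrac{1}{\delta}(8|\Omega|)^{d(m+1)}N(\Theta,\epsilon)}$ I would obtain via a Markov-type bound on a second-moment expression on the finite product space $\Omega^{d(m+1)}$, capturing the heavy-tailed deviations that Hoeffding alone cannot absorb. The additive $C=\sqrt{|\Omega|^{3d(m+1)}/(8\delta n)}$ arises analogously as a Markov-type remainder when replacing $p$ by its empirical counterpart on the full finite support. A union bound over the $d$ values of $k$ converts the per-$k$ guarantee into the $1-d\delta$ event, and substituting into the Pinsker--tensorization chain and taking a square root yields the displayed bound.

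\textbf{Main obstacle.} The most delicate step is the monotonicity in the structural part: symmetrizing over the uniform pair $(\pi,j)$ to recover clean expectations under $\Pi_k$ while the conditioning set shifts from $\bfXtest_{-\pi}$ to $\bfXtest_{-\pi'}$ requires careful bookkeeping, and one must verify that the single-coordinate residual is genuinely non-negative even though the outer expectation is under $p$ rather than $r$. The secondary obstacle is matching the exact closed-form constants $B$ and $C$ in the generalization bound: the factors $(8|\Omega|)^{d(m+1)}$ and $|\Omega|^{3d(m+1)}$ indicate that the argument relies on Markov-type control of second-moment quantities on the finite joint support, and these pieces must be combined with the Hoeffding term and the covering contribution so that their sum reproduces the stated form.
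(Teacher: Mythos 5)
Your overall architecture matches the paper's. For the monotonicity claim, the paper argues exactly as you do: it writes a uniform $\pi\in\Pi_{k+1}$ as $\pi'\cup\{a\}$ with $\pi'\sim\uniform(\Pi_k)$ and $a\sim\uniform([d]\setminus\pi')$, and then drops the single-coordinate term via the chain rule / data-processing inequality for $D_{\operatorname{KL}}$, concluding $\underline{C_{k+1}}(q_\theta)\le\underline{C_k}(q_\theta)$ by induction from the $\Pi_1$ hypothesis. The subtlety you flag as your "main obstacle" is real and is, notably, not resolved in the paper either: the chain-rule residual carries an inner expectation over the dropped coordinate under $r$, while the outer expectation in the paper's Definition of approximate tensorization is under $p$; the paper simply asserts the data-processing inequality without reconciling the two measures. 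So you have correctly located the one genuinely delicate step, and neither you nor the paper closes it for general $r$ (it is unproblematic in the only instance that matters downstream, $r=p$).

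Where you diverge is the generalization bound. The paper does not re-derive it: it restates Theorem 4 of \citet{li2024promises} as a lemma, checks its three hypotheses with $\bfX=(\bfXcontext,\bfXtest)\in\Omega^{d(m+1)}$ and $l=1$ mask per sample (using $\pi\perp(\bfXcontext,\bfXtest)$), applies a union bound over the $d$ values of $k$ to get the $1-d\delta$ event, and finishes with the identity $\E_{\bfXcontext\sim p}[D_{\operatorname{TV}}(q_\theta(\cdot|\bfXcontext)\parallel p(\cdot|\bfXcontext))]=D_{\operatorname{TV}}(q_\theta(\bfXtest,\bfXcontext)\parallel p(\bfXtest,\bfXcontext))$, which holds because $q_\theta(\bfXcontext,\bfXtest)\coloneqq q_\theta(\bfXtest|\bfXcontext)p(\bfXcontext)$. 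Your plan instead reconstructs the interior of that cited theorem (Pinsker plus Jensen, tensorization with $r=p$, discarding the nonnegative discrete conditional entropy, Hoeffding plus covering plus Markov-type second-moment control on the finite support). The skeleton is right and your union bound and joint-versus-conditional conversion agree with the paper's; what you buy is a self-contained argument, and what it costs you is exactly what you admit: you do not actually derive the constants $B$ and $C$, and the specific factors $(8|\Omega|)^{d(m+1)}$ and $|\Omega|^{3d(m+1)}$ are only gestured at. As a proof of the theorem as stated, with those precise constants, your write-up is therefore incomplete unless you either carry out the concentration argument in full or do what the paper does and invoke the external result.
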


With a sufficiently large sample size $n$ and a sufficiently small loss value $\hat L_k(\theta)$, the upper bound is dominated by the term $\sqrt{\frac{1}{2}\underline{C_k}(q_{\theta})(B\log\frac{1}{\beta}+\epsilon)}$, which is scaled by $\underline{C_k}(q_{\theta})$, the constant for the approximate tensorization of entropy. The theorem implies a reduced upper bound for the estimation error of the joint distribution with an increasing number of masked cells.

We prove the monotonicity of $\underline{C_{k}}(q_{\theta})$ with respect to $k$ in \Cref{prop:app_monotonous}, and prove the upper bound for generalization error in \Cref{prop:app_joint_bound}.

\begin{proposition}
\label{prop:app_monotonous}
For $\theta \in \Theta$, assume $q_\theta(\bfXtest,\bfXcontext)$ satisfies approximate tensorization of entropy with respect to some constant $C_1(q_{\theta})$ and the mask set $\Pi_1$. Then for any $1\leq k \leq d$, $q_\theta(\bfXtest,\bfXcontext)$ satisfies approximate tensorization of entropy with respect to some constant $C_k(q_{\theta})$ and the mask set $\Pi_k$. Furthermore, $\underline{C_{k+1}}(q_{\theta}) \leq \underline{C_{k}}(q_{\theta})$.
\end{proposition}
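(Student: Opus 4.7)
The plan is to reduce both claims to a single monotonicity inequality for the averaged block KL, namely $\mathcal{E}_{k+1}(r) \geq \mathcal{E}_k(r)$, where for fixed $\theta \in \Theta$ and any candidate distribution $r$ over $(\bfXtest, \bfXcontext)$ I abbreviate
\[
\mathcal{E}_k(r) := \E_{(\bfXcontext, \bfXtest) \sim p,\, \pi \sim \uniform(\Pi_k)}\!\big[D_{\operatorname{KL}}(r(\cdot|\bfXcontext, \bfXtest_{-\pi}) \| q_\theta(\cdot|\bfXcontext, \bfXtest_{-\pi}))\big].
\]
Granted this monotonicity, the hypothesis $D_{\operatorname{KL}}(r\|q_\theta) \leq C_1(q_\theta)\mathcal{E}_1(r)$ chains into $D_{\operatorname{KL}}(r\|q_\theta) \leq C_1(q_\theta)\mathcal{E}_k(r)$ at every level $k$, which both exhibits an admissible $C_k(q_\theta)$ and, after taking infima over admissible constants, yields $\underline{C_{k+1}}(q_\theta) \leq \underline{C_k}(q_\theta)$.

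The monotonicity will be proved by combining a chain-rule identity with a combinatorial symmetry. Fix any $\pi \in \Pi_{k+1}$, any $j \in \pi$, and set $\gamma := \pi \setminus \{j\} \in \Pi_k$. Using the identification $(\Xtest_j, \bfXtest_{-\pi}) = \bfXtest_{-\gamma}$, the chain rule for KL divergence gives
\begin{align*}
& D_{\operatorname{KL}}(r(\bfXtest_\pi|\bfXtest_{-\pi},\bfXcontext)\|q_\theta(\bfXtest_\pi|\bfXtest_{-\pi},\bfXcontext)) \\
& \quad = D_{\operatorname{KL}}(r(\Xtest_j|\bfXtest_{-\pi},\bfXcontext)\|q_\theta(\Xtest_j|\bfXtest_{-\pi},\bfXcontext)) \\
& \qquad + \E_{\Xtest_j \sim r(\cdot|\bfXtest_{-\pi},\bfXcontext)}\!\big[D_{\operatorname{KL}}(r(\bfXtest_\gamma|\bfXtest_{-\gamma},\bfXcontext)\|q_\theta(\bfXtest_\gamma|\bfXtest_{-\gamma},\bfXcontext))\big].
\end{align*}
The first summand is non-negative. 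Averaging over $\pi \sim \uniform(\Pi_{k+1})$ and $j \sim \uniform(\pi)$, the elementary identity $\tfrac{1}{\binom{d}{k+1}(k+1)} = \tfrac{1}{\binom{d}{k}(d-k)}$ shows that the induced joint law on $(\gamma, j)$ matches sampling $\gamma \sim \uniform(\Pi_k)$ followed by $j \sim \uniform([d]\setminus\gamma)$, so after symmetrization the second summand assembles into an average of size-$k$ conditional KLs indexed by $\gamma \in \Pi_k$.

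The main obstacle, and the technically delicate step, is a measure mismatch: after taking expectation of $\bfXtest_{-\pi}$ under $p$ and of $\Xtest_j$ under $r(\cdot|\bfXtest_{-\pi},\bfXcontext)$, the effective distribution of $\bfXtest_{-\gamma}$ is the mixture $\nu(\bfXtest_{-\gamma}) := p(\bfXtest_{-\pi})\,r(\Xtest_j|\bfXtest_{-\pi},\bfXcontext)$ rather than the target $p(\bfXtest_{-\gamma})$ appearing in $\mathcal{E}_k(r)$, so the chain-rule lower bound does not directly compare to $\mathcal{E}_k(r)$. To close this gap, I plan to invoke \cref{assum:regularity}(1), which furnishes $q_\theta(\cdot|\bfXtest_{-\pi},\bfXcontext) \geq \beta$ on $\operatorname{supp}(p)$; combined with absolute continuity of $r$ with respect to $q_\theta$ on that support, this bounds the Radon--Nikodym derivative $d\nu/dp$ uniformly, and a standard change-of-measure step converts the $\nu$-average into a $p$-average at a multiplicative cost that can be absorbed into $C_k(q_\theta)$. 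An alternative route, if the direct change-of-measure bound proves too loose, is to first establish the discrete-hypercube version via the block-factorization monotonicity of \citet{caputo2021block} and then lift to the present conditional setting by Fubini after conditioning on $\bfXcontext$. Either way, once the measure-swap lemma is in place, plugging back into the symmetrized chain-rule identity delivers $\mathcal{E}_{k+1}(r) \geq \mathcal{E}_k(r)$ and both assertions of the proposition follow.
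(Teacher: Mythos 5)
Your skeleton matches the paper's: reduce both claims to the monotonicity $\mathcal{E}_{k+1}(r)\ge\mathcal{E}_k(r)$ of the averaged block KL, obtain it from the chain rule for KL after peeling one coordinate $j$ off each $(k+1)$-mask, and use the resampling identity $\bigl(\pi\sim\uniform(\Pi_{k+1}),\,j\sim\uniform(\pi)\bigr)\equiv\bigl(\gamma\sim\uniform(\Pi_k),\,j\sim\uniform([d]\setminus\gamma)\bigr)$. You have also correctly isolated the one delicate point---after the chain rule the freed coordinate $\Xtest_j$ is averaged under $r(\cdot\mid\bfXcontext,\bfXtest_{-\pi})$ rather than under the measure appearing in the outer average---which the paper's own proof passes over with a one-line appeal to the data-processing inequality.

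The gap is in your repair. First, a multiplicative change-of-measure cost cannot be ``absorbed into $C_k(q_\theta)$'' without losing the second assertion: if you only establish $\mathcal{E}_{k+1}(r)\ge M^{-1}\mathcal{E}_k(r)$ for some $M>1$, you obtain $\underline{C_{k+1}}(q_\theta)\le M\,\underline{C_k}(q_\theta)$ rather than $\underline{C_{k+1}}(q_\theta)\le\underline{C_k}(q_\theta)$, and the exact monotonicity of the minimal constants is precisely what \Cref{theo:joint_dist-full} consumes; your patch would at best prove the first assertion. Second, the uniform Radon--Nikodym bound you invoke is not available: the relevant ratio is $r(\Xtest_j\mid\bfXtest_{-\pi},\bfXcontext)/p(\Xtest_j\mid\bfXtest_{-\pi},\bfXcontext)$, and \Cref{assum:regularity}(1) lower-bounds $q_\theta$, which controls neither $r$ from above nor $p$ from below; moreover \Cref{assum:regularity} is not among the hypotheses of this proposition, so invoking it changes the statement. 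The clean resolution needs no change of measure at all: the tensorization functional of \citet{caputo2021block}---and the one the chain-rule computation actually produces---carries its outer expectation over the trial distribution $r$, i.e.\ $\E_{(\bfXcontext,\bfXtest)\sim r}$. Under that reading the composite law of $(\bfXtest_{-\pi},\Xtest_j)=\bfXtest_{-\gamma}$, with the outer average under $r$ and the inner average under $r(\cdot\mid\bfXcontext,\bfXtest_{-\pi})$, is exactly $r$; the $k$-level term is recovered with no loss and both assertions follow with the exact constant. With the outer expectation taken literally under $p$ as written in \Cref{def:entropy}, the inequality $\mathcal{E}_{k+1}(r)\ge\mathcal{E}_k(r)$ can in fact fail (concentrate the discrepancy between $r$ and $q_\theta$ on a value of $\Xtest_j$ that is rare under $r$ but common under $p$), so no change-of-measure argument can close the step in that form; the definition must be read with the $r$-average for the proposition to hold.
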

\begin{proof}[Proof of \Cref{prop:app_monotonous}]
Since $q_\theta(\bfXtest,\bfXcontext)$ satisfies approximate tensorization of entropy with respect to some constant $C_1(q_{\theta})$ and the mask set $\Pi_1$, for any distribution $r$ over $(\bfXtest,\bfXcontext)$,
\[
D_{\operatorname{KL}} (r\parallel q_\theta) \leq C_1(q_\theta)\cdot \E_{(\bfXcontext, \bfXtest) \sim p, \pi \sim \uniform(\Pi_1)} \left [ D_{\operatorname{KL}} (r(\cdot|\bfXcontext, \bfXtest_{-\pi}) \parallel  q_\theta(\cdot|\bfXcontext, \bfXtest_{-\pi})) \right].
\]
By \Cref{def:entropy}, 
\[
D_{\operatorname{KL}} (r\parallel q_\theta) \leq \underline{C_1}(q_\theta)\cdot \E_{(\bfXcontext, \bfXtest) \sim p, \pi \sim \uniform(\Pi_1)} \left [ D_{\operatorname{KL}} (r(\cdot|\bfXcontext, \bfXtest_{-\pi}) \parallel  q_\theta(\cdot|\bfXcontext, \bfXtest_{-\pi})) \right].
\]
We prove the proposition by deduction. Assume for some $1\leq k < d$, $q_\theta(\bfXtest,\bfXcontext)$ satisfies approximate tensorization of entropy with respect to some constant $C_k(q_{\theta})$ and the mask set $\Pi_k$. It follows that
\begin{align}
\label{eq:app_monotonous_1}
D_{\operatorname{KL}} (r\parallel q_\theta) \leq \underline{C_k}(q_\theta)\cdot \E_{(\bfXcontext, \bfXtest) \sim p, \pi \sim \uniform(\Pi_k)} \left [ D_{\operatorname{KL}} (r(\cdot|\bfXcontext, \bfXtest_{-\pi}) \parallel  q_\theta(\cdot|\bfXcontext, \bfXtest_{-\pi})) \right].
\end{align}
We have
\begin{align}
\begin{split}
\label{eq:app_monotonous_2}
&\quad \E_{(\bfXcontext, \bfXtest) \sim p, \pi \sim \uniform(\Pi_{k+1})} \left [ D_{\operatorname{KL}} (r(\cdot|\bfXcontext, \bfXtest_{-\pi}) \parallel  q_\theta(\cdot|\bfXcontext, \bfXtest_{-\pi})) \right]  \\
&= \E_{(\bfXcontext, \bfXtest) \sim p, \pi \sim \uniform(\Pi_{k}), a \sim \uniform([d] \backslash \pi)} \left [ D_{\operatorname{KL}} (r(\cdot|\bfXcontext, \bfXtest_{-\pi \cup \{a\}}) \parallel  q_\theta(\cdot|\bfXcontext, \bfXtest_{-\pi \cup \{a\}})) \right] \\
&\geq \E_{(\bfXcontext, \bfXtest) \sim p, \pi \sim \uniform(\Pi_{k}), a \sim \uniform([d] \backslash \pi)} \left [ D_{\operatorname{KL}} (r(\cdot|\bfXcontext, \bfXtest_{-\pi}) \parallel  q_\theta(\cdot|\bfXcontext, \bfXtest_{-\pi})) \right] \\
&= \E_{(\bfXcontext, \bfXtest) \sim p, \pi \sim \uniform(\Pi_{k})} \left [ D_{\operatorname{KL}} (r(\cdot|\bfXcontext, \bfXtest_{-\pi}) \parallel  q_\theta(\cdot|\bfXcontext, \bfXtest_{-\pi})) \right].
\end{split}
\end{align}
The inequality follows from the data processing inequality: 
\[
D_{\operatorname{KL}}(p(x,y) \parallel q(x,y)) = \E_x\left[ D_{\operatorname{KL}}(p(y|x) \parallel q(y|x)) \right] + D_{\operatorname{KL}}(p(x) \parallel q(x)) \geq  \E_x\left[ D_{\operatorname{KL}}(p(y|x) \parallel q(y|x)) \right].
\]
Combining \Cref{eq:app_monotonous_1,eq:app_monotonous_2},
\begin{align*}
D_{\operatorname{KL}} (r\parallel q_\theta) \leq \underline{C_k}(q_\theta)\cdot \E_{(\bfXcontext, \bfXtest) \sim p, \pi \sim \uniform(\Pi_{k+1})} \left [ D_{\operatorname{KL}} (r(\cdot|\bfXcontext, \bfXtest_{-\pi}) \parallel  q_\theta(\cdot|\bfXcontext, \bfXtest_{-\pi})) \right].
\end{align*}
Therefore, $q_\theta(\bfXtest,\bfXcontext)$ satisfies approximate tensorization of entropy with respect to $\underline{C_k}(q_{\theta})$ and the mask set $\Pi_{k+1}$. It follows that $\underline{C_{k+1}}(q_{\theta}) \leq \underline{C_k}(q_{\theta})$. By deduction, for any $1\leq k \leq d$, $q_\theta(\bfXtest,\bfXcontext)$ satisfies approximate tensorization of entropy with respect to some constant $C_k(q_{\theta})$ and the mask set $\Pi_{k}$.
\end{proof}

\begin{proposition}
\label{prop:app_joint_bound}
Under \Cref{assum:regularity} and the condition in \Cref{prop:app_monotonous}, for any $\epsilon>0$ and any $\delta\in(0,\frac{1}{d})$, with probability at least $1-d\delta$, for any $1\leq k \leq d$ and any $\theta \in \Theta$,
\[
\E_{\bfXcontext\sim p} \left[ D_{\operatorname{TV}}(q_{\theta}(\cdot|\bfXcontext) \parallel p(\cdot|\bfXcontext)) \right]
<
\sqrt{
\frac{1}{2}\underline{C_k}(q_{\theta}) \left( \hat{L}_k(\theta) + B\log\frac{1}{\beta} + \epsilon \right) + C
},
\]
where $B=\sqrt{\frac{1}{\delta} \cdot (8|\Omega|)^{d(m+1)} N(\Theta, \epsilon) }
+
\sqrt{\frac{1}{2n} \cdot \log \frac{8 N(\Theta, \epsilon)}{\delta}}$, and 
$C = \sqrt{\frac{|\Omega|^{3d(m+1)}}{8\delta n}}$.
\end{proposition}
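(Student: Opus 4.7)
The plan is to chain four standard inequalities—Pinsker, a joint-KL identity, approximate tensorization of entropy, and uniform concentration over $\Theta$—with a final union bound over $k \in [d]$.

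\textbf{Step 1 (TV to KL).} First, I apply Pinsker's inequality pointwise in $\bfXcontext$ and then Jensen's inequality on the square root to obtain
\[
\left(\E_{\bfXcontext \sim p}\bigl[D_{\operatorname{TV}}(q_\theta(\cdot|\bfXcontext) \parallel p(\cdot|\bfXcontext))\bigr]\right)^2 \leq \tfrac{1}{2}\, \E_{\bfXcontext \sim p}\bigl[D_{\operatorname{KL}}(p(\cdot|\bfXcontext) \parallel q_\theta(\cdot|\bfXcontext))\bigr].
\]
Because both $p$ and $q_\theta$ (as overloaded in the preceding appendix) share the marginal $p(\bfXcontext)$, the right-hand side equals $\tfrac{1}{2}\,D_{\operatorname{KL}}(p(\bfXcontext,\bfXtest) \parallel q_\theta(\bfXcontext,\bfXtest))$ on the joint distribution.

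\textbf{Step 2 (tensorization).} Instantiating \Cref{def:entropy} with $r = p$ gives
\[
D_{\operatorname{KL}}(p \parallel q_\theta) \leq \underline{C_k}(q_\theta) \cdot \E_{(\bfXcontext,\bfXtest) \sim p,\; \pi \sim \uniform(\Pi_k)}\bigl[D_{\operatorname{KL}}(p(\cdot|\bfXcontext,\bfXtest_{-\pi}) \parallel q_\theta(\cdot|\bfXcontext,\bfXtest_{-\pi}))\bigr].
\]
Expanding the inner KL as $\E_p[-\log q_\theta] - \E_p[-\log p]$ and noting that the second term is the non-negative conditional entropy $H_k$ of $p$ (in the discrete setting), I bound the inner expectation by $L_k(\theta) - H_k \leq L_k(\theta)$.

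\textbf{Step 3 (uniform concentration).} Next I replace $L_k(\theta)$ by $\hat{L}_k(\theta)$ uniformly in $\theta$. \Cref{assum:regularity}(1) yields the per-sample bound $|\log q_\theta(\bfXtest_\pi|\bfXcontext,\bfXtest_{-\pi})| \leq \log(1/\beta)$, while \Cref{assum:regularity}(2) supplies an $\epsilon/2$-cover $\{\theta_1,\dots,\theta_{N(\Theta,\epsilon)}\}$ of $\Theta$. For each representative $\theta_i$, Hoeffding's inequality yields $L_k(\theta_i) - \hat L_k(\theta_i) \leq \log(1/\beta)\sqrt{\log(8N(\Theta,\epsilon)/\delta)/(2n)}$ with probability $1-\delta/(8N(\Theta,\epsilon))$; a union bound over the cover, combined with the Lipschitz property to extend from representatives to all $\theta$ at an additive cost of $\epsilon$, produces the second summand of $B$ and the additive $\epsilon$ in the statement. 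The first summand of $B$, together with the additive $C$, is obtained by a separate Markov-type high-probability bound on the empirical marginal over the finite support $\Omega^{d(m+1)}$: applying Markov's inequality to a non-negative statistic indexed by the $(8|\Omega|)^{d(m+1)}$ cells of the data space and to the $N(\Theta,\epsilon)$ cover of parameters gives the residual term $\log(1/\beta)\sqrt{(8|\Omega|)^{d(m+1)} N(\Theta,\epsilon)/\delta}$, while an analogous third-moment bookkeeping yields $C = \sqrt{|\Omega|^{3d(m+1)}/(8\delta n)}$. This follows the strategy of \citet{li2024promises}.

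\textbf{Step 4 (aggregation).} Finally, combining Steps 1--3, union-bounding over $k \in [d]$ (which produces the $1-d\delta$ guarantee), and taking square roots delivers the stated inequality. The chief technical obstacle is Step 3: the $(8|\Omega|)^{d(m+1)}$ factor is combinatorial in the support size and does not shrink with $n$, so matching it cleanly requires constructing an auxiliary non-negative random variable whose first-moment bound — when controlled by Markov — absorbs the exponential dependence on $d(m+1)$ without forfeiting the $\log(1/\beta)$ and $\epsilon$ scaling needed to align with $B$. Carefully splitting the loss gap into a covering-error part (handled by Hoeffding) and a support-enumeration part (handled by Markov), and tracking every constant so that they coincide with the displayed $B$ and $C$, is where most of the labor lies.
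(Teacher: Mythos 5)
Your skeleton (Pinsker pointwise plus Jensen, then approximate tensorization with $r=p$, then uniform concentration, then a union bound over $k$) is the right shape, and Steps 1, 2 and 4 are sound: in particular your observation that the shared marginal $p(\bfXcontext)$ turns the expected conditional KL into the joint KL is correct, and parallels the identity the paper proves at the TV level, namely $\E_{\bfXcontext\sim p}[D_{\operatorname{TV}}(q_\theta(\cdot|\bfXcontext)\parallel p(\cdot|\bfXcontext))]=D_{\operatorname{TV}}(q_\theta(\bfXtest,\bfXcontext)\parallel p(\bfXtest,\bfXcontext))$. The paper, however, does not re-derive the concentration machinery at all: it invokes Theorem~4 of \citet{li2024promises} verbatim as \Cref{lem:li_joint}, applied to the joint variable $(\bfXcontext,\bfXtest)\in\Omega^{d(m+1)}$ with $l=1$ mask per sample (which is exactly why the lemma's $d$ becomes $d(m+1)$ and its $\frac{1}{l\delta}$ becomes $\frac{1}{\delta}$ in $B$), after checking that $\pi\perp\mkern-9mu\perp(\bfXcontext,\bfXtest)$ so the lemma's mask-distribution hypothesis holds. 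The only original content of the paper's proof is the union bound over $k$ and the TV identity above.

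The genuine gap is your Step 3. You correctly identify the Hoeffding-plus-covering mechanism behind the second summand of $B$ and the additive $\epsilon$, but the first summand of $B$ and the constant $C$ are not derived: "a separate Markov-type high-probability bound on the empirical marginal over the finite support" and "analogous third-moment bookkeeping" are placeholders, not arguments. You do not construct the non-negative statistic to which Markov is applied, do not explain why its expectation scales as $(8|\Omega|)^{d(m+1)}N(\Theta,\epsilon)$ per unit of $\log(1/\beta)$, and do not show why $C$ sits additively outside the $\frac{1}{2}\underline{C_k}(q_\theta)(\cdot)$ factor rather than inside it (it arises from controlling the gap between the population KL and the population cross-entropy $L_k(\theta)$, i.e.\ the entropy term you discarded in Step 2, not from the covering argument). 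You acknowledge this is "where most of the labor lies," which is an admission that the proof is unfinished. Either complete that derivation or do what the paper does: cite \Cref{lem:li_joint} as a black box on $(\bfXcontext,\bfXtest)$, verify its hypotheses (your \Cref{assum:regularity} conditions map onto its conditions 2 and 3, and \Cref{prop:app_monotonous} supplies condition 1 for every $k$), and then convert joint TV to expected conditional TV.
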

The proposition is a corollary from Theorem~4 in \citet{li2024promises}, which provides an upper bound for $D_{\operatorname{TV}}(q_{\theta} \parallel p)$ in our setting. The remaining gap is an extension of the result to total variation between conditional distributions. We present \citet{li2024promises}'s result as a lemma.
\begin{lemma}[\citet{li2024promises}, Theorem~4]
\label{lem:li_joint}
Consider random variables $\bfX \in \Omega^{d}$ and $\pi \subset [d]$. We are given $n$ i.i.d. samples $\{\rvx^{(i)}\}_{i=1}^n$ drawn from $p(\bfX)$. For each sample $\rvx^{(i)}$, we observe $l$ i.i.d. masks $\{\pi_j^{(i)}\}_{j=1}^l$ drawn from $p(\pi|\rvx^{(i)})$. Consider the empirical loss function:
\[
\hat{L}(\theta) = \frac{1}{nl} \sum_{i=1}^n \sum_{j=1}^l -\log q_{\theta}\left(\rvx_{\pi_j^{(i)}}^{(i)}|  \rvx_{-\pi_j^{(i)}}^{(i)}, \pi_j^{(i)} \right).
\]
Suppose the following conditions are met:
\begin{enumerate}[leftmargin=*]
\item There exists a constant $C(q_\theta)$ depending on $q_\theta$, such that for any distribution $r$ over $\bfX$,
\[
    D_{\operatorname{KL}} (r\parallel q) \leq C(q_\theta)\cdot \E_{\bfX \sim p(\bfX), \pi \sim p(\pi|\bfX)} \left [ D_{\operatorname{KL}} (r(\cdot|\bfX_{-\pi},\pi) \parallel  q(\cdot|\bfX_{-\pi},\pi) \right].
\]
\item There exists $\beta \in (0,1)$ such that  $\forall~\pi \subset [d]$ and $\forall~\theta\in\Theta$, $p(\rvx_\pi|\rvx_{-\pi},\pi) > 0$ implies $q_\theta(\rvx_\pi|\rvx_{-\pi},\pi) > \beta$.
\item For any $\epsilon>0$, there exists a partition $\text{Par}(\Theta) = \{ \Theta_1,\dots,\Theta_{|\text{Par}(\Theta)|}\}$ of $\Theta$, such that $\forall~\pi \subset [d]$, $\forall~\Theta_i\in \text{Par}(\Theta)$, $\forall~\theta_1,\theta_2 \in \Theta_i$, and any $\rvx$,
\[
\left| \log q_{\theta_1}(\rvx_\pi|\rvx_{-\pi},\pi) - \log q_{\theta_2}(\rvx_\pi|\rvx_{-\pi},\pi) \right| 
\leq 
\frac{\epsilon}{2}.
\]
Let $N(\Theta,\epsilon)$ be the cardinality of the smallest partition $\text{Par}(\Theta)$ that satisfies the condition above.
\end{enumerate}
Then for any $\epsilon>0$ and any $\delta\in(0,1)$, with probability at least $1-\delta$, for any $\theta \in \Theta$,
\[
 D_{\operatorname{TV}}(q_{\theta}(\bfX) \parallel p(\bfX)) 
<
\sqrt{
\frac{1}{2}C(q_{\theta}) \left( \hat{L}(\theta) + B\log\frac{1}{\beta} + \epsilon \right) + C
},
\]
where $B=\sqrt{\frac{1}{l\delta} \cdot (8|\Omega|)^{d} N(\Theta, \epsilon) }
+
\sqrt{\frac{1}{2n} \cdot \log \frac{8 N(\Theta, \epsilon)}{\delta}}$, and 
$C = \sqrt{\frac{|\Omega|^{3d}}{8\delta n}}$.
\end{lemma}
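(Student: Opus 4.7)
The plan is to chain four reductions: Pinsker's inequality (from total variation to KL), approximate tensorization (from joint KL to averaged local conditional KL), a log-loss identity (from local KL to population loss minus conditional entropy), and a uniform-concentration argument (from population loss to empirical loss). The first three are algebraic consequences of the structural hypotheses; the last step is where all quantitative constants enter and is the delicate part.

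For the algebraic pieces, Pinsker gives $D_{\operatorname{TV}}(q_\theta\,\|\,p)^2\le\tfrac{1}{2}D_{\operatorname{KL}}(p\,\|\,q_\theta)$, condition~1 yields
\begin{equation*}
D_{\operatorname{KL}}(p\,\|\,q_\theta)\le C(q_\theta)\cdot\E_{\bfX\sim p,\,\pi\sim p(\pi|\bfX)}\!\left[D_{\operatorname{KL}}\!\left(p(\cdot|\bfX_{-\pi},\pi)\,\|\,q_\theta(\cdot|\bfX_{-\pi},\pi)\right)\right],
\end{equation*}
and expanding the local KL and folding the inner conditional expectation over $\bfX_\pi$ into the outer expectation shows that this right-hand side equals $L(\theta) - H$, with $H:=\E_\pi[H_p(\bfX_\pi|\bfX_{-\pi})]\ge 0$ since $\Omega$ is discrete. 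Chaining the three steps gives $D_{\operatorname{TV}}^2 \le \tfrac{1}{2} C(q_\theta) L(\theta)$, so the task reduces to upper-bounding $L(\theta)$ by $\hat L(\theta)$ plus controllable deviations, uniformly over $\theta\in\Theta$.

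For the uniform concentration step I would combine three ingredients. First, for a fixed $\theta$, condition~2 bounds each per-sample log-loss $-\log q_\theta(\bfX_\pi|\bfX_{-\pi},\pi)$ by $\log(1/\beta)$ on the support of $p$, so the $l$-mask average per sample lies in $[0,\log(1/\beta)]$, and Hoeffding on the $n$ i.i.d.\ outer samples gives $|\hat L(\theta)-L(\theta)|\lesssim \log(1/\beta)\sqrt{\log(1/\delta)/(2n)}$. Second, condition~3 provides an $\epsilon/2$-cover of $\Theta$ of cardinality $N(\Theta,\epsilon)$; replacing $\Theta$ by its $N(\Theta,\epsilon)$ representatives costs an additive $\epsilon$ in the loss while a union bound inflates the Hoeffding exponent by $\log N(\Theta,\epsilon)$, producing the term $\sqrt{(1/(2n))\log(8N(\Theta,\epsilon)/\delta)}$ in $B$ together with the additive $\epsilon$. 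Third, the remaining combinatorial terms $\sqrt{(8|\Omega|)^d N(\Theta,\epsilon)/(l\delta)}$ in $B$ and $C=\sqrt{|\Omega|^{3d}/(8\delta n)}$ come from separately controlling, via a Chebyshev-style moment bound, the mismatch between empirical and population local conditionals across the entire alphabet of size $|\Omega|^d$; this handles the ``off-support'' contributions that Hoeffding cannot, at the cost of combinatorial factors that scale exponentially in $d$ but only polynomially in $1/\delta$.

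The hardest part is orchestrating these three sources of randomness---the $n$ outer samples, the $l$ mask draws per sample, and the $N(\Theta,\epsilon)$ parameter cells---so that the total failure probability stays at $\delta$. The $\log(1/\beta)$ bound from condition~2 is precisely what lets Hoeffding apply on-support, while the covering-number assumption must be compatible with every mask pattern simultaneously, which is why $N(\Theta,\epsilon)$ multiplies $(8|\Omega|)^d$ inside $B$. Carefully splitting the $\delta$ budget among on-support Hoeffding, off-support Chebyshev, and the cover-based union bound---so that the sum of all contributions fits inside one square root with the stated constants---is the main bookkeeping obstacle of the proof.
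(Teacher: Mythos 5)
First, a point of reference: the paper does not prove this statement at all. It is imported verbatim as Theorem~4 of \citet{li2024promises}, and the only original content the paper adds is the remark that the cited proof, though stated for the empirical minimizer $\htheta$, in fact holds uniformly over $\theta \in \Theta$. So there is no in-paper argument to compare yours against; what can be judged is whether your reconstruction would actually deliver the stated bound.

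Your deterministic skeleton is right and is certainly the skeleton of the cited proof: Pinsker reduces $D_{\operatorname{TV}}(q_\theta \parallel p)$ to $D_{\operatorname{KL}}(p \parallel q_\theta)$, condition~1 applied with $r = p$ tensorizes this into the averaged local conditional KL, and unrolling that expectation gives $L(\theta) - H$ with $H \ge 0$, so everything reduces to $L(\theta) \le \hat L(\theta) + (\text{deviation})$. The gap is that the deviation step --- which carries the entire quantitative content of the lemma --- is not carried out, and your guesses about where the constants come from do not fit the shape of the stated bound. Concretely, observe that $C = \sqrt{|\Omega|^{3d}/(8\delta n)}$ sits \emph{outside} the product $\tfrac{1}{2}C(q_\theta)(\cdot)$ but \emph{inside} the square root. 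If $C$ arose, as you propose, from an additive control of $L(\theta) - \hat L(\theta)$, it would be multiplied by $\tfrac{1}{2}C(q_\theta)$ like the $B\log(1/\beta) + \epsilon$ terms; and if it arose from a triangle inequality on total variation it would appear as $+\sqrt{C}$ outside the root rather than $+C$ inside it (note $\sqrt{a} + \sqrt{c} \ne \sqrt{a+c}$, and the inequality goes the wrong way for you). So $C$ must enter the chain at a different point --- most plausibly from replacing a population quantity by an empirical one \emph{before} the tensorization/Pinsker step and controlling that substitution separately --- and your sketch has no slot for such a term. Likewise the $1/(l\delta)$ scaling inside $B$ indicates a Markov/Chebyshev-type bound on the within-sample mask averaging, which your "off-support alphabet" story does not derive. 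Since the lemma's assertion \emph{is} the precise form of $B$ and $C$, leaving their derivation as "bookkeeping" means the proof is not established; as written, the correct disposition is to cite \citet{li2024promises} for the concentration argument rather than claim to have reconstructed it.
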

\begin{remark}
Theorem~4 in \citet{li2024promises} is specified for $\hat \theta$ as the minimizer of the empirical loss function. In fact, the proof applies uniformly to arbitrarily $\theta \in \Theta$.
\end{remark}

\begin{proof}[Proof of \Cref{prop:app_joint_bound}]
For each pair of $(\bfxcontexti, \bfxtesti)$, exactly one mask $\pi_i$ is drawn independently from $\uniform(\Pi_k)$. 
Therefore, $\pi \perp \mkern-9mu \perp (\bfXcontext,\bfXtest)$. 
It follows from \Cref{lem:li_joint} that for each $1\leq k \leq d$, 
for any $\epsilon>0$ and any $\delta\in(0,1)$, 
with probability at least $1-\delta$, for any $\theta\in\Theta$,
\begin{align}
\label{eq:app_prop_joint_bound_1}
 D_{\operatorname{TV}}(q_{\theta}(\bfXtest,\bfXcontext) \parallel p(\bfXtest,\bfXcontext)) 
<
\sqrt{
\frac{1}{2}\underline{C_k}(q_{\theta}) \left( \hat{L}_k(\theta) + B\log\frac{1}{\beta} + \epsilon \right) + C
},
\end{align}
where $B=\sqrt{\frac{1}{\delta} \cdot (8|\Omega|)^{d(m+1)} N(\Theta, \epsilon) }
+
\sqrt{\frac{1}{2n} \cdot \log \frac{8 N(\Theta, \epsilon)}{\delta}}$, and 
$C = \sqrt{\frac{|\Omega|^{3d(m+1)}}{8\delta n}}$.

By union bound, for any $\epsilon>0$ and any $\delta \in (0, \frac{1}{d})$, with probability at least $1-d\delta$, \Cref{eq:app_prop_joint_bound_1} is satisfied for any $1\leq k\leq d$ and any $\theta \in \Theta$. 

Furthermore, we have
\begin{align}
\begin{split}
\label{eq:app_prop_joint_bound_2}
\E_{\bfXcontext\sim p} \left[ D_{\operatorname{TV}}(q_{\theta}(\cdot|\bfXcontext) \parallel p(\cdot|\bfXcontext)) \right] 
&= \sum_{\bfxcontext \in \Omega^{m\times d}} p(\bfxcontext) D_{\operatorname{TV}}(q_{\theta}(\cdot|\bfXcontext=\bfxcontext) \parallel p(\cdot|\bfXcontext=\bfxcontext)) \\
&= \sum_{\bfxcontext \in \Omega^{m\times d}} p(\bfxcontext) \cdot \frac{1}{2} \sum_{\bfxtest \in \Omega^d} \left| q_\theta(\bfxtest|\bfxcontext) - p(\bfxtest|\bfxcontext)\right| \\
&= \frac{1}{2}\sum_{\bfxcontext \in \Omega^{m\times d}, \bfxtest \in \Omega^d}   \left| p(\bfxcontext)q_\theta(\bfxtest|\bfxcontext) - p(\bfxcontext)p(\bfxtest|\bfxcontext) \right| \\
&=  \frac{1}{2} \sum_{\bfxcontext \in \Omega^{m\times d}, \bfxtest \in \Omega^d}\left| q_\theta(\bfxtest, \bfxcontext) - p(\bfxtest, \bfxcontext) \right| \\
&=  D_{\operatorname{TV}}(q_{\theta}(\bfXtest,\bfXcontext) \parallel p(\bfXtest,\bfXcontext)).
\end{split}
\end{align}
The proof is complete by combining \Cref{eq:app_prop_joint_bound_1,eq:app_prop_joint_bound_2}.
\end{proof}

\end{document}